\documentclass[a4paper,12pt,reqno]{amsart}
%============================================================================================%
\usepackage{hyperref}
\hypersetup{backref,colorlinks=true,allcolors=black}
\usepackage{mathrsfs}
\usepackage{enumitem}
\usepackage{dsfont}
\usepackage{mathtools}
\usepackage{xcolor}
\usepackage[final]{graphicx}
\usepackage{upgreek}
\usepackage{amsmath,amssymb,bbm}
\usepackage{nameref}
\usepackage{comment}  
\usepackage{subfig}
\usepackage{booktabs}
\usepackage{multirow}

\usepackage{dsfont}

\usepackage{float}

\usepackage{cancel}
%============================================================================================%

\newtheorem{theorem}{Theorem}[section]
\newtheorem{proposition}[theorem]{Proposition}
\newtheorem{lemma}[theorem]{Lemma}
\newtheorem{corollary}[theorem]{Corollary}

%============================================================================================%

\theoremstyle{definition}

\newtheorem{example}[theorem]{Example}
\newtheorem{remark}[theorem]{Remark}

%============================================================================================%

%\newtheorem{innercustomthm}{Theorem}
%\newenvironment{customthm}[1]
%{\renewcommand\theinnercustomthm{#1}\innercustomthm}
%{\endinnercustomthm}

%============================================================================================%
%custom condition environment

%============================================================================================%

\providecommand{\customgenericname}{}
\newcommand{\newcustomtheorem}[2]{%
  \newenvironment{#1}[1]
  {%
   \renewcommand\customgenericname{#2}%
   \renewcommand\theinnercustomgeneric{##1}%
   \innercustomgeneric
  }
  {\endinnercustomgeneric}
}

\newcustomtheorem{customthm}{Theorem}
\newcustomtheorem{customlemma}{Lemma}
\newcustomtheorem{customassump}{Condition}
\newcustomtheorem{customproof}{Proof}

%command from bourguin taqqu paper

%============================================================================================%

%============================================================================================%

\newcommand{\ignore}[1]{}
\textwidth 6.25in \textheight 8.9in \evensidemargin -1pt
\oddsidemargin -1pt \topskip -2in \topmargin 0pt
\parindent0.5in

%============================================================================================%

\newcommand{\R}{\mathbb{R}}
\newcommand{\N}{\mathbb{N}}

\newcommand{\norm}[1]{\left\lVert #1 \right\rVert}
\newcommand{\abs}[1]{\left\vert #1 \right\rvert}
\newcommand{\E}[1]{\mathbb{E}{\left[ #1\right]}}

\newcommand{\mb}[1]{\mathbf{#1}}
\newcommand{\bs}[1]{\boldsymbol{#1}}

\newcommand{\opnorm}[1]{\norm{#1}_{\operatorname{op}}}

\DeclarePairedDelimiter\autobracket{(}{)}
\newcommand{\brac}[1]{\autobracket*{#1}}
\newcommand{\inner}[1]{\left\langle #1 \right\rangle}

\theoremstyle{definition}

\makeatletter
\@addtoreset{proofcase}{theorem}
\makeatother

\allowdisplaybreaks

%============================================================================================%
\theoremstyle{definition}

\makeatletter
\@addtoreset{proofstep}{theorem}
\makeatother

\pagestyle{plain}
\linespread{1.025} 
\usepackage{microtype}
\usepackage{parskip}

\definecolor{crimson}{rgb}{0.7294,0.0666,0.0470}
%============================================================================================%
\title{High-Order Langevin Monte Carlo Algorithms}
\author{Thanh Dang, Mert G\"{u}rb\"{u}zbalaban, Mohammad Rafiqul Islam, Nian Yao and Lingjiong Zhu} 
\date{August 24, 2025}

\begin{document}

\begin{abstract}
Langevin algorithms are popular Markov chain Monte Carlo (MCMC) methods
for large-scale sampling problems that often arise in data science. 
We propose Monte Carlo algorithms based
on the discretizations of $P$-th order Langevin dynamics for any $P\geq 3$. Our design of $P$-th order Langevin Monte Carlo (LMC) algorithms is by combining splitting and accurate integration methods. 
We obtain Wasserstein convergence guarantees for sampling from distributions with log-concave and smooth densities. Specifically, the mixing time of the $P$-th order LMC algorithm scales as $O\brac{d^{\frac{1}{\mathcal{R}}}/\epsilon^{\frac{1}{2\mathcal{R}}} }$ for $ \mathcal{R}=4\cdot\mathds{1}_{\{ P=3\}}+ (2P-1)\cdot\mathds{1}_{\{ P\geq 4\}}$, which has a better dependence on the dimension $d$ and the accuracy level $\epsilon$ as $P$ grows. Numerical experiments illustrate the efficiency of
our proposed algorithms.
\end{abstract}

\maketitle

\section{Introduction}\label{sec:intro}

\textit{Langevin algorithms} are popular Markov chain Monte Carlo (MCMC) methods to sample from a given density $\mu(\theta)\propto e^{-U(\theta)}$ of interest
where $\theta\in\mathbb{R}^{d}$, and these sampling problems appear
in many applications such as Bayesian statistical inference, Bayesian formulations of inverse problems, and Bayesian classification and regression tasks in machine learning \cite{gelman1995bayesian,stuart2010inverse,andrieu2003introduction,teh2016consistency,DistMCMC19,GIWZ2024}.
The classical Langevin Monte Carlo algorithm is based on the discretization of
{\it overdamped (or first-order) Langevin dynamics} \cite{Dalalyan,DM2017,DK2017,Raginsky,Barkhagen2021,Chau2019,EH2021,Zhang2019,BCESZ2022} that follows
the stochastic differential equation (SDE):
\begin{equation}\label{eq:overdamped-2}
d\theta_{t}=-\nabla U(\theta_{t})dt+\sqrt{2}dB_{t},
\end{equation}
where $U:\mathbb{R}^{d}\rightarrow\mathbb{R}$ is often known as the \textit{potential function},
and $B_{t}$ is a standard $d$-dimensional Brownian motion with $\theta_{0}\in\mathbb{R}^{d}$. Under some mild assumptions on $U(\cdot)$, the diffusion \eqref{eq:overdamped-2} admits a unique stationary distribution with the density $\mu(\theta) \propto e^{-U(\theta)}$,
also known as the \emph{Gibbs distribution} \cite{chiang1987diffusion,stroock-langevin-spectrum}. In computing practice, this diffusion is simulated by considering its discretization,
and one of the most commonly used discretization schemes is the Euler–Maruyama discretization of
\eqref{eq:overdamped-2}, often known as the \textit{unadjusted Langevin algorithm} in the literature; see e.g. \cite{DM2017}:
\begin{equation}\label{discrete:overdamped}
\theta_{k+1}=\theta_{k}-\eta\nabla U(\theta_{k})+\sqrt{2\eta}\xi_{k+1},
\end{equation}
where $\xi_{k}$ are i.i.d. $\mathcal{N}(0,I_{d})$ Gaussian vectors.

In a seminal paper, \cite{Dalalyan} obtained the first non-asymptotic result of the discretized Langevin dynamics \eqref{discrete:overdamped}; later, \cite{DM2017} improved the dependence on the dimension $d$.
Both works consider the total variation (TV) as the distance to measure the convergence.
In contrast, \cite{DM2016} studied the convergence in the 2-Wasserstein distance,
and \cite{DMP2016} studied variants of \eqref{discrete:overdamped} when $U$ is not smooth.
\cite{CB2018} studied the convergence in the Kullback-Leibler distance.
\cite{EHZ2022} obtained the convergence 
in chi-squared and R\'{e}nyi divergence.
\cite{DK2017,Raginsky,Barkhagen2021,Chau2019,Zhang2019} studied the convergence when only stochastic gradients are available.

In the literature, many variants
of the overdamped Langevin dynamics and the discretization schemes have been studied.
One popular Langevin dynamics is the {\it underdamped Langevin dynamics}, also known as the \textit{second-order} or kinetic Langevin dynamics, see e.g.
\cite{mattingly2002ergodicity,Villani2009,cheng2018underdamped,cheng-nonconvex,CLW2020,JianfengLu,dalalyan2018kinetic,GGZ2,Ma2019,GGZ}:
\begin{equation}\label{eqn:underdamped}
\begin{cases}
dr_{t}=-\gamma r_{t}dt-\nabla U(\theta_{t})dt+\sqrt{2\gamma}dB_{t},\\
d\theta_{t}=r_{t}dt,
\end{cases}
\end{equation}
where $B_{t}$ is a standard $d$-dimensional Brownian motion
with $r_{0},\theta_{0}\in\mathbb{R}^{d}$.
Under some mild assumptions on $U$, the SDE \eqref{eqn:underdamped} admits a unique stationary distribution with the density $\mu(\theta,r) \propto e^{-U(\theta)-\frac{1}{2}|r|^{2}}$ \cite{Eberle}, whose $\theta$-marginal distribution coincides 
with the stationary distribution of \eqref{eq:overdamped-2}.
It is known that the second-order (underdamped) Langevin dynamics \eqref{eqn:underdamped} might converge to the Gibbs distribution faster than the first-order (overdamped) Langevin dynamics \cite{Eberle,JianfengLu}, 
and the discretization based on the second-order Langevin dynamics might have better iteration complexity, 
in particular, with a better dependence on the dimension $d$
and the accuracy level $\epsilon$ \cite{cheng2018underdamped,GGZ}.

In the recent literature, higher-order, in particular, 
the \textit{third-order} Langevin dynamics
and its discretization have been proposed and studied in \cite{mou2021high}:
\begin{equation}\label{JI}
\begin{cases}
d\theta_t=p_t\ dt,\\
dp_t=-\frac{1}{L} U(\theta_t)\ dt + \gamma r_t\ dt,\\
dr_t=-\gamma p_t\ dt - 2\gamma r_t\ dt + \sqrt{\frac{4\gamma}{L}}\ dB_t,
\end{cases}
\end{equation}
where $\gamma>0$ is the friction parameter, $L$ is the smoothness parameter of $U$ and $B_t$ is a standard Brownian motion in $\mathbb{R}^d$.
Under some mild assumptions on $U$, the SDE \eqref{JI} admits a unique stationary distribution with the density $\mu(\theta,r) \propto e^{-U(\theta)-\frac{L}{2}|p|^{2}-\frac{L}{2}|r|^{2}}$ \cite{mou2021high}.
They showed that a Langevin Monte Carlo algorithm based on the discretization of the third-order Langevin SDE \eqref{JI} can have
even better iteration complexity in terms of dependence on the dimension $d$ and the accuracy
level $\epsilon$ compared to the algorithm based on the second-order Langevin SDE \eqref{eqn:underdamped} \cite{mou2021high}.

It is thus very natural to ask if one can propose and study
a more general $P$-th order Langevin dynamics, and whether
its discretization can lead to better iteration complexity.
In the very recent probability literature, a generalized Langevin dynamics is studied in \cite{monmarche2023almost}. Their result is in continuous time only. The focus of our paper is to propose and study
the iteration complexity of an algorithm based on the discretization of the continuous-time $P$-th order Langevin dynamics, which we name $P$-th order Langevin Monte Carlo (LMC) algorithm. In the context of log-concave sampling via the Langevin equation and its variants, Table~\ref{table:1} compares the mixing time of our $P$-th order LMC algorithm in Theorem~\ref{theorem_mixingtime_Pthorder} with the mixing time of other algorithms from the references in the literature\footnote{For comparison purpose, we focus solely on the dependence on $d$ and $\epsilon$ of the rates in the cited references in the table. These references improve other aspects of log-concave sampling which we do not cover here.}. Also note that the convex-smooth condition is our Condition~\ref{cond_mainpaper}. 

\begin{table}
\begin{center}
\begin{tabular}{ |c|c|c| } 
\hline
References & Assumptions on potential $U$ & Mixing time in $\mathrm{Wass}_2$ \\
\hline
\cite{cheng2018underdamped,dalalyan2018kinetic,Ma2019} & convex-smooth & $O\brac{\frac{d^{1/2}}{\epsilon}}$\\
\hline
\cite{shen2019randomized} & convex-smooth & $O\brac{\frac{d^{1/3}}{\epsilon^{2/3}}}$ \\
\hline
\cite{mou2021high} & ridge-separable, convex-smooth& $O\brac{\frac{d^{1/4}}{\epsilon^{1/2}}}$ \\
\hline
\multirow{2}{4em}{\cite{mou2021high}} & strongly convex & $O\brac{\frac{d^{1/4}}{\epsilon^{1/2}}}$ \\ 
& and smooth up to order $\alpha$  &  $+O\brac{\frac{d^{1/2}}{\epsilon^{1/\alpha-1}}}$ \\ 
\hline
%\multirow{2}{4em}{Our Theorem \ref{theorem_mixingtime_Pthorder}} & convex smooth& $O\brac{d^{\frac{1}{\mathcal{R}}}/\epsilon^{\frac{1}{2\mathcal{R}}} }$, \text{ where } %\\ 
%& and Condition~\ref{cond_derivativegrowthrate}  &  $ \mathcal{R}=4\cdot\mathds{1}_{\{ P=3\}}+ (2P-1)\cdot\mathds{1}_{\{ P\geq 4\}}$ \\ 
%\hline
\multirow{3}{4em}{Our Theorem \ref{theorem_mixingtime_Pthorder}} & convex-smooth& $O\brac{d^{\frac{1}{\mathcal{R}}}/\epsilon^{\frac{1}{2\mathcal{R}}} }$, \text{where } \\ 
& and Condition~\ref{cond_derivativegrowthrate}  &  $ \mathcal{R}=4\cdot\mathds{1}_{\{ P=3\}}$ \\
& & $\qquad+ (2P-1)\cdot\mathds{1}_{\{ P\geq 4\}}$\\
\hline
\end{tabular}
\end{center}
\caption{Summary of assumptions and iteration complexities in our paper compared with the literature.}\label{table:1}
\end{table}

Our contributions can be summarized as follows.

\begin{itemize}
\item We construct $P$-th order LMC algorithms that are based on discretizations of $P$-th order Langevin dynamics for $P\geq 3$. Under the condition that the potential function $U$ is convex, sufficiently smooth and the operator norm of the derivatives of $U$ do not grow too quickly, we show that the iteration complexity of our $P$-th order LMC algorithm scales as $O\brac{d^{\frac{1}{\mathcal{R}}}/\epsilon^{\frac{1}{2\mathcal{R}}} }$ for $ \mathcal{R}=4\cdot\mathds{1}_{\{ P=3\}}+ (2P-1)\cdot\mathds{1}_{\{ P\geq 4\}}$. Our iteration complexity result therefore has a better dependence on the dimension $d$ and the accuracy level $\epsilon$ as $P$ grows. We therefore provide a positive answer to a conjecture in \cite[Section 5]{mou2021high} that one can construct LMC algorithms based on high-order Langevin dynamics that reduce the dependence of the iteration complexity on the dimension and the accuracy level.

\item Inspired by existing work on second- and third-order Langevin Monte Carlo algorithms \cite{cheng2018underdamped,mou2021high}, we propose and rigorously study novel discretization schemes for high-order Langevin dynamics that contain several stages of refinement, and each stage adopts a splitting scheme to ensure that the conditional expectation of the vector formed by the variables in each stage (conditioned on the last stage) follows a multivariate normal distribution. A natural question is what the maximum number of refinement stages one can design, which affects how much improvement one can obtain in iteration complexity. We discover that the maximum number of stages is $P-1$ (see Remark~\ref{remark_nomorestages} and Remark~\ref{remark_Pthorder}). 

\item We perform numerical experiments and compare the performance of the third- and fourth-order LMC algorithms. In particular, we study sampling from the posterior distribution of the model parameters in Bayesian regression using real data, where the loss function is quadratic. Our numerical results show better performance for the fourth-order LMC algorithm. 
In addition, we consider a sigmoid loss function for sampling from the posterior distribution of the model parameters in Bayesian classification problems, using real data, which demonstrates the efficiency of our proposed algorithm. 
\end{itemize}

The rest of the paper can be summarized as follows.
In Section~\ref{section_frommonmarche}, main results are stated. We first provide
some preliminaries for the continuous-time $P$-th order Langevin dynamics and state our main assumptions. 
In Section~\ref{section_4thorder}, for pedagogical purpose, we introduce and study the fourth-order LMC algorithm, and then in Section~\ref{section_Pthorder}, we extend our results to any $P$-th order LMC algorithm
for $P\geq 3$. We conduct numerical experiments
to show the efficiency of our algorithms in Section~\ref{sec:numerical}. In Section~\ref{sec:conclusion}, we conclude. 
Further technical details will be provided
in the Appendix.

%%%%%%%%%%%%%%%%%%%%%%%%%%%%%%%%%%%%%%%%%%%%%%%%%%%%%

\section{Main Results}
\label{section_frommonmarche}

In this section, we first present an important result regarding convergence toward equilibrium of (continuous-time) $P$-th order Langevin dynamics, that is established by Monmarch\'{e} in \cite{monmarche2023almost}. Let us start with some definitions.

Let $P,d\geq 1$. A $P$-th order Langevin dynamics has the form 
\begin{align}
\label{Pthorderlangevinequation_mainpaper}
dX_t &=  A Y_t d t, \nonumber\\
dY_t &=  - A^{\top}\nabla U(X_t) d t - \gamma B Y_t dt + \sqrt{\gamma}d W_t,
\end{align}
where $W$ is a standard $(P-1)d$-dimensional Brownian motion; $U\in\mathcal C^2(\R^d)$; while the $d\times (P-1)d$ matrix $A$ and the $(P-1)d\times (P-1)d$ matrix $B$ are given by:
\begin{equation*}
A = \begin{pmatrix}
I_d & 0 & \dots   & 0
\end{pmatrix}\qquad \text{and}\qquad
B = \begin{pmatrix}
0  & -I_d & 0 & \dots & 0 \\
 I_d & 0 & -I_d & \ddots & \vdots \\
0 & \ddots & \ddots & \ddots &   0 \\
\vdots & \ddots &   I_d & 0 & - I_d\\
0 &\dots  & 0 &  I_d &   I_d
\end{pmatrix}\,.
\end{equation*}

Regarding notations, set $b$ as the drift coefficient of \eqref{Pthorderlangevinequation_mainpaper}, that is 
\begin{align}
\label{def_driftb}
    b(x,y)=\begin{pmatrix}
        Ay\\-A^{\top}\nabla U(x)-\gamma By
    \end{pmatrix},
\end{align}
and denote $J_b$ as its Jacobian matrix. 

Next, we set
\begin{align*}
     \hat{\lambda}=\min \{Re(\lambda), \text{$\lambda$ is an eigenvalue of $B_{\operatorname{sim}}$}\},\,\,\text{ where }  B_{\operatorname{sim}} :=\begin{pmatrix}
0  & -1 & 0 & \dots & 0 \\
 1 & 0 & -1 & \ddots & \vdots \\
0 & \ddots & \ddots & \ddots &   0 \\
\vdots & \ddots &   1 & 0 & - 1\\
0 &\dots  & 0 &  1 &   1
\end{pmatrix},
\end{align*}
noting that $B=B_{\operatorname{sim}}\otimes I_d$ where $\otimes$ denotes the Kronecker product. 
Also, 
\begin{align*}
  \kappa=  \begin{cases} 
      \hat{\lambda}, & \text{when $B$ is diagonalizable}, \\
      \hat{\lambda}-\epsilon, & \epsilon\in \brac{0,\hat{\lambda}} \text{ when $B$ is not diagonalizable.}
   \end{cases}
\end{align*}
We show in the proof of Corollary~\ref{coro_monmarcheforPthorderlangevin} that $\hat{\lambda}>0$, which implies $\kappa>0$.

%%%%%%%%%%%%%%%%%%%%%%%
Denote the Jordan blocks of $B_{\operatorname{sim}}$ by $B_n,1\leq n\leq N$. Each block $B_n$ of length $\ell_n$ is associated with the eigenvalue $\lambda_n$ and the set of generalized eigenvectors $v_n^{(k)}; 1\leq k\leq \ell_n$. In particular, $v_n^{(1)}$ is the (standard) eigenvector of $B_n$. Notice here we slightly abuse notations as we are using the same notations $B_n, \ell_n,v_n^{(k)}$ for the matrix $B$ in Appendix~\ref{appendix_frommonmarche}. For a Jordan block $B_n$ with $\mathrm{Re}(\lambda_n)>\kappa$, we set 
\begin{align*}
H_n=\sum_{i=1}^{\ell_n} b^i_n v_n^{(i)}\brac{{\bar{v}_n^{(i)}}}^{\top},
\end{align*}
where $\bar{v}^{\top}$ denotes the conjugate transpose of a vector $v$ and 
\begin{align}
\label{def_coefficients}
    &b_{n}^1=1;\quad b^j_n=c_j (t_n)^{2(1-j)},\quad 2\leq j\leq \ell_n;\\
    &c_1=1;\quad c_{j+1}=1+c_{j}^2,\quad 2\leq j\leq \ell_n;\qquad t_n=2\brac{\mathrm{Re}(\lambda_n)-\hat{\lambda}}\nonumber. 
\end{align}
Meanwhile, for a Jordan block $B_m$ with $\mathrm{Re}(\lambda_m)=\kappa$, we define 
\begin{align*}
\widetilde{H}_m(\epsilon)=\sum_{i=1}^{\ell_m} b^i_m(\epsilon) v_m^{(i)}\brac{{\bar{v}_m^{(i)}}}^{\top},
\end{align*}
where $b^i_m$'s are the same as in \eqref{def_coefficients}, except that we replace the above $t_n$ with $t_m=2(\mathrm{Re}(\lambda_m)-\hat{\lambda}+\epsilon)$ for any $\epsilon\in (0,\hat{\lambda})$ and write $b^i_m=b^i_m(\epsilon)$ to emphasize the dependence on $\epsilon$. Now, assume $I=\{n\in\{1,\cdots,N \}:\ell_n\geq 2, \mathrm{Re}(\lambda_n)=\hat{\lambda} \}$ and set 
\begin{align*}
    H(\epsilon)=\sum_{n\in \{1,\ldots,N\}\setminus I } H_n+\sum_{m\in I}\widetilde{H}_m(\epsilon).
\end{align*}

Next, we define
\begin{align*}
h_1&=\opnorm{H(\epsilon)\begin{pmatrix}
         1 & \cdots &0\\
         \vdots&\ddots&\vdots\\
         0&\cdots&0
     \end{pmatrix} }, \qquad\qquad h_2=h_3=1,\\
       h_4&= \brac{1+\frac{P-1}{2}}\opnorm{H(\epsilon)^{-1}}, \quad h_5=(1+P)\opnorm{H(\epsilon)^{-1}}.
\end{align*}

Regarding the potential function $U$, we assume that 
\begin{customassump}{H1}~
		\label{cond_mainpaper}
$U$ is $m$-strongly convex and $L$-smooth: $m I_d\leq \nabla^2 U(x)\leq L I_d$ for any $x\in \R^d$.
\end{customassump}

The following important result is established by Monmarch\'{e} in \cite{monmarche2023almost}. Further details are provided in Theorem~\ref{theorem_monmarche_appendix} and Corollary~\ref{coro_monmarcheforPthorderlangevin} in Appendix~\ref{appendix_frommonmarche}.

\begin{theorem}
\label{theorem_frommonmarche_mainpaper}
(a shortened version of Theorem~\ref{theorem_monmarche_appendix} and Corollary~\ref{coro_monmarcheforPthorderlangevin}) Assume the setup above for the $P$-th order Langevin dynamics~\eqref{Pthorderlangevinequation_mainpaper}, including Condition~\ref{cond_mainpaper} on the potential function $U$. If the friction $\gamma$ is sufficiently large: 
\begin{align*}
  \gamma\geq   \gamma_0:=2\sqrt{\frac{h_1L}{\kappa}}\max \left\{ \sqrt{h_2h_5},\sqrt{\frac{h_4}{\kappa}} \right\},
\end{align*}
then the $Pd\times Pd$ matrix  $M:=\begin{pmatrix}
 1 & \frac{1}{\gamma}\begin{pmatrix}
    1 &\ldots &1
\end{pmatrix}\\
\frac{1}{\gamma}\begin{pmatrix}
    1 &\ldots &1
\end{pmatrix}^{\top} & \frac{\kappa}{Lh_1} H(\epsilon)
\end{pmatrix}\otimes I_d$ is symmetric, positive definite and satisfies
\begin{align}
\label{contraction}
    M J_b+J_b^{\top} M\leq -2\rho M, \qquad\quad \rho=\min \left\{ \frac{m}{3h_3\gamma},\frac{\gamma \kappa}{6} \right\}. 
\end{align} 
In particular, $\rho=\rho(\gamma,L,P)$ and $\gamma_0=\gamma_0(\gamma,L,P)$ depend on $\gamma,L,P$ but not on the dimension parameter $d$. Furthermore, $\lambda_{\min,M}=\lambda_{\min,M}(P)$ and $\lambda_{\max,M}=\lambda_{\max,M}(P)$ are respectively the smallest and largest eigenvalues of the positive definite matrix $M$, and they depend on $P$ but not on $d$. 
\end{theorem}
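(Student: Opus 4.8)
The cleanest route is to read this as the specialization of Monmarch\'{e}'s abstract hypocoercivity construction (Theorem~\ref{theorem_monmarche_appendix}) to the chain~\eqref{Pthorderlangevinequation_mainpaper}, so the work splits into four verifications: (i) the spectral facts $\hat\lambda>0$, hence $\kappa>0$, together with $H(\epsilon)\succ0$; (ii) positive definiteness of $M$; (iii) the contraction estimate~\eqref{contraction}; and (iv) dimension-independence of $\rho,\gamma_0,\lambda_{\min,M},\lambda_{\max,M}$. The structural observation that drives everything — and in particular (iv) — is that $B=B_{\operatorname{sim}}\otimes I_d$, the lower-right block $\tfrac{\kappa}{Lh_1}H(\epsilon)\otimes I_d$ of $M$, and indeed all of $M$ are Kronecker products of a fixed $P\times P$ (resp.\ $(P-1)\times(P-1)$) matrix with $I_d$, whereas the only $x$-dependence in $J_b$ sits in $\nabla^2U(x)$, which Condition~\ref{cond_mainpaper} pins between $mI_d$ and $LI_d$. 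Consequently every inequality below collapses to a matrix inequality in dimension at most $P$ with $\nabla^2U(x)$ replaced by a scalar in $[m,L]$.

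For (i), I would extract the characteristic polynomial of $B_{\operatorname{sim}}$ from its tridiagonal structure (which satisfies a three-term recurrence) and show all of its roots have strictly positive real part; this is where $\hat\lambda>0$ comes from, and the $\epsilon$-perturbation is what one inserts when $B_{\operatorname{sim}}$ fails to be diagonalizable so that $\kappa\in(0,\hat\lambda]$. Then $H(\epsilon)$ is positive definite because each block summand $H_n=\sum_i b_n^i v_n^{(i)}(\bar v_n^{(i)})^\top$ is a positive combination (note $b_n^1=1$ and $b_n^j=c_j t_n^{2(1-j)}>0$) of rank-one Hermitian forms, and the full family of generalized eigenvectors $\{v_n^{(i)}\}$ is a basis of $\C^{P-1}$, so $\bigcap_n\ker H_n=\{0\}$. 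The property of $H(\epsilon)$ that actually gets used downstream is the Lyapunov-type bound $H(\epsilon)B_{\operatorname{sim}}+B_{\operatorname{sim}}^\top H(\epsilon)\succeq 2\kappa H(\epsilon)$, and this is exactly what the recursion $c_1=1,\ c_{j+1}=1+c_j^2$ together with $t_n=2(\mathrm{Re}(\lambda_n)-\hat\lambda)$ (resp.\ $t_m=2(\mathrm{Re}(\lambda_m)-\hat\lambda+\epsilon)$ on the critical blocks) is tuned to produce, by forcing the coupling between consecutive generalized eigenvectors inside each Jordan block to be dominated.

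For (ii), apply the Schur complement to $M$ with respect to its lower-right block: $\tfrac{\kappa}{Lh_1}H(\epsilon)\succ0$ by (i), and the complement $1-\tfrac{Lh_1}{\gamma^2\kappa}\,\mathbf 1^\top H(\epsilon)^{-1}\mathbf 1$ is at least $1-\tfrac{Lh_1h_5}{\gamma^2\kappa}\ge\tfrac34$ once $\gamma\ge\gamma_0\ge 2\sqrt{h_1Lh_2h_5/\kappa}$ (using $h_2=1$ and $\mathbf 1^\top H(\epsilon)^{-1}\mathbf 1\le(P-1)\opnorm{H(\epsilon)^{-1}}<h_5$); tensoring with $I_d$ preserves positivity.

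For (iii), expand $MJ_b+J_b^\top M$ in the $d+(P-1)d$ block decomposition of $J_b=\begin{pmatrix}0&A\\-A^\top\nabla^2U(x)&-\gamma B\end{pmatrix}$. Two pieces are dissipative: the $(1,1)$ block equals $-\tfrac{2}{\gamma}\nabla^2U(x)\preceq-\tfrac{2m}{\gamma}I_d$ (strong convexity), and the $-\gamma B$ contribution to the $(2,2)$ block equals $-\tfrac{\kappa\gamma}{Lh_1}\big(H(\epsilon)B_{\operatorname{sim}}+B_{\operatorname{sim}}^\top H(\epsilon)\big)\otimes I_d\preceq-2\kappa\gamma\cdot\big(\tfrac{\kappa}{Lh_1}H(\epsilon)\otimes I_d\big)$ (the Lyapunov bound of (i)). The remaining terms — the $A$-couplings and the $\hat A^\top A$-type block, whose magnitudes are measured by $L$ and by the operator norms $h_1,h_4,h_5$ of sub-blocks of $H(\epsilon)$ and $H(\epsilon)^{-1}$ — are absorbed into fixed fractions of those two dissipative pieces via Young's inequality; what survives must dominate $-2\rho M$, which forces $\rho\le m/(3h_3\gamma)$ and $\rho\le\gamma\kappa/6$ (the constants $3$ and $6$ being the cost of the absorption), and the absorption is feasible exactly when $\gamma\ge\gamma_0=2\sqrt{h_1L/\kappa}\max\{\sqrt{h_2h_5},\sqrt{h_4/\kappa}\}$. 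Finally (iv) is immediate: $m,L,\kappa,h_1,\dots,h_5$ are built from the fixed $P\times P$ data only, and since $M=(\text{a }P\times P\text{ matrix})\otimes I_d$ its spectrum is that of the $P\times P$ factor with each eigenvalue repeated $d$ times, so $\lambda_{\min,M},\lambda_{\max,M}$ equal the extreme eigenvalues of that factor and depend on $P$ (and $\gamma,L$) but not on $d$. The real obstacle is step (iii): the bookkeeping of the Young's-inequality absorption of the Jordan-block cross terms against the Lyapunov dissipation, where obtaining the clean threshold $\gamma_0$ hinges on the sharp choices of the $c_j$'s and $b_n^j$'s from (i) — modulo the appendix, that is where the content sits.
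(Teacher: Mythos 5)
Your overall strategy — read the theorem as a specialization of Monmarch\'e's Theorem~\ref{theorem_monmarche_appendix} to the chain~\eqref{Pthorderlangevinequation_mainpaper}, with the Kronecker product structure $B=B_{\operatorname{sim}}\otimes I_d$, $M=(\cdot)\otimes I_d$ driving the dimension-independence claims — is exactly what the paper does in Corollary~\ref{coro_monmarcheforPthorderlangevin}. The block computations you check (the $(1,1)$ block contribution $-\tfrac{2}{\gamma}\nabla^2U$, the $(2,2)$ Lyapunov piece $-\tfrac{\kappa\gamma}{Lh_1}(HB_{\operatorname{sim}}+B_{\operatorname{sim}}^{\top}H)\otimes I_d$, the tensor-factorization of the spectrum) are all correct, and your honest deferral of the Young's-inequality absorption in (iii) mirrors the paper, which does not reprove Monmarch\'e's contraction estimate but cites it.

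The two places you diverge are worth flagging. For $\hat\lambda>0$, you propose extracting the characteristic polynomial of the tridiagonal $B_{\operatorname{sim}}$ and arguing it is Hurwitz; this is feasible in principle but would need a Routh--Hurwitz-type argument whose uniformity in $P$ is not obvious. The paper instead splits $B_{\operatorname{sim}}=\tfrac12(B_{\operatorname{sim}}+B_{\operatorname{sim}}^{\top})+\tfrac12(B_{\operatorname{sim}}-B_{\operatorname{sim}}^{\top})$, notes the symmetric part is $\mathrm{diag}(0,\dots,0,1)$, so for any eigenpair $(\lambda,x)$ one has $\mathrm{Re}(\lambda)=|x_{P-1}|^2/|x|^2$, and then backward-substitutes in $B_{\operatorname{sim}}x=\lambda x$ to show $x_{P-1}=0$ forces $x=0$ — a much cleaner route that yields the sign immediately for every $P$. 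For positive definiteness of $M$, your Schur-complement computation (Schur complement $\geq 1-\tfrac{Lh_1h_5}{\gamma^2\kappa}\geq\tfrac34$ once $\gamma\geq\gamma_0$) is a valid alternative, but the paper does not re-prove this; it inherits it from the two-sided sandwich $\tfrac12\,\mathrm{diag}(E,\tfrac{\kappa}{Lh_1}H)\preceq M\preceq\tfrac32\,\mathrm{diag}(E,\tfrac{\kappa}{Lh_1}H)$ already established in Monmarch\'e's Theorem~\ref{theorem_monmarche_appendix} (recorded as \eqref{inequality_matrixM}). Your route buys a self-contained positivity proof and makes the role of $\gamma_0$ visible; the paper's buys brevity and a direct eigenvalue sandwich that is then reused in the dimension-independence step. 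Either works.
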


\begin{remark}
    Condition~\ref{cond_mainpaper} is exactly Condition~\ref{cond_frommonmarche} in Appendix~\ref{appendix_frommonmarche} specified for the $P$-th order Langevin dynamics~\eqref{Pthorderlangevinequation_mainpaper}.  
\end{remark}

\begin{example}
\label{example_P4}
Here we demonstrate how to find $\gamma_0$ and $M$ in Theorem~\ref{theorem_frommonmarche_mainpaper} in the case $P=4$. The matrix $B=\begin{pmatrix}
         0 &-1 &0\\
         1&0&-1\\
         0&1&1
     \end{pmatrix}$ is diagonalizable. It has eigenvector $v_1\approx (-0.877-0.745i,-0.785+1.307i,1)$ corresponding to eigenvalue $0.215+1.307i$, $v_2\approx (-0.877+0.745i,-0.785-1.307i,1)$ corresponding to eigenvalue $0.215-1.307i$ and $v_3\approx(0.755,-0.430,1)$ corresponding to eigenvalue $0.570$. Then the matrix $H$ is approximately $\begin{pmatrix}
         3.341 &-1.004 &-0.999\\
         -1.004&4.850&-2.000\\
         -0.999&-2.000&3.000
     \end{pmatrix}$ with eigenvalues (approximately) $ 6.168,4.098$ and $0.924$. From there, one deduces $h_1\approx 3.341, h_4=2.705, h_5=5.410,\kappa=0.924$ and we know beforehand that $h_2=h_3=1$. Thus, we have $\gamma_0\approx 2\sqrt{\frac{3.341 L}{0.215}}\cdot (3.547)$ and 
     \begin{align*}
    M\approx\begin{pmatrix}
     1 &1/\gamma &1/\gamma&1/\gamma\\
      1/\gamma&   (1/L)0.924 &-(1/L)0.278 &-(1/L)0.276\\
       1/\gamma&  -(1/L)0.278&(1/L)1.341&-(1/L)0.553\\
      1/\gamma&   -(1/L)0.276&-(1/L)0.553&(1/L)0.830
     \end{pmatrix}\otimes I_d. 
     \end{align*}
\end{example}

%%%%%%%%%%%%%%%%%%%%%%%%%%%%%%%%%%%%%%%%%%%%%%%%%%%%%%%%%%%%%%%%%
%To construct an MCMC algorithm with better discretization error (with respect to the dimension $d$ and accuracy level $\epsilon$) than \cite{mou2021high}, we will assume a strengthened version of their Assumption 2. In the case where $U$ is not a polynomial or a piece-wise polynomial function, the upcoming condition basically asks that $\sup_{x\in \R^d}\opnorm{\nabla^\alpha U(x)}$ does not grow too fast as $\alpha$ increases. 

In the upcoming part, we will assume a strengthened version of Assumption 2 in \cite{mou2021high} about the potential function $U$. This strengthened assumption will ensure that we can approximate the nested integrals in Lemma~\ref{lemma_explicitformofxbar} with reasonable accuracy, and ultimately allow us to construct an MCMC algorithm with a better discretization error (with respect to the dimension $d$ and the accuracy level $\epsilon$) than \cite{mou2021high}. We note that in the case where $U$ is not a polynomial or a piece-wise polynomial function, the upcoming condition basically asks that $\sup_{x\in \R^d}\opnorm{\nabla^\alpha U(x)}$ does not grow too fast as $\alpha$ increases. 

\begin{customassump}{H2}~
\label{cond_derivativegrowthrate}
Let the stepsize $\eta$ and the dimension $d$ be fixed.    There exists a positive real number $c$ that does not depend on the dimension $d$ and a positive integer $\alpha$ large enough such that 
    $U$ is in $\mathcal{C}^{\alpha}$ and 
    \begin{align*}
          \brac{\frac{L_{\alpha}}{\alpha!}}^2\brac{\widetilde{C}_1}^\alpha(d+2\alpha)^\alpha \leq c \cdot d\cdot\brac{\mathds{1}_{\{P=3\}}\eta^{4}+\mathds{1}_{\{P\geq 4\}}\eta^{2P-1}}, 
    \end{align*}
where $L_{\alpha}:=    \sup_{x\in\R^d}\opnorm{\nabla^{\alpha}U(x)}$. $\widetilde{C}_1$ given in Lemma~\ref{lemma_momentbound_Pthorder} is a positive constant that depends only on the friction parameter $\gamma$ and the smoothness parameter $L$, but not on the dimension $d$ or the stepsize $\eta$.  
\end{customassump}

\begin{remark}
\label{remark_someeasierconditions}
We observe that Condition~\ref{cond_derivativegrowthrate} is satisfied whenever $U$ is a polynomial of some degree $k$, since we can take $\alpha=k+1$ so that $\nabla^\alpha U\equiv 0$. This is the case with quadratic loss function in our numerical experiments for Bayesian linear regression (our Section~\ref{section_linearegression}). More generally, one can consider a polynomial regression problem \cite[Section 3.2]{jung2022machinebook}.
\end{remark}

\begin{remark}
  In the case where $U$ is not a polynomial, an example is the regularized Huber loss function that is $U(x):=U_{0}(x)+\frac{\lambda}{2}|x|^{2}$ for some $\lambda>0$, where 
  \begin{align*}
  U_{0}(x):=\begin{cases}
    \frac{\abs{x}^{2}}{2} & \text{if} \abs{x}\leq \alpha, \\ \alpha\abs{x}-\frac{\alpha^2}{2} &\text{otherwise},
\end{cases} 
\end{align*}
for some positive parameter $\alpha$ (\cite[Page 44]{steinwartbook2008support}). In fact, for this example, we do not need to verify Condition~\ref{cond_derivativegrowthrate}  since the latter is to ensure we can approximate the nested integrals in Lemma~\ref{lemma_explicitformofxbar} (a fact pointed out in the paragraph before Condition~\ref{cond_derivativegrowthrate}). 
\end{remark}

\begin{remark}
In the case where $U$ is not a polynomial or a piece-wise polynomial function, Condition~\ref{cond_derivativegrowthrate} basically asks that $\sup_{x\in \R^d}\opnorm{\nabla^\alpha U(x)}$ does not grow too fast as $\alpha$ increases. This Condition as stated is quite hard to verify however. Hence, an example of a condition that implies Condition~\ref{cond_derivativegrowthrate} and is easier to check than the latter is: there exists an integer $K\in \N$ and real numbers $c,\beta>1$ such that for every $k\geq K$, 
\begin{align}
\label{easierconditiontocheck}
\sup_{x\in \R^d}\opnorm{\nabla^k U(x)}\leq \sqrt{c\Gamma(k/\beta+1)}d^k,
\end{align}
where $\Gamma(\cdot)$ is the gamma function. Then, since 
\begin{align*}
\lim_{k\to \infty} \frac{c\Gamma(k/\beta+1)d^{2k}\widetilde{C}_1^k(d+2k)^k}{(k!)^2}
=\lim_{k\to \infty} 
\frac{c\sqrt{\frac{2\pi k}{\beta}}(\frac{k}{\beta e})^{k/\beta}d^{2k}\widetilde{C}_1^k(d+2k)^k}{2\pi k(\frac{k}{e})^{2k}}
=0,
\end{align*}
for any fixed $\beta>1$ and $d$, 
where we applied the Stirling's formula $\Gamma(x+1)\sim\sqrt{2\pi x}(\frac{x}{e})^{x}$ as $x\rightarrow\infty$, 
the parameter $\alpha$ in Condition~\ref{cond_derivativegrowthrate} is guaranteed to exist. Finally, we note that \eqref{easierconditiontocheck} is similar to the assumption in \cite[Theorem 3.3]{wibisono2016variational} in the context of accelerated gradient methods in optimization.  
%%%%%%%%%%%%%%%%%%%%%%%%%%%%%%
%As an example, in the case the potential function $U$ is a logistic loss function, and we will show that it satisfies \eqref{easierconditiontocheck} and therefore our Condition~\ref{cond_derivativegrowthrate} (Lemma~\ref{lemma_logisticfunction_and_conditionH2} in Appendix~\ref{dis:logloss}). 
%%%%%%%%%%%%%%%%%%%%%%%%%%%
%Furthermore, due to \eqref{easierconditiontocheck} and the fact that for any $k\in \N$, 
%\begin{align*}
 % \opnorm{\nabla^k U}\leq \sqrt{d^k}\max_{(j_1,\ldots,j_k)\in \N^{\otimes k}}\sup_{x\in \R^d}\abs{\nabla^{j_1,\ldots,j_k}U(x)},
%\end{align*}
%another simpler condition that can imply Condition~\ref{cond_derivativegrowthrate} is: there exists an integer $K\in \N$ and a real number $\mathcal{R}\geq 0$ such that for all $k \geq K$, 
%\begin{align}
%\label{anothereasiercondition}
 %   \max_{(j_1,\ldots,j_k)\in \N^{\otimes k}}\sup_{x\in \R^d}\abs{\nabla^{j_1,\ldots,j_k}U(x)}<\mathcal{R}. 
%\end{align}
\end{remark}

\begin{remark}
Our Condition~\ref{cond_derivativegrowthrate} is much stronger than Assumption 2 in~\cite{mou2021high}, even though both are roughly about the smoothness of the loss function $U$. The reason is as follows. The mixing time of our $P$-th order LMC algorithm is determined by the error in our discretization scheme of a $P$-th order Langevin dynamics. As it will be clear from our proofs, the discretization error is a sum of two parts: the first part being the error of a splitting scheme, and the second part being the error of a polynomial approximation. As $P$ increases, we can show that the former gets smaller; however, we cannot do the same for the latter. Thus, in order to obtain an improvement of the discretization error as $P$ increases, one must assume some condition for the polynomial approximation error to be dominated by the splitting scheme error. Condition~\ref{cond_derivativegrowthrate} ensures this outcome.  
\end{remark}

\begin{remark}
In practice, even when condition~\eqref{easierconditiontocheck} or Condition~\ref{cond_derivativegrowthrate} is not satisfied, our $P$-th order LMC algorithm might still work well; see, for example, our numerical experiments for Bayesian logistic regression (Section~\ref{section_logisticregression}).
\end{remark}

%%%%%%%%%%%%%%%%%%%%%%%%%%%%%%%%%%%%%%%%%%%%%%%%%%%%%%%%%%%%%%%%%%%%%%%%%%%%%%%%%%%%%%%%%%%%%%%%%%%%%%%%%%%%%%%%%%%%%%%%%%%%%%%%%%%%%%%%%%%%%%%%%%%%%%%%%%%%%%%%%%%%%%%%%%%%%%%%%%%%%%%%%%%%%%%%%%%%%%%%%%%%%%%%%%%%%%%%%%%%%%%%%%%%%%%%%%%%%%%%%%%%%%%%%%%%%%%%%%%%%%%%%%%%%%%%%

\subsection{Fourth-order Langevin Monte Carlo Algorithm}
\label{section_4thorder}

\subsubsection{Fourth-order Langevin Monte Carlo algorithm} 
Given the iterate $x^{(k)}$, the next iterate $x^{(k+1)}$ is obtained by drawing from a multivariate normal distribution with mean $\textbf{M}(x^{(k)})$ and covariance $\boldsymbol{\Sigma}$, both of which are stated in Lemma~\ref{lemma_meanandcovariance}.

The proof of the next result is presented at the end of Section~\ref{section_proof_fouthorder}. 

\begin{theorem}
\label{theorem_mixingtime_4thorder}
Assume Equation~\eqref{fourthorderlangevin} satisfies Conditions~\ref{cond_mainpaper} and~\ref{cond_derivativegrowthrate}. Let $a$ be any positive constant satisfying 
\begin{align*}
    a\leq \min \left\{\frac{m}{3\gamma}, \frac{\gamma \kappa}{6} \right\}\lambda_{\min,M}, 
\end{align*}
where the positive definite matrix $M$ and the constants $\gamma,m,L,\kappa$ are from Section~\ref{section_frommonmarche}. Denote $\mu$ the invariant measure associated with the fourth-order Langevin dynamics~\eqref{fourthorderlangevin}.

Choose a $2$-Wasserstein accuracy of $\epsilon$ small enough such that $\eta_0:=\brac{\frac{\epsilon^2}{2C_1d}}^{1/7}<\min \{\eta^*,\frac{1}{h}\}$ where $h$ is defined in Proposition~\ref{prop_discretizationerror_fourthorder} and $\eta^*,C_1$ are from Lemma~\ref{lemma_momentbound}. Suppose we run our fourth-order Langevin Monte Carlo algorithm with stepsize $\eta_0$, then 
$\operatorname{Wass}_2\brac{\operatorname{Law}(x^{(k^*)}),\mu}\leq\epsilon$,
where $k^*$ is the mixing time of the fourth-order Langevin Monte Carlo algorithm with respect to $\mu$ that is given by
 \begin{align*}
k^*=\log\brac{\frac{2C_4\mathbb{E}_{Z\sim\mu}\left[\left|Z-x^{(0)}\right|^2\right]}{\epsilon^2}}\frac{(2C_3)^{1/7}}{h}\frac{d^{1/7}}{\epsilon^{2/7}}-1,
 \end{align*}
where $C_3,C_4$ are positive constants that depend on $\gamma,L,c$ but do not depend on the dimension parameter $d$.
\end{theorem}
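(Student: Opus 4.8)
\textbf{Proof proposal for Theorem~\ref{theorem_mixingtime_4thorder}.}

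The plan is to combine the contraction estimate from Theorem~\ref{theorem_frommonmarche_mainpaper} for the continuous-time fourth-order Langevin dynamics with the one-step discretization error bound of Proposition~\ref{prop_discretizationerror_fourthorder}, via a standard synchronous-coupling argument. First I would set up the coupling: run the exact diffusion $(X_t,Y_t)$ started from the stationary distribution $\mu$ and driven by the same Brownian motion as the algorithm, and compare it to the interpolated Markov chain $x^{(k)}$. Using the $M$-weighted norm $\|\cdot\|_M$ adapted to the contraction in \eqref{contraction}, the continuous dynamics contracts at rate $\rho = \min\{m/(3h_3\gamma), \gamma\kappa/6\}$ over a time interval of length $\eta$, so that over one step the exact flow contracts the $M$-norm distance by a factor $e^{-\rho\eta}$. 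Translating this into the Euclidean $2$-Wasserstein distance picks up the condition number $\sqrt{\lambda_{\max,M}/\lambda_{\min,M}}$, and the constant $a \le \min\{m/(3\gamma),\gamma\kappa/6\}\lambda_{\min,M}$ is chosen precisely so that $e^{-\rho\eta}$ times the norm-equivalence factor is bounded by $e^{-a\eta/\lambda_{\min,M}}$ or similar; this is the role of the hypothesis on $a$.

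Next I would insert the discretization error. Writing $\Phi_\eta$ for the exact transition and $\widehat\Phi_\eta$ for the algorithm's transition (the Gaussian update with mean $\mathbf{M}(\cdot)$ and covariance $\boldsymbol\Sigma$ from Lemma~\ref{lemma_meanandcovariance}), a telescoping/triangle-inequality argument over $k$ steps gives
\begin{align*}
\operatorname{Wass}_2(\operatorname{Law}(x^{(k)}),\mu) \le e^{-c_0 k\eta}\operatorname{Wass}_2(\operatorname{Law}(x^{(0)}),\mu) + \frac{1}{1-e^{-c_0\eta}}\,\mathcal{E}(\eta),
\end{align*}
where $c_0$ collects the contraction constants above and $\mathcal{E}(\eta)$ is the per-step discretization error, which by Proposition~\ref{prop_discretizationerror_fourthorder} and Conditions~\ref{cond_mainpaper}–\ref{cond_derivativegrowthrate} is of order $\sqrt{C_1 d}\,\eta^{7/2}$ (the $\eta^{2P-1}$ with $P=4$ inside a square root, matching Condition~\ref{cond_derivativegrowthrate}). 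Here Condition~\ref{cond_derivativegrowthrate} is what guarantees the polynomial-approximation part of the error is dominated by the splitting-scheme part, so that the exponent $7/2$ is genuinely achieved. Bounding $1-e^{-c_0\eta} \gtrsim c_0\eta$ for small $\eta$, the stationary error term is of order $\sqrt{d}\,\eta^{5/2}$.

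Then I would optimize: to make the stationary term at most $\epsilon/\sqrt{2}$ (say), one needs $\eta \lesssim (\epsilon^2/d)^{1/7}$ up to the constant $C_1$, which is exactly the choice $\eta_0 = (\epsilon^2/(2C_1 d))^{1/7}$; the requirement $\eta_0 < \min\{\eta^*, 1/h\}$ is imposed so that the moment bounds of Lemma~\ref{lemma_momentbound} and the step-size restriction in Proposition~\ref{prop_discretizationerror_fourthorder} are valid. Finally, to kill the transient term $e^{-c_0 k\eta_0}\operatorname{Wass}_2(\operatorname{Law}(x^{(0)}),\mu) \le e^{-c_0 k\eta_0}\sqrt{\mathbb{E}_{Z\sim\mu}|Z - x^{(0)}|^2}$ below $\epsilon/\sqrt{2}$, one solves for $k$, getting $k^* \asymp \frac{1}{c_0\eta_0}\log(\cdots/\epsilon^2)$; substituting $\eta_0 \asymp (\epsilon^2/(C_1 d))^{1/7}$ turns $1/\eta_0$ into $(C_3 d)^{1/7}/\epsilon^{2/7}$ (up to relabeling constants $C_3$, and absorbing $1/c_0$ and norm-equivalence factors into $h$), which is the claimed formula; $C_4$ absorbs the various norm-equivalence constants in front of the initial-distance term inside the logarithm. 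I expect the main obstacle to be bookkeeping the constants carefully — in particular verifying that the per-step error from Proposition~\ref{prop_discretizationerror_fourthorder}, after summing the geometric series and dividing by the contraction gap, retains the clean $\sqrt{d}\,\eta^{5/2}$ form with a dimension-free constant, and checking that the definition of $h$ (and hence $\eta_0$) is mutually consistent with the smallness requirements of all the invoked lemmas, rather than any conceptually hard estimate.
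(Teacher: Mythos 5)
The central issue is that you have misread what Proposition~\ref{prop_discretizationerror_fourthorder} actually provides. You treat it as a \emph{one-step} discretization error bound that you then need to telescope yourself, but in fact the proposition already contains the entire telescoping/geometric-series argument and delivers the cumulative bound
\begin{align*}
\E{\abs{x((k+1)\eta)-x^{(k+1)}}^2}\leq C_3d\eta^{7}+C_4e^{-(k+1)h\eta}\,\E{\abs{Z-x^{(0)}}^2},
\end{align*}
which already separates the discretization floor from the decaying transient. Given this, the paper's proof of Theorem~\ref{theorem_mixingtime_4thorder} is nothing more than: note $\operatorname{Wass}_2\leq \left(\E{|X-Y|^2}\right)^{1/2}$, bound each of the two terms above by $\epsilon^2/2$, and solve for $\eta$ and $k$ respectively. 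Your re-derivation of the recursion is redundant work, and it also contains errors, which is the second problem.

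Concretely, your exponents do not cohere. You assert a per-step Wasserstein error $\mathcal E(\eta)\asymp\sqrt{C_1 d}\,\eta^{7/2}$ and then divide by $1-e^{-c_0\eta}\gtrsim c_0\eta$ to obtain a stationary term $\asymp\sqrt{d}\,\eta^{5/2}$. If that were right, enforcing $\sqrt{d}\,\eta^{5/2}\lesssim\epsilon$ would yield $\eta\lesssim(\epsilon^2/d)^{1/5}$ and a mixing time $O(d^{1/5}/\epsilon^{2/5})$, not the $O(d^{1/7}/\epsilon^{2/7})$ you claim; your stated $\eta_0=(\epsilon^2/(2C_1 d))^{1/7}$ simply does not follow from the preceding line. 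The actual bookkeeping, which the paper's proposition performs, has to be done in the \emph{squared} $M$-norm distance: the source term in the differential inequality is $O(d(t-k\eta)^7)$, which integrates over one step to a contribution $O(d\eta^8)$ to the squared distance, and summing the geometric series with contraction gap $h\eta$ gives a stationary squared distance $O(d\eta^7/h)$, hence a stationary Wasserstein error $O(\sqrt{d/h}\,\eta^{7/2})$ and the correct stepsize constraint $\eta\lesssim(\epsilon^2/d)^{1/7}$. Attempting the linear (non-squared) recursion with the subadditive square-root inequality, as your argument tacitly does, loses half a power of $\eta$ and cannot reproduce the stated rate. You should therefore either work entirely in squared distances, or — far more cleanly — simply invoke Proposition~\ref{prop_discretizationerror_fourthorder} as a black box, which is what the paper does.
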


\begin{remark}
Our mixing time rate of $O\brac{\frac{d^{1/7}}{\epsilon^{2/7}} }$ improves upon the rates in \cite{mou2021high} in terms of both $d$ and $\epsilon$ dependencies. For instance, \cite[Theorem 1]{mou2021high} has a mixing time rate of $O\brac{\frac{d^{1/4}}{\epsilon^{1/2}} }$.  
\end{remark}

%%%%%%%%%%%%%%%%%%%%%%%%%%%%%%%%%%%%%%%%%%%%%%%%%%%%%%%%%%%%%%%%%%%%%%%%%%%%%%%%%%%%%%%%%%%%%%%%%%%%%%%%%%%%%%%%%%%%%%%%%%%%%%%%%%%%%%%%%%%%%%%%%%%%%%%%%%%%%%%%%%%%%%%%%%%%%%%%%%%%%%%%%%%%%%%%%%%%%%%%%%%%%%%%%%%%%%%%%%%%%%%%%%%%%%%%%%%%%%%%%%%%%%%%%%%%%%%%%%%%%%%%%%%%%%%%%
\subsubsection{Derivation of the discretization scheme}
\label{section_derivation_fourthorder}

Consider the fourth-order Langevin dynamics:
\begin{align}
\label{fourthorderlangevin}
    d\theta(t)  &=v_1(t)dt,\nonumber\\
    dv_1(t)  &=\left(-\nabla U(\theta_t)+\gamma v_2(t)
    \right)dt,\nonumber\\
    dv_2(t)  &=\left(-\gamma v_1(t)+\gamma v_3(t)\right)dt,\nonumber\\
  dv_3(t)  &=(-\gamma v_2(t)-\gamma v_3(t))dt+\sqrt{2\gamma}dB_t. 
\end{align}

\begin{remark}
\label{remark_extraparamaters}
The equation \eqref{JI} in our introduction (Section~\ref{sec:intro}) is studied in \cite{mou2021high} and contains two parameters (namely $\gamma$ and $L$ in their paper) compared to our equation \eqref{Pthorderlangevinequation_mainpaper} that contains only a single parameter $\gamma$. We make such an assumption out of convenience and our paper is able to handle extra parameters as in \cite{mou2021high}, and this is explained in Appendix~\ref{appendix_frommonmarche}. Specifically in Equation~\eqref{originalequation_appendix} in Appendix~\ref{appendix_frommonmarche}, we can take $A=-\frac{1}{L}(I_d,0,\ldots, 0)$ and $\Sigma=\sqrt{\frac{4}{L}}I_{p}$. 
\end{remark}

Below we will write $\abs{\cdot}$ for the Euclidean norm and $\abs{\cdot}_M$ for the $M$-norm $\abs{x}_M=\sqrt{x^{\top}Mx}$. The numerical scheme for fourth-order Langevin dynamics consists of three stages: updating $x^{(k)}$ to $\hat{x}(t)$, then updating $\hat{x}(t)$ to $\tilde{x}(t)$, then updating $\tilde{x}(t)$ to $\bar{x}(t)$ (for $t\in [k\eta,(k+1)\eta]$). Each stage adopts a splitting scheme. 

\textbf{Stage 1:} Set the initial value
\begin{align*}
    \hat{x}(k\eta):=\left(\hat{\theta}(k\eta),\hat{v}_1(k\eta),\hat{v}_2(k\eta),\hat{v}_3(k\eta)\right)=x^{(k)}. 
\end{align*}
For $t\in (k\eta,(k+1)\eta]$, let
\begin{align*}    \hat{v}_1(t)=v_1^{(k)},
\end{align*}
and
\begin{align*}
    d\hat{\theta}(t)&=\hat{v}_1(t)dt,\\
    d\hat{v}_2(t)&=\brac{-\gamma \hat{v}_1(t)+\gamma v_3^{(k)}}dt,\\
    d\hat{v}_3(t)&=\brac{-\gamma \hat{v}_2(t)-\gamma \hat{v}_3(t)}dt+\sqrt{2\gamma}dB_t.
\end{align*}

\textbf{Stage 2:} Set the initial value $\tilde{x}(k\eta)=x^{(k)}$. For $t\in (k\eta,(k+1)\eta]$, let
\begin{align*}
    d\tilde{v}_1(t)=\left(-\tilde{g}(t)+\gamma \hat{v}_2(t)\right)dt,
\end{align*}
and
\begin{align*}
    d\tilde{\theta}(t)&=\tilde{v}_1(t)dt,\\
    d\tilde{v}_2(t)&=\brac{-\gamma \tilde{v}_1(t)+\gamma\hat{v}_3(t)}dt,\\
    d\tilde{v}_3(t)&=\brac{-\gamma \tilde{v}_2(t)-\gamma\tilde{v}_3(t)}dt+\sqrt{2\gamma}dB_t,
\end{align*}
where $\tilde{g}(t)$ is a polynomial (in $t$) of degree $\alpha-1$ and approximates $\nabla U(\hat{\theta}(t))$, and $\tilde{g}(t)$ will be defined in \eqref{def_polynomialsg} below. 

\textbf{Stage 3:} Set $\bar{x}(k\eta)=x^{(k)}$. For $t\in (k\eta,(k+1)\eta]$, let
\begin{align*}
    d\bar{v}_1(t)=\left(-\bar{g}(t)+\gamma \tilde{v}_2(t)\right)dt,
\end{align*}
and
\begin{align*}
    d\bar{\theta}(t)&=\bar{v}_1(t)dt,\\
    d\bar{v}_2(t)&=\brac{-\gamma \bar{v}_1(t)+\gamma\tilde{v}_3(t)}dt,\\
    d\bar{v}_3(t)&=\brac{-\gamma \bar{v}_2(t)-\gamma\bar{v}_3(t)}dt+\sqrt{2\gamma}dB_t, 
\end{align*}
where $\bar{g}(t)$ is a polynomial (in $t$) of degree $\alpha-1$ and approximates $\nabla U(\tilde{\theta}(t))$, and $\bar{g}(t)$ will be defined in \eqref{def_polynomialsg} below. 

Finally, set
\begin{align}
\label{scheme_4thorder}
    x^{(k+1)}= \bar{x}((k+1)\eta). 
\end{align}

\textbf{Definitions of $\tilde{g}(t)$ and $\bar{g}(t)$:} Recall $U$ is a map from $\R^d$ to $\R$, so that $\nabla U$ is a map from $\R^d$ to $L(\R^d,\R^d)$ where $L(\R^d,\R^d)$ is the space consisting of bounded linear maps from $\R^d$ to $\R^d$. Per \cite[Page 70]{cartan1971differentialbook}, the Taylor polynomial of degree $\alpha-1$ which is associated with $\nabla U$ and centers at the origin is
    \begin{align}
    \label{def_taylorpoly}
        P_{\alpha-1}(x)=\sum_{k=0}^{\alpha-1}\frac{\nabla^k U(0)}{(k-1)!} x^{k-1},
    \end{align}
where per \cite{taylorfacenda1989note}, $ \frac{\nabla^k U(0)}{(k-1)!} x^{k-1}=\sum_{i_1+\ldots+i_d=k-1}\frac{1}{i_1!\ldots i_d!}\frac{\partial^k U}{\partial x_1^{i_1}\cdots \partial x_d^{i_d}}(0) x_1^{i_1}\ldots x_d^{i_d}$.

This allows us to define 
\begin{align}
\label{def_polynomialsg}
    \tilde{g}(t):=P_{\alpha-1}(\hat{\theta}(t)); \qquad \bar{g}(t):=P_{\alpha-1}(\tilde{\theta}(t)),
\end{align}
where 
\begin{align*}
\hat{\theta}(t)
=\theta^{(k)}+(t-k\eta)v_1^{(k)},
\end{align*}
and
\begin{align*}
&\tilde{\theta}(t)=\theta^{(k)}+v_1^{(k)}(t-k\eta)-\int_{k\eta}^{t}\int_{k\eta}^s \tilde{g}(r)drds
\\
&\qquad\qquad+\gamma v_2^{(k)}\frac{(t-k\eta)^2}{2!}+\gamma^2\brac{v_3^{(k)}-v_1^{(k)} }\frac{(t-k\eta)^3}{3!}. 
\end{align*}

\begin{remark}
\label{remark_softwarefortaylorpoly}
The definitions of $\bar{g}$ and $\tilde{g}$ in \eqref{def_polynomialsg} require finding multivariate Taylor polynomials, which is a challenging task in itself. One can use numerical software to help with this, for example, by using Maple$\textsuperscript{TM}$ (\cite{redfern2012maple}) or the calculus package in R (\cite{rmultivariatetaylor}). 
\end{remark}

The next result is a consequence of Lemma~\ref{lemma_explicitformofxbar} and Lemma~\ref{lemma_meanandcovariance} from Appendix~\ref{appendix_4thorder}. 

\begin{proposition}
\label{prop_multivariatenormal}
$\E{x^{(k+1)}|x^{(k)}}=\E{\bar{x}((k+1)\eta)|\bar{x}(k\eta)}$ follows a multivariate normal distribution with mean $\textbf{M}(x^{(k)})\in \R^4$ and covariance $\boldsymbol{\Sigma}\in \R^{4\times 4}$. The explicit forms of $\textbf{M}(x^{(k)})$ and $\boldsymbol{\Sigma}$ are stated in Lemma~\ref{lemma_meanandcovariance}. 
\end{proposition}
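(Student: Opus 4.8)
The plan is to read Proposition~\ref{prop_multivariatenormal} as a statement about the conditional law of $x^{(k+1)}=\bar x((k+1)\eta)$ given $x^{(k)}=\bar x(k\eta)$, and to obtain it from the closed-form representation of $\bar x((k+1)\eta)$ furnished by Lemma~\ref{lemma_explicitformofxbar}. The first point to record is that, once $x^{(k)}$ is fixed, the forcing terms $\tilde g(t)$ and $\bar g(t)$ of \eqref{def_polynomialsg} are \emph{deterministic} polynomials in $t$: by the displayed formulas, $\hat\theta(t)=\theta^{(k)}+(t-k\eta)v_1^{(k)}$ is a deterministic affine function of $t$, so $\tilde g(t)=P_{\alpha-1}(\hat\theta(t))$ is a deterministic polynomial; the displayed formula for $\tilde\theta(t)$ then exhibits it as a deterministic polynomial in $t$ whose coefficients are built from the entries of $x^{(k)}$ and from $\tilde g$, and hence $\bar g(t)=P_{\alpha-1}(\tilde\theta(t))$ is deterministic as well. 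Therefore each of Stages~1--3 is a \emph{linear} SDE whose drift matrix and inhomogeneous term depend only on $t$ and $x^{(k)}$, and whose diffusion coefficient is the constant $\sqrt{2\gamma}$.

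With this in hand I would invoke Lemma~\ref{lemma_explicitformofxbar}, which solves and composes the three stages and writes $\bar x((k+1)\eta)$ explicitly as an $x^{(k)}$-measurable deterministic part plus a sum of (iterated) Wiener integrals of deterministic integrands against the driving Brownian motion. A Wiener integral of a deterministic integrand is Gaussian, and any finite linear combination of such integrals is again of the same type and hence Gaussian; consequently the vector $\bar x((k+1)\eta)$ is, conditionally on $x^{(k)}$, jointly Gaussian, which is the structural assertion of the proposition. Taking the conditional expectation annihilates the stochastic integrals and leaves the deterministic part, which is exactly the mean $\textbf{M}(x^{(k)})$; applying the It\^o isometry and its iterated analogue to the stochastic part gives the covariance $\boldsymbol{\Sigma}$, and because the diffusion coefficient is a scalar multiple of the identity while only the deterministic forcing terms involve $\nabla U$, the $d$ Brownian coordinates decouple in the noise term and $\boldsymbol{\Sigma}$ inherits its product structure. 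These computations are precisely the content of Lemma~\ref{lemma_meanandcovariance}, so at this level the proposition is the assembly of Lemmas~\ref{lemma_explicitformofxbar} and~\ref{lemma_meanandcovariance}.

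The substantive difficulty is not in the proposition itself but in Lemma~\ref{lemma_explicitformofxbar}: one must solve the coupled linear system in each stage in closed form and then compose Stage~1 into Stage~2 into Stage~3, while tracking the fact that, as written, all three stages are driven by the \emph{same} Brownian motion $B_t$, so that the quantities $\hat v_2,\hat v_3$ feeding Stage~2 and $\tilde v_3$ feeding Stage~3 are correlated with the noise injected at the later stage. Handling these cross-correlations correctly is what makes $\boldsymbol{\Sigma}$ in Lemma~\ref{lemma_meanandcovariance} nontrivial, and I expect this bookkeeping — rather than anything conceptual — to be the main obstacle; once the explicit representation is available, the multivariate normality claimed in Proposition~\ref{prop_multivariatenormal} follows immediately from the standard characterization of Gaussian vectors.
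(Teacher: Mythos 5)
Your proposal is correct and follows the paper's own route: the proposition is presented there as a direct consequence of Lemma~\ref{lemma_explicitformofxbar} (which writes $\bar x((k+1)\eta)$ as an $x^{(k)}$-measurable deterministic part plus Wiener integrals $f_i$ of deterministic integrands against $B$) and Lemma~\ref{lemma_meanandcovariance} (whose proof reads off $\textbf{M}(x^{(k)})$ from the deterministic part and obtains each $\sigma_{ij}$ via It\^o isometry from $\E{f_i f_j^\top}$). Your observation that $\tilde g$ and $\bar g$ are deterministic once $x^{(k)}$ is fixed, so that each stage is a linear SDE with deterministic forcing, is exactly the structural fact that makes the representation in Lemma~\ref{lemma_explicitformofxbar} Gaussian.
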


\begin{remark}
\label{remark_polyapprox}
The authors of \cite{mou2021high} propose an MCMC algorithm based on third-order Langevin dynamics. In the case where $U$ is a general potential function and not ridge separable, an important step in their algorithm is the Lagrange polynomial interpolation step (\cite[Section 3.3]{mou2021high}) to approximate the path $s\mapsto \nabla U\brac{\theta^{(k)}+(s-k\eta)p^{(k)}}$ for given vectors $\theta^{(k)},p^{(k)}$ in $\R^d$ and $s\in [k\eta,(k+1)\eta]$. There seems to be some major difficulty in applying this Lagrange polynomial interpolation step to our MCMC algorithm based on fourth-order Langevin dynamics, which pushes us to use Taylor approximation of $\nabla U$ instead. We further explain the difficulty of using Lagrange polynomial interpolation for our algorithm in Appendix~\ref{section_polyappro}. 
\end{remark}

\begin{remark}
    \label{remark_nomorestages}
One cannot add another stage to the above discretization procedure of the fourth-order Langevin dynamics, since it is unclear how to implement the resulting algorithm in that case. The reason the current algorithm which is based on a three-stage discretization procedure can be easily implemented is that per Proposition~\ref{prop_multivariatenormal}, $\E{x^{(k+1)}|x^{(k)}}=\E{\bar{x}((k+1)
\eta) |x^{(k)}}$ is a multivariate normal distribution. 
Now suppose that we add another stage of the discretization procedure:

\textbf{Stage 4:} Set $\check{x}(k\eta)=x^{(k)}$. For $t\in (k\eta,(k+1)\eta]$, let
\begin{align*}
    d\check{v}_1(t)=\left(-\check{g}(t)+\gamma \bar{v}_2(t)\right)dt,
\end{align*}
and
\begin{align*}
    d\check{\theta}(t)&=\check{v}_1(t)dt,\\
    d\check{v}_2(t)&=\brac{-\gamma \check{v}_1(t)+\gamma\bar{v}_3(t)}dt,\\
    d\check{v}_3(t)&=\brac{-\gamma \check{v}_2(t)-\gamma\check{v}_3(t)}dt+\sqrt{2\gamma}dB_t, 
\end{align*}
where $\check{g}(t):=P_{\alpha-1}(\bar{\theta}_t)$ is a polynomial (in $t$) of degree $\alpha-1$ and approximates $\nabla U(\bar{\theta}(t))$, noting that $P_{\alpha-1}$ is the multivariate Taylor polynomial given in \eqref{def_taylorpoly}. 
Per Lemma~\ref{lemma_explicitformofxbar}, $\bar{\theta}(t)$ has the general form $F(k,\eta,\gamma,t) +\int_{k\eta}^t G(k,\eta,\gamma,s)dB_s$, so that 
$\check{\theta}(t)$ is approximately
\begin{align*}
   \theta^{(k)}-\int_{k\eta}^t\nabla U\brac{F(k,\eta,\gamma,s) +\int_{k\eta}^s G(k,\eta,\gamma,r)dB_r }  ds+\gamma\int_{k\eta}^t \bar{v}_2(s)ds. 
\end{align*}
In the case where $U$ is not a quadratic potential function, the presence of the term $\nabla U\brac{F(k,\eta,\gamma,s) +\int_{k\eta}^s G(k,\eta,\gamma,r)dB_r } $ on the right hand side suggests that $\E{\check{\theta}(t)|x^{(k)}}$ may not be multivariate normal, which makes the algorithm difficult to implement. Consequently, we do not have more than three stages in our discretization procedure. 
\end{remark}

%%%%%%%%%%%%%%%%%%%%%%%%%%%%%%%%%%%%%%%%%%%%%%%%%%%%%%%%%%%%%%%%%%%%%%%%%%%%%%%%%%%%%%%%%%%%%%%%%%%%%%
%%%%%%%%%%%%%%%%%%%%%%%%%%%%%%%%%%%%%%%%%%%%%%%%%%%%%%%%%%%%%%%%%%%%%%%%%%%%%%%%%%%%%%%%%%%%%%%%%%%%%%
\subsubsection{Proofs}
\label{section_proof_fouthorder}
We need a few technical lemmas whose proofs are placed near the end of Appendix~\ref{appendix_4thorder}. First, we quantify how well $\tilde{g}(t)$ and $\bar{g}(t)$ respectively approximate $\nabla U(\hat{\theta}(t))$ and $\nabla U(\tilde{\theta}(t))$. 
\begin{lemma}
\label{lemma_polyapprox}
Under Conditions~\ref{cond_mainpaper}, it holds that 
    \begin{align*}
      &  \sup_{t\in [k\eta,(k+1)\eta]}  \E{\abs{\nabla U(\hat{\theta}(t))-\tilde{g}(t)}^2} \leq \brac{\frac{L_\alpha}{\alpha!}}^2\sup_{t\in [k\eta,(k+1)\eta]} \E{\abs{\hat{\theta}(t)^{2\alpha}}},\\
      & \sup_{t\in [k\eta,(k+1)\eta]} \E{\abs{\nabla U(\tilde{\theta}(t))-\bar{g}(t)}^2}\leq \brac{\frac{L_\alpha}{\alpha!}}^2\sup_{t\in [k\eta,(k+1)\eta]} \E{\abs{\tilde{\theta}(t)^{2\alpha}}}. 
    \end{align*}
    where $L_{\alpha}:=    \sup_{x\in\R^d}\opnorm{\nabla^{\alpha}U(x)}$. 
\end{lemma}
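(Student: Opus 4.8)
The plan is to recognize $\tilde{g}(t)$ and $\bar{g}(t)$ as Taylor polynomials of the map $\nabla U$ evaluated along the random paths $\hat{\theta}(t)$ and $\tilde{\theta}(t)$, and then invoke the standard Taylor remainder estimate for maps between Banach spaces. Concretely, recall from \eqref{def_polynomialsg} that $\tilde{g}(t)=P_{\alpha-1}(\hat{\theta}(t))$, where $P_{\alpha-1}$ is the degree-$(\alpha-1)$ Taylor polynomial of $\nabla U$ centered at the origin, as defined in \eqref{def_taylorpoly}. Since $\nabla U\in \mathcal{C}^{\alpha-1}$ with the $\alpha$-th derivative $\nabla^\alpha U$ controlled in operator norm by $L_\alpha=\sup_{x\in\R^d}\opnorm{\nabla^\alpha U(x)}$ under Condition~\ref{cond_mainpaper} (together with the smoothness hypothesis carried implicitly), the Taylor–Lagrange remainder theorem for vector-valued maps (see e.g.\ \cite[Page 70]{cartan1971differentialbook}) gives the pointwise bound
\begin{align*}
\abs{\nabla U(z)-P_{\alpha-1}(z)}\leq \frac{1}{\alpha!}\sup_{\xi}\opnorm{\nabla^\alpha U(\xi)}\,\abs{z}^{\alpha}\leq \frac{L_\alpha}{\alpha!}\abs{z}^{\alpha}
\end{align*}
for every $z\in\R^d$, where the supremum over $\xi$ is along the segment from $0$ to $z$. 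Applying this with $z=\hat{\theta}(t)$ and $z=\tilde{\theta}(t)$ respectively yields the pathwise inequalities $\abs{\nabla U(\hat{\theta}(t))-\tilde{g}(t)}\leq \frac{L_\alpha}{\alpha!}\abs{\hat{\theta}(t)}^{\alpha}$ and $\abs{\nabla U(\tilde{\theta}(t))-\bar{g}(t)}\leq \frac{L_\alpha}{\alpha!}\abs{\tilde{\theta}(t)}^{\alpha}$.

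From here the proof is essentially mechanical: square both sides, take expectations, and take the supremum over $t\in[k\eta,(k+1)\eta]$, using the convention $\abs{\hat{\theta}(t)^{2\alpha}}=\abs{\hat{\theta}(t)}^{2\alpha}$ for the notation on the right-hand side of the stated inequalities. This gives
\begin{align*}
\sup_{t\in[k\eta,(k+1)\eta]}\E{\abs{\nabla U(\hat{\theta}(t))-\tilde{g}(t)}^2}\leq \brac{\frac{L_\alpha}{\alpha!}}^2\sup_{t\in[k\eta,(k+1)\eta]}\E{\abs{\hat{\theta}(t)}^{2\alpha}},
\end{align*}
and likewise for $\tilde{\theta}$. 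One subtlety to address carefully is that $\hat{\theta}(t)$ and $\tilde{\theta}(t)$ are random vectors (indeed $\hat{\theta}(t)=\theta^{(k)}+(t-k\eta)v_1^{(k)}$ is affine in the iterate, while $\tilde{\theta}(t)$ involves the nested integral of $\tilde{g}$), so the pointwise Taylor bound must be applied $\omega$-by-$\omega$ before taking expectations; this is legitimate since the bound holds for every fixed $z\in\R^d$ and measurability is immediate as both $\nabla U$ and $P_{\alpha-1}$ are continuous.

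I do not anticipate a genuine obstacle here—the lemma is a direct consequence of Taylor's theorem with remainder—but the point requiring a little care is the precise form of the remainder bound for a $\mathcal{C}^{\alpha}$ (rather than $\mathcal{C}^{\infty}$) map $\nabla U:\R^d\to\R^d$, and ensuring that the constant is exactly $\frac{L_\alpha}{\alpha!}$ with $L_\alpha$ the uniform operator-norm bound on $\nabla^\alpha U$. One should also note that the statement invokes only Condition~\ref{cond_mainpaper}; strictly speaking one needs $U\in\mathcal{C}^{\alpha}$ for the remainder term to make sense, which is supplied separately (it is part of the standing smoothness assumptions and is made explicit in Condition~\ref{cond_derivativegrowthrate}), so in the write-up I would either cite that or note that $L_\alpha<\infty$ is assumed implicitly whenever the statement is used.
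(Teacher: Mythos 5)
Your proof is correct and follows essentially the same route as the paper: both invoke the Taylor--Lagrange remainder theorem from Cartan to obtain the pointwise bound $\abs{\nabla U(x) - P_{\alpha-1}(x)} \leq \frac{L_\alpha}{\alpha!}\abs{x}^\alpha$, then square, take expectations, and take the supremum over $t$. Your observations about applying the pathwise bound $\omega$-by-$\omega$ and about the implicit $U\in\mathcal{C}^\alpha$ requirement are reasonable clarifications but do not change the argument.
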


Next, we bound the differences in $L^2$-norm of variables of two consecutive stages. 
\begin{lemma}
    \label{lemma_boundupdatedifference_4thorder}
Under Conditions~\ref{cond_mainpaper} and~\ref{cond_derivativegrowthrate}, it holds for $t\in (k\eta,(k+1)\eta]$ that
\begin{align*}
   \E{\abs{\bar{v}_1(t)-\tilde{v}_1(t)}^2}&\leq C_2(d+1)\brac{\gamma^2\brac{(\gamma+1)^2+ (2\gamma+\sqrt{2\gamma})^2}+c }\eta^5;\\
   \E{\abs{\tilde{\theta}(t)-\bar{\theta}(t)}^2}&\leq C_2(d+1)\brac{\gamma^2\brac{(\gamma+1)^2+ (2\gamma+\sqrt{2\gamma})^2}+c }\eta^7;\\
  \E{\abs{\bar{v}_2(t)-\tilde{v}_2(t)}^2}&\leq C_2(d+1)\gamma^2\brac{\gamma^2\brac{(\gamma+1)^2+ (2\gamma+\sqrt{2\gamma})^2}+c }\eta^7\\
  &\qquad\qquad+C_2(d+1)\gamma^2\brac{(\gamma+1)^2+ (2\gamma+\sqrt{2\gamma})^2}\eta^7;\\
     \E{\abs{\bar{v}_3(t)-\tilde{v}_3(t)}^2}&\leq C_2(d+1)\gamma^4\brac{\gamma^2\brac{(\gamma+1)^2+ (2\gamma+\sqrt{2\gamma})^2}+c }\eta^9\\
  &\qquad\qquad+C_2(d+1)\gamma^4\brac{(\gamma+1)^2+ (2\gamma+\sqrt{2\gamma})^2}\eta^9. 
\end{align*}
\end{lemma}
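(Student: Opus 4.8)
The plan is to bound each of the four quantities $\E{\abs{\bar v_1(t)-\tilde v_1(t)}^2}$, $\E{\abs{\tilde\theta(t)-\bar\theta(t)}^2}$, $\E{\abs{\bar v_2(t)-\tilde v_2(t)}^2}$ and $\E{\abs{\bar v_3(t)-\tilde v_3(t)}^2}$ in that order, exploiting the fact that the triples $(\hat x,\tilde x,\bar x)$ are coupled on the same Brownian path and that the SDEs defining Stage~2 and Stage~3 are identical in structure — only the forcing terms differ (Stage~2 uses $\hat v_2,\hat v_3$ and $\tilde g$; Stage~3 uses $\tilde v_2,\tilde v_3$ and $\bar g$). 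Since both stages start from $x^{(k)}$, subtracting the two systems kills the noise and leaves a deterministic linear ODE (with random forcing) for the differences, which can be integrated explicitly. Concretely, for the first bound I would write $\bar v_1(t)-\tilde v_1(t)=-\int_{k\eta}^t\bigl(\bar g(s)-\tilde g(s)\bigr)ds+\gamma\int_{k\eta}^t\bigl(\tilde v_2(s)-\hat v_2(s)\bigr)ds$, so the key is to control $\bar g(s)-\tilde g(s)=P_{\alpha-1}(\tilde\theta(s))-P_{\alpha-1}(\hat\theta(s))$ and the gap $\tilde v_2(s)-\hat v_2(s)$ between Stage~1 and Stage~2 intermediate velocities.

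The first substep is therefore to bound $\E{\abs{\hat v_2(t)-\tilde v_2(t)}^2}$ and similarly $\E{\abs{\hat\theta(t)-\tilde\theta(t)}^2}$, $\E{\abs{\hat v_3(t)-\tilde v_3(t)}^2}$, i.e.\ the differences between Stage~1 and Stage~2. This uses exactly the same mechanism: subtract the Stage~1 system from the Stage~2 system, obtain a closed deterministic linear system driven by $-\tilde g(s)+\gamma(\hat v_2-v_1^{(k)})$-type terms, and Gronwall/direct integration on the interval of length $\eta$. Here one invokes Lemma~\ref{lemma_polyapprox} to replace $\tilde g(t)$ by $\nabla U(\hat\theta(t))$ up to an error $\frac{L_\alpha}{\alpha!}\sqrt{\E{\abs{\hat\theta(t)^{2\alpha}}}}$, then uses Condition~\ref{cond_derivativegrowthrate} — via the moment bound $\widetilde C_1$ from Lemma~\ref{lemma_momentbound_Pthorder}/Lemma~\ref{lemma_momentbound} — to absorb $\bigl(\frac{L_\alpha}{\alpha!}\bigr)^2\E{\abs{\hat\theta(t)^{2\alpha}}}$ into a term of the form $c\,d\,\eta^{2P-1}=c\,d\,\eta^7$ (with $P=4$). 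The Stage~1 velocity $\hat v_1\equiv v_1^{(k)}$ is constant, and $\nabla U(\hat\theta(s))$ is controlled by $L$-smoothness in Condition~\ref{cond_mainpaper} plus a second-moment bound on $\hat\theta(s)$; all factors of $(\gamma+1)$, $(2\gamma+\sqrt{2\gamma})$ and $d+1$ are tracked through these estimates, and each stage of integration over an interval of length $\eta$ contributes powers of $\eta$ — two powers per integration for $\theta$, hence $\eta^5\to\eta^7\to\eta^9$ as we move up the chain $v_1\to(\theta,v_2)\to v_3$.

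With the Stage~1-vs-Stage~2 bounds in hand, the second substep repeats the argument one level up for Stage~2-vs-Stage~3, with the Stage~1 quantities on the right-hand side replaced by the (now controlled) Stage~1-vs-Stage~2 differences: the forcing of the difference system for $\bar v_1-\tilde v_1$ involves $\bar g-\tilde g$, bounded by an $L$-Lipschitz constant of $P_{\alpha-1}$ (itself controlled on the relevant ball using $L_\alpha$ and Condition~\ref{cond_derivativegrowthrate}) times $\abs{\tilde\theta(s)-\hat\theta(s)}$, plus $\gamma\abs{\tilde v_2(s)-\hat v_2(s)}$. Bootstrapping through the linear system then yields the stated $\eta^5,\eta^7,\eta^7,\eta^9$ scalings with the displayed $\gamma$- and $d$-dependent prefactors, after relabeling the generic constant as $C_2$. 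The main obstacle I anticipate is twofold: first, keeping the bookkeeping of the $\gamma$-polynomial prefactors clean through the nested integrations, since the matrix $B$ mixes $v_1,v_2,v_3$; and second, justifying that the polynomial approximation error from Lemma~\ref{lemma_polyapprox} is genuinely dominated by the splitting error — this is precisely where Condition~\ref{cond_derivativegrowthrate} is essential, and one must check that the high moments $\E{\abs{\hat\theta(t)^{2\alpha}}}$ and $\E{\abs{\tilde\theta(t)^{2\alpha}}}$ grow like $(d+2\alpha)^\alpha$ up to the constant $\widetilde C_1^\alpha$ so that the left-hand side of Condition~\ref{cond_derivativegrowthrate} exactly matches. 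I would handle the moment growth by expanding $\hat\theta(t)=\theta^{(k)}+(t-k\eta)v_1^{(k)}$ and $\tilde\theta(t)$ via its explicit formula, then applying Gaussian/sub-Gaussian moment estimates for $x^{(k)}$ coming from Lemma~\ref{lemma_momentbound}.
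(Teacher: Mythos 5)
Your plan follows the paper's two-substep structure exactly: first bound the Stage~1-vs-Stage~2 gaps $\hat{x}-\tilde{x}$, then use those to bound the Stage~2-vs-Stage~3 gaps $\tilde{x}-\bar{x}$, exploiting the shared Brownian path so the noise cancels in each difference, bootstrapping through the chain $v_1\to(\theta,v_2)\to v_3$ with an $\eta^2$ gain per integration, and invoking Lemma~\ref{lemma_momentbound}, Lemma~\ref{lemma_polyapprox} and Condition~\ref{cond_derivativegrowthrate} where the paper does. The one place you deviate, and where you would hit trouble as written, is in bounding $\bar g(s)-\tilde g(s)=P_{\alpha-1}(\tilde\theta(s))-P_{\alpha-1}(\hat\theta(s))$. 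You propose to use ``an $L$-Lipschitz constant of $P_{\alpha-1}$, controlled on the relevant ball'', but $P_{\alpha-1}$ is a degree-$(\alpha-1)$ polynomial with no global Lipschitz constant, and the random region carrying the $2\alpha$-th moments of $\hat\theta,\tilde\theta$ is not a fixed ball; establishing that its effective Lipschitz constant is comparable to $L$ rather than growing with the data requires the very decomposition the paper makes explicit, namely inserting $\nabla U$ as a pivot: $\bar g-\tilde g=\bigl(\bar g-\nabla U(\tilde\theta)\bigr)+\bigl(\nabla U(\tilde\theta)-\nabla U(\hat\theta)\bigr)+\bigl(\nabla U(\hat\theta)-\tilde g\bigr)$. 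The outer two terms are controlled by Lemma~\ref{lemma_polyapprox} and Condition~\ref{cond_derivativegrowthrate} (giving the $cd\eta^{2P-1}$ contribution), while the middle term is where $L$-smoothness of $U$ from Condition~\ref{cond_mainpaper} enters, acting on the already-bounded $\tilde\theta-\hat\theta$ gap from the first substep. So the ``Lipschitz constant of $P_{\alpha-1}$'' you gesture at is really the Lipschitz constant of $\nabla U$ plus two Taylor-remainder corrections; once you make that pivot explicit, your proposal lands on the paper's proof. A minor bookkeeping point: you attach Lemma~\ref{lemma_polyapprox} to the Stage~1-vs-Stage~2 step, but for that step a raw moment bound $\E{\abs{\tilde g(t)}^2}\leq C_2(d+1)$ from Lemma~\ref{lemma_momentbound} already suffices; the polynomial-approximation decomposition only becomes necessary at the Stage~2-vs-Stage~3 level, where one is bounding a \emph{difference} of two polynomial surrogates rather than a single one.
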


The upcoming result bounds the discretization error of the numerical scheme \eqref{scheme_4thorder}.

\begin{proposition}
\label{prop_discretizationerror_fourthorder}
Assume Equation \eqref{fourthorderlangevin} satisfies Conditions~\ref{cond_mainpaper} and~\ref{cond_derivativegrowthrate}. Let $a$ be any positive constant satisfying $a\leq \min \left\{\frac{m}{3\gamma}, \frac{\gamma \kappa}{6} \right\}\lambda_{\min,M}$, 
where the positive definite matrix $M$ and the constants $\gamma,m,L,\kappa$ are from Section~\ref{section_frommonmarche}. Denote $\mu$ the invariant measure of the fourth-order Langevin dynamics \eqref{fourthorderlangevin}.

Then regarding the discretization error, it holds when $\eta<\min \{\eta^*,\frac{1}{h}\}$ that 
    \begin{align*}
    \E{\abs{x((k+1)\eta)-x^{(k+1)}}^2}\leq C_3d\eta^8+C_4e^{-(k+1)h\eta}\E{\abs{Z-x^{(0)}}^2},\qquad Z\sim \mu. 
\end{align*}
In particular, $\eta^*$ is defined at~\eqref{def_eta*}, and $h:=2\rho-\frac{2a}{\lambda_{\min,M}}$ where $\rho,M$ are from Theorem \ref{theorem_frommonmarche_mainpaper} and $a$ is any positive constant equal to or less than $\min \left\{\frac{m}{3\gamma}, \frac{\gamma \kappa}{6} \right\}\lambda_{\min,M}$. Moreover, $C_3,C_4$ are positive constants that depend only on $\gamma,L,c$ but do not depend on the dimension parameter $d$. 
\end{proposition}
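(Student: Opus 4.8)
The plan is to split the discretization error into two pieces using a synchronous coupling: write $x((k+1)\eta)-x^{(k+1)} = \bigl(x((k+1)\eta) - \bar x_{\mathrm{cont}}((k+1)\eta)\bigr) + \bigl(\bar x_{\mathrm{cont}}((k+1)\eta) - \bar x((k+1)\eta)\bigr)$, where $x(t)$ is the true $P=4$ Langevin diffusion \eqref{fourthorderlangevin} started from a point on its stationary trajectory and $\bar x_{\mathrm{cont}}$ is the solution of the same SDE but started from $x^{(k)}$ at time $k\eta$; the second difference $\bar x_{\mathrm{cont}}((k+1)\eta)-\bar x((k+1)\eta)$ is the genuine one-step discretization error of the three-stage splitting scheme \eqref{scheme_4thorder}. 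The contraction estimate \eqref{contraction} from Theorem~\ref{theorem_frommonmarche_mainpaper}, transferred to the $M$-weighted $L^2$ (Wasserstein) distance, gives exponential decay of the first term at rate $\rho$; combined with the standard synchronous-coupling argument this yields a recursion of the form $\mathcal E_{k+1} \le e^{-\rho\eta}\,\mathcal E_k + (\text{one-step error})$, which unrolls to the claimed $C_3 d\eta^8 + C_4 e^{-(k+1)h\eta}\E{|Z-x^{(0)}|^2}$ once we account for the mismatch between the Euclidean norm and the $M$-norm (the ratio $\lambda_{\max,M}/\lambda_{\min,M}$ is $d$-independent by Theorem~\ref{theorem_frommonmarche_mainpaper}), which is exactly where $h = 2\rho - 2a/\lambda_{\min,M}$ enters.

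The heart of the matter is bounding the one-step error, i.e.\ showing it is $O(d\eta^{8})$ in expectation (so that the square root is $O(\sqrt d\,\eta^4)$ per step, and after $\sim 1/\eta$ steps the accumulated error is still $O(\sqrt d\,\eta^{3.5})$, matching the $\sqrt{C_3 d\eta^8}\cdot\eta^{-1}\cdot(\dots)$ bookkeeping). I would do this stage by stage. First, the splitting scheme replaces $\nabla U(\theta(t))$ by the frozen/linearized dynamics through the intermediate variables $\hat x, \tilde x, \bar x$; Lemma~\ref{lemma_boundupdatedifference_4thorder} already controls the $L^2$-distance between consecutive stages (orders $\eta^5$ in $\bar v_1-\tilde v_1$, $\eta^7$ in $\tilde\theta-\bar\theta$, etc.), so the residual error of the splitting part is driven by how far $\bar x$ is from the true flow after one $\eta$-step of these nested refinements. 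Second, the polynomial-approximation error: $\tilde g, \bar g$ replace $\nabla U(\hat\theta(t)), \nabla U(\tilde\theta(t))$ by degree-$(\alpha-1)$ Taylor polynomials, and Lemma~\ref{lemma_polyapprox} bounds this by $(L_\alpha/\alpha!)^2 \sup_t \E{|\hat\theta(t)^{2\alpha}|}$ (resp.\ with $\tilde\theta$). Using the moment bound from Lemma~\ref{lemma_momentbound} to control $\E{|\hat\theta(t)^{2\alpha}|} \lesssim (\widetilde C_1)^\alpha (d+2\alpha)^\alpha$, Condition~\ref{cond_derivativegrowthrate} is precisely designed so that this product is $\le c\,d\,\eta^{7}$ (the $\mathds 1_{\{P\geq 4\}}\eta^{2P-1} = \eta^7$ branch), which when integrated twice in time contributes at the $O(d\eta^{11})$ level to $\bar\theta$ and hence is dominated. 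Assembling: each component of $\bar x((k+1)\eta) - x_{\mathrm{cont}}((k+1)\eta)$ is $O(\sqrt d\,\eta^{4})$ in $L^2$, giving one-step squared error $O(d\eta^8)$.

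The main obstacle I anticipate is the careful propagation of errors through the three nested stages while keeping the constants dimension-free: each stage's error feeds into the next through the $\gamma B$-type linear drift, and one must verify that the linear semigroup generated by the (degenerate, non-normal) matrix $B$ does not amplify errors by a dimension-dependent factor — this is where one leans on $B = B_{\mathrm{sim}}\otimes I_d$ so that all operator norms reduce to the fixed $(P-1)\times(P-1)$ problem, and on the moment bounds of Lemma~\ref{lemma_momentbound} being linear in $d$. A secondary technical point is justifying the exponential-decay term with the corrected rate $h$ rather than $\rho$: one has to absorb the norm-equivalence constant into the exponent via the elementary inequality $e^{-\rho\eta} \le (1 - \tfrac{a\eta}{\lambda_{\min,M}})\,e^{-(\rho - a/\lambda_{\min,M})\eta}$-type manipulation (valid for small $\eta$, which is why $\eta < \min\{\eta^*, 1/h\}$ is imposed), and track that the resulting geometric series sums to a $d$-independent $C_4$. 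Once these two points are handled, combining with Lemmas~\ref{lemma_polyapprox} and~\ref{lemma_boundupdatedifference_4thorder} and the moment bound gives the stated estimate.
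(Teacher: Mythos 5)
Your plan — synchronous coupling plus a one-step local error which you then accumulate — is a genuinely different route from the paper's. The paper never isolates a one-step local error: it works with $\Delta(t):=\E{|x(t)-\bar x(t)|^2_M}$ (where $\bar x$ is the continuous, piecewise-defined discretized trajectory with $\bar x(k\eta)=x^{(k)}$), differentiates this in $t$, splits $\tfrac{d}{dt}\Delta$ into a Jacobian term (controlled via \eqref{contraction}) and a drift-approximation term, and applies a weighted Young inequality with a \emph{fixed} constant $a$ pointwise in time — this is \eqref{previous:bound}. The Young penalty $2a\E{|x-\bar x|^2}$ is absorbed into the decay rate, yielding $h=2\rho-2a/\lambda_{\min,M}$, while the forcing term $\tfrac{2}{a}\|M\|^2_{\mathrm{op}}(\gamma^2+1)\E{|b(\bar x(t))-\bar b(t)|^2}$ is bounded uniformly in $t$ by $O(d\eta^7)$ via Lemmas~\ref{lemma_polyapprox}, \ref{lemma_boundupdatedifference_4thorder}, \ref{lemma_momentbound} and Condition~\ref{cond_derivativegrowthrate}. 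A Gr\"onwall argument on this scalar ODE then gives $\Delta((k+1)\eta)\lesssim d\eta^7/h + e^{-(k+1)h\eta}\Delta(0)$. (The proposition statement prints $\eta^8$, but the proof and the exponent $1/7$ in Theorem~\ref{theorem_mixingtime_4thorder} both show $\eta^7$ is what is actually obtained.)

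Your route carries a concrete gap that costs a power of $\eta$. The drift error satisfies $\E{|b(\bar x(t))-\bar b(t)|^2}=O(d\eta^7)$ for every $t\in[k\eta,(k+1)\eta]$, so its pointwise $L^2$ size is $O(\sqrt d\,\eta^{3.5})$. Integrating this over one step of length $\eta$ starting from a zero difference produces a one-step local $L^2$ error of size $O(\sqrt d\,\eta^{4.5})$, i.e.\ one-step \emph{squared} local error $O(d\eta^9)$ — not $O(d\eta^8)$ as you claim. The half-power matters: feeding $\sqrt d\,\eta^{4.5}$ into the $L^2$-recursion $u_{k+1}\le e^{-\rho\eta}u_k+C\sqrt d\,\eta^{4.5}$ and unrolling gives $u_\infty\sim\sqrt d\,\eta^{3.5}/\rho$, hence squared error $\sim d\eta^7$, matching the paper; with your $O(d\eta^8)$ the same accounting gives only $d\eta^6$. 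Your own bookkeeping ``$\sqrt{C_3 d\eta^8}\cdot\eta^{-1}$'' produces $\sqrt d\,\eta^3$, not the $\sqrt d\,\eta^{3.5}$ you write a line earlier — an internal inconsistency that flags the same missing half-power.

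There is a second issue you do not confront: the recursion $\mathcal E_{k+1}\le e^{-\rho\eta}\mathcal E_k + (\text{local error})$ on the \emph{squared} error is not automatic from $x((k+1)\eta)-x^{(k+1)}=A+B$. The cross term $\E{\langle A,B\rangle_M}$ must be controlled; applying Young on $|A+B|^2_M$ with a parameter $\theta$ requires $\theta\sim\eta$ to preserve the per-step contraction $(1+\theta)e^{-2\rho\eta}<1$, which reinstates a factor of $1/\eta$ on the local error. The clean fix is to run the recursion on $u_k:=\mathcal E_k^{1/2}$ via the triangle inequality in $L^2(\Omega;M)$, which eliminates the cross term, but this then forces you to use the correct $L^2$ one-step size $\sqrt d\,\eta^{4.5}$. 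The paper sidesteps both points at once: the pointwise weighted Young inequality with constant $a$, and the resulting scalar differential inequality, do all the work with no local-error construction and no cross term to manage.
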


\begin{proof}

\textbf{Step 1:} Assume $t\in [k\eta,(k+1)\eta]$ and recall that $\bar{x}(t)=\brac{\bar{\theta}(t),\bar{v}_1,\bar{v}_2(t),\bar{v}_3(t)}$. Based on \eqref{scheme_4thorder}, we have
\begin{align*}
    d\bar{x}(t)=\bar{b}(t)dt+\sqrt{2\gamma}DdB_t,
\end{align*}
where
\begin{align*}
  D:=\begin{pmatrix}
  0_{d}&0_{d} &0_{d}&0_{d}\\
    0_{d}&0_{d}& 0_{d}&0_{d}\\
   0_{d}&0_{d}& 0_{d}&0_{d}\\
    0_{d}&   0_{d}&   0_{d}&I_{d}
\end{pmatrix};\qquad  \bar{b}(t):=\begin{pmatrix}
        \bar{v}_1(t)\\
        -\bar{g}(t)+\gamma \tilde{v}_2(t)\\
        -\gamma \bar{v}_1(t)+\gamma \tilde{v}_3(t)\\
        {-\gamma \bar{v}_2(t)-\gamma \bar{v}_3(t)}
    \end{pmatrix}. 
\end{align*}
Meanwhile, the fourth-order Langevin dynamics in \eqref{fourthorderlangevin}
can be written as 
\begin{align*}
dx(t)=b(x(t))dt+\sqrt{2\gamma}DdB_t, \qquad b(x)=\begin{pmatrix}
        v_1\\
        -\nabla U(\theta)+\gamma v_2\\
        -\gamma v_1+\gamma v_3\\
        -\gamma v_2-\gamma v_3
    \end{pmatrix}. 
\end{align*}
Then
\begin{align*}
    d\brac{x(t)-\bar{x}(t)}=\brac{b(x(t))-b\brac{\bar{x}(t)}}dt+\brac{b(\bar{x}(t))-\bar{b}(t)}dt. 
\end{align*}
This leads to
\begin{align}
\label{error_firststep}
    &\frac{d}{dt}\E{\brac{x(t)-\bar{x}(t)}^{\top} M \brac{x(t)-\bar{x}(t)}}\nonumber\\
    &=\E{\brac{x(t)-\bar{x}(t)}^{\top} M\brac{b(x(t))-b(\bar{x}(t))}}+\E{\brac{x(t)-\bar{x}(t)}^{\top} M\brac{b(\bar{x}(t))-\bar{b}(t)} }\nonumber\\
    &\quad+\E{\brac{b(x(t))-b(\bar{x}(t))}^{\top} M\brac{x(t)-\bar{x}(t)}}+\E{\brac{b(\bar{x}(t))-\bar{b}(t)}^{\top} M \brac{x(t)-\bar{x}(t)}}, 
\end{align}
where 
\begin{align}\label{equation_differencebandbbar}
  b(\bar{x}(t))-\bar{b}(t) =  \begin{pmatrix}
        0\\
        \bar{g}(t)-\nabla U(\bar{\theta}(t))-\gamma\brac{\tilde{v}_2(t)-\bar{v}_2(t)}\\
        \gamma\brac{\bar{v}_3(t)-\tilde{v}_3(t)}\\
        0
    \end{pmatrix}.
\end{align}

Recall the $M$-norm $ \abs{x}_M=\sqrt{x^{\top}Mx}$ and notice that 
\begin{align}
\label{error_secondstep}
    &\brac{x(t)-\bar{x}(t)}^{\top} M\brac{b(x(t))-b(\bar{x}(t))}+\brac{b(x(t))-b(\bar{x}(t))}^{\top} M\brac{x(t)-\bar{x}(t)}\nonumber\\
    &\leq \brac{x(t)-\bar{x}(t)}^{\top} M\int_0^1 J_b\brac{wx(t)+(1-w)\bar{x}(t)}\brac{x(t)-\bar{x}(t)}dw\nonumber\\
    &\qquad+\int_0^1 \brac{x(t)-\bar{x}(t)}^{\top} J_b\brac{wx(t)+(1-w)\bar{x}(t)}^{\top}dw M \brac{x(t)-\bar{x}(t)}\nonumber\\
    &\leq \brac{x(t)-\bar{x}(t)}^{\top} (-2\rho)M \brac{x(t)-\bar{x}(t)}=-2\rho\abs{x(t)-\bar{x}(t)}^2_M,
\end{align}
where the last line is due to the contraction property \eqref{contraction} in Theorem~\ref{theorem_frommonmarche_mainpaper}.

Moreover, choose any positive $a\leq \min \left\{\frac{m}{3\gamma}, \frac{\gamma \kappa}{6} \right\}\lambda_{\min,M}$ and notice that per Theorem~\ref{theorem_frommonmarche_mainpaper},
\begin{align}
\label{choiceofa}
    -\rho+\frac{a}{\lambda_{\min,M}}<0. 
\end{align}

From \eqref{error_firststep}, \eqref{error_secondstep}, \eqref{choiceofa} and Cauchy-Schwarz inequality, we can deduce that
\begin{align}
 &\frac{d}{dt} \E{ \abs{x(t)-\bar{x}(t)}^2_M}\\
 &\leq -2\rho \E{\abs{x(t)-\bar{x}(t)}^2_M+ 2a\abs{x(t)-\bar{x}(t)}^2}+\frac{2}{a}\norm{M}^2_{\operatorname{op}}\brac{\gamma^2 +1}\E{\abs{b(\bar{x}(t))-\bar{b}(t)}^2}. \label{previous:bound}
\end{align}
By combining the bound in \eqref{previous:bound} with 
\begin{align}
\label{equivalenceMnorm} \sqrt{\lambda_{\min,M}}\abs{x}\leq \abs{x}_M\leq \sqrt{\lambda_{\min,M}}\abs{x},
\end{align}
for any $x$, we get
\begin{align}
\label{error_beforeboundingtheextraterm}
 &\frac{d}{dt}  \E{\abs{x(t)-\bar{x}(t)}^2_M}\nonumber\\
 &\leq \brac{-2\rho+\frac{2a}{\lambda_{\min,M}} } \E{\abs{x(t)-\bar{x}(t)}^2_M}+\frac{2}{a}\norm{M}^2_{\operatorname{op}}\brac{\gamma^2+1} \E{\abs{b(\bar{x}(t))-\bar{b}(t)}^2}. 
\end{align}

\textbf{Step 2:} We will bound $\E{\abs{b(\bar{x}(t))-\bar{b}(t)}^2}$ as the second term on the right hand side of \eqref{error_beforeboundingtheextraterm}. Based on \eqref{equation_differencebandbbar}, we will need to bound the $L^2$ norm of 
\begin{align}
\label{needtobound}
    \bar{g}(t)-\nabla U(\bar{\theta}(t)), \quad\gamma\brac{\tilde{v}_2(t)-\bar{v}_2(t)},\quad \text{ and } \quad\gamma\brac{\bar{v}_3(t)-\tilde{v}_3(t)}. 
\end{align}

Let us start with 
\begin{align}
\label{estimate_approximationbytildeg}
    \E{\abs{\bar{g}(t)- \nabla U(\bar{\theta}(t))}^2}\leq 2\E{\abs{\bar{g}(t)-\nabla U(\tilde{\theta}(t))}^2} +2\E{\abs{\nabla U(\tilde{\theta}(t))-\nabla U(\bar{\theta}(t))}^2}. 
\end{align}

The first term on the right hand side in \eqref{estimate_approximationbytildeg} is bounded in \eqref{bound_polyapproxtildetheta} as $\E{\abs{\bar{g}(t)-\nabla U(\tilde{\theta}(t))}^2}\leq   cd\eta^7$. The second term on the right hand side in \eqref{estimate_approximationbytildeg} can be bounded by $L$-smoothness of $U$ in Condition~\ref{cond_mainpaper} and Lemma~\ref{lemma_boundupdatedifference_4thorder} as 
\begin{align*}
\E{\abs{\nabla U(\tilde{\theta}(t))-\nabla U(\bar{\theta}(t))}^2}\leq L^2\E{\abs{\bar{\theta}(t)-\tilde{\theta}(t)}^2}\leq C_1d\eta^7, 
\end{align*}
where $C_1$ denotes a generic constant that depends only on $\gamma,L$ and can change from line to line. Thus, 
\begin{align}
\label{estimate_approximationbytildeg_thebound}
     \E{\abs{\bar{g}(t)- \nabla U(\bar{\theta}(t))}^2}\leq (C_1+c)d\eta^7. 
\end{align}
We also know from Lemma~\ref{lemma_boundupdatedifference_4thorder} that 
\begin{align}
\label{anotherbound}
    \E{\abs{\tilde{v}_2(t)-\bar{v}_2(t)}^2}+\E{\abs{\tilde{v}_3(t)-\bar{v}_3(t)}^2}\leq C_1d\eta^7. 
\end{align}
Per \eqref{needtobound} and \eqref{estimate_approximationbytildeg_thebound}, \eqref{anotherbound}, we arrive at $\E{\abs{b(\bar{x}(t))-\bar{b}(t)}^2}\leq C_3d\eta^7$. Then per \eqref{error_beforeboundingtheextraterm}, we have 
\begin{align}
\label{prefinalstep}
     &\frac{d}{dt}  \E{\abs{x(t)-\bar{x}(t)}^2_M}\leq \brac{-2\rho+\frac{2a}{\lambda_{\min,M}} } \E{\abs{x(t)-\bar{x}(t)}^2_M}+C_3d\eta^7.
\end{align}

\textbf{Step 3:} Let us rewrite \eqref{prefinalstep} as
\begin{align}
\label{simplifiednotation}
    \frac{d\Delta}{dt}(t)\leq -h\Delta(t)+C_3d\eta^7, 
\end{align}
where $h:=2\rho-\frac{2a}{\lambda_{\min,M}}>0$ and $\Delta(t):=\E{\abs{x(t)-\bar{x}(t)}^2_M}$. We solve \eqref{simplifiednotation} by the integrating factor method. We integrate from $k\eta$ to $t$ to obtain for $t\in [k\eta,(k+1)\eta]$, 
\begin{align*}
    \Delta(t)&=C_3de^{-ht}\int_{k\eta}^t e^{hs}(s-k\eta)^7ds+e^{h(k\eta-t)}\Delta(k\eta)\\
    &\leq C_3d\int_{k\eta}^t (s-k\eta)^7ds+e^{h(k\eta-t)}\Delta(k\eta)= \frac{C_3(t-k\eta)^8}{8}+e^{h(k\eta-t)}\Delta(k\eta). 
\end{align*}
Therefore, we get 
\begin{align*}
\Delta((k+1)\eta)\leq \frac{C_3d}{8}\eta^8+e^{-h\eta}\Delta(k\eta),    
\end{align*} 
which leads to
\begin{align*}
    \Delta((k+1)\eta)\leq \frac{C_3d}{8}\eta^8 \sum_{j=0}^{k-1} e^{-jh\eta}+e^{-(k+1)h\eta}\Delta(0)\leq \frac{C_3d}{8}\eta^8 \frac{1}{1-e^{-h\eta}}+e^{-kh\eta}\Delta(0). 
\end{align*}
Observe that when $\eta\leq \frac{1}{h}$, we have $\frac{1}{1-\frac{h\eta}{2}}\leq 2$ and hence $\frac{1}{1-e^{-h\eta}}\leq \frac{1}{h\eta(1-\frac{h\eta}{2})}\leq \frac{2}{h\eta}$. This implies 
\begin{align*}
\Delta((k+1)\eta)\leq \frac{4C_3d}{8h}\eta^7+e^{-(k+1)h\eta}\Delta(0). 
\end{align*}
By the equivalence of norm relation~\eqref{equivalenceMnorm}, we further obtain 
\begin{align*}
 \lambda_{\min,M}   \E{\abs{x((k+1)\eta)-x^{(k)}}^2}\leq  \frac{4C_3d}{8h}\eta^7+e^{-(k+1)h\eta} \lambda_{\min,M}   \E{\abs{x(0)-x^{(0)}}^2}. 
\end{align*}
Now assume the continuous dynamics \eqref{fourthorderlangevin} is stationary and $x(0)$ is distributed as its invariant measure $\mu$. Then $\E{\abs{x(0)-x^{(0)}}^2}=\E{\abs{Z-x^{(0)}}^2_M}$ where $Z\sim \mu$. We also know $\lambda_{\min,M}, \lambda_{\max,M}$ do not depend on $d$ per Corollary~\ref{coro_monmarcheforPthorderlangevin}. Thus, we arrive at 
\begin{align*}
    \E{\abs{x((k+1)\eta)-x^{(k)}}^2}\leq C_3d\eta^7+C_4e^{-(k+1)h\eta}\E{\abs{Z-x^{(0)}}^2},
\end{align*}
where $C_3,C_4$ are positive constants that depend on $\gamma,L,c$ but do not depend on $d$. Here we abuse notations and reuse $C_3,C_4$. 

Finally, the fact that the constant $h:=2\rho-\frac{2a}{\lambda_{\min,M}}>0$ depends on $\gamma,L$ and does not depend on $d$ is due to Theorem~\ref{theorem_frommonmarche_mainpaper}. This completes the proof. 
\end{proof}

%%%%%%%%%%%%%%%%%%%%%%%%%%%%%%%%%%%%%%%%%%%%%%%%%%%%%%%%%%%%%
\begin{proof}[Proof of Theorem~\ref{theorem_mixingtime_4thorder}]
Recall the basic fact about the $2$-Wasserstein distance that 
\begin{align*}
\operatorname{Wass}_2\brac{\operatorname{Law}(X),\operatorname{Law}(Y)}\leq \E{\abs{X-Y}^2}^{1/2}.
\end{align*}
 In view of Proposition~\ref{prop_discretizationerror_fourthorder}, we can then derive the mixing time with respect to $\operatorname{Wass}_2$ by solving for $C_3d\eta^7\leq \epsilon^2/2$ and $C_4e^{-(k+1)h\eta}\mathbb{E}_{Z\sim\mu}\left[\left|Z-x^{(0)}\right|\right]^2\leq \epsilon^2/2$. Solving for $\eta$ in the first equation gives $\eta\leq \eta^*:=\brac{\frac{\epsilon^2}{2C_3d}}^{1/7}$ Solving for $k$ in the second equation gives $k\geq \log\brac{\frac{2C_4\mathbb{E}_{Z\sim\mu}\left[\left|Z-x^{(0)}\right|^2\right]}{\epsilon^2}}\frac{1}{h\eta}-1$. Plugging in the largest possible stepsize $\eta^*$ into the right hand side of the previous inequality leads to the mixing time as claimed. 
\end{proof}

%%%%%%%%%%%%%%%%%%%%%%%%%%%%%%%%%%%%%%%%%%%%%
\subsection{$P$-th order Langevin Monte Carlo Algorithm for $P\geq 3$}
\label{section_Pthorder}

\subsubsection{$P$-th order Langevin Monte Carlo algorithm} 

Given the iterate $x^{(k)}$, the next iterate $x^{(k+1)}$ is obtained by drawing from a multivariate normal distribution. The mean vector $\textbf{M}(x^{(k)})$ and the covariance matrix $\boldsymbol{\Sigma}$ of this multivariate normal distribution are not provided explicitly, but their derivations are explained in the proof of Lemma~\ref{lemma_meanandcovariance_Pthorder} for any order $P\geq 3$.

Below is the main result of this section. The proof is placed at the end of Section~\ref{section_proof_Pthorder}. 
\begin{theorem}
\label{theorem_mixingtime_Pthorder}
Assume $P\geq 3$ and Equation \eqref{equation_Pthorderlangevin} satisfies Conditions~\ref{cond_mainpaper} and~\ref{cond_derivativegrowthrate}. Let $a$ be any positive constant satisfying 
\begin{align*}
    a\leq \min \left\{\frac{m}{3\gamma}, \frac{\gamma \kappa}{6} \right\}\lambda_{\min,M}, 
\end{align*}
where the positive definite matrix $M$ and the constants $\gamma,m,L,\kappa$ are from Section~\ref{section_frommonmarche}. Denote $\mu$ the invariant measure associated with the $P$-th order Langevin dynamics~\eqref{Pthorderlangevinequation_mainpaper}. 

Let \begin{align*}
    \mathcal{R}=4\cdot\mathds{1}_{\{ P=3\}}+ (2P-1)\cdot\mathds{1}_{\{ P\geq 4\}}.
\end{align*}
Choose a $2$-Wasserstein accuracy of $\epsilon$ small enough such that $\eta_0:=\brac{\frac{\epsilon^2}{2\widetilde{C}_1d}}^{1/\mathcal{R}}<\min\{\eta^{**},\frac{1}{h}\}$ where $h$ is defined in Proposition~\ref{prop_discretizationerror_Pthorder} and $\eta^{**},\widetilde{C}_1$ are from Lemma~\ref{lemma_momentbound_Pthorder}. Then 
$\operatorname{Wass}_2\brac{\operatorname{Law}(x^{(k^*)}),\mu}\leq\epsilon$,
where $k^*$ is the mixing time of the $P$-th order Langevin Monte Carlo algorithm with respect to $\mu$ that is given by 
\begin{align*}
k^*=\log\brac{\frac{2\widetilde{C}_4\mathbb{E}_{Z\sim\mu}\left[\left|Z-x^{(0)}\right|^2\right]}{\epsilon^2}}\frac{\brac{2\widetilde{C}_3}^{1/\mathcal{R}}}{h}\frac{d^{1/\mathcal{R}}}{\epsilon^{1/(2\mathcal{R})}}-1,
 \end{align*}
where $\widetilde{C}_3,\widetilde{C}_4$ are positive constants from Proposition~\ref{prop_discretizationerror_Pthorder} that depend on $c,\gamma,L,P$ and do not depend on the dimension $d$.
\end{theorem}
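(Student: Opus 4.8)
The plan is to deduce Theorem~\ref{theorem_mixingtime_Pthorder} from the $P$-th order discretization-error estimate, Proposition~\ref{prop_discretizationerror_Pthorder}, by repeating almost verbatim the argument used for Theorem~\ref{theorem_mixingtime_4thorder}; the substantive work sits in that Proposition and the lemmas feeding it, not in the mixing-time computation. For the computation itself I would begin from the elementary coupling bound $\operatorname{Wass}_2(\operatorname{Law}(X),\operatorname{Law}(Y))\leq\E{|X-Y|^2}^{1/2}$, taking $X=x^{(k)}$ and $Y=x(k\eta)$, where $x(\cdot)$ solves the continuous dynamics \eqref{Pthorderlangevinequation_mainpaper} started from its invariant law $\mu$, so that $\operatorname{Law}(Y)=\mu$. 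Proposition~\ref{prop_discretizationerror_Pthorder} then controls $\E{|x(k\eta)-x^{(k)}|^2}$ by a bias term of order $\widetilde{C}_3 d\eta^{\mathcal{R}}$ plus a contraction term $\widetilde{C}_4 e^{-kh\eta}\E{|Z-x^{(0)}|^2}$, provided $\eta<\min\{\eta^{**},1/h\}$.

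Next I would split the $\epsilon^2$ budget between the two terms. Imposing $\widetilde{C}_3 d\eta^{\mathcal{R}}\leq\epsilon^2/2$ caps the stepsize at $\brac{\epsilon^2/(2\widetilde{C}_3 d)}^{1/\mathcal{R}}$; the hypothesis that $\epsilon$ is small enough for $\eta_0=\brac{\epsilon^2/(2\widetilde{C}_1 d)}^{1/\mathcal{R}}$ to lie below $\min\{\eta^{**},1/h\}$ is exactly what makes such a stepsize admissible, since it keeps us under the moment-bound threshold $\eta^{**}$ of Lemma~\ref{lemma_momentbound_Pthorder} and under $1/h$, the latter being what is needed for the geometric-sum step inside Proposition~\ref{prop_discretizationerror_Pthorder}. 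Choosing the stepsize as large as this constraint allows and then requiring $\widetilde{C}_4 e^{-(k+1)h\eta}\E{|Z-x^{(0)}|^2}\leq\epsilon^2/2$ gives $k+1\geq\log\!\brac{2\widetilde{C}_4\E{|Z-x^{(0)}|^2}/\epsilon^2}/(h\eta)$; substituting the chosen stepsize into $1/(h\eta)$ yields the stated $k^*$, and for every $k\geq k^*$ both terms are at most $\epsilon^2/2$, whence $\operatorname{Wass}_2(\operatorname{Law}(x^{(k^*)}),\mu)\leq\epsilon$.

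The part I expect to be the actual obstacle, which I would prove as Proposition~\ref{prop_discretizationerror_Pthorder} rather than here, is the $O(d\eta^{\mathcal{R}})$ bias bound for the $(P-1)$-stage splitting scheme. Following the $P=4$ template, I would write the true $P$-th order dynamics and the discretized scheme as SDEs driven by the same Brownian motion, compare them in the $M$-norm, and invoke the dimension-free contraction $MJ_b+J_b^{\top}M\leq-2\rho M$ from Theorem~\ref{theorem_frommonmarche_mainpaper} to absorb the Lipschitz part, so that a Gr\"onwall estimate reduces everything to bounding $\E{|b(\bar x(t))-\bar b(t)|^2}$. That residual splits into (i) stage-to-stage differences of the auxiliary variables, bounded inductively over the $P-1$ refinement stages by the analogue of Lemma~\ref{lemma_boundupdatedifference_4thorder}; these shrink as more stages are added, which is the source of the exponent $2P-1$ for $P\geq4$; and (ii) the Taylor-polynomial approximation error $\E{|\bar g(t)-\nabla U(\bar\theta(t))|^2}$, handled by the analogue of Lemma~\ref{lemma_polyapprox} together with the moment bound Lemma~\ref{lemma_momentbound_Pthorder}. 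Condition~\ref{cond_derivativegrowthrate} is imposed precisely so that this second piece is no larger than $cd\eta^{\mathcal{R}}$ and hence does not overwhelm the splitting error. The delicate point is the bookkeeping: propagating the $L^2$ errors through $P-1$ stages uniformly in $P$, preserving the multivariate-normal structure of each conditional update (cf. Lemma~\ref{lemma_meanandcovariance_Pthorder}), and checking that Condition~\ref{cond_derivativegrowthrate} matches the two error sources. Once Proposition~\ref{prop_discretizationerror_Pthorder} is established, the mixing-time calculation above is immediate.
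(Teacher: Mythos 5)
Your proposal matches the paper's proof exactly: both invoke the elementary bound $\operatorname{Wass}_2(\operatorname{Law}(X),\operatorname{Law}(Y))\leq\E{|X-Y|^2}^{1/2}$, apply Proposition~\ref{prop_discretizationerror_Pthorder} with the $\epsilon^2/2$ budget split between the bias term $\widetilde{C}_3d\eta^{\mathcal{R}}$ and the contraction term $\widetilde{C}_4e^{-(k+1)h\eta}\E{|Z-x^{(0)}|^2}$, and solve for the stepsize and iteration count, deferring the substantive discretization-error analysis to that Proposition and its supporting lemmas just as the paper does. Your parenthetical observation that the theorem statement writes $\widetilde{C}_1$ in the stepsize cap while the mixing-time formula and the bias bound in Proposition~\ref{prop_discretizationerror_Pthorder} involve $\widetilde{C}_3$ is a fair reading; the paper's own one-line proof silently uses $\widetilde{C}_3$.
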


\begin{remark}
    In the cases $P=3$ and $P=4$, the results in Theorem~\ref{theorem_mixingtime_Pthorder} match, respectively, the result in \cite{mou2021high} and the result in our Theorem~\ref{theorem_mixingtime_4thorder}. 
\end{remark}

%%%%%%%%%%%%%%%%%%%%%%%%%%%%%%%%%%%%%%%%%%%%%%%%%%%%%%
\subsubsection{Derivation of the discretization scheme}
\label{section_derivation_Pthorder}

We generalize what was done in Section~\ref{section_4thorder} to the $P$-th order Langevin dynamics which is 
\begin{align}
\label{equation_Pthorderlangevin}
    d\theta(t)&=v_1(t)dt,\\
    dv_1(t)&=-\nabla U(\theta(t))dt+\gamma v_2(t)dt,\nonumber\\
    dv_n(t)&=-\gamma v_{n-1}(t)dt+\gamma v_{n+1}(t)dt, \quad 2\leq n\leq P-2,\nonumber\\
    dv_{P-1}(t)&=-\gamma v_{P-2}(t)dt-\gamma v_{P-1}(t)dt+\sqrt{2\gamma}dB_t\nonumber. 
\end{align}
Note that we can handle similar models with extra parameters as explained in Remark~\ref{remark_extraparamaters}. 

Let us describe the splitting scheme for any $P\geq 3$. We assume that we know $x^{(k)}=\brac{\theta^{(k)}, v_1^{(k)}, \ldots, v_{P-1}^{(k)}}$ and that is performing the $(k+1)$-th iterate of our algorithm.

\textbf{Stage $1$:}

Set the initial value
\begin{align*}
    x^{\operatorname{st}_1}(k\eta):=\brac{\theta^{\operatorname{st}_1}(k\eta), v_1^{\operatorname{st}_1}(k\eta), \ldots, v_{P-1}^{\operatorname{st}_1}(k\eta)}= x^{(k)}.
\end{align*}
For $t\in (k\eta,(k+1)\eta]$, let 
\begin{align*}
    v_1^{\operatorname{st}_1}(t)=v_1^{(k)},
\end{align*}
and 
\begin{align*}
    d\theta^{\operatorname{st}_1}(t)&=v_1^{\operatorname{st}_1}(t)dt,\\
    dv_n^{\operatorname{st}_1}(t)&=-\gamma v_{n-1}^{\operatorname{st}_1}(t)dt+\gamma v_{n+1}^{(k)}dt, \quad 2\leq n\leq P-2,\\
    dv_{P-1}^{\operatorname{st}_1}(t)&=-\gamma v_{P-2}^{\operatorname{st}_1}(t)dt-\gamma v_{P-1}^{\operatorname{st}_1}(t)dt+\sqrt{2\gamma}dB_t. 
\end{align*}

\textbf{Stage $j$ for $2\leq j\leq P-1$:} 

Set the initial value
\begin{align*}
    x^{\operatorname{st}_j}(k\eta):=\brac{\theta^{\operatorname{st}_j}(k\eta), v_1^{\operatorname{st}_j}(k\eta), \ldots, v_{P-1}^{\operatorname{st}_j}(k\eta)}= x^{(k)}.
\end{align*}
For $t\in (k\eta,(k+1)\eta]$, let 
\begin{align*}
      dv_1^{\operatorname{st}_j}(t)&=-g^{\operatorname{st}_j}(t)dt+\gamma v_2^{\operatorname{st}_{j-1}}(t)dt,
\end{align*}
and 
\begin{align*}
    d\theta^{\operatorname{st}_j}(t)&=v_1^{\operatorname{st}_j}(t)dt,\\
    dv_n^{\operatorname{st}_j}(t)&=-\gamma v_{n-1}^{\operatorname{st}_j}(t)dt+\gamma v_{n+1}^{\operatorname{st}_{j-1}}dt, \quad 2\leq n\leq P-2,\\
    dv_{P-1}^{\operatorname{st}_j}(t)&=-\gamma v_{P-2}^{\operatorname{st}_j}(t)dt-\gamma v_{P-1}^{\operatorname{st}_j}(t)dt+\sqrt{2\gamma}dB_t. 
\end{align*}
Note that $g^{\operatorname{st}_j}(t)$ is a polynomial (in $t$) of degree $\alpha-1$ and approximates $\nabla U(\theta^{\operatorname{st}_{j-1}}(t))$, and $g^{\operatorname{st}_j}(t)$ will be defined in \eqref{def_polynomialsg_Pthorder} below.

Finally, we set 
\begin{align}
\label{scheme_Pthorderlangevin}
x^{(k+1)}=x^{\operatorname{st}_{P-1}}((k+1)\eta). 
\end{align}

\textbf{Definitions of $g^{\operatorname{st}_j}(t),2\leq j\leq P-1$:} Here we define the polynomials $g^{\operatorname{st}_j}(t)$ which approximate $\nabla U(\theta^{\operatorname{st}_{j-1}}(t))$.

Recall the Taylor polynomial $P_{\alpha-1}$ of degree $\alpha-1$ centering at $0$ and associated with $\nabla U$ in \eqref{def_taylorpoly}. Inductively for $2\leq j\leq P-1$, let us set
\begin{align}
\label{def_polynomialsg_Pthorder}
    g^{\operatorname{st}_j}(t):=P_{\alpha-1}\brac{ \theta^{\operatorname{st}_{j-1}}(t)}. 
\end{align}

\begin{remark}
In the case $P=3$, the above splitting scheme is fairly similar to the splitting scheme in \cite{mou2021high}. The only notable difference is that we employ Taylor polynomial approximation while the authors of \cite{mou2021high} employ Lagrange polynomial interpolation. The necessity of this difference has been discussed in Remark~\ref{remark_polyapprox} and in Appendix~\ref{section_polyappro}. 
\end{remark}

\begin{remark}
\label{remark_Pthorder}
The general idea of our numerical scheme is that there are several stages of refinement. At every stage, the variable $v_1$ which contains the non-linear term $\nabla U(\theta)$ is split from the other variables and approximated first, while the vector formed by the remaining variables is  approximated by a multivariate Ornstein-Uhlenbeck process. As a result, the discretization procedure only works for $P\geq 3$. 

Moreover, there are $P-1$ stages in the above discretization procedure and we cannot add another one to it. The reason is similar to the one given in Remark~\ref{remark_nomorestages}. 
\end{remark}

%%%%%%%%%%%%%%%%%%%%%%%%%%%%%%%%%%%%%%%%%%
\subsubsection{Proofs}
\label{section_proof_Pthorder}

The following result is similar to Lemma~\ref{lemma_polyapprox}. The proof is simple and is therefore omitted. 
\begin{lemma}
\label{lemma_polyapprox_Pthorder}
Denote $L_{\alpha}:=    \sup_{x\in\R^d}\opnorm{\nabla^{\alpha}U(x)}$. Under Conditions~\ref{cond_mainpaper}, it holds for $1\leq j\leq P-1$ that
    \begin{align*}
       \sup_{t\in [k\eta,(k+1)\eta]}  \E{\abs{\nabla U(\theta^{\operatorname{st}_{j-1}}(t))-g^{\operatorname{st}_j}(t)}^2} \leq \brac{\frac{L_\alpha}{\alpha!}}^2\sup_{t\in [k\eta,(k+1)\eta]} \E{\abs{\theta^{\operatorname{st}_{j-1}}(t)}^{2\alpha}}. 
    \end{align*}
\end{lemma}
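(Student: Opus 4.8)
The plan is to deduce the bound from a deterministic Taylor remainder estimate for $\nabla U$ and then pass to expectations. By construction \eqref{def_polynomialsg_Pthorder}, $g^{\operatorname{st}_j}(t)=P_{\alpha-1}\brac{\theta^{\operatorname{st}_{j-1}}(t)}$, where $P_{\alpha-1}$ is the Taylor polynomial from \eqref{def_taylorpoly} of the $\mathcal C^{\alpha}$ vector-valued map $\nabla U\colon\R^d\to\R^d$ centered at the origin. Invoking Taylor's theorem with the integral (or Lagrange) form of the remainder for this map, and bounding the remainder tensor in operator norm via $\abs{\nabla^{\alpha}U(z)[y^{\otimes\alpha}]}\leq \opnorm{\nabla^{\alpha}U(z)}\abs{y}^{\alpha}\leq L_\alpha\abs{y}^{\alpha}$ with $L_\alpha=\sup_{z}\opnorm{\nabla^{\alpha}U(z)}$, one obtains the pointwise estimate $\abs{\nabla U(y)-P_{\alpha-1}(y)}\leq \tfrac{L_\alpha}{\alpha!}\abs{y}^{\alpha}$, valid for every $y\in\R^d$.

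Given this, I would substitute $y=\theta^{\operatorname{st}_{j-1}}(t)$ (legitimate for every outcome, since this random vector lies in $\R^d$ almost surely), square both sides, apply the expectation using only its linearity, and take $\sup_{t\in[k\eta,(k+1)\eta]}$; this yields precisely the asserted inequality. This is exactly the computation behind Lemma~\ref{lemma_polyapprox} in the fourth-order case, whose two displayed lines are the instances of stage index $j=2$ (with $\theta^{\operatorname{st}_1}=\hat\theta$, $g^{\operatorname{st}_2}=\tilde g$) and $j=3$ (with $\theta^{\operatorname{st}_2}=\tilde\theta$, $g^{\operatorname{st}_3}=\bar g$); since the argument is insensitive to the stage index beyond the identity of the vector being substituted, it applies identically at every stage. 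Note that only Condition~\ref{cond_mainpaper} is used here — Condition~\ref{cond_derivativegrowthrate} plays no role at this step, entering only later when the right-hand side $\E{\abs{\theta^{\operatorname{st}_{j-1}}(t)}^{2\alpha}}$ is estimated through the moment bound of Lemma~\ref{lemma_momentbound_Pthorder}.

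The only delicate point, and the reason the paper is content to omit the proof, is matching the constant in the multivariate Taylor remainder to the exact normalization of $P_{\alpha-1}$ fixed in \eqref{def_taylorpoly}: one has to keep track of how many copies of $y$ are contracted against the top-order derivative tensor and absorb the elementary factor $\int_0^1(1-s)^{\alpha-1}\,ds=1/\alpha$ into the factorial. This is pure bookkeeping with the Taylor formula for $\R^d$-valued maps and presents no genuine obstacle; there is no probabilistic or analytic difficulty anywhere in the argument.
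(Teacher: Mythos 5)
Your proposal is correct and follows exactly the route the paper has in mind: the deterministic multivariate Taylor remainder bound $\abs{\nabla U(y)-P_{\alpha-1}(y)}\leq \tfrac{L_\alpha}{\alpha!}\abs{y}^{\alpha}$ (which the paper obtains by citing Cartan's Theorem 5.6.2 in the proof of Lemma~\ref{lemma_polyapprox}, whereas you re-derive it from the integral remainder — same content), applied pathwise at $y=\theta^{\operatorname{st}_{j-1}}(t)$, then squared, integrated, and supremized. The only cosmetic slip is that passing to expectations uses its monotonicity rather than its linearity, but this has no bearing on the argument.
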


The next two results bound the differences in $L^2$-norm of variables of two consecutive stages. Lemma~\ref{lemma_stagedifference_Pthorder} is a consequence of Lemma~\ref{lemma_basecases}. Their proofs are deferred to near the end of Appendix~\ref{appendix_Pthorder}. 

\begin{lemma}
\label{lemma_basecases}
     Assume $P\geq 3$ and consider the splitting scheme at the beginning of this section with $P-1$ stages. Denote the stages by $j, 1\leq j\leq P-1$. It holds that
       \begin{itemize}
      \item for $j=1$: $\sup_{t\in(k\eta,(k+1)\eta]} \E{\abs{v^{\operatorname{st}_{ 1}}_n(t)-v^{(k)}_n }^2}\leq C^{\operatorname{st}_{ 1}}_n d\eta^2 ,1 \leq n\leq P-2$; 
      \\$\sup_{t\in(k\eta,(k+1)\eta]} \E{\abs{v^{\operatorname{st}_{ 1}}_{P-1}(t)-v^{(k)}_{P-1} }^2}\leq C^{\operatorname{st}_{ 1}}_{P-1} d\eta$ and \\$\sup_{t\in(k\eta,(k+1)\eta]} \E{\abs{\theta^{\operatorname{st}_{ 1}}(t)-\theta^{(k)} }^2}\leq C^{\operatorname{st}_{ 1}}_P d\eta^2$. 
      \item for $j=2$

and $P=3$:  $\sup_{t\in(k\eta,(k+1)\eta]} \E{\abs{v^{\operatorname{st}_{ 2}}_1(t)-v^{\operatorname{st}_{1 }}_1(t) }^2}\leq C^{\operatorname{st}_{ 2}}_1 d\eta^2$, \\$\sup_{t\in(k\eta,(k+1)\eta]} \E{\abs{v^{\operatorname{st}_{ 2}}_2(t)-v^{\operatorname{st}_{1 }}_2(t) }^2}\leq C^{\operatorname{st}_{ 2}}_2 d\eta^2$ and \\$\sup_{t\in(k\eta,(k+1)\eta]} \E{\abs{\theta^{\operatorname{st}_{ 2}}(t)-\theta^{\operatorname{st}_{1 }}(t) }^2}\leq C^{\operatorname{st}_{ 2}}_3 d\eta^4$.

and $P\geq 4$: $\sup_{t\in(k\eta,(k+1)\eta]} \E{\abs{v^{\operatorname{st}_{ 2}}_1(t)-v^{\operatorname{st}_{1 }}_1(t) }^2}\leq C^{\operatorname{st}_{ 2}}_1 d\eta^2$;\\ $\sup_{t\in(k\eta,(k+1)\eta]} \E{\abs{v^{\operatorname{st}_{ 2}}_n(t)-v^{\operatorname{st}_{1 }}_n(t) }^2}\leq C^{\operatorname{st}_{ 2}}_n d\eta^4, 2\leq n\leq P-3$; \\$\sup_{t\in(k\eta,(k+1)\eta]} \E{\abs{v^{\operatorname{st}_{ 2}}_{P-2}(t)-v^{\operatorname{st}_{1 }}_{P-2}(t) }^2}\leq C^{\operatorname{st}_{ 2}}_{P-2} d\eta^3$,
\\$\sup_{t\in(k\eta,(k+1)\eta]} \E{\abs{v^{\operatorname{st}_{ 2}}_{P-1}(t)-v^{\operatorname{st}_{1 }}_{P-1}(t) }^2}\leq C^{\operatorname{st}_{ 2}}_{P-1} d\eta^5$ and \\$\sup_{t\in(k\eta,(k+1)\eta]} \E{\abs{\theta^{\operatorname{st}_{ 2}}(t)-\theta^{\operatorname{st}_{1 }}(t) }^2}\leq C^{\operatorname{st}_{ 2}}_P d\eta^4$. 
\end{itemize}

Here, $\left\{C^{\operatorname{st}_{ j}}_n:j=1 \text{ or } j=2,1\leq n\leq P \right\}$ are constants that depend on $c,\gamma,L,P$ but do not depend on the dimension $d$. 
\end{lemma}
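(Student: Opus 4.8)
\textbf{Proof proposal for Lemma~\ref{lemma_basecases}.}

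The plan is to proceed stage by stage, exactly in the order the splitting scheme is defined, and to bound each $L^2$ difference by writing out the defining integral representation of each variable and applying the triangle inequality, It\^{o} isometry, Jensen (or Cauchy--Schwarz) on the time integrals, and the moment bounds coming from Lemma~\ref{lemma_momentbound_Pthorder}. For \textbf{Stage 1} ($j=1$), I would compare $x^{\operatorname{st}_1}(t)$ with the constant $x^{(k)}$. Since $v_1^{\operatorname{st}_1}(t)\equiv v_1^{(k)}$ and the remaining components solve a linear (Ornstein--Uhlenbeck-type) SDE started at $x^{(k)}$, each difference $v_n^{\operatorname{st}_1}(t)-v_n^{(k)}$ is an integral over $[k\eta,t]$ of bounded linear combinations of the other components plus, for $n=P-1$, the Brownian term $\sqrt{2\gamma}\,dB_t$. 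A Gr\"{o}nwall argument on $\sup_{n}\E{\abs{v_n^{\operatorname{st}_1}(t)-v_n^{(k)}}^2}$ closes the drift contributions at order $\eta^2$ (one time-integration squared); the diffusion term contributes order $\eta$ via It\^{o} isometry, which is why $v_{P-1}^{\operatorname{st}_1}$ only gets $\eta^1$ while $\theta^{\operatorname{st}_1}$ and $v_n^{\operatorname{st}_1}$, $1\le n\le P-2$, get $\eta^2$. The moment bounds on $\abs{v_n^{(k)}}^2$ and $\abs{\theta^{(k)}}^2$ from Lemma~\ref{lemma_momentbound_Pthorder} supply the overall $d$-dependence.

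For \textbf{Stage 2} ($j=2$), I would subtract the Stage 1 and Stage 2 equations componentwise. The key observation is that the driving inhomogeneities now differ only in: (i) the replacement of the constant $v_{n+1}^{(k)}$ (Stage 1) by the already-moving $v_{n+1}^{\operatorname{st}_1}(t)$ (Stage 2), whose deviation from $v_{n+1}^{(k)}$ was bounded in the $j=1$ case, and (ii) in the $v_1$ equation, the appearance of $-g^{\operatorname{st}_2}(t)$ (Stage 2) against $v_1^{\operatorname{st}_1}(t)\equiv v_1^{(k)}$ having \emph{no} gradient term at all (Stage 1). Plugging the Stage 1 bounds into these differences and integrating gains one extra power of $\eta^2$ over the Stage 1 orders for the purely-drift components (hence $\eta^4$ for $\theta$ and for the ``interior'' $v_n$), while the components that were already at the bottom of the chain (near $v_{P-1}$, which carries the Brownian noise) pick up intermediate orders $\eta^3$ and $\eta^5$ according to how many integrations separate them from the noise; the $v_1$ difference stays at $\eta^2$ because it sees the $O(\eta^1)$-scale input $v_2^{\operatorname{st}_1}-v_2^{(k)}$ integrated once, together with the gradient polynomial which is $O(1)$ integrated once, i.e.\ $O(\eta^2)$. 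In the special case $P=3$ there are only the three variables $\theta,v_1,v_2$ and the bookkeeping collapses to the stated $\eta^2,\eta^2,\eta^4$. Throughout, I would absorb the polynomial-approximation error of $g^{\operatorname{st}_2}$ against $\nabla U(\theta^{\operatorname{st}_1})$ using Lemma~\ref{lemma_polyapprox_Pthorder} together with Condition~\ref{cond_derivativegrowthrate}, which is precisely what forces that error to be of lower order than the splitting error and lets it be swallowed into the constants $C^{\operatorname{st}_2}_n$.

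The main obstacle I anticipate is the careful propagation of the $\eta$-orders through the coupled linear system: because the matrix $B$ couples $v_{n-1}$, $v_n$, $v_{n+1}$, a naive Gr\"{o}nwall estimate would lose the sharp power of $\eta$ and return only $\eta^2$ uniformly, whereas the claim asserts a graded family of exponents ($\eta^3,\eta^4,\eta^5$) reflecting the distance of each coordinate from the noise input at the bottom and from the gradient input at the top. Handling this requires either iterating the integral (Picard/Duhamel) expansion a controlled number of times before applying Gr\"{o}nwall, or exploiting the explicit triangular/Jordan structure of $B_{\operatorname{sim}}$ so that each successive coordinate genuinely costs one more time-integration. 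A secondary but routine technical point is keeping all constants dimension-free: this is exactly where Lemma~\ref{lemma_momentbound_Pthorder} (for the moments of the continuous $P$-th order dynamics and of the iterates) and the fact that $\lambda_{\min,M},\lambda_{\max,M},\rho$ do not depend on $d$ (Theorem~\ref{theorem_frommonmarche_mainpaper}) are invoked, so that the final bounds scale linearly in $d$ with constants depending only on $c,\gamma,L,P$.
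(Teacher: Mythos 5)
Your overall strategy matches the paper's: go through the stage-$j$ variables in the order dictated by the cascade (the Stage-$j$ equation for $v_n$ depends only on $v_{n-1}$ at the same stage and $v_{n+1}$ at the previous stage, plus noise at the bottom), bound each $L^2$ difference by its defining time-integral using the uniform moment bounds of Lemma~\ref{lemma_momentbound_Pthorder}, and exploit the triangular structure to get the graded exponents. You also correctly identify the main subtlety — that a crude coupled estimate would flatten all exponents to $\eta^2$, and that the fix is to track the number of time-integrations separating each coordinate from the Brownian input.

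Two points where your sketch deviates from what is actually needed. First, Gr\"onwall is a detour: because Lemma~\ref{lemma_momentbound_Pthorder} already gives $\sup_{t}\E{\abs{v_n^{\operatorname{st}_j}(t)}^2}+\E{\abs{g^{\operatorname{st}_j}(t)}^2}\leq \widetilde{C}_2(d+1)$ a priori, every integrand you meet is $O(\sqrt{d})$ in $L^2$, and the Stage-$1$ chain is literally triangular ($v_1$ frozen, then $v_2,v_3,\dots$ in order, with $v_{P-1}$ an explicitly solvable OU process); there is no fixed-point argument to close, you just integrate in the right order. Second, Lemma~\ref{lemma_polyapprox_Pthorder} and Condition~\ref{cond_derivativegrowthrate} are not used in this base-case lemma. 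For $j=2$ vs.\ $j=1$ the $v_1$-equation at Stage $1$ carries \emph{no} gradient term at all, so there is no $g^{\operatorname{st}_2}$-vs-$g^{\operatorname{st}_1}$ difference to telescope and no use of the Taylor-remainder estimate: one simply bounds $\E{\abs{g^{\operatorname{st}_2}(t)}^2}$ absolutely by $\widetilde{C}_2(d+1)$ via Lemma~\ref{lemma_momentbound_Pthorder} and integrates once, giving $O(d\eta^2)$. (Your attribution of the $\eta^2$ for $v_1^{\operatorname{st}_2}-v_1^{\operatorname{st}_1}$ to a small $v_2^{\operatorname{st}_1}-v_2^{(k)}$ input is slightly off for the same reason — the dominant contribution is the full $\gamma v_2^{\operatorname{st}_1}(s)$ term, which is $O(\sqrt{d})$, not its deviation from $v_2^{(k)}$.) The telescoping $g$-comparison and the polynomial-approximation control become necessary only at $j\geq 3$, which is the content of Lemma~\ref{lemma_stagedifference_Pthorder}.
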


\begin{lemma}
\label{lemma_stagedifference_Pthorder}
Assume $P\geq 3$ and consider the splitting scheme at the beginning of this section with $P-1$ stages. Denote the stages by $j, 1\leq j\leq P-1$.  It holds for $j\geq 3$
   \begin{enumerate}[label=\alph*)]
          \item and $1\leq n\leq P-j-1:\sup_{t\in(k\eta,(k+1)\eta]} \E{\abs{v^{\operatorname{st}_{ j}}_n(t)-v^{\operatorname{st}_{j-1 }}_n(t) }^2}\leq C^{\operatorname{st}_{ j}}_n d\eta^{2j}$;
          \item and $n=P-j:\sup_{t\in(k\eta,(k+1)\eta]} \E{\abs{v^{\operatorname{st}_{ j}}_{P-j}(t)-v^{\operatorname{st}_{j-1 }}_n(t) }^2}\leq C^{\operatorname{st}_{ j}}_{P-j} d\eta^{2j-1}$;
          \item and $P-j+1\leq n\leq P-1:\sup_{t\in(k\eta,(k+1)\eta]} \E{\abs{v^{\operatorname{st}_{ j}}_n(t)-v^{\operatorname{st}_{j-1 }}_n(t) }^2}\\\leq C^{\operatorname{st}_{ j}}_n d\eta^{4j+2n-2P-1}$;
          \item $\sup_{t\in(k\eta,(k+1)\eta]} \E{\abs{\theta^{\operatorname{st}_{ j}}(t)-\theta^{\operatorname{st}_{j-1 }}(t) }^2}\leq C^{\operatorname{st}_{ j}}_P d\eta^{2j+2}$.
      \end{enumerate}
$\left\{C^{\operatorname{st}_{ j}}_n:3\leq j\leq P-1,1\leq n\leq P \right\}$ are constants that depend on $c,\gamma,L,P$ but do not depend on dimension $d$. 

Consequently, it holds for the last Stage $P-1$ that 
\begin{align}
\label{difference_laststage_Pis3}
    \sup_{t\in(k\eta,(k+1)\eta]} \brac{\E{\abs{v^{\operatorname{st}_{ 2}}_2(t)-v^{\operatorname{st}_{1 }}_2(t) }^2}+ \E{\abs{\theta^{\operatorname{st}_{ 2}}(t)-\theta^{\operatorname{st}_{1 }}(t) }^2}}\leq \brac{C^{\operatorname{st}_{ 2}}_2+C^{\operatorname{st}_{ 2}}_3  }d\eta^4, \quad P=3;
\end{align}
and
\begin{align}
\label{difference_laststage_Pmorethan3}
    &\sup_{t\in(k\eta,(k+1)\eta]} \brac{\sum_{n=2}^{P-1}\E{\abs{v^{\operatorname{st}_{ P-1}}_n(t)-v^{\operatorname{st}_{P-2 }}_n(t) }^2}+\E{\abs{\theta^{\operatorname{st}_{ P-1}}(t)-\theta^{\operatorname{st}_{P-2 }}(t) }^2}}\nonumber
    \\
&\qquad\qquad\qquad\qquad\qquad\qquad\qquad\qquad\leq  \brac{\sum_{n=2}^P C^{\operatorname{st}_{ P-1}}_n}d\eta^{2P-1},\quad P\geq 4.
\end{align}
\end{lemma}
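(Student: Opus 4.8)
The plan is to prove Lemma~\ref{lemma_stagedifference_Pthorder} by induction on the stage index $j$, starting from the base cases $j=1,2$ supplied by Lemma~\ref{lemma_basecases}, and then propagating the bounds stage-by-stage up to $j=P-1$. For each fixed $j\geq 3$, the strategy is to write down the SDE satisfied by the difference process $\delta^{\operatorname{st}_j}_\bullet(t):=x^{\operatorname{st}_j}_\bullet(t)-x^{\operatorname{st}_{j-1}}_\bullet(t)$ for each coordinate block ($\theta$, $v_1,\dots,v_{P-1}$), observe that the noise term $\sqrt{2\gamma}dB_t$ cancels in every such difference (both stages use the same Brownian motion), and hence $\delta^{\operatorname{st}_j}_\bullet(t)$ solves a \emph{deterministic-drift} linear integral equation driven by the differences coming from stage $j-1$. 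Concretely, $\delta v_1^{\operatorname{st}_j}(t)=-\int_{k\eta}^t\brac{g^{\operatorname{st}_j}(s)-g^{\operatorname{st}_{j-1}}(s)}ds+\gamma\int_{k\eta}^t\brac{v_2^{\operatorname{st}_{j-1}}(s)-v_2^{\operatorname{st}_{j-2}}(s)}ds$, $\delta\theta^{\operatorname{st}_j}(t)=\int_{k\eta}^t\delta v_1^{\operatorname{st}_j}(s)ds$, $\delta v_n^{\operatorname{st}_j}(t)=-\gamma\int_{k\eta}^t\delta v_{n-1}^{\operatorname{st}_j}(s)ds+\gamma\int_{k\eta}^t\brac{v_{n+1}^{\operatorname{st}_{j-1}}(s)-v_{n+1}^{\operatorname{st}_{j-2}}(s)}ds$ for $2\leq n\leq P-2$, and $\delta v_{P-1}^{\operatorname{st}_j}(t)=-\gamma\int_{k\eta}^t\delta v_{P-2}^{\operatorname{st}_j}(s)ds-\gamma\int_{k\eta}^t\delta v_{P-1}^{\operatorname{st}_j}(s)ds$. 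The right-hand sides involve only (i) the gradient-approximation gap, which Lemma~\ref{lemma_polyapprox_Pthorder} plus Condition~\ref{cond_derivativegrowthrate} controls and which is further related to $\delta\theta^{\operatorname{st}_{j-1}}$ through $L$-smoothness, and (ii) the stage-$(j-1)$ differences, which are exactly the inductive hypothesis.

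The bookkeeping core is tracking how the power of $\eta$ increases as one integrates. Each integral $\int_{k\eta}^t(\cdot)ds$ over an interval of length $\leq\eta$ costs a factor $\eta^2$ in the squared $L^2$-norm (by Cauchy--Schwarz, $\E{\abs{\int_{k\eta}^t f(s)ds}^2}\leq \eta\int_{k\eta}^t\E{\abs{f(s)}^2}ds\leq \eta^2\sup_s\E{\abs{f(s)}^2}$). The three regimes (a), (b), (c) in the statement reflect which "chain" of integrations a given coordinate $v_n$ at stage $j$ sits on: coordinates with small index $n\leq P-j-1$ are still being fed purely by the doubly-integrated $\theta$-path and accumulate the full $\eta^{2j}$; the coordinate $n=P-j$ is the "frontier" that in the previous stage was only singly regularized (inheriting the $\eta^{2j-3}$-type rate from Lemma~\ref{lemma_basecases}'s $\eta^{2j-3}$ for $v_{P-1}$), giving one power less, $\eta^{2j-1}$; and the high-index coordinates $P-j+1\leq n\leq P-1$, which near the noise end have not yet been smoothed enough times, carry the reduced exponent $\eta^{4j+2n-2P-1}$. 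The $\theta$-bound $\eta^{2j+2}$ is then just $\delta v_1^{\operatorname{st}_j}$ (rate $\eta^{2j}$) integrated once more (cost $\eta^2$). I would verify these exponents carefully in the base step $j=3$ against Lemma~\ref{lemma_basecases} (checking e.g. that plugging $j=3$ into (a),(b),(c),(d) is consistent with the stated $j=2$, $P\geq4$ bounds after one more refinement), then push the induction: feeding stage-$j$ rates into the stage-$(j+1)$ integral equations and confirming the exponent recursion $2j\mapsto 2(j+1)$, $2j-1\mapsto 2(j+1)-1$, $4j+2n-2P-1\mapsto 4(j+1)+2n-2P-1$ is reproduced by the integration rules above, treating the frontier index shift $P-j\mapsto P-(j+1)$ separately.

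Finally, the two displayed consequences \eqref{difference_laststage_Pis3} and \eqref{difference_laststage_Pmorethan3} are read off by specializing to $j=P-1$. For $P=3$, stage $P-1=2$ is a base case, so \eqref{difference_laststage_Pis3} is immediate from the $j=2$, $P=3$ part of Lemma~\ref{lemma_basecases}. For $P\geq4$, at $j=P-1$ regime (a) is empty (the condition $1\leq n\leq P-j-1=0$ is vacuous), the frontier index is $n=P-j=1$ giving $\eta^{2(P-1)-1}=\eta^{2P-3}$ for $v_1$ (not summed in the display), and all $n$ with $2\leq n\leq P-1$ fall in regime (c) with exponent $4(P-1)+2n-2P-1=2P+2n-5\geq 2P-1$ (minimized at $n=2$), while the $\theta$-term has exponent $2(P-1)+2=2P$; since $\eta<1$ we may bound every term by the common worst power $\eta^{2P-1}$ and sum the finitely many constants $C^{\operatorname{st}_{P-1}}_n$, yielding \eqref{difference_laststage_Pmorethan3}. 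The main obstacle I anticipate is not any single estimate but getting the exponent \emph{recursion} exactly right across the three regimes and the moving frontier — in particular making sure the "$-1$" in regime (c)'s exponent and the "$2j-1$" in regime (b) are propagated consistently and match the given $j=2$ base cases, since an off-by-one here would break the clean $\eta^{2P-1}$ conclusion that drives the mixing-time rate $\mathcal{R}=2P-1$.
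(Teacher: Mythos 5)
Your proposal is correct and follows essentially the same route as the paper's proof: induction on the stage index $j$ with Lemma~\ref{lemma_basecases} as input, writing the coordinate-wise integral equations for the stage-to-stage differences (noting the Brownian cancellation), applying Cauchy--Schwarz to show each iterated integral costs a factor $\eta^2$ in squared $L^2$-norm, controlling the gradient-gap term via Lemma~\ref{lemma_polyapprox_Pthorder}, Condition~\ref{cond_derivativegrowthrate}, and $L$-smoothness, and tracking the exponent recursion across the three regimes and the moving frontier index. Your read-off of the consequences at $j=P-1$ (regime (a) vacuous, $v_1$ excluded from the sum, minimum exponent $2P-1$ attained at $n=2$ in regime (c)) also matches the paper's conclusion.
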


\begin{proposition}\label{prop_discretizationerror_Pthorder}    
Assume Equation~\eqref{equation_Pthorderlangevin} satisfies Conditions~\ref{cond_mainpaper} and~\ref{cond_derivativegrowthrate}. Let $a$ be any positive constant satisfying 
\begin{align*}
    a\leq \min \left\{\frac{m}{3\gamma}, \frac{\gamma \kappa}{6} \right\}\lambda_{\min,M}, 
\end{align*}
where the positive definite matrix $M$ and the constants $\gamma,m,L,\kappa$ are from Section~\ref{section_frommonmarche}. Denote $\mu$ the invariant measure of the $P$-th order Langevin dynamics \eqref{equation_Pthorderlangevin}.

Assume further that $\eta<\{\eta^{**},\frac{1}{h}\}$. Then regarding the discretization error, it holds when and $P\geq 4$ that 
    \begin{align*}
    \E{\abs{x((k+1)\eta)-x^{(k+1)}}^2}\leq \widetilde{C}_3d\eta^{2P-1}+\widetilde{C}_4e^{-(k+1)h\eta}\E{\abs{Z-x^{(0)}}^2},\qquad Z\sim \mu,
\end{align*}
and when $P=3$ that 
  \begin{align*}
    \E{\abs{x((k+1)\eta)-x^{(k+1)}}^2}\leq \widetilde{C}_3d\eta^{4}+\widetilde{C}_4e^{-(k+1)h\eta}\E{\abs{Z-x^{(0)}}^2},\qquad Z\sim \mu.
\end{align*}
In particular,  $\eta^{**}$ is defined at~\eqref{def_eta**}, $h:=2\rho-\frac{2a}{\lambda_{\min,M}}$ where $\rho,M$ are from Theorem \ref{theorem_frommonmarche_mainpaper} and $a$ is any positive constant equal to or less than  $\min \left\{\frac{m}{3\gamma}, \frac{\gamma \kappa}{6} \right\}\lambda_{\min,M}$ . Moreover, $\widetilde{C}_3$ and $\widetilde{C}_4$ are constants that depend on $\gamma,L,c$ but not on the dimension $d$. 
\end{proposition}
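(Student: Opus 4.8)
I will follow the same three-step Gronwall scheme as in the proof of Proposition~\ref{prop_discretizationerror_fourthorder}, now propagating the $\eta$-powers through the $P-1$ refinement stages. First, write the last-stage scheme $x^{\operatorname{st}_{P-1}}$ as an SDE $dx^{\operatorname{st}_{P-1}}(t)=\bar b(t)\,dt+\sqrt{2\gamma}\,D\,dB_t$, driven by the same Brownian motion as the continuous $P$-th order dynamics $dx(t)=b(x(t))\,dt+\sqrt{2\gamma}\,D\,dB_t$, where $b$ is the drift of \eqref{equation_Pthorderlangevin}, $\bar b(t)$ is the Stage-$(P-1)$ drift, and $D$ has $I_d$ in its lower-right $d\times d$ block and zeros elsewhere. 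The noise cancels in $x(t)-x^{\operatorname{st}_{P-1}}(t)$, and differentiating $\Delta(t):=\E{\abs{x(t)-x^{\operatorname{st}_{P-1}}(t)}_M^2}$ produces, after the mean-value expansion $b(u)-b(v)=\int_0^1 J_b(wu+(1-w)v)(u-v)\,dw$ and the contraction inequality \eqref{contraction} from Theorem~\ref{theorem_frommonmarche_mainpaper} on the $b(x(t))-b(x^{\operatorname{st}_{P-1}}(t))$ cross terms, together with Young's inequality (free parameter $a$) and the norm equivalence \eqref{equivalenceMnorm} on the $b(x^{\operatorname{st}_{P-1}}(t))-\bar b(t)$ cross terms, a differential inequality
\[
\frac{d\Delta}{dt}(t)\ \le\ -h\,\Delta(t)\ +\ \widetilde C(\gamma)\,\E{\abs{b(x^{\operatorname{st}_{P-1}}(t))-\bar b(t)}^2},\qquad h:=2\rho-\frac{2a}{\lambda_{\min,M}}>0,
\]
where $h>0$ is guaranteed by the admissible choice of $a$ exactly as in \eqref{choiceofa}.

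Second, I bound $\E{\abs{b(x^{\operatorname{st}_{P-1}}(t))-\bar b(t)}^2}$. Comparing the Stage-$(P-1)$ equations with $b$, this vector vanishes in its $\theta$- and $v_{P-1}$-coordinates; its $v_1$-coordinate equals $g^{\operatorname{st}_{P-1}}(t)-\nabla U(\theta^{\operatorname{st}_{P-1}}(t))-\gamma\big(v_2^{\operatorname{st}_{P-2}}(t)-v_2^{\operatorname{st}_{P-1}}(t)\big)$ and its $v_n$-coordinate for $2\le n\le P-2$ (vacuous when $P=3$) equals $\gamma\big(v_{n+1}^{\operatorname{st}_{P-2}}(t)-v_{n+1}^{\operatorname{st}_{P-1}}(t)\big)$. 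The consecutive-stage velocity differences $\E{\abs{v_n^{\operatorname{st}_{P-2}}(t)-v_n^{\operatorname{st}_{P-1}}(t)}^2}$ for $2\le n\le P-1$ and the position difference $\E{\abs{\theta^{\operatorname{st}_{P-2}}(t)-\theta^{\operatorname{st}_{P-1}}(t)}^2}$ are controlled by \eqref{difference_laststage_Pmorethan3} when $P\ge4$ and by \eqref{difference_laststage_Pis3} when $P=3$, yielding an $O(d\,\eta^{2P-1})$ bound (resp.\ $O(d\,\eta^{4})$). For the polynomial-approximation contribution I split $g^{\operatorname{st}_{P-1}}(t)-\nabla U(\theta^{\operatorname{st}_{P-1}}(t))$ as $\big[g^{\operatorname{st}_{P-1}}(t)-\nabla U(\theta^{\operatorname{st}_{P-2}}(t))\big]+\big[\nabla U(\theta^{\operatorname{st}_{P-2}}(t))-\nabla U(\theta^{\operatorname{st}_{P-1}}(t))\big]$: the first piece is $\le\brac{\frac{L_\alpha}{\alpha!}}^2\sup_t\E{\abs{\theta^{\operatorname{st}_{P-2}}(t)}^{2\alpha}}$ by Lemma~\ref{lemma_polyapprox_Pthorder}, which after inserting the moment bound of Lemma~\ref{lemma_momentbound_Pthorder} (valid for $\eta<\eta^{**}$) is dominated by $cd\,\eta^{2P-1}$ (resp.\ $cd\,\eta^4$) precisely because Condition~\ref{cond_derivativegrowthrate} is tailored to make $\brac{\frac{L_\alpha}{\alpha!}}^2(\widetilde C_1)^\alpha(d+2\alpha)^\alpha\le cd\,(\mathds{1}_{\{P=3\}}\eta^4+\mathds{1}_{\{P\ge4\}}\eta^{2P-1})$; the second piece is $\le L^2\,\E{\abs{\theta^{\operatorname{st}_{P-2}}(t)-\theta^{\operatorname{st}_{P-1}}(t)}^2}$ by $L$-smoothness from Condition~\ref{cond_mainpaper} and hence again $O(d\,\eta^{2P-1})$ (resp.\ $O(d\,\eta^4)$). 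Collecting, $\E{\abs{b(x^{\operatorname{st}_{P-1}}(t))-\bar b(t)}^2}\le\widetilde C_3\,d\,\eta^{2P-1}$ for $P\ge4$ and $\le\widetilde C_3\,d\,\eta^{4}$ for $P=3$, with $\widetilde C_3$ depending only on $c,\gamma,L,P$.

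Third, I solve the differential inequality by the integrating-factor method on $[k\eta,t]$, keeping the sharper per-step bound $\widetilde C_3\,d\,(t-k\eta)^{2P-1}$ (resp.\ $(t-k\eta)^4$) so that $\int_{k\eta}^t(s-k\eta)^{2P-1}ds=(t-k\eta)^{2P}/(2P)$, then iterate the recursion $\Delta((k+1)\eta)\le\frac{\widetilde C_3 d}{2P}\eta^{2P}+e^{-h\eta}\Delta(k\eta)$, sum the resulting geometric series, and use $\frac{1}{1-e^{-h\eta}}\le\frac{2}{h\eta}$ valid for $\eta\le1/h$; this gives $\Delta((k+1)\eta)\le\widetilde C_3\,d\,\eta^{2P-1}+e^{-(k+1)h\eta}\Delta(0)$ (resp.\ with $\eta^4$), after harmlessly renaming constants. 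Passing from $\abs{\cdot}_M$ to $\abs{\cdot}$ via \eqref{equivalenceMnorm}, taking the continuous dynamics stationary with $x(0)\sim\mu$ so that $\Delta(0)=\E{\abs{Z-x^{(0)}}_M^2}$ with $Z\sim\mu$, and absorbing $\lambda_{\min,M},\lambda_{\max,M}$ (which depend only on $P$, not $d$, by Corollary~\ref{coro_monmarcheforPthorderlangevin}) into $\widetilde C_3,\widetilde C_4$ yields the two claimed bounds.

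The genuinely hard work — propagating the $\eta$-orders cleanly through all $P-1$ stages — is already done in Lemma~\ref{lemma_basecases} and Lemma~\ref{lemma_stagedifference_Pthorder}, so the only real care here is bookkeeping: handling $P=3$ (last stage is Stage $2$, use Lemma~\ref{lemma_basecases} and \eqref{difference_laststage_Pis3}) separately from $P\ge4$, and verifying that it is exactly the $v_1$-coordinate's polynomial-approximation error that Condition~\ref{cond_derivativegrowthrate} is designed to let the splitting error dominate. I do not anticipate any obstacle beyond this.
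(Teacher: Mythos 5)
Your proof follows the same three-step Gronwall scheme as the paper's proof: the same decomposition of $b(x^{\operatorname{st}_{P-1}})-\bar b$ as in \eqref{equation_difference_Pthorder}, the same use of the contraction inequality \eqref{contraction}, Young's inequality with parameter $a$, and the norm equivalence \eqref{equivalenceMnorm} to obtain the differential inequality, the same invocation of Lemma~\ref{lemma_polyapprox_Pthorder} and Condition~\ref{cond_derivativegrowthrate} together with \eqref{difference_laststage_Pmorethan3} (resp.\ \eqref{difference_laststage_Pis3}) to bound the drift error, and the same integrating-factor argument to close. This matches the paper's proof essentially step by step.
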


\begin{proof}
We will follow the argument in the proof of Proposition~\ref{prop_discretizationerror_fourthorder}. 

\textbf{Step 1:} Let us write 
\begin{align*}
    dx^{\operatorname{st}_{P-1 }}=\bar{b}(t)dt+\sqrt{2\gamma}DdB_t,
\end{align*}
where $\bar{b}(t)\in \mathcal{M}_{P\times 1}$ and $D\in \mathcal{M}_{Pd\times Pd}$ are respectively given by:
 \begin{align*}
        \bar{b}(t):=\begin{pmatrix}
        v^{\operatorname{st}_{P-1 }}_1(t)\\
        -g^{\operatorname{st}_{P-1 }}(t)+\gamma v^{\operatorname{st}_{P-2 }}_2(t)\\
        -\gamma v^{\operatorname{st}_{P-1 }}_1(t)+\gamma v^{\operatorname{st}_{P-2 }}_3(t)\\
        \vdots\\
        -\gamma v^{\operatorname{st}_{P-1 }}_{P-3}(t)+ \gamma v^{\operatorname{st}_{ P-2}}_{P-1}(t)\\
        -\gamma v^{\operatorname{st}_{P-1 }}_{P-2}(t)- \gamma v^{\operatorname{st}_{ P-1}}_{P-1}(t)
    \end{pmatrix},\quad 
  D:=\begin{pmatrix}
  0_{d}&\cdots &\cdots &0_{d}\\
    \vdots&\ddots& &\vdots\\
  \vdots& & 0_{d}&0_{d}\\
    0_{d}&   \cdots&   0_{d}&I_{d}
\end{pmatrix}. 
 \end{align*}
Meanwhile, the $P$-th order Langevin dynamics \eqref{equation_Pthorderlangevin}
can be written as 
\begin{align*}
dx(t)=b(x(t))dt+\sqrt{2\gamma}DdB_t,\qquad  b(x)= \begin{pmatrix}
        v_1(t)\\
        -\nabla U\brac{\theta(t)}+\gamma v_2(t)\\
        -\gamma v_1(t)+\gamma v_3(t)\\
        \vdots\\
        -\gamma v_{P-3}(t)+ \gamma v_{P-1}(t)\\
        -\gamma v_{P-2}(t)- \gamma v_{P-1}(t)
    \end{pmatrix}. 
\end{align*}

%%%%%%%%%%%%%%%%%%%%%%%%%%%%%%%%%%%%%%%%%%%%%

Then
\begin{align*}
    d\brac{x(t)-x^{\operatorname{st}_{P-1 }}(t)}=\brac{b(x(t))-b\brac{x^{\operatorname{st}_{P-1 }}}}dt+\brac{b\brac{x^{\operatorname{st}_{P-1 }}(t)}-\bar{b}(t)}dt, 
\end{align*}
where
\begin{align}
\label{equation_difference_Pthorder}
    b\brac{x^{\operatorname{st}_{P-1 }}(t)}-\bar{b}(t)= \begin{pmatrix}
        0\\
        \brac{g^{\operatorname{st}_{P-1 }}(t)- \nabla U\brac{\theta^{\operatorname{st}_{P-1 }}(t)}}+\gamma\brac{v^{\operatorname{st}_{P-1 }}_2(t)-v^{\operatorname{st}_{P-2 }}_2(t) }\\
        \gamma\brac{v^{\operatorname{st}_{P-1 }}_3(t)-v^{\operatorname{st}_{P-2 }}_3(t)} \\
        \vdots\\
        \gamma\brac{v^{\operatorname{st}_{P-1 }}_{P-1}(t)-v^{\operatorname{st}_{P-2 }}_{P-1}(t)}\\
       0
    \end{pmatrix}. 
\end{align}

This leads to
\begin{align}
\label{error_firststep_Pthorder}
    &\frac{d}{dt}\E{\brac{x(t)-x^{\operatorname{st}_{P-1 }}(t)}^{\top} M \brac{x(t)-x^{\operatorname{st}_{P-1 }}(t)}}\nonumber\\
    &=\E{\brac{x(t)-x^{\operatorname{st}_{P-1 }}(t)}^{\top} M\brac{b(x(t))-b(x^{\operatorname{st}_{P-1 }}(t)}}\nonumber\\
    &\qquad+\E{\brac{x(t)-x^{\operatorname{st}_{P-1 }}(t)}^{\top} M\brac{b(x^{\operatorname{st}_{P-1 }}(t))-\bar{b}(t)} }\nonumber\\
    &\qquad\quad+\E{\brac{b(x(t))-b(x^{\operatorname{st}_{P-1 }}(t))}^{\top} M\brac{x(t)-x^{\operatorname{st}_{P-1 }}(t)}}\nonumber\\
    &\qquad\qquad+\E{\brac{b(x^{\operatorname{st}_{P-1 }}(t))-\bar{b}(t)}^{\top} M \brac{x(t)-x^{\operatorname{st}_{P-1 }}(t)}}. 
\end{align}

Notice that 
\begin{align}
\label{error_secondstep_Pthorder}
    &\brac{x(t)-x^{\operatorname{st}_{P-1 }}(t)}^{\top} M\brac{b(x(t))-b(x^{\operatorname{st}_{P-1 }}(t)}+\brac{b(x(t))-b(x^{\operatorname{st}_{P-1 }}(t))}^{\top} M\brac{x(t)-\bar{x}(t)}\nonumber\\
    &\leq \brac{x(t)-\bar{x}(t)}^{\top} M\int_0^1 J_b\brac{wx(t)+(1-w)\bar{x}(t)}\brac{x(t)-x^{\operatorname{st}_{P-1 }}(t)}dw\nonumber\\
    &+\int_0^1 \brac{x(t)-x^{\operatorname{st}_{P-1 }}(t)}^{\top} J_b\brac{wx(t)+(1-w)\bar{x}(t)}^{\top}dw M \brac{x(t)-x^{\operatorname{st}_{P-1 }}(t)}\nonumber\\
    &\leq \brac{x(t)-x^{\operatorname{st}_{P-1 }}(t)}^{\top} (-2\rho)M \brac{x(t)-x^{\operatorname{st}_{P-1 }}(t)}=-2\rho\abs{x(t)-\bar{x}(t)}^2_M,
\end{align}
where the last line is due to the contraction property  \eqref{contraction} in Theorem~\ref{theorem_frommonmarche_mainpaper}.

Moreover, choose any positive $a\leq \min \left\{\frac{m}{3\gamma}, \frac{\gamma \kappa}{6} \right\}\lambda_{\min,M}$ and notice that per Theorem~\ref{theorem_frommonmarche_mainpaper},
\begin{align}
\label{choiceofa_Pthorder}
    -\rho+\frac{a}{\lambda_{\min,M}}<0. 
\end{align}

From \eqref{error_firststep_Pthorder}, \eqref{error_secondstep_Pthorder}, \eqref{choiceofa_Pthorder} and Cauchy-Schwarz inequality, we can deduce that
\begin{align}
 \frac{d}{dt} \E{ \abs{x(t)-x^{\operatorname{st}_{P-1 }}(t)}^2_M}
 &\leq -2\rho \E{\abs{x(t)-x^{\operatorname{st}_{P-1 }}(t)}^2_M+ 2a\abs{x(t)-x^{\operatorname{st}_{P-1 }}(t)}^2}\nonumber\\
 &\qquad\qquad+\frac{2}{a}\norm{M}^2_{\operatorname{op}}\brac{\gamma^2 +1}\E{\abs{b(x^{\operatorname{st}_{P-1 }}(t))-\bar{b}(t)}^2}.\label{previous:bound:2} 
\end{align}
By combining the bound in \eqref{previous:bound:2} with \eqref{equivalenceMnorm},  we get
\begin{align}
\label{error_beforeboundingtheextraterm_Pthorder}
 &\frac{d}{dt}  \E{\abs{x(t)-x^{\operatorname{st}_{P-1 }}(t)}^2_M}\nonumber\\
 &\leq \brac{-2\rho+\frac{2a}{\lambda_{\min,M}} } \E{\abs{x(t)-x^{\operatorname{st}_{P-1 }}(t)}^2_M}+\frac{2}{a}\norm{M}^2_{\operatorname{op}}\brac{\gamma^2+1} \E{\abs{b(x^{\operatorname{st}_{P-1 }}(t))-\bar{b}(t)}^2}. 
\end{align}

\textbf{Step 2:} We will bound $\E{\abs{b(x^{\operatorname{st}_{P-1 }}(t))-\bar{b}(t)}^2}$ as the second term on the right hand side of \eqref{error_beforeboundingtheextraterm_Pthorder}. Based on \eqref{equation_difference_Pthorder}, we will need to bound the $L^2$ norm of 
\begin{align*}
  g^{\operatorname{st}_{P-1 }}(t)- \nabla U\brac{\theta^{\operatorname{st}_{P-1 }}(t)},\,\, \gamma\brac{v^{\operatorname{st}_{P-1 }}_2(t)-v^{\operatorname{st}_{P-2 }}_2(t) },\,\,\gamma\brac{v^{\operatorname{st}_{P-1 }}_{P-1}(t)-v^{\operatorname{st}_{P-2 }}_{P-1}(t)}.
\end{align*}
Per estimate~\eqref{difference_laststage_Pmorethan3} Lemma~\ref{lemma_stagedifference_Pthorder} and in the case $P\geq 4$ (the case $P=3$ is similar and is handled at the end of this proof), we have 
\begin{align}
\label{1stterm}
  &  \E{\abs{\gamma\brac{v^{\operatorname{st}_{P-1 }}_2(t)-v^{\operatorname{st}_{P-2 }}_2(t) }}^2}+\E{\abs{\gamma\brac{v^{\operatorname{st}_{P-1 }}_{P-1}(t)-v^{\operatorname{st}_{P-2 }}_{P-1}(t)}}^2}\nonumber\\
   &\leq \gamma^2 \brac{C^{\operatorname{st}_{P-1 }}_2+C^{\operatorname{st}_{P-1 }}_{P-1} }d\eta^{2P-1}.  
\end{align}
Meanwhile, 
\begin{align*}
    &\E{\abs{ g^{\operatorname{st}_{P-1 }}(t)- \nabla U\brac{\theta^{\operatorname{st}_{P-1 }}(t)}}^2}\\
    &\leq 2\E{\abs{ g^{\operatorname{st}_{P-1 }}(t)- \nabla U\brac{\theta^{\operatorname{st}_{P-2 }}(t)}}^2}+2\E{\abs{\nabla U\brac{\theta^{\operatorname{st}_{P-2 }}(t)}-\nabla U\brac{\theta^{\operatorname{st}_{P-1 }}(t)}}^2}. 
\end{align*}
The same argument as the one in \eqref{bound_polyapproxtildetheta} yields
\begin{align*}
    \E{\abs{ g^{\operatorname{st}_{P-1 }}(t)- \nabla U\brac{\theta^{\operatorname{st}_{P-2 }}(t)}}^2}\leq c  d\eta^{2P-1},
\end{align*}
for some positive constant $c$. Moreover, $L$-smoothness of $U$ in Condition~\ref{cond_mainpaper} and estimate~\eqref{difference_laststage_Pmorethan3} of Lemma~\ref{lemma_stagedifference_Pthorder} imply
 \begin{align*}
     \E{\abs{\nabla U\brac{\theta^{\operatorname{st}_{P-2 }}(t)}-\nabla U\brac{\theta^{\operatorname{st}_{P-1 }}(t)}}^2}\leq L^2 D^{\operatorname{st}_{P-1 }}d\eta^{2P-1}. 
 \end{align*}
The last three bounds lead to 
\begin{align}
    \label{2ndterm}
     &\E{\abs{ g^{\operatorname{st}_{P-1 }}(t)- \nabla U\brac{\theta^{\operatorname{st}_{P-1 }}(t)}}^2}\leq  \brac{L^2 D^{\operatorname{st}_{P-1 }}+1}d\eta^{2P-1}. 
\end{align}
The combination of \eqref{equation_difference_Pthorder}, \eqref{1stterm} and \eqref{2ndterm} lead to 
\begin{align*}
    \E{\abs{b(x^{\operatorname{st}_{P-1 }}(t))-\bar{b}(t)}^2}\leq \widetilde{C}_1d\eta^{2P-1}, \quad P\geq 4,
\end{align*}
where $\widetilde{C}_1$ is a generic positive constant that depends only on $c,\gamma,L,P$ and can change from line to line. Then from \eqref{error_beforeboundingtheextraterm_Pthorder}, we get
\begin{align}
\label{estimate_beforediscretization}
  &  \frac{d}{dt}  \E{\abs{x(t)-x^{\operatorname{st}_{P-1 }}(t)}^2_M}\nonumber\\
 &\leq \brac{-2\rho+\frac{2a}{\lambda_{\min,M}} } \E{\abs{x(t)-x^{\operatorname{st}_{P-1 }}(t)}^2_M}+\frac{2}{a}\norm{M}^2_{\operatorname{op}}\brac{\gamma^2 +1}\widetilde{C}_1d\eta^{2P-1} ,\quad P\geq 4. 
\end{align}

\textbf{Step 3:} Solving the differential inequality \eqref{estimate_beforediscretization} by integrating factors as in the proof of Proposition~\ref{prop_discretizationerror_fourthorder}, we arrive at the desired discretization error in the case $P\geq 4$. 

\textbf{Step 4:} As the last part of this proof and in the case $P=3$, we follow a similar path and use estimate~\eqref{difference_laststage_Pis3} in Lemma~\ref{lemma_stagedifference_Pthorder} (instead of \eqref{difference_laststage_Pmorethan3}) to get an analogous inequality to \eqref{estimate_beforediscretization} that is
\begin{align}
\label{estimate_beforediscretization_thirdorder}
  &  \frac{d}{dt}  \E{\abs{x(t)-x^{\operatorname{st}_{P-1 }}(t)}^2_M}\nonumber\\
 &\leq \brac{-2\rho+\frac{2a}{\lambda_{\min,M}} } \E{\abs{x(t)-x^{\operatorname{st}_{2 }}(t)}^2_M}+\frac{2}{a}\norm{M}^2_{\operatorname{op}}\brac{\gamma^2 +1}\widetilde{C}_1d\eta^{4}. 
\end{align}
Then by solving the differential inequality~\eqref{estimate_beforediscretization_thirdorder} by integrating factors as in the proof of Proposition~\ref{prop_discretizationerror_fourthorder}, we arrive at the desired discretization error in the case $P=3$. The proof is complete.
\end{proof}

\begin{proof}[Proof of Theorem~\ref{theorem_mixingtime_Pthorder}]
The argument is the same as the proof of Theorem~\ref{theorem_mixingtime_4thorder} at the end of Section~\ref{section_proof_fouthorder}. We have $\operatorname{Wass}_2\brac{\operatorname{Law}(X),\operatorname{Law}(Y)}\leq \E{\abs{X-Y}^2}^{1/2}$. Then per Proposition~\ref{prop_discretizationerror_Pthorder}, we can solve for $\eta$ in  $\widetilde{C}_3d\eta^{4\mathds{1}_{\{P=3 \} }+(2P-1)\mathds{1}_{\{P\geq 4 \} } }\leq \epsilon^2/2$ and for $k$ in $\widetilde{C}_4e^{-(k+1)h\eta}\mathbb{E}_{Z\sim\mu}\left[\left|Z-x^{(0)}\right|^2\right]\leq \epsilon^2/2$ to obtain the desired mixing time. 
This completes the proof.
\end{proof}

%%%%%%%%%%%%%%%%%%%%%%%%%%%%%%%%%%%%%%%%%%%%%%%%%%%%%%%%%%%%%%%%%%%%%%%%%%%%%%%%%%%%%%%%%%%%%%%%%%%%%%%%%%%%%%%%%%%%%%%%%%%%%%%%%%%%%%%%%%%%%%%%%%%%%%%%%%%%%%%%%%%%%%%%%%%%%%%%%%%%%%%%%%%%%%%%%%%%%%%%%%%%%%%%%%%%%%%%%%%%%%%%%%%%%%%%%%%%%%%%%%%%%%%%%%%%%%%

\section{Numerical Experiments}\label{sec:numerical}

% \subsection{Supporting Calculations for Numerical Experiments}
 % \textcolor{red}{Please keep Section 3.4 with the plots in the main paper. The upcoming parts with all the long calculations should be moved to the Appendices.}

In this section, we will implement both third-order
and fourth-order LMC algorithms.
From Section~\ref{section_4thorder}, we recall the fourth-order LMC algorithm samples a multivariate normal distribution at every step, where mean and covariance are provided in Lemma~\ref{lemma_explicitformofxbar} and Lemma~\ref{lemma_meanandcovariance} in Appendix~\ref{appendix_4thorder}. The mean in particular contains several nested integrals that need to be exactly computed when the loss function $U$ is a polynomial, and approximated in the case where the loss function $U$ is not a polynomial. We provide the calculations related to these nested integrals for quadratic loss and logistic loss in Appendix~\ref{app:extra}, which allow us to perform the numerical experiments for our fourth-order LMC algorithm.  

In addition, we will provide some calculations necessary to perform the numerical experiments for the third-order LMC algorithm in~\cite{mou2021high} for quadratic loss.

%First, we present the discretization of the higher-order Langevin dynamics when the potential function $U(\theta)$ is a polynomial, where we use exact integrals to compute the gradient vector and when the potential function $U(\theta)$ is not a polynomial, where we use Taylor expansion to compute the gradient vector in the sampling process. Second, we present numerical experiments to compare the relative performances of the third-order and fourth-order Langevin dynamics.\\
When the potential function $U(\theta)$ satisfies Condition~\ref{cond_mainpaper}, then for a small stepsize $\eta>0$ and two arbitrary friction parameters $\gamma>0$ and $\xi>0$, the third-order Langevin Monte Carlo algorithm is given as follows:
\\
\begin{tabular}{l}
     \toprule
     \textbf{Algorithm 1}: Third-Order Langevin Monte Carlo Algorithm\label{alg:alg1}\\
     \midrule 
     \hspace{4mm} Let $x^{(0)}=\left(\theta^{(0)},v_1^{(0)},v_2^{(0)}\right)=(\theta^*,0,0)$\\
     \hspace{4mm} \textbf{for} $k=0,1,\cdots, N-1$ \textbf{do}\\
     \hspace{8mm} Sample $x^{(k+1)}\sim \mathcal{N}\left(\bs{\mu}(x^{(k)}),\bs{\Sigma}\right)$, where $\bs{\mu}$ and $\bs{\Sigma}$ are defined in the following\\  \hspace{8mm} equations\\
     \hspace{4mm} \textbf{end for} \\
     \bottomrule
\end{tabular}\hfill\\
\\
The update of the states $x$ from step $k$ to $k+1$ is obtained by drawing from the distribution with mean $\bs{\mu}\left(x^{(k)}\right)$ and covariance $\bs{\Sigma}$:
\begin{equation}
    \bs{\mu}(x):  = \begin{pmatrix}\theta-\frac{\eta}{2L}\Delta U(\theta,v_1)+\mu_{12}v_1+\mu_{13}v_2\\-\frac{1}{L}\Delta U(\theta,v_1) +\mu_{22}v_1+\mu_{23}v_2\\
    \frac{\mu_{31}}{L}\Delta U(\theta,v_1)+\mu_{32}v_1+\mu_{33}v_2
    \end{pmatrix},
    \bs{\Sigma}: = \begin{pmatrix}
        \sigma_{11}\cdot I_{d}&\sigma_{12}\cdot I_{d}&\sigma_{13}\cdot I_{d}\\
        \sigma_{12}\cdot I_{d}&\sigma_{22}\cdot I_{d}&\sigma_{23}\cdot I_{d}\\
        \sigma_{13}\cdot I_{d}&\sigma_{23}\cdot I_{d}&\sigma_{33}\cdot I_{d}
    \end{pmatrix},
\label{eq:o3}
\end{equation}
where all $\mu$'s and $\sigma$'s are defined in the article by \cite{mou2021high}. 
Now we present the fourth-order Langevin Monte Carlo algorithm as follows:\\
\\
\begin{tabular}{l}
     \toprule
     \textbf{Algorithm 2}: Fourth-Order Langevin Monte Carlo Algorithm\label{alg:alg2}\\
     \midrule 
     \hspace{4mm} Let $x^{(0)}=\left(\theta^{(0)},v_1^{(0)},v_2^{(0)},v_3^{(0)}\right)=(\theta^*,0,0,0)$\\
     \hspace{4mm} \textbf{for} $k=0,1,\cdots, N-1$ \textbf{do}\\
     \hspace{8mm} Sample $x^{(k+1)}\sim \mathcal{N}\left(\mb{m}(x^{(k)}),\bs{\Sigma}\right)$, where $\mb{m}$ and $\bs{\Sigma}$ are defined in the following\\ \hspace{8mm} equation (\ref{eq:o4})\\
     \hspace{4mm} \textbf{end for} \\
     \bottomrule
\end{tabular}\hfill\\
\\
\\
The update of the state $x$ from step $k$ to $k+1$ is obtained by drawing a sample from the multivariate Gaussian distribution with mean $\mb{m}(x)$
and covariance $\bs{\Sigma}$ given by:
\begin{equation}
        \mb{m}(x):= \begin{pmatrix}
                    m_0 \\ m_1 \\ m_2 \\ m_3
                \end{pmatrix},
    \qquad\bs{\Sigma}:= \begin{pmatrix}
                            \sigma_{00}\cdot I_{d} & \sigma_{01}\cdot I_{d} & \sigma_{02}\cdot I_{d} & \sigma_{03}\cdot I_{d} \\
                            \sigma_{01}\cdot I_{d} &
                            \sigma_{11}\cdot I_{d} & 
                            \sigma_{12}\cdot I_{d} &
                            \sigma_{13}\cdot I_{d} \\
                            \sigma_{02}\cdot I_{d} &
                            \sigma_{12}\cdot I_{d} &
                            \sigma_{22}\cdot I_{d} & 
                            \sigma_{23}\cdot I_{d} \\
                            \sigma_{03}\cdot I_{d} & 
                            \sigma_{13}\cdot I_{d} & 
                            \sigma_{23}\cdot I_{d} & 
                            \sigma_{33}\cdot I_{d}
                    \end{pmatrix},
\label{eq:o4}
\end{equation}
where the explicit formulas to compute $m_i\in \mathbb{R}^d$ and $\sigma_{ij}\in \mathbb{R}$ are given in Lemma~\ref{lemma_meanandcovariance} and the calculations for particular loss functions are given in Appendix~\ref{app:extra}. Note that both algorithms require the initialization of the model parameter $\theta^*$. \cite{mou2021high} recommended that $\theta^*$ can be chosen from the exact solution when $U$ is a polynomial. However, we initialize the sampling process randomly from the standard normal distribution, which leads to superior performance. 

\subsection{Bayesian linear regression}
\label{section_linearegression}

We conduct experiments using our algorithms for Bayesian linear regression-type problems using the \textbf{Air Quality} data from the \textit{UCI Machine Learning Repository} \cite{air_quality_360}. It contains sensor readings from an array of chemical sensors deployed in an Italian city to monitor air pollution. Collected between March 2004 and February 2005. The dataset includes hourly measurements of key pollutants such as carbon monoxide (CO), non-methane hydrocarbons (NMHC), benzene, nitrogen oxides ($\text{NO}_x$), and ozone ($\text{O}_3$), along with meteorological variables such as temperature and relative humidity. The dataset is often used for regression tasks to model air quality indicators, particularly predicting CO concentration based on other environmental variables. It presents challenges such as missing values and sensor drift, making it suitable for testing robust data pre-processing and modeling techniques.

In this experiment, our goal is to sample the posterior distribution of the model parameters that regress the concentration of CO present in the air. After pre-processing. The feature matrix has $d=16$ dimensions (including the intercept term) and a total of 7,674 observations.

We consider an arbitrary prior of $\theta$ from $\mathcal{N}(0,10 I)$. The known posterior for the linear regression problem is given as follows:
\begin{equation}
    \mu(\theta) \sim \mathcal{N}(\mathbf{m}, \mathbf{V}); \hspace{4mm} \mathbf{m} := \left(\Sigma^{-1}+\frac{X^{\top}X}{\xi^2}\right)^{-1}\left(\frac{X^{\top}y}{\xi^2}\right), \hspace{4mm} \mathbf{V} := \left(\bs{\Sigma}^{-1}+\frac{X^{\top}X}{\xi^2}\right)^{-1},
    \label{eq:post1}
\end{equation}
where $X$ and $y$ are input data-matrix and output vector, respectively, and $\bs{\Sigma} = \lambda I_d$ is the covariance matrix with the Ridge regularization ($L_2$) parameter $\lambda$, in this experiment, we choose a smaller penalty $\lambda=2$.

To draw a sample from the posterior, at each iteration we perform a Cholesky decomposition to factor the covariance matrix into a lower and upper triangular matrix $\bs{\Sigma} = LL^{\top}$ and use the formula \cite{wang2006generating}:
\begin{equation*}
    x^{(k+1)} = \bs{\mu}\left(x^{(k)}\right) + Lu;\hspace{4mm}\text{or }\hspace{4mm} x^{(k+1)} = \mb{m}\left(x^{(k)}\right) + Lu,
\end{equation*}
where $u\in\mathbb{R}^{3d}$ (or $\mathbb{R}^{4d}$) with $u\sim \mathcal{N}(0,I)$ and $\bs{\mu}\left(x^{(k)}\right)$ (or $\mb{m}\left(x^{(k)}\right)$) is the mean vector at $k$-th iterate of the respective algorithm. Note that the covariance matrix $\bs{\Sigma}$ needs to be symmetric positive definite (SPD) in order to factor it using Cholesky decomposition. To ensure that we get an SPD matrix for arbitrarily chosen $\gamma, \xi,$ and $\eta$ values, we add a small jitter ($10^{-6}$) to the covariance matrix. Then we perform a grid search to find the optimal hyperparameters based on the lowest mean 2-Wasserstein distance, computed using the formula from \cite{givens1984class}.

\begin{figure}[htbp]
  \centering

  \begin{minipage}[b]{0.3\textwidth}
    \includegraphics[width=\linewidth]{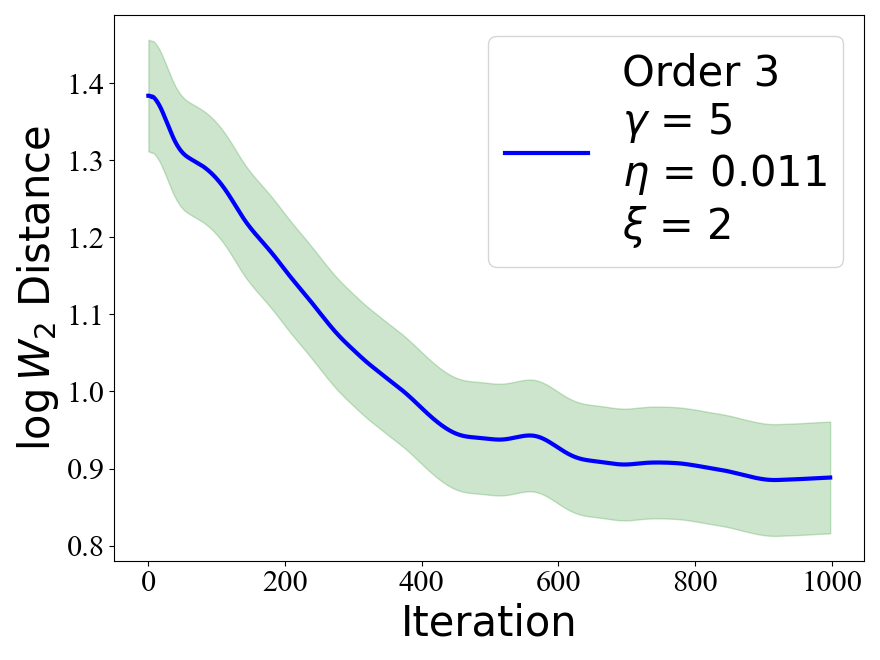}
    \caption*{a. $\mathcal{W}_2$ distance from the 3rd-order LMC}
  \end{minipage}
  \hfill
  \begin{minipage}[b]{0.3\textwidth}
    \includegraphics[width=\linewidth]{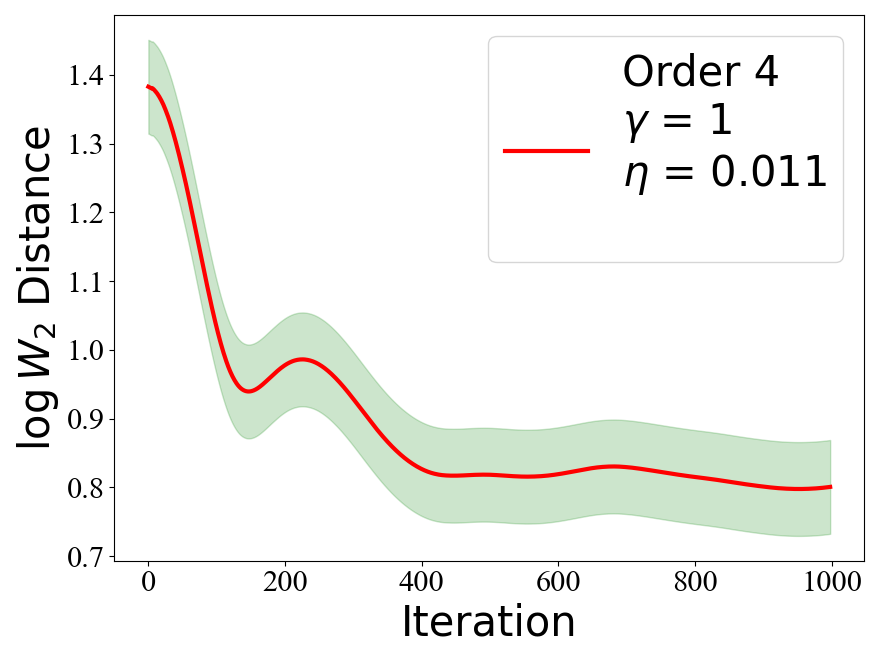}
    \caption*{b. $\mathcal{W}_2$ distance from the 4th-order LMC}
  \end{minipage}
  \hfill
  \begin{minipage}[b]{0.3\textwidth}
    \includegraphics[width=\linewidth]{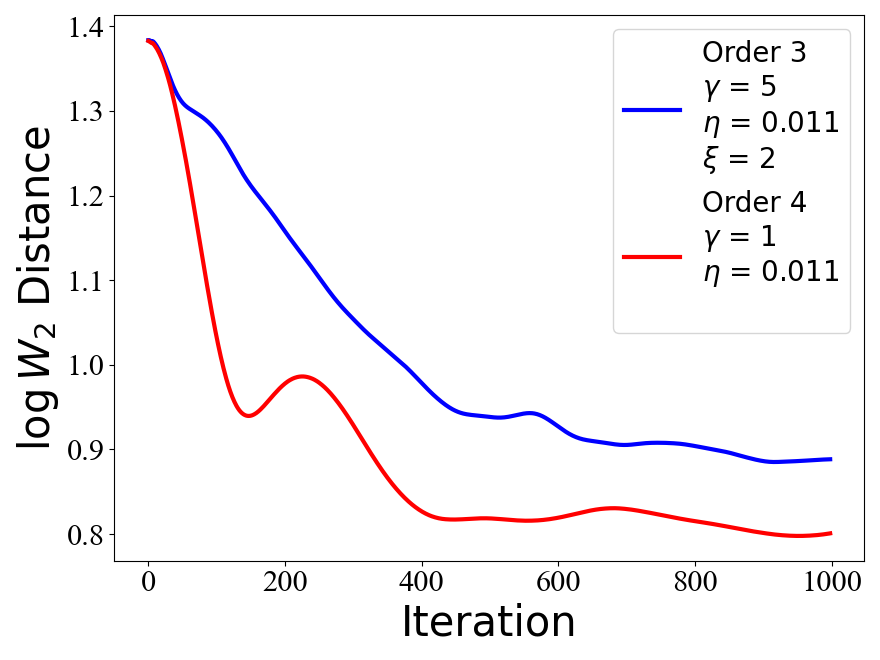}
    \caption*{c. $\mathcal{W}_2$ distances from the 3rd- and 4th- order LMC}
  \end{minipage}

  \caption{Comparative performance of the 3rd- and 4th-order Langevin Monte Carlo algorithms}
  \label{fig:reg1}
\end{figure}

The tuned hyperparameters for the third-order Langevin dynamics $\gamma = 5, \eta = 0.011,$ and $\xi=2$, and for the fourth-order Langevin dynamics $\gamma = 1$ and $\eta=0.011$. For both dynamics, we draw $N=1,000$ samples from the posterior distribution and compute the $\mathcal{W}_2$ (\textit{2-Wasserstein}) distance from the known posterior defined in (\ref{eq:post1}). The shaded region both in Figure~\ref{fig:reg1}a. and \ref{fig:reg1}b. represent half of the standard deviation in \textit{2-Wasserstein} distances. The relative performances of the third- and fourth-order LMC algorithms are presented in Figure~\ref{fig:reg1}c. From this set of experiments, we notice that the convergence to the posterior distribution is better for the 4th-order LMC algorithm than that of the 3rd-order LMC algorithm for a given stepsize $\eta$.\\
\begin{figure}[htbp]
  \centering

  \begin{minipage}[b]{0.48\textwidth}
    \includegraphics[width=\linewidth]{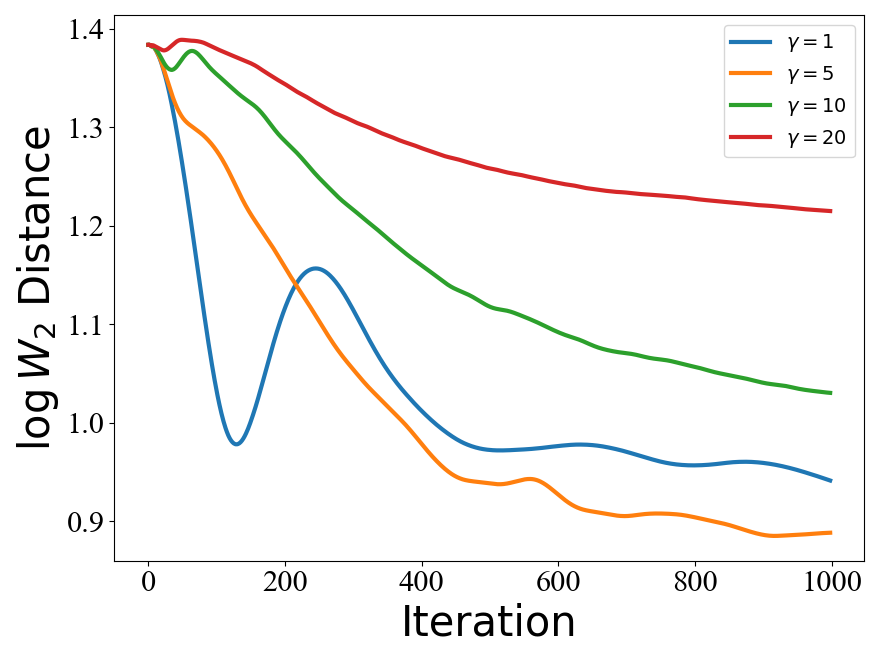}
    \caption*{a. Change in $\mathcal{W}_2$ distance from the 3rd-order LMC algorithm for varying $\gamma$}
  \end{minipage}
  \hfill
  \begin{minipage}[b]{0.48\textwidth}
    \includegraphics[width=\linewidth]{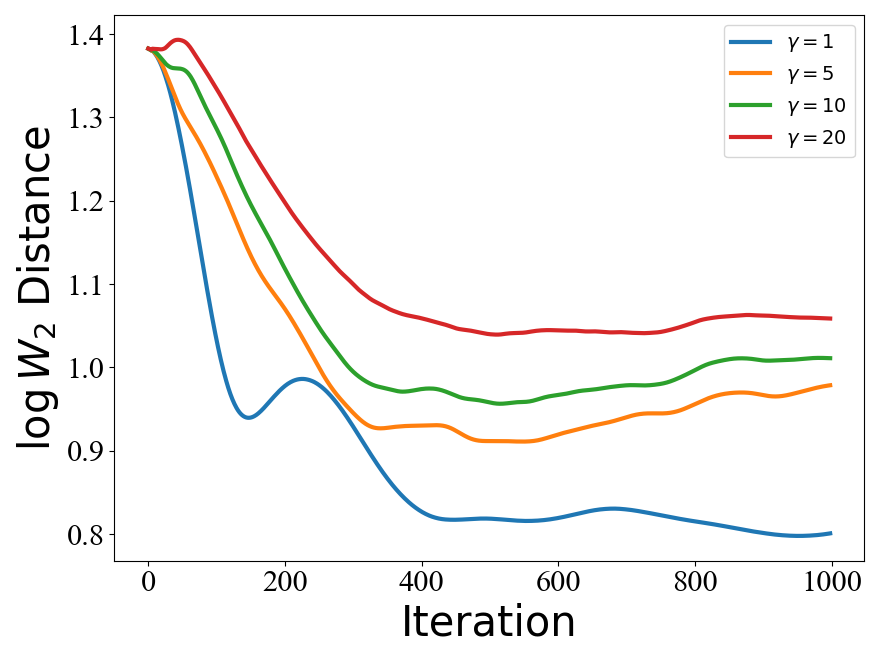}
    \caption*{b. Change in $\mathcal{W}_2$ distance from the 4th-order LMC algorithm for varying $\gamma$}
  \end{minipage}
  \caption{Comparative performance of the 3rd- and 4th-order Langevin Monte Carlo algorithms for the same stepsize $\eta$ and varying the friction parameter $\gamma$}
  \label{fig:reg2}
\end{figure}

Next, we present the effect of the variation of the friction parameter $\gamma$ in Figure~\ref{fig:reg2}. For the same stepsize $\eta$, we observe that the 4th-order LMC algorithm provides better convergence in terms of the \textit{2-Wasserstein} distance for smaller $\gamma$ values. However, this is not always the case for the 3rd-order LMC algorithm.

%%%%%%%%%%%%%%%%%%%%%%%%%%%%%%%%%%%%
\subsection{Bayesian logistic regression}
\label{section_logisticregression}

In this section, we provide the implementation of the 4th-order LMC algorithm for sampling in a classification problem. To implement the 4th-order LMC algorithm efficiently, one needs to approximate the gradient of the potential function in higher-degree polynomials. The last step can be done via softwares per Remark~\ref{remark_softwarefortaylorpoly}; however this complicates the implementation of our algorithm. Therefore, we arbitrarily choose third-degree polynomials to approximate the gradient of the logistic loss function using a Taylor polynomial which is given in Appendix~\ref{dis:logloss}.

%We also verify at the end of Appendix~\ref{dis:logloss} that the logistic loss function satisfies Condition~\ref{cond_derivativegrowthrate}. 

We choose the \textbf{Mushroom} dataset from the \textit{UCI Machine Learning Repository} \footnote{Mushroom. UCI Machine Learning Repository, 1981. DOI: https://doi.org/10.24432/C5959T}. The dataset contains 8,124 instances of gilled mushrooms, each described by 22 categorical features such as cap shape, odor, gill color, and habitat. After pre-processing (e.g., OneHotEncoding for categorical features). The final input dataset has a dimension $d=118$, and a total of 8,124 observations.

The typical objective with this data is to build a machine learning model that classifies whether a mushroom is edible or poisonous based on the given attributes. However, our goal in this experiment is not to find the optimal model; rather, we sample the model parameters and see how the accuracy measure varies as we increase the number of samples from the posterior distribution of the model parameters.

\begin{figure}[htbp]
  \centering

  \begin{minipage}[b]{0.48\textwidth}
    \includegraphics[width=\linewidth]{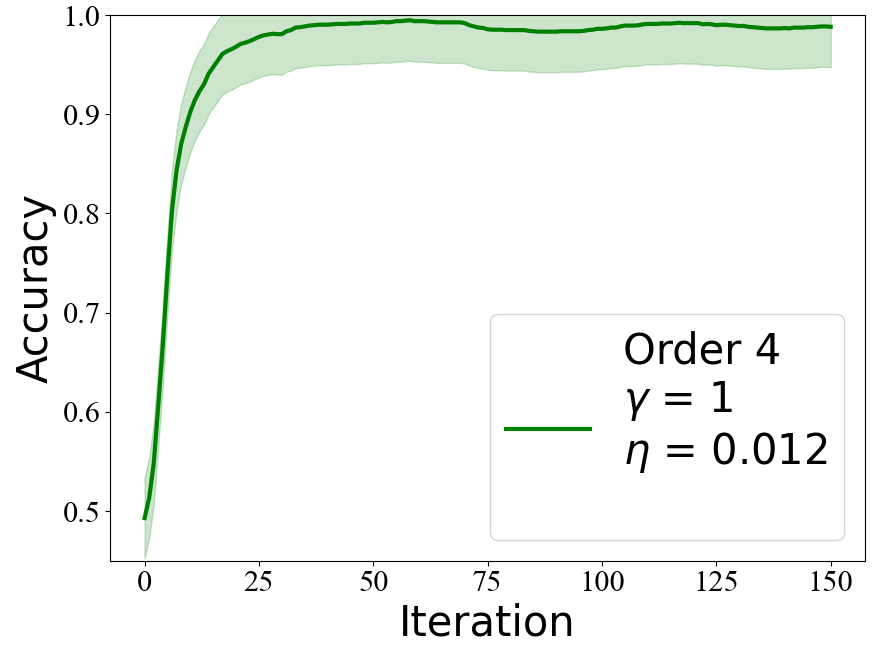}
    \caption*{a. Accuracy from the 4th-order LMC algorithm}
  \end{minipage}
  \hfill
  \begin{minipage}[b]{0.48\textwidth}
    \includegraphics[width=\linewidth]{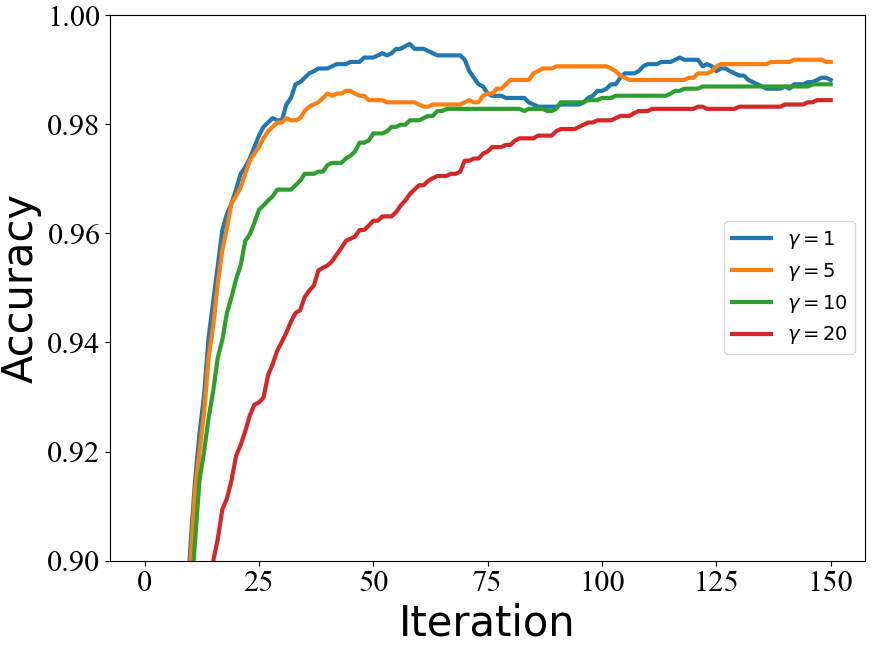}
    \caption*{b. Accuracy for different $\gamma$ values (for the a fixed stepsize $\eta=0.012$)}
  \end{minipage}
  \caption{Performance of the 4th-order LMC algorithm in sampling from a non-polynomial potential function}
  \label{fig:class1}
\end{figure}

Before starting the sampling process, we split the data into 70-30 training and testing ratio and check the sample quality on the test set. We generate $N=150$ samples of the model parameters and run a grid search for the hyperparameters $\eta$ and $\gamma$. To avoid overfitting, we use a larger penalty $\lambda=25$.

From Figure~\ref{fig:class1} (a), we see that the 4th-order LMC algorithm performs very well even for smaller degree polynomial approximation of the gradient of the potential function. We tune the model parameters $\eta=0.012$ and $\gamma=1$. Then we show the effect of the variation in the friction parameter for a chosen stepsize $\eta=0.012$ in Figure~\ref{fig:class1} (b). We see that smaller $\gamma$ values result in better performance in terms of higher accuracy. 

\section{Conclusion}\label{sec:conclusion}

In this paper, we proposed $P$-th order Langevin Monte Carlo algorithms based
on the discretizations of $P$-th order Langevin dynamics for any $P\geq 3$. We designed discretization schemes 
based on splitting and accurate integration methods. 
When the density of the target distribution is log-concave
and smooth, we obtained Wasserstein convergence guarantees
that lead to better iteration complexities. 
Specifically, the mixing time of the $P$-th order LMC algorithm scales as $O\brac{d^{\frac{1}{\mathcal{R}}}/\epsilon^{\frac{1}{2\mathcal{R}}} }$ for $ \mathcal{R}=4\cdot\mathds{1}_{\{ P=3\}}+ (2P-1)\cdot\mathds{1}_{\{ P\geq 4\}}$, which has a better dependence on the dimension $d$ and the accuracy level $\epsilon$ as $P$ grows.
Numerical experiments were conducted to illustrate the efficiency of our proposed algorithms.

%%%%%%%%%%%%%%%%%%%%%%%%%%%%%%%%%%%%%%%%%%%
\section*{Acknowledgments}

Mert G\"{u}rb\"{u}zbalaban's research is supported in part by the grants Office of Naval Research Award Number
N00014-21-1-2244, National Science Foundation (NSF)
CCF-1814888, NSF DMS-2053485.  
Mohammad Rafiqul Islam is partially supported by the grant NSF DMS-2053454.
Nian Yao was supported in part by the Natural Science Foundation of China under Grant 12071361, the Natural Science Foundation of Guangdong Province under Grant 2020A1515010822
and Shenzhen Natural Science Fund (the Stable Support Plan Program 20220810152104001).
Lingjiong Zhu is partially supported by the grants NSF DMS-2053454 and DMS-2208303.

\bibliographystyle{alpha}
\bibliography{refs}

%%%%%%%%%%%%%%%%%%%%%%%%%%%%%%%%%%%%%%%%%%%%%%%%%%%%%%%%%%%%%%%%%%%%%%%%%%%%%%%%%%%%%%%%%%%%%%%%%%%%%%%%%%%%%%%%%%%%%%%%%%%%%%%%%%%%%%%%%%%%%%%%%%%%%%%%%%%%%%%%%%%%%%%%%%%%%%%%%%%%%%%%%%%%%%%%%%%%%%%%%%%%%%%%%%%%%%%%%%%%%%%%%%%%%%%%%%%%%%%%%%%%%%%%%%%%%%%%%%%%%%%%%%%%%%%%%%%%%%%%%%%%%%%%%%%%%%%%%%%%%%%%%%%%%%%%%%%%%%%%%%%%%%%%%%%%%%%%%%%%%%%%%%%%%%%%%%%%%%%%%%%%%%%%%%%%%%%%%%%%%%%%%%%%%%%%%%%%%%%%%%%%%%%%%%%%%%%%%%%%%%%%%%%%%%%%%%%%%%%%%%%%%%%%%%%%%%%%%%%%%%%%%%%%%%%%%%%%%%%%%%%%%%%%%%%%%%%%%%%%%%%%%%%%%%%%%%%%%%%%%%%%%%%%%%%%%%%%%%%%%%%%%%%%%%%%%%%%%%%%%%%%%%%%%%

\newpage
\appendix

%%%%%%%%%%%%%%%%%%%%%%%%%%%%%%%%%%%%%%%%%%%%%%%%%%%%%%%%%%%%%%%%%
\section{Supporting Results for Section~\ref{section_frommonmarche}}
\label{appendix_frommonmarche}

Denote $\mathcal{M}_{m,n}(\R)$ the set of real matrices of size $m\times n$. In \cite[Section 4.3]{monmarche2023almost}, Monmarch\'{e} considers the \textit{generalized Langevin diffusions}:
\begin{align}
\label{originalequation_appendix}
dX_t &=  A Y_t d t, \nonumber\\
dY_t &=  - A^{\top}\nabla U(X_t) d t - \gamma B Y_t d t + \sqrt{\gamma}\Sigma d W_t,
\end{align}
with $A\in\mathcal M_{d,p}(\R)$;  $B,\Sigma\in\mathcal M_{p,p}(\R)$; $U\in\mathcal C^2(\R^d)$; $\gamma>0$ and $W$ is a standard $p$-dimensional Brownian motion. 

Set $b$ as the drift coefficient of \eqref{originalequation_appendix}, that is 
\begin{align}
\label{def_driftb_appendix}
    b(x,y):=\begin{pmatrix}
        Ay\\-A^{\top}\nabla U(x)-\gamma By
    \end{pmatrix}.
\end{align}

We summarize here Assumptions 1, 2 and 3 in \cite[Section 4.3]{monmarche2023almost} regarding \eqref{originalequation_appendix}.
\begin{customassump}{F}~
		\label{cond_frommonmarche}
\begin{itemize}
    \item There exist $m,L,\kappa>0$ and a symmetric positive-definite matrix $H$ of size $p\times p$ such that 
    \begin{align}
    \label{cond_kappa}
        HB+B^{\top}H\geq 2\kappa H,
    \end{align}
    and that $U$ is $m$ strongly-convex and $L$-smooth: $m I_d\leq \nabla^2 U(x)\leq L I_d$, for any $x\in \R^d$. 
\item $p\geq d$ and $A=(I_d,0,\ldots, 0)$. 
\item When $p>d$, consider the decomposition $B= \begin{pmatrix}B_{11} & B_{12}\\
B_{21}& B_{22}\end{pmatrix}$ where $B_{11}, B_{12},B_{21},B_{22}$ are respectively of size $d\times d$, $d\times (p-d)$, $(p-d)\times d$ and $(p-d)\times (p-d)$. In this case, we assume $B_{22}$ is invertible and that 
\begin{align*}
    E:=B_{11}-B_{12}B_{22}^{-1}B_{21}
\end{align*}
is symmetric positive-definite. Set $D:=B_{12}B_{22}^{-1}$. 
\item When $p=d$, we assume $B$ is symmetric positive definite. Set $E=B$ and $D=0$. 
\end{itemize}
\end{customassump}

\begin{remark}
    In \cite[Assumption 2]{monmarche2023almost}, the author writes $HB\geq \kappa H$. If one looks at the notation subsection right before Section 2 of the aforementioned reference, $HB\geq \kappa H$ for not necessarily symmetric matrix $HB$ is understood as $HB+B^{\top}H\geq 2\kappa H$, which is what we have in our Condition~\ref{cond_frommonmarche}. 
\end{remark}

\begin{remark}
   In \cite{monmarche2023almost}, beside from Condition~\ref{cond_frommonmarche},  the author also assumes that $\nabla_{\alpha} U$ for any $\abs{\alpha}=\sum_i \alpha_i\geq 2$ is bounded. Per private communication with the author, this is done out of convenience to avoid technical regularity issues regarding the semi-groups. In our case, we are interested in Theorem 9 in \cite{monmarche2023almost} which only requires the boundedness of second-order derivative of $U$ and not of the higher order derivative. 
\end{remark}

The following result is stated under Assumption 3 in \cite{monmarche2023almost}. 
\begin{lemma}
\label{lemma_constanth_i}
    Under Condition~\ref{cond_frommonmarche}, there exist constants $h_i>0,1\leq i\leq 5$ such that 
    \begin{align*}
       & HA^{\top}A H\leq h_1H,\qquad\qquad \frac{1}{h_2}I_d\leq E\leq h_3I_d,\\
      &  \begin{pmatrix} I_d&-D\\0 &0\end{pmatrix}\leq h_4H,\qquad \begin{pmatrix}
            I_d &-D\\ -D^{\top}& 0
        \end{pmatrix}\leq h_5H. 
    \end{align*}
\end{lemma}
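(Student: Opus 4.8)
The plan is to reduce all four inequalities to one elementary comparison of quadratic forms: if $S$ is a real symmetric $p\times p$ matrix and $H$ is symmetric positive definite of the same size, then $S\le\mu H$ with $\mu:=\lambda_{\max}\!\left(H^{-1/2}SH^{-1/2}\right)$, because $S\le\mu H$ is equivalent to $H^{-1/2}SH^{-1/2}\le\mu I_p$; moreover $\mu>0$ as soon as $v^{\top}Sv>0$ for some vector $v$. For the inequality involving the non-symmetric matrix $\begin{pmatrix} I_d&-D\\0&0\end{pmatrix}$ I would use the convention (recalled in the remark after Condition~\ref{cond_frommonmarche}) that $K\le cH$ for a possibly non-symmetric $K$ means $K+K^{\top}\le 2cH$, so it suffices to bound the symmetrized matrix $\tfrac12\left[\begin{pmatrix} I_d&-D\\0&0\end{pmatrix}+\begin{pmatrix} I_d&0\\-D^{\top}&0\end{pmatrix}\right]=\begin{pmatrix} I_d&-\tfrac12 D\\-\tfrac12 D^{\top}&0\end{pmatrix}$, to which the comparison above applies.

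Carrying this out step by step: (1) the matrix $HA^{\top}AH$ is symmetric and $x^{\top}HA^{\top}AHx=|AHx|^2\ge0$, hence positive semidefinite, so the comparison gives $HA^{\top}AH\le h_1 H$ with $h_1:=\lambda_{\max}\!\left(H^{1/2}A^{\top}AH^{1/2}\right)$; since $H^{1/2}A^{\top}\cdot AH^{1/2}$ and $AH^{1/2}\cdot H^{1/2}A^{\top}$ share their nonzero spectrum, one may equivalently take $h_1=\opnorm{AHA^{\top}}$, the operator norm of the top-left $d\times d$ block of $H$, and any larger constant works equally well. This $h_1$ is strictly positive because $A\neq0$ and $H$ is invertible. (2) Condition~\ref{cond_frommonmarche} guarantees that $E$ is symmetric positive definite (with $E=B$ when $p=d$), so $0<\lambda_{\min}(E)\le\lambda_{\max}(E)<\infty$, and $\tfrac1{h_2}I_d\le E\le h_3 I_d$ holds with $h_2:=1/\lambda_{\min}(E)$ and $h_3:=\lambda_{\max}(E)=\opnorm{E}$. (3)--(4) The symmetrizations $\begin{pmatrix} I_d&-\tfrac12 D\\-\tfrac12 D^{\top}&0\end{pmatrix}$ and $\begin{pmatrix} I_d&-D\\-D^{\top}&0\end{pmatrix}$ are symmetric, so the comparison yields $h_4$ and $h_5$ as the largest eigenvalue of $H^{-1/2}$ conjugating each respective matrix; both are strictly positive, since the associated quadratic form evaluated on a unit vector supported on the first $d$ coordinates equals $1>0$.

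I do not anticipate a genuine obstacle here: the statement simply repackages standard operator comparisons, and in fact it is the specialization of a claim established under Assumption~3 of \cite{monmarche2023almost}, which could be invoked directly. The only points deserving attention are the interpretation of ``$\le$'' for the non-symmetric matrix in the third inequality via symmetrization, and the verification that each $h_i$ may be chosen strictly positive --- which follows from $A\neq0$ together with invertibility of $H$ for $h_1$, from positive-definiteness of $E$ for $h_2$ and $h_3$, and from the presence of the $I_d$ block in the first slot for $h_4$ and $h_5$.
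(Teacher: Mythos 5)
Your argument is correct, and it supplies a proof that the paper itself omits: Lemma~\ref{lemma_constanth_i} is introduced only as "stated under Assumption 3 in \cite{monmarche2023almost}" and never proved in the text. The key tool you use — bounding a symmetric matrix $S$ by $\mu H$ with $\mu=\lambda_{\max}(H^{-1/2}SH^{-1/2})$, together with symmetrizing the non-symmetric block matrix before comparing (consistent with the paper's stated convention that $M\geq H$ means $x^\top Mx\geq x^\top Hx$ for all $x$, i.e.\ $\tfrac12(M+M^\top)\geq\tfrac12(H+H^\top)$) — is exactly the right reduction, and your positivity checks (quadratic form equals $1$ on a vector supported on the first $d$ coordinates for $h_4,h_5$; $A\neq 0$ and $H\succ 0$ for $h_1$; positive definiteness of $E$ for $h_2,h_3$) close the argument.

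Where you diverge from what the paper later does in Step~5 of the proof of Corollary~\ref{coro_monmarcheforPthorderlangevin} (which specializes to the $P$-th order Langevin case and records explicit $h_i$) is in the choice of constants, and yours is in fact sharper. For $h_1$ the paper takes $\opnorm{HA^\top A}$, which dominates $\lambda_{\max}(H^{1/2}A^\top A H^{1/2})$ only via the general bound "operator norm $\geq$ spectral radius," whereas your $h_1=\opnorm{AHA^\top}=\lambda_{\max}(H^{1/2}A^\top A H^{1/2})$ (by the $XY$–$YX$ nonzero-spectrum identity with $X=AH^{1/2}$, $Y=H^{1/2}A^\top$) is the smallest admissible constant. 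For $h_4,h_5$ the paper first bounds the matrix by a multiple of $I_p$ and then uses $I_p\leq\opnorm{H^{-1}}H$, producing $h_4=(1+\tfrac{P-1}{2})\opnorm{H^{-1}}$ and $h_5=(1+P)\opnorm{H^{-1}}$; your $\lambda_{\max}$ formulas are the optimal constants that this two-step argument over-estimates. Both routes are valid and yield strictly positive $h_i$, so the only trade-off is optimality of constants versus simplicity of the explicit expressions in the specialized Kronecker setting, where the paper's upper bounds are easier to read off from $H=\tilde H\otimes I_d$ and are manifestly dimension-free.
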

Note that we follow the convention in \cite{monmarche2023almost}: for $m\times m$ matrices $M,H$ that are not necessarily symmetric, $M\geq H$ means $\inner{x,Mx}\geq \inner{x,Hx}$ for all $x\in \R^m$.

\begin{theorem}(\cite[Lemma 8 and Theorem 9]{monmarche2023almost})
\label{theorem_monmarche_appendix}
Assume Conditions~\ref{cond_frommonmarche} and set 
\begin{align*}
    \gamma_0=2\sqrt{\frac{h_1L}{\kappa}}\max \left\{ \sqrt{h_2h_5},\sqrt{\frac{h_4}{\kappa}} \right\}.
\end{align*}
Further assume that the friction coefficient $\gamma$ is sufficiently large: $ \gamma \geq \gamma_0$. Set $\rho=\min \left\{ \frac{m}{3h_3\gamma},\frac{\gamma \kappa}{6} \right\}$.   
Recall the drift coefficient $b$ in \eqref{def_driftb_appendix} and denote $J_b$ its Jacobian matrix. Write $\begin{pmatrix}
    I_d &-D
\end{pmatrix}$ as a block matrix. Then $M:=\begin{pmatrix}
 E & \frac{1}{\gamma}\begin{pmatrix}
    I_d &-D
\end{pmatrix}\\
\frac{1}{\gamma}\begin{pmatrix}
    I_d &-D
\end{pmatrix}^{\top} & \frac{\kappa}{Lh_1} H
\end{pmatrix}$ is a symmetric positive-definite matrix of size $(d+p)\times (d+p)$ such that 
\begin{align*}
    M J_b+J_b^{\top} M\leq -2\rho M.
\end{align*} 
Moreover, the matrix $M$ satisfies 
\begin{align}
\label{inequality_matrixM}
    \frac{1}{2}\begin{pmatrix}E&0\\0&\frac{\kappa}{Lh_1}H \end{pmatrix}\leq M\leq  \frac{3}{2}\begin{pmatrix}E&0\\0&\frac{\kappa}{Lh_1}H \end{pmatrix}.
\end{align}
\end{theorem}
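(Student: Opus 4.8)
The plan is to follow the hypocoercive Lyapunov construction of \cite[Lemma 8 and Theorem 9]{monmarche2023almost}: present $M$ as a small off-diagonal perturbation of a block-diagonal matrix, show the perturbation is absorbed once $\gamma\geq\gamma_0$ (which yields \eqref{inequality_matrixM}, hence positive definiteness), and then expand $MJ_b+J_b^{\top}M$ directly, reading off the rate $\rho$ from the two dissipation mechanisms. It is convenient to write $C=\begin{pmatrix} I_d & -D\end{pmatrix}$ so that $M=\begin{pmatrix} E & \gamma^{-1}C\\ \gamma^{-1}C^{\top} & \tfrac{\kappa}{Lh_1}H\end{pmatrix}$, and to recall from \eqref{def_driftb_appendix} that $J_b(x,y)=\begin{pmatrix} 0 & A\\ -A^{\top}\nabla^2U(x) & -\gamma B\end{pmatrix}$.

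First, we would establish the norm equivalence \eqref{inequality_matrixM}. By Lemma~\ref{lemma_constanth_i}, $E$ is sandwiched between $h_2^{-1}I_d$ and $h_3I_d$, $H$ is positive definite, and the $D$-blocks $\begin{pmatrix} I_d&-D\\0&0\end{pmatrix}$ and $\begin{pmatrix} I_d&-D\\-D^{\top}&0\end{pmatrix}$ are dominated by $h_4H$ and $h_5H$; these estimates let one control the cross term $2\gamma^{-1}\inner{x,Cy}$ by Young's inequality with a well-chosen weight. A short computation then shows that when $\gamma\geq\gamma_0=2\sqrt{h_1L/\kappa}\max\{\sqrt{h_2h_5},\sqrt{h_4/\kappa}\}$ the off-diagonal part is at most half of each diagonal block, which is exactly \eqref{inequality_matrixM}; positive definiteness of $M$ is then immediate.

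Next, we would expand $MJ_b+J_b^{\top}M$ in $2\times2$ blocks. The $(2,2)$ block equals $-\tfrac{\kappa}{Lh_1}(HB+B^{\top}H)$, which by \eqref{cond_kappa} is $\leq-\tfrac{2\kappa^2}{Lh_1}H$ and provides the dissipation in the velocity variable; the $(1,1)$ block comes solely from pairing the cross term $\gamma^{-1}C$ against $-A^{\top}\nabla^2U(x)$ and produces a contribution of order $-m/\gamma$ times the identity via $\nabla^2U(x)\geq mI_d$; the $(1,2)$ and $(2,1)$ blocks collect the remaining indefinite terms built out of $A$, $\nabla^2U(x)$, $B$, $E$ and the $D$-blocks, which are bounded using $HA^{\top}AH\leq h_1H$, $\nabla^2U(x)\leq LI_d$, the bounds of Lemma~\ref{lemma_constanth_i}, and Young's inequality. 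Collecting everything, one gets $MJ_b+J_b^{\top}M\leq-2\rho\,\mathrm{diag}\!\big(E,\tfrac{\kappa}{Lh_1}H\big)$ with $\rho=\min\{m/(3h_3\gamma),\gamma\kappa/6\}$ as soon as $\gamma\geq\gamma_0$, and then \eqref{inequality_matrixM} upgrades the right-hand side to $-2\rho M$ (the factor $3$ in $\rho$ is exactly what makes this last step legitimate).

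The hard part is the constant bookkeeping in the second step: one must track precisely how $h_1,\dots,h_5,\kappa,m,L$ enter each cross term so that the two competing rates, $m/(3h_3\gamma)$ (favorable for small $\gamma$) and $\gamma\kappa/6$ (favorable for large $\gamma$), are both genuinely attained — this is what pins down the threshold $\gamma_0$. A secondary subtlety is the case split of Condition~\ref{cond_frommonmarche}: for $p=d$ one has $D=0$ and $E=B$, so the cross terms simplify, while for $p>d$ it is the Schur-complement identity $E=B_{11}-B_{12}B_{22}^{-1}B_{21}$ and $D=B_{12}B_{22}^{-1}$ that make the $(1,1)$ and off-diagonal estimates close up. Since this computation is carried out in full in \cite{monmarche2023almost}, we would ultimately just cite that reference.
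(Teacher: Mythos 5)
The paper itself gives no proof of Theorem~\ref{theorem_monmarche_appendix}: it is stated as a restatement of \cite[Lemma~8 and Theorem~9]{monmarche2023almost} and proved only by citation, which is precisely where your proposal lands. Your intermediate sketch of the hypocoercive block calculation is broadly faithful to the cited argument, but note one small slip: since $J_b$ carries $-\gamma B$ in its velocity block, the $(2,2)$ block of $MJ_b+J_b^{\top}M$ is $\gamma^{-1}(C^{\top}A+A^{\top}C)-\gamma\,\tfrac{\kappa}{Lh_1}(HB+B^{\top}H)$, not $-\tfrac{\kappa}{Lh_1}(HB+B^{\top}H)$; the extra factor $\gamma$ is exactly what makes the competing rate $\gamma\kappa/6$ scale linearly in $\gamma$ and hence what creates the trade-off that fixes $\gamma_0$.
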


A \textit{$P$-th order Langevin dynamics} as the focus of the main paper is a special case of \eqref{originalequation_appendix} where $p=(P-1)d$ and 
\begin{equation}\label{langevin_Pthorder_appendix}
A=A_{P} = \begin{pmatrix}
I_d & 0 & \dots   & 0
\end{pmatrix}\qquad \text{and}\qquad
B=B_{P} = \begin{pmatrix}
0  & -I_d & 0 & \dots & 0 \\
 I_d & 0 & -I_d & \ddots & \vdots \\
0 & \ddots & \ddots & \ddots &   0 \\
\vdots & \ddots &   I_d & 0 & - I_d\\
0 &\dots  & 0 &  I_d &   I_d
\end{pmatrix}\,.
\end{equation}

The following Corollary is Theorem~\ref{theorem_monmarche_appendix} in the special case of $P$-th order Langevin dynamics. The proof is mostly taken from~\cite[Section 4]{monmarche2023almost}. We add some details regarding dimension dependence of the parameters since this is not one of the goals of \cite{monmarche2023almost}; however, it is a crucial concern of our paper.

\begin{corollary}(\cite[Section 4]{monmarche2023almost})
\label{coro_monmarcheforPthorderlangevin}
        Conditions~\ref{cond_frommonmarche} is satisfied for the $P$-th order Langevin dynamics \eqref{langevin_Pthorder_appendix}, so that the conclusion of Theorem~\ref{theorem_monmarche_appendix} applies to \eqref{langevin_Pthorder_appendix}. In particular, regarding the matrix $M$, we have $E=I_d,D=-\begin{pmatrix}
            I_d &\ldots& I_d
        \end{pmatrix}$, while the matrix $H$, the constants $\kappa$ and $h_i,1\leq i\leq 5$ are explicitly provided in \textbf{Step 3} of the proof. 
        
        Moreover, $\rho=\rho(\gamma,L,P),\gamma_0=\gamma_0(\gamma,L,P)$ depend on $\gamma,L,P$ but do not depend on the dimension parameter $d$. Furthermore, $\lambda_{\min,M}=\lambda_{\min,M}(P)$ and $\lambda_{\max,M}=\lambda_{\max,M}(P)$ as respectively the smallest and largest eigenvalues of the positive definite matrix $M$ depend on $P$ and do not depend on the dimension parameter $d$. 
        \end{corollary}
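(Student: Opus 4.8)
The proof verifies Condition~\ref{cond_frommonmarche} for the $P$-th order Langevin dynamics \eqref{langevin_Pthorder_appendix} and then quotes Theorem~\ref{theorem_monmarche_appendix}, tracking throughout which objects depend on $d$. The key simplification is that $B = B_{\operatorname{sim}} \otimes I_d$ and $A = A_P = (I_d, 0, \ldots, 0)$, so every structural computation collapses to the $(P-1)\times(P-1)$ scalar matrix $B_{\operatorname{sim}}$, and the matrix $M$ produced by Theorem~\ref{theorem_monmarche_appendix} is of the form $M_{\operatorname{scalar}} \otimes I_d$ with $M_{\operatorname{scalar}}$ a $P\times P$ matrix; this is the source of all the dimension-independence statements. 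For \textbf{Step 1} (the structural part of Condition~\ref{cond_frommonmarche}) note $p = (P-1)d \ge d$ and $A$ has the required shape; decompose $B_{\operatorname{sim}} = \begin{pmatrix} 0 & \beta_{12}\\ \beta_{21} & \beta_{22}\end{pmatrix}$ with $\beta_{12} = -e_1^{\top}$, $\beta_{21} = e_1$ and $\beta_{22}$ the lower-right $(P-2)\times(P-2)$ block, which again has the form of $B_{\operatorname{sim}}$ one size smaller. Since $\beta_{22} + \beta_{22}^{\top} = 2 e_{P-2} e_{P-2}^{\top} \succeq 0$, any $v$ with $\beta_{22} v = 0$ satisfies $v^{\top}(\beta_{22}+\beta_{22}^{\top})v = 0$, forcing $v_{P-2} = 0$, and back-substitution in $\beta_{22} v = 0$ then forces $v \equiv 0$, so $\beta_{22}$ is invertible. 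A one-line check gives $\beta_{22}^{\top}\mathbf 1 = e_1$ (column sums of $\beta_{22}$), hence the first row of $\beta_{22}^{-1}$ equals $\mathbf 1^{\top}$; therefore $E = -\beta_{12}\beta_{22}^{-1}\beta_{21} = (\beta_{22}^{-1})_{11} = 1$ and $D = \beta_{12}\beta_{22}^{-1} = -\mathbf 1^{\top}$. Tensoring with $I_d$ yields $E = I_d$ (symmetric positive definite) and $D = -(I_d, \ldots, I_d)$, as asserted.

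\textbf{Step 2} establishes the spectral gap $\hat\lambda > 0$. The same symmetrization gives $B_{\operatorname{sim}} + B_{\operatorname{sim}}^{\top} = 2 e_{P-1} e_{P-1}^{\top} \succeq 0$, so every eigenvalue $\lambda$ of $B_{\operatorname{sim}}$ has $\operatorname{Re}(\lambda) \ge 0$. If $B_{\operatorname{sim}} v = i\omega v$ with $\omega \in \R$ and $v \neq 0$, then $\bar v^{\top}(B_{\operatorname{sim}}+B_{\operatorname{sim}}^{\top})v = 0$ forces $v_{P-1} = 0$, after which the recursion encoded in $B_{\operatorname{sim}} v = i\omega v$ (propagated upward from the last coordinate) forces $v \equiv 0$ --- a contradiction. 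Hence $\hat\lambda > 0$, and therefore $\kappa > 0$ in both the diagonalizable and the non-diagonalizable case. \textbf{Step 3} constructs $H$ and the constants $h_i$: using the Jordan decomposition of $B_{\operatorname{sim}}$ and the explicit generalized-eigenvector weights $b_n^j$ of \eqref{def_coefficients}, assemble $H = H(\epsilon)$ exactly as in the main text and verify \eqref{cond_kappa}, i.e. $HB + B^{\top}H \ge 2\kappa H$, following Monmarch\'e; the critical blocks (those with $\operatorname{Re}(\lambda_n) = \hat\lambda$ and $\ell_n \ge 2$) are handled by the $\epsilon$-regularized pieces $\widetilde H_m(\epsilon)$. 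From $H$, $E = I_d$, $D = -(I_d,\ldots,I_d)$ and $A = A_P$ one then reads off the constants of Lemma~\ref{lemma_constanth_i}: $h_1 = \opnorm{H(\epsilon)\operatorname{diag}(1,0,\ldots,0)}$, $h_2 = h_3 = 1$, $h_4 = (1+\tfrac{P-1}{2})\opnorm{H(\epsilon)^{-1}}$ and $h_5 = (1+P)\opnorm{H(\epsilon)^{-1}}$. Every object here lives in the scalar space $\R^{P-1}$, so $\kappa$ and the $h_i$ depend only on $P$ (and the auxiliary $\epsilon \in (0,\hat\lambda)$).

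\textbf{Step 4} concludes. With Condition~\ref{cond_frommonmarche} verified, Theorem~\ref{theorem_monmarche_appendix} produces the matrix $M$, the contraction $MJ_b + J_b^{\top}M \le -2\rho M$ with $\rho = \min\{\tfrac{m}{3h_3\gamma}, \tfrac{\gamma\kappa}{6}\}$, the threshold $\gamma_0 = 2\sqrt{h_1 L/\kappa}\max\{\sqrt{h_2 h_5}, \sqrt{h_4/\kappa}\}$, and the sandwich bound \eqref{inequality_matrixM}. Since $h_2 = h_3 = 1$ and $h_1, h_4, h_5, \kappa$ are functions of $P$ only, $\rho$ and $\gamma_0$ are functions of the fixed problem constants $\gamma, m, L$ and of $P$, but not of $d$. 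Finally, because $E$, $D$ and $H$ are each a fixed scalar matrix tensored with $I_d$, the matrix $M$ equals $M_{\operatorname{scalar}} \otimes I_d$ with $M_{\operatorname{scalar}}$ a $P\times P$ matrix depending only on $\gamma$ and $P$; hence its extreme eigenvalues satisfy $\lambda_{\min,M} = \lambda_{\min}(M_{\operatorname{scalar}})$ and $\lambda_{\max,M} = \lambda_{\max}(M_{\operatorname{scalar}})$, which depend on $P$ but not on $d$. This establishes all the assertions of the corollary.

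\textbf{Main obstacle.} The technical heart is Step 3: building $H$ and verifying the spectral-gap inequality \eqref{cond_kappa}, particularly in the non-diagonalizable case, where the weights $b_n^j$ must be tuned so that the $\epsilon$-perturbation of the critical Jordan blocks still yields a positive-definite $H$ satisfying the required matrix inequality. This essentially reproduces Monmarch\'e's construction, with the additional bookkeeping needed to make the dependence on $P$ --- and the independence of $d$ --- explicit.
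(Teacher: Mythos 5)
Your proof is correct and follows essentially the same Kronecker-product strategy as the paper: everything collapses to the $(P-1)\times(P-1)$ scalar matrix $B_{\operatorname{sim}}$, and $d$-independence follows because all relevant objects are $\text{(scalar matrix)}\otimes I_d$. The differences are improvements rather than gaps. The paper simply cites Monmarch\'e for $E=I_d$ and $D=-(I_d,\ldots,I_d)$, whereas you actually derive them: showing $\beta_{22}$ invertible via $\beta_{22}+\beta_{22}^{\top}=2e_{P-2}e_{P-2}^{\top}$ plus back-substitution, and then computing $\beta_{22}^{\top}\mathbf 1=e_1$ to read off the first row of $\beta_{22}^{-1}$. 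Your Step 2 argument for $\hat\lambda>0$ is a mild restyling of the paper's (the paper directly writes $\operatorname{Re}(\lambda)=|x_{P-1}|^2/|x|^2$, you split into $\operatorname{Re}\ge 0$ plus no purely imaginary eigenvalues, but the engine --- the rank-one symmetric part $e_{P-1}e_{P-1}^{\top}$ and the recursion forcing $x=0$ if $x_{P-1}=0$ --- is identical). For $\lambda_{\min,M},\lambda_{\max,M}$ the paper uses the sandwich bound \eqref{inequality_matrixM} together with a Courant--Fischer--Weyl comparison; you instead observe directly that $M=M_{\operatorname{scalar}}\otimes I_d$ with $M_{\operatorname{scalar}}\in\R^{P\times P}$, which is cleaner and gives exact equality of extreme eigenvalues rather than a two-sided estimate. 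One small imprecision: you say $M_{\operatorname{scalar}}$ depends only on $\gamma$ and $P$, but it also depends on $L$ (through the block $\frac{\kappa}{Lh_1}H_{\operatorname{scalar}}$); this does not affect the $d$-independence conclusion and mirrors a similar looseness in the corollary statement itself.
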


%%%%%%%%%%%%%%%%%%%%%%%%%%%%%%%

  %%%%%%%%%%%%%%%%%%%%%%%

\begin{proof}
The proof is divided into six steps.

\textbf{Step 1:} We start by observing a simplified form of the matrix $B=B_{P}$ in \eqref{langevin_Pthorder_appendix}:
\begin{align*}
    B=B_{\operatorname{sim}} \otimes I_d,\qquad B_{\operatorname{sim}} :=\begin{pmatrix}
0  & -1 & 0 & \dots & 0 \\
 1 & 0 & -1 & \ddots & \vdots \\
0 & \ddots & \ddots & \ddots &   0 \\
\vdots & \ddots &   1 & 0 & - 1\\
0 &\dots  & 0 &  1 &   1
\end{pmatrix},
\end{align*}
where $\otimes$ denotes the Kronecker product (\cite{horn1994matrix}). This simplified form indicates that $B$ and the $(P-1)\times (P-1)$ matrix $B_{\operatorname{sim}}$ have the same spectrum, and thus such spectrum does not depend on $d$. Furthermore, it indicates that if $v_i,1\leq i\leq P-1$ are eigenvectors (respectively generalized eigenvectors) of $B_{\operatorname{sim}}$ and $e_j,1\leq j\leq d$ are the standard basis of $\R^d$, then $w_i=v_i\otimes e_j, 1\leq i\leq P-1 ,1\leq j\leq d$ are eigenvectors (respectively generalized eigenvectors) of $B$. 
%%%%%%%%%%%%%%%%%%%%%%%%%%%%%%%%%%%%%%%%%%%%%%%%%%%%%%%%%%%%%%%%%%%%%%%%%%%%%%%

\textbf{Step 2:} Let us verify that $\min \{\mathrm{Re}(\lambda):\lambda \text{ is an eigenvalue of $B_{\operatorname{sim}}$}\}>0$, which together with $B,B_{\operatorname{sim}}$ having the same spectrum from the \textbf{Step 1} imply  $\min \{\mathrm{Re}(\lambda):\lambda \text{ is an eigenvalue of $B$}\}>0$.   

We have the decomposition $B_{\operatorname{sim}}=\frac{1}{2}( B_{\operatorname{sim}}+ B_{\operatorname{sim}}^{\top})+\frac{1}{2}( B_{\operatorname{sim}}- B_{\operatorname{sim}}^{\top}):=H+K$
where $H$ is a Hermitian matrix and $K$ is a skew-Hermitian matrix. Now assume $\lambda$ is an eigenvalue of $B_{\operatorname{sim}}$: $B_{\operatorname{sim}}x=\lambda x$ for a nonzero vector $x=(x_1,\ldots,x_{P-1})\in \mathbb{C}^{P-1}$.  Then $\lambda=\frac{x^*B_{\operatorname{sim}}x}{x^*x}$ where $x^*$ denotes the conjugate transpose of $x$, and hence 
\begin{align*}
    \mathrm{Re}(\lambda)=\frac{x^*Hx}{x^*x}=\frac{\abs{x_{P-1}}^2}{\abs{x}^2}. 
\end{align*}
We claim that $x_{P-1}\neq 0$ which implies $\mathrm{Re}(\lambda)>0$. Suppose the opposite that $x_{P-1}=0$, then it is easy to solve for $B_{\operatorname{sim}}x=\lambda x$ to get $x_1=x_2=\cdots=x_{P-1}=0$, which is a contradiction. This completes our argument for the \textbf{Step 2}. 

\textbf{Step 3 in the case where $B$ is diagonalizable:} Let us construct $H, \kappa$ that satisfy Condition~\ref{cond_frommonmarche} in the simpler case where $B$ is diagonalizable. The construction has been done in \cite[Lemma 4.3]{arnolderb2014sharpentropy} or \cite[Section 2.1]{arnold2020sharp}, and we summarize it here for the sake of completeness.

In this case,  $B$ has $(P-1)d$ linearly independent eigenvectors $w_i,1\leq i\leq (P-1)d$ corresponding to $(P-1)d$ eigenvalues $\lambda_i,1\leq i\leq (P-1)d$. Denote $w_i^*$ the conjugate transpose of $w_i$ and set $H=:\sum_{i=1}^{(P-1)d} w_iw_i^*$, then
\begin{align*}
    HB+B^{\top}H=\sum_{i=1}^{(P-1)d} \brac{\lambda_i+\overline{\lambda_i}}w_iw_i^*&\geq 2\min \{\mathrm{Re}(\lambda_i),1\leq i\leq (P-1)d\}\sum_{i=1}^{(P-1)d}w_iw_i^*\\
    &= 2\hat{\lambda} H, \quad \hat{\lambda}=\min \{\mathrm{Re}(\lambda):\lambda \text{ is an eigenvalue of $B$}\}. 
\end{align*}
Thus, in the case where $B$ is diagonalizable, $\kappa$ in Condition~\ref{cond_frommonmarche} can be taken as $\hat{\lambda}$ which is a positive number per our \textbf{Step 2} above.

%%%%%%%%%%%%%%%%%%%%%%%%%%%%%%%%%%%%%%%%%%%%%%%%%%%%%%%%%%%%%%%%%%%%%%%%%%%%

\textbf{Step 3 in the case where $B$ is not diagonalizable:} In contrast to the previous case, there is at least one Jordan block of $B$ of length $\ell_n\geq 2$.  In this case, and the construction of $H, \kappa$ satisfying Condition~\ref{cond_frommonmarche} is more elaborate. Denote the Jordan blocks of $B$ by $B_n,1\leq n\leq H$. Each block $B_n$ of length $\ell_n$ is associated with the eigenvalue $\lambda_n$ and the set of generalized eigenvectors $v_n^{(k)}; 1\leq k\leq \ell_n$. In particular, $v_n^{(1)}$ is the (standard) eigenvector of $J_n$.

For a Jordan block $B_n$ with $\mathrm{Re}(\lambda_n)>\hat{\lambda}$, we set $H_n=\sum_{i=1}^{\ell_n} b^i_n v_n^{(i)}\brac{{v}_n^{(i)}}^*$
where 
\begin{align*}
    b_{n}^1=1; b^j_n=c_j (t_n)^{2(1-j)}, 2\leq j\leq \ell_n \qquad 
    &\text{and}\quad c_1=1;c_{j+1}=1+c_{j}^2,\,2\leq j\leq \ell_n
    \\&\text{and}\quad t_n=2(\mathrm{Re}(\lambda_n)-\kappa). 
\end{align*}
Then per \cite[Lemma 4.3]{arnolderb2014sharpentropy}, $ H_n B_n+B_n^{\top}H_n\geq 2\hat{\lambda} H_n$. 

Meanwhile, for a Jordan block $B_m$ with $\mathrm{Re}(\lambda_m)=\hat{\lambda}$, we replace the above $t_n$ with $t_m=2(\mathrm{Re}(\lambda_n)-\hat{\lambda}+\epsilon)$ for any $\epsilon\in (0,\hat{\lambda})$ and define $\widetilde{H}_m(\epsilon)=\sum_{i=1}^{\ell_m} b^i_m(\epsilon) v_m^{(i)}\brac{{v_m^{(i)}}}^*$. Then $ \widetilde{H}_m B_m+B_m^{\top}\widetilde{H}_m\geq 2(\hat{\lambda}-\epsilon) \widetilde{H}_m$. 

Therefore, in the case where $B$ is not diagonalizable, we denote $I=\{n\in\{1,\cdots,N \}:\ell_n\geq 2, \mathrm{Re}(\lambda_n)=\hat{\lambda} \}$ and define $H:=H(\epsilon)=\sum_{n\in \{1,\ldots,N\}\setminus I } H_n+\sum_{m\in I}\widetilde{H}_m(\epsilon)$. Then we have 
\begin{align*}
    H(\epsilon) B+B^{\top}H(\epsilon)\geq 2(\hat{\lambda}-\epsilon) H(\epsilon). 
\end{align*}
Thus, in the case where $B$ is not diagonalizable, $\kappa$ in Condition~\ref{cond_frommonmarche} is $\hat{\lambda}-\epsilon$ for any $\epsilon\in (0,\hat{\lambda})$. Notice $\hat{\lambda}>0$ per our \textbf{Step 2}, so that it is possible to choose $\epsilon>0$ such that $\hat{\lambda}-\epsilon>0$.

\textbf{Step 4:} Let us verify that $\kappa$, $\opnorm{H}$ and $\opnorm{H^{-1}}$ do not depend on $d$. The former is clear from the fact that $\kappa$ is either $2\hat{\lambda}$ or $2(\hat{\lambda}-\epsilon)$ in the \textbf{Step 2}, and the fact that $B$ has the same spectrum as the $(P-1)\times (P-1)$ matrix $B_{\operatorname{sim}}$, per the first paragraph of this proof. Regarding $\opnorm{H}$, we will assume $B$ is diagonalizable to keep things simple (the case of non-diagonalizable $B$ is almost the same). We know from the first paragraph of this proof that 
\begin{align}
\label{simplifiedformmatrixH}
    H=\sum_{1\leq i\leq P-1 ,1\leq j\leq d} \brac{v_i\otimes e_j}\brac{v_i^*\otimes e_j^{\top}} &= \sum_{1\leq i\leq P-1 ,1\leq j\leq d} \brac{v_iv_i^*}\otimes \brac{e_j\otimes e_j^{\top}}\nonumber\\
    &=\brac{\sum_{1\leq i\leq P-1 }v_iv_i^*}\otimes \brac{\sum_{1\leq j\leq d}e_j\otimes e_j^{\top}}\nonumber\\
    &=\brac{\sum_{1\leq i\leq P-1 }v_iv_i^*}\otimes I_d. 
\end{align}
Thus, we have $\opnorm{H}=\opnorm{\sum_{1\leq i\leq P-1 }v_iv_i^* } \opnorm{I_d}=\opnorm{\sum_{1\leq i\leq P-1 }v_iv_i^* }$ (see \cite[Theorem 8]{lancaster1972norms} regarding matrix norms and Kronecker product). Since $\sum_{1\leq i\leq P }v_iv_i^* $ is a $(P-1)\times (P-1)$ matrix, $\opnorm{H}$ does not depend on $d$. We can reach the same conclusion for $\opnorm{H^{-1}}$, noting that $H^{-1}=\brac{\sum_{1\leq i\leq P-1 }v_iv_i^*}^{-1}\otimes I_d$.

\textbf{Step 5:} We verify that the constants in Lemma~\ref{lemma_constanth_i} do not depend on $d$. In Lemma~\ref{lemma_constanth_i}, the matrix $E=I_d$ (pointed out below Assumption 3 in \cite{monmarche2023almost}). Thus, we can take $h_1=\opnorm{HA^{\top}A},h_2=1$ and $h_3=1$. Then from \eqref{simplifiedformmatrixH} and $A=\begin{pmatrix}
I_d & 0 & \dots   & 0
\end{pmatrix}=\begin{pmatrix}
1 & 0 & \dots   & 0
\end{pmatrix}\otimes I_d$, we deduce that $  h_1=\opnorm{H\begin{pmatrix}
         1 &\cdots &0\\
         \vdots&\ddots&\vdots\\
         0&\cdots&0
     \end{pmatrix}  }$. Thus, $h_1,h_2,h_3$ do not depend on $d$. What remain to study are $h_4$ and $h_5$. We have $D=-(I_d,\ldots,I_d)$ as pointed out below Assumption 3 in \cite{monmarche2023almost}. It is easy to verify that $(1+(P-1)/2)I_{(P-1)d}- \begin{pmatrix} I_d&-D\\0 &0\end{pmatrix}\geq 0$,
which implies 
\begin{equation*}
\begin{pmatrix} I_d&-D\\0 &0\end{pmatrix} H^{-1} H\leq (1+P/2)H^{-1} H\leq (1+(P-1)/2)\norm{H^{-1}}_{\operatorname{op}} H, 
\end{equation*}
and hence $h_4=\brac{1+(P-1)/2}\norm{H^{-1}}_{\operatorname{op}}$. The formula $h_5= (1+P) \norm{H^{-1}}_{\operatorname{op}}$ is obtained the same way, noting that $\begin{pmatrix}
            I_d &-D\\ -D^{\top}& 0
        \end{pmatrix}= \begin{pmatrix}
            I_d &-D\\ 0& 0
        \end{pmatrix}+\begin{pmatrix}
            I_d &0\\ -D^{\top}& 0
        \end{pmatrix}. $      
Finally, $\norm{H^{-1}}_{\operatorname{op}}=\opnorm{\brac{\sum_{1\leq i\leq P-1 }v_iv_i^*}^{-1}}$ does not depend on $d$ per the \textbf{Step 3}, so that $h_4$ and $h_5$ do not depend on $d$. 

\textbf{Step 6:} Let us consider $\rho,\gamma_0$ and $\lambda_{\min,M}$ of Theorem~\ref{theorem_monmarche_appendix} in the context of the $P$-th order Langevin dynamics \eqref{langevin_Pthorder_appendix}. Note that $m$ and $L$ are respectively the strong-convexity and smoothness constants of $U$ does not depend on $d$. This, combined with the conclusions in the \textbf{Step 3} and \textbf{Step 4}, implies that $\gamma_0=2\sqrt{\frac{h_1L}{\kappa}}\max \left\{ \sqrt{h_2h_5},\sqrt{\frac{h_4}{\kappa}} \right\}$ and $\gamma>\gamma_0$ does not depend on $d$. Regarding $\lambda_{\min,M}$, it is pointed out below Assumption 3 in \cite{monmarche2023almost} that in the case of $P$-th order Langevin dynamics \eqref{langevin_Pthorder_appendix}, $E=I_d$ in Condition~\ref{cond_frommonmarche}. Then inequality \eqref{inequality_matrixM} becomes 
\begin{equation*}
\frac{1}{2}\begin{pmatrix}I_d&0\\0&\frac{\kappa}{Lh_1}H \end{pmatrix}\leq M\leq  \frac{3}{2}\begin{pmatrix}I_d&0\\0&\frac{\kappa}{Lh_1}H \end{pmatrix}. 
\end{equation*}
Moreover, per \eqref{simplifiedformmatrixH}, $H$ and the $(P-1)\times (P-1)$ matrix $\sum_{1\leq i\leq P-1 }v_iv_i^*$ share the same spectrum which does not depend on $d$. Then by a consequence of Courant–Fischer–Weyl's min-max Theorem regarding comparison of eigenvalues of positive definite matrices (\cite[Problem 4.2.P8, Page 238]{horn1994matrix}, we can conclude $\lambda_{\min,M}$ as the smallest eigenvalue of the positive definite matrix $M$ does not depend on $d$. The same conclusion holds for $\lambda_{\max,M}$. 
The proof is complete.
\end{proof}

%%%%%%%%%%%%%%%%%%%%%%%%%%%%%%%%%%%%%%%%%%%%%%%%%%%%%%%%%%%%%%%%%%%%%%%%%%%%%%%%%%%%%%%%%%%%%%%%%%%%%%%%%%%%%%%%%%%%%%%%%%%%%%%%%%%%%%%%%%%%%%%%%%%%%%%%%%%%%%%%%%%%%%%%%%%%%%%%%%%%%%%%%%%%%%%%%%%%%%%%%%%%%%%%%%%%%%%%%%%%%%%%%%%%%%%%%%%%%%%%%%%%%%%%%
\section{Details of Fourth-Order Langevin Monte Carlo Algorithm}\label{appendix_4thorder}

Lemma~\ref{lemma_explicitformofxbar} as the first result of this Appendix contains explicit form of the components of $\bar{x}((k+1)\eta)$ in terms of the components of $x^{(k)}$ in the splitting scheme \eqref{scheme_4thorder}. Based on it, we will be able to derive in Lemma~\ref{lemma_meanandcovariance} the conditional mean and covariance associated with the fourth-order LMC algorithm in Section~\ref{section_4thorder}. 

Here are the components of $\hat{x}(t)$ in terms of the components of $x^{(k)}$.
\begin{align}
    \hat{v}_1(t)&=v_1^{(k)},\nonumber\\
    \hat{\theta}(t)&=\theta^{(k)}+(t-k\eta)v_1^{(k)},\label{equation_thetahat}\\
    \hat{v}_2(t)&=v_2^{(k)}+\gamma\brac{v_3^{(k)}-v_1^{(k)} }(t-k\eta),\nonumber\\
    \hat{v}_3(t)&=e^{-\gamma(t-k\eta)}v_3^{(k)}-\gamma \int_{k\eta}^t e^{-\gamma(t- s)}\hat{v}_2(s)ds+\sqrt{2\gamma}\int_{k\eta}^t e^{-\gamma(t- s)}dB_s\nonumber\\
    &=e^{-\gamma(t-k\eta)}v_3^{(k)}-\gamma v_2^{(k)}\int_{k\eta}^t e^{-\gamma(t- s)}ds\nonumber\\
    &\qquad-\gamma^2\brac{v_3^{(k)}-v_1^{(k)}}\int_{k\eta}^t e^{-\gamma(t- s)}(s-k\eta)ds+\sqrt{2\gamma}\int_{k\eta}^t e^{-\gamma(t- s)}dB_s\nonumber. 
\end{align}

Next are the components of $\tilde{x}(t)$ in terms of the components of $x^{(k)}$. 
\begin{align}
    \tilde{v}_1(t)&=v_1^{(k)}-\int_{k\eta}^t \tilde{g}(s)ds+\gamma \int_{k\eta}^t \hat{v}_2(s)ds\nonumber\\
    &=v_1^{(k)}-\int_{k\eta}^t \tilde{g}(s)ds+\gamma v_2^{(k)}(t-k\eta)+\gamma^2\brac{v_3^{(k)}-v_1^{(k)} }\frac{(t-k\eta)^2}{2},
\label{equation_v1tilde}\\
    \tilde{\theta}(t)&=\theta^{(k)}+\int_{k\eta}^t \tilde{v}_1(s)ds\nonumber\\
    &=\theta^{(k)}+v_1^{(k)}(t-k\eta)-\int_{k\eta}^t\int_{k\eta}^s \tilde{g}(r)drds+\gamma v_2^{(k)}\frac{(t-k\eta)^2}{2!}+\gamma^2\brac{v_3^{(k)}-v_1^{(k)} }\frac{(t-k\eta)^3}{3!},\nonumber\\
    \tilde{v}_2(t) &=v_2^{(k)}-\gamma\int_{k\eta}^t\tilde{v}_1(s)ds+\gamma\int_{k\eta}^t\hat{v}_3(s) ds\nonumber\\
    &=v_2^{(k)}-\gamma v_1^{(k)}(t-k\eta)+\gamma\int_{k\eta}^t\int_{k\eta}^s \tilde{g}(r)drds-\gamma^2 v_2^{(k)}\frac{(t-k\eta)^2}{2!}-\gamma^3\brac{v_3^{(k)}-v_1^{(k)} }\frac{(t-k\eta)^3}{3!}\nonumber\\
    &\qquad\quad+\gamma v_3^{(k)}\int_{k\eta}^t e^{-\gamma(s-k\eta)}ds-\gamma^2 v_2^{(k)}\int_{k\eta}^t\int_{k\eta}^s e^{-\gamma(s- r)}drds\nonumber\\
    &\qquad\qquad-\gamma^3\brac{v_3^{(k)}-v_1^{(k)} }\int_{k\eta}^t\int_{k\eta}^s e^{-\gamma(s- r)}(r-k\eta)drds+\gamma\sqrt{2\gamma}\int_{k\eta}^t\int_{k\eta}^s e^{-\gamma(s- r)}dB_rds,\nonumber\\
\tilde{v}_3(t)&=v_3^{(k)}e^{-\gamma(t-k\eta)}-\gamma \int_{k\eta}^t e^{-\gamma(t- s)}\tilde{v}_2(s)ds+\sqrt{2\gamma}\int_{k\eta}^t e^{-\gamma(t- s)}dB_s\nonumber\\
    &=v_3^{(k)}e^{-\gamma(t-k\eta)}+\sqrt{2\gamma}\int_{k\eta}^t e^{-\gamma(t- s)}dB_s-\gamma v_2^{(k)}\int_{k\eta}^t e^{-\gamma(t- s)}ds+\gamma^2 v_1^{(k)}\int_{k\eta}^t e^{-\gamma(t- s)}(s-k\eta)ds\nonumber\\
    &\quad-\gamma^2\int_{k\eta}^t\int_{k\eta}^s\int_{k\eta}^r e^{-\gamma(t- s)}\tilde{g}(w)dwdrds\nonumber\\
    &\qquad+\gamma^3 v_2^{(k)}\int_{k\eta}^t e^{-\gamma(t- s)}\frac{(s-k\eta)^2}{2!}ds+\gamma^4 \brac{v_3^{(k)}-v_1^{(k)} }\int_{k\eta}^t e^{-\gamma(t- s)}\frac{(s-k\eta)^3}{3!}ds\nonumber\\
    &\quad-\gamma^2 v_3^{(k)} \int_{k\eta}^t \int_{k\eta}^s e^{-\gamma(t- s)}e^{-\gamma(r-k\eta)}drds+\gamma^3 v_2^{(k)}\int_{k\eta}^t \int_{k\eta}^s \int_{k\eta}^r e^{-\gamma(t- s)}e^{-\gamma(r- w)}dwdrds\nonumber\\
   &\qquad +\gamma^4 \brac{v_3^{(k)}-v_1^{(k)} }\int_{k\eta}^t\int_{k\eta}^s\int_{k\eta}^r e^{-\gamma(t- s)}e^{-\gamma(r- w)}(w-k\eta)dwdrds\nonumber\\
   &\quad-\gamma^2\sqrt{2\gamma}\int_{k\eta}^t\int_{k\eta}^s\int_{k\eta}^r e^{-\gamma(r- w)}e^{-\gamma(t- s)}dB_wdrds. \nonumber
\end{align}

Finally are the components of $\bar{x}(t)$ in terms of the components of $x^{(k)}$.
\begin{align*}
&\bar{v}_1(t)=v_1^{(k)}-\int_{k\eta}^t \bar{g}(s)ds+\gamma \int_{k\eta}^t \tilde{v}_2(s)ds\\
&\quad=v_1^{(k)}-\int_{k\eta}^t \bar{g}(s)ds+\gamma v_2^{(k)}(t-k\eta)-\gamma^2 v_1^{(k)}\frac{(t-k\eta)^2}{2!}+\gamma^2\int_{k\eta}^t\int_{k\eta}^s \int_{k\eta}^r \tilde{g}(w)dwdrds\\
    &\quad-\gamma^3 v_2^{(k)}\frac{(t-k\eta)^3}{3!}-\gamma^4\brac{v_3^{(k)}-v_1^{(k)} }\frac{(t-k\eta)^4}{4!}+\gamma^2 v_3^{(k)}\int_{k\eta}^t\int_{k\eta}^s e^{-\gamma(r-k\eta)}drds\\
    &-\gamma^3 v_2^{(k)} \int_{k\eta}^t\int_{k\eta}^s\int_{k\eta}^r e^{-\gamma(r- w)}dwdrds-\gamma^4\brac{v_3^{(k)}-v_1^{(k)} }\int_{k\eta}^t\int_{k\eta}^s \int_{k\eta}^r e^{-\gamma(r- w)}(w-k\eta)dwdrds\\
    &\qquad\qquad+\gamma^2\sqrt{2\gamma}\int_{k\eta}^t\int_{k\eta}^s \int_{k\eta}^r e^{-\gamma(r- w)}dB_wdrds,\\
    &\bar{\theta}(t)=\theta^{(k)}+\int_{k\eta}^t \bar{v}_1(s)ds\\
    &\quad=\theta^{(k)}+v_1^{(k)}(t-k\eta)-\int_{k\eta}^t\int_{k\eta}^s \bar{g}(r)drds+\gamma v_2^{(k)}\frac{(t-k\eta)^2}{2!}-\gamma^2 v_1^{(k)}\frac{(t-k\eta)^3}{3!}\\
    &\quad+\gamma^2\int_{k\eta}^t\int_{k\eta}^s \int_{k\eta}^r\int_{k\eta}^w\tilde{g}(y)dydwdrds-\gamma^3 v_2^{(k)}\frac{(t-k\eta)^4}{4!}-\gamma^4\brac{v_3^{(k)}-v_1^{(k)} }\frac{(t-k\eta)^5}{5!}\\
    &\qquad+\gamma^2 v_3^{(k)}\int_{k\eta}^t\int_{k\eta}^s\int_{k\eta}^r e^{-\gamma(w-k\eta)}dwdrds-\gamma^3 v_2^{(k)} \int_{k\eta}^t\int_{k\eta}^s\int_{k\eta}^r\int_{k\eta}^w e^{-\gamma(w- y)}dydwdrds\\
    &\qquad\quad-\gamma^4\brac{v_3^{(k)}-v_1^{(k)} }\int_{k\eta}^t\int_{k\eta}^s \int_{k\eta}^r\int_{k\eta}^w e^{-\gamma(w- y)}(y-k\eta)dydwdrds\\
    &\qquad\qquad+\gamma^2\sqrt{2\gamma}\int_{k\eta}^t\int_{k\eta}^s \int_{k\eta}^r \int_{k\eta}^w e^{-\gamma(w- y)}dB_ydwdrds,\\
    &\bar{v}_2(t)=v_2^{(k)}-\gamma\int_{k\eta}^t \bar{v}_1(s)ds+\gamma \int_{k\eta}^t \tilde{v}_3(s)ds\\
    &\quad=v_2^{(k)}+\Bigg[-\gamma v_1^{(k)}(t-k\eta)+\gamma\int_{k\eta}^t\int_{k\eta}^s \bar{g}(r)drds-\gamma^2 v_2^{(k)}\frac{(t-k\eta)^2}{2}+\gamma^3 v_1^{(k)}\frac{(t-k\eta)^3}{3!}\\
    &\quad-\gamma^3\int_{k\eta}^t\int_{k\eta}^s \int_{k\eta}^r\int_{k\eta}^w\tilde{g}(y)dydwdrds+\gamma^4 v_2^{(k)}\frac{(t-k\eta)^4}{4!}+\gamma^5\brac{v_3^{(k)}-v_1^{(k)} }\frac{(t-k\eta)^5}{5!}\\
    &\qquad-\gamma^3 v_3^{(k)}\int_{k\eta}^t\int_{k\eta}^s\int_{k\eta}^r e^{-\gamma(w-k\eta)}dwdrds+\gamma^4 v_2^{(k)} \int_{k\eta}^t\int_{k\eta}^s\int_{k\eta}^r\int_{k\eta}^w e^{-\gamma(w- y)}dydwdrds\\
    &\quad+\gamma^5\brac{v_3^{(k)}-v_1^{(k)} }\int_{k\eta}^t\int_{k\eta}^s \int_{k\eta}^r\int_{k\eta}^w e^{-\gamma(w- y)}(y-k\eta)dydwdrds\\
    &\qquad-\gamma^3\sqrt{2\gamma}\int_{k\eta}^t\int_{k\eta}^s \int_{k\eta}^r \int_{k\eta}^w e^{-\gamma(w- y)}dB_ydwdrds\Bigg] \\
    &\quad+\Bigg[\gamma v_3^{(k)}\int_{k\eta}^t e^{-\gamma(s-k\eta)}ds+\gamma\sqrt{2\gamma}\int_{k\eta}^t\int_{k\eta}^s e^{-\gamma(s- r)}dB_rds\\
    &\qquad-\gamma^2 v_2^{(k)}\int_{k\eta}^t\int_{k\eta}^s e^{-\gamma(s- r)}drds+\gamma^3 v_1^{(k)}\int_{k\eta}^t\int_{k\eta}^s e^{-\gamma(s- r)}(r-k\eta)drds\\
    &\quad-\gamma^3\int_{k\eta}^t\int_{k\eta}^s\int_{k\eta}^r\int_{k\eta}^w e^{-\gamma(s- r)}\tilde{g}(y)dydwdrds+\frac{\gamma^4}{2!} v_2^{(k)}\int_{k\eta}^t\int_{k\eta}^s e^{-\gamma(s- r)}{(r-k\eta)^2}drds\\
    &+\frac{\gamma^5}{3!} \brac{v_3^{(k)}-v_1^{(k)} }\int_{k\eta}^t\int_{k\eta}^s e^{-\gamma(s- r)}{(r-k\eta)^3}drds-\gamma^3 v_3^{(k)} \int_{k\eta}^t \int_{k\eta}^s\int_{k\eta}^r e^{-\gamma(s- r)}e^{-\gamma(w-k\eta)}dwdrds\\
    &\quad+\gamma^4 v_2^{(k)}\int_{k\eta}^t \int_{k\eta}^s \int_{k\eta}^r \int_{k\eta}^w e^{-\gamma(s- r)}e^{-\gamma(w- y)}dydwdrds\\
   & \qquad+\gamma^5 \brac{v_3^{(k)}-v_1^{(k)} }\int_{k\eta}^t\int_{k\eta}^s\int_{k\eta}^r\int_{k\eta}^w e^{-\gamma(s- r)}e^{-\gamma(w- y)}(y-k\eta)dydwdrds\\
   &\qquad\quad-\gamma^3\sqrt{2\gamma}\int_{k\eta}^t\int_{k\eta}^s\int_{k\eta}^r\int_{k\eta}^w e^{-\gamma(s- y)}e^{-\gamma(w- r)}dB_ydwdrds.   \Bigg],\\
    &\bar{v}_3(t)=\bar{v}_3^{(k)}e^{-\gamma(t-k\eta)}-\gamma\int_{k\eta}^t e^{-\gamma(t- s)}\bar{v}_2(s)ds+\sqrt{2\gamma}\int_{k\eta}^t e^{-\gamma(t- s)}dB_s\\   
    &\qquad=\bar{v}_3^{(k)}e^{-\gamma(t-k\eta)}+\sqrt{2\gamma}\int_{k\eta}^t e^{-\gamma(t- s)}dB_s\\
    &+\Bigg[-\gamma v_2^{(k)}\int_{k\eta}^t e^{-\gamma(t- s)}ds+\gamma^2 v_1^{(k)}\int_{k\eta}^t e^{-\gamma(t- s)}(s-k\eta)ds-\gamma^2\int_{k\eta}^t\int_{k\eta}^s \int_{k\eta}^r e^{-\gamma(t- s)}\bar{g}(w)dwdrds\\
    &\quad+\frac{\gamma^3}{2!} v_2^{(k)}\int_{k\eta}^t e^{-\gamma(t- s)}{(s-k\eta)^2}ds-\frac{\gamma^4}{3!} v_1^{(k)}\int_{k\eta}^t e^{-\gamma(t- s)}{(s-k\eta)^3}ds\\
    &\qquad+\gamma^4\int_{k\eta}^t\int_{k\eta}^s \int_{k\eta}^r\int_{k\eta}^w\int_{k\eta}^ye^{-\gamma(t- s)}\tilde{g}(z)dzdydwdrds\\
    &\quad-\frac{\gamma^5}{4!} v_2^{(k)}\int_{k\eta}^t e^{-\gamma(t- s)}{(s-k\eta)^4}ds-\frac{\gamma^6}{5!}\brac{v_3^{(k)}-v_1^{(k)} }\int_{k\eta}^t e^{-\gamma(t- s)}{(s-k\eta)^5}ds\\
    &\qquad+\gamma^4v_3^{(k)}\int_{k\eta}^t\int_{k\eta}^s\int_{k\eta}^r\int_{k\eta}^w  e^{-\gamma(t- s)} e^{-\gamma(y-k\eta)}dydwdrds\\
    &\quad-\gamma^5 v_2^{(k)} \int_{k\eta}^t\int_{k\eta}^s\int_{k\eta}^r\int_{k\eta}^w\int_{k\eta}^y  e^{-k(t- s)}e^{-\gamma(y- z)}dzdydwdrds\\
    &\qquad-\gamma^6\brac{v_3^{(k)}-v_1^{(k)} }\int_{k\eta}^t\int_{k\eta}^s \int_{k\eta}^r\int_{k\eta}^w\int_{k\eta}^y e^{-\gamma(t- s)} e^{-\gamma(y- z)}(z-k\eta)dzdydwdrds\\
    &\quad+\gamma^4\sqrt{2\gamma}\int_{k\eta}^t\int_{k\eta}^s \int_{k\eta}^r \int_{k\eta}^w\int_{k\eta}^y  e^{-\gamma(t- s)}e^{-\gamma(y- z)}dB_zdydwdrds\Bigg] \\
    &+\Bigg[-\gamma^2 v_3^{(k)}\int_{k\eta}^t\int_{k\eta}^s e^{-\gamma(t- s)}e^{-\gamma(r-k\eta)}drds-\gamma^2\sqrt{2\gamma}\int_{k\eta}^t\int_{k\eta}^s\int_{k\eta}^r e^{-\gamma(t- s)} e^{-\gamma(r- w)}dB_wdrds\\
    &+\gamma^3 v_2^{(k)}\int_{k\eta}^t\int_{k\eta}^s\int_{k\eta}^r e^{-\gamma(t- s)}e^{-\gamma(r- w)}dwdrds-\gamma^4 v_1^{(k)}\int_{k\eta}^t\int_{k\eta}^s\int_{k\eta}^r e^{-\gamma(t- s)}e^{-\gamma(r- w)}(w-k\eta)dwdrds\\
    &\qquad+\gamma^4\int_{k\eta}^t\int_{k\eta}^s\int_{k\eta}^r\int_{k\eta}^w\int_{k\eta}^y e^{-\gamma(t- s)}e^{-\gamma(r- w)}\tilde{g}(z)dzdydwdrds\\
    &\qquad\quad-\frac{\gamma^5}{2!} v_2^{(k)}\int_{k\eta}^t\int_{k\eta}^s\int_{k\eta}^r e^{-\gamma(t- s)} e^{-\gamma(r- w)}{(w-k\eta)^2}dwdrds\\
    &\quad-\frac{\gamma^6}{3!} \brac{v_3^{(k)}-v_1^{(k)} }\int_{k\eta}^t\int_{k\eta}^s\int_{k\eta}^r e^{-\gamma(t- s)}e^{-\gamma(r- w)}{(w-k\eta)^3}dwdrds\\
    &\qquad+\gamma^4 v_3^{(k)} \int_{k\eta}^t \int_{k\eta}^s\int_{k\eta}^r\int_{k\eta}^w e^{-\gamma(t- s)}e^{-\gamma(r- w)}e^{-\gamma(y-k\eta)}dydwdrds\\
    &\quad-\gamma^5 v_2^{(k)}\int_{k\eta}^t \int_{k\eta}^s \int_{k\eta}^r \int_{k\eta}^w\int_{k\eta}^ye^{-\gamma(t- s)} e^{-\gamma(r- w)}e^{-\gamma(y- z)}dzdydwdrds\\
   &\qquad -\gamma^6 \brac{v_3^{(k)}-v_1^{(k)} }\int_{k\eta}^t\int_{k\eta}^s\int_{k\eta}^r\int_{k\eta}^w\int_{k\eta}^y e^{-\gamma(t- s)}e^{-\gamma(r- w)}e^{-\gamma(y- z)}(z-k\eta)dzdydwdrds\\
&\qquad\quad+\gamma^4\sqrt{2\gamma}\int_{k\eta}^t\int_{k\eta}^s\int_{k\eta}^r\int_{k\eta}^w\int_{k\eta}^y e^{-\gamma(t- s)}e^{-\gamma(y- z)}e^{-\gamma(r- w)}dB_zdydwdrds   \Bigg].
\end{align*}

Via the above equations and software to evaluate the iterated integrals, we obtain the following result.
\begin{lemma}
\label{lemma_explicitformofxbar}
  Recall the definition of polynomials $\tilde{g}(t)$ and $\bar{g}(t)$ in \eqref{def_polynomialsg}. The following are explicit expressions of components of $\bar{x}((k+1)\eta)$ in terms of $x^{(k)}$.

%%%%%%%%%%%%%%%%%%%
\begin{align*}
    & \bar{\theta}((k+1)\eta)=-\int_{k\eta}^{(k+1)\eta}\int_{k\eta}^s \bar{g}(r)drds+\gamma^2\int_{k\eta}^{(k+1)\eta}\int_{k\eta}^s \int_{k\eta}^r\int_{k\eta}^w\tilde{g}(y)dydwdrds\\
     &\qquad\qquad\qquad\qquad+f_0+\theta^{(k)}\mu_{00}+v_1^{(k)}\mu_{01}+v_2^{(k)}\mu_{02}+v_3^{(k)}\mu_{03};\\
&\bar{v}_1((k+1)\eta)=-\int_{k\eta}^{(k+1)\eta} \bar{g}(s)ds+\gamma^2\int_{k\eta}^{(k+1)\eta}\int_{k\eta}^s \int_{k\eta}^r\tilde{g}(w)dwdrds\\
&\qquad\qquad\qquad\qquad\quad+f_1+\theta^{(k)}\mu_{10}+v_1^{(k)}\mu_{11}+v_2^{(k)}\mu_{12}+v_3^{(k)}\mu_{13};\\
    &\bar{v}_2((k+1)\eta)=\gamma\int_{k\eta}^{(k+1)\eta}\int_{k\eta}^s \bar{g}(r)drds-\gamma^3\int_{k\eta}^{(k+1)\eta}\int_{k\eta}^s \int_{k\eta}^r\int_{k\eta}^w\tilde{g}(y)dydwdrds\\
           & \quad-\gamma^3\int_{k\eta}^{(k+1)\eta}\int_{k\eta}^s\int_{k\eta}^r\int_{k\eta}^w e^{-\gamma(s- r)}\tilde{g}(y)dydwdrds+f_2 +\theta^{(k)}\mu_{20}+v_1^{(k)}\mu_{21}+v_2^{(k)}\mu_{22}+v_3^{(k)}\mu_{23};\\
    &\bar{v}_3((k+1)\eta)=-\gamma^2\int_{k\eta}^{(k+1)\eta}\int_{k\eta}^s \int_{k\eta}^r e^{-\gamma((k+1)\eta- s)} \bar{g}(w)dwdrds\\
    & \qquad+\gamma^4\int_{k\eta}^{(k+1)\eta}\int_{k\eta}^s \int_{k\eta}^r\int_{k\eta}^w\int_{k\eta}^ye^{-\gamma((k+1)\eta- s)}\tilde{g}(z)dzdydwdrds\\
      & \qquad \quad+\gamma^4\int_{k\eta}^{(k+1)\eta}\int_{k\eta}^s\int_{k\eta}^r\int_{k\eta}^w\int_{k\eta}^y e^{-\gamma((k+1)\eta- s)}e^{-\gamma(r- w)}\tilde{g}(z)dzdydwdrds\\
      & \qquad\qquad \quad+f_3 +\theta^{(k)}\mu_{30}+v_1^{(k)}\mu_{31}+v_2^{(k)}\mu_{32}+v_3^{(k)}\mu_{33}. 
\end{align*}

Here $\mu_{ij}, 0\leq i,j\leq 3$ are the constants 
\begin{align*}
    \mu_{00}&=1;\\
    \mu_{01}&=\eta-\gamma^2 \frac{\eta^3}{3!}+ \gamma^4\frac{\eta^5}{5!}+\gamma^4\brac{-\frac{e^{-\gamma  \eta }}{\gamma ^5}+\frac{1}{\gamma ^5}-\frac{\eta }{\gamma    ^4}+\frac{\eta ^2}{2 \gamma ^3}-\frac{\eta ^3}{6 \gamma ^2}+\frac{\eta ^4}{24    \gamma }};\\
    \mu_{02}&=\gamma \frac{\eta^2}{2!}-\gamma^3 \frac{\eta^4}{4!}-\gamma^3  \brac{\frac{e^{-\gamma  \eta }}{\gamma ^4}-\frac{1}{\gamma ^4}+\frac{\eta }{\gamma ^3}-\frac{\eta ^2}{2 \gamma ^2}+\frac{\eta ^3}{6 \gamma }};\\
    \mu_{03}&=-\gamma^4\frac{\eta^5}{5!}+\gamma^2 \brac{-\frac{e^{-\gamma  \eta }}{\gamma ^3}+\frac{1}{\gamma ^3}-\frac{\eta }{\gamma    ^2}+\frac{\eta ^2}{2 \gamma }}\\
&\hspace{10em}-\gamma^4\brac{-\frac{e^{-\gamma  \eta }}{\gamma ^5}+\frac{1}{\gamma ^5}-\frac{\eta }{\gamma    ^4}+\frac{\eta ^2}{2 \gamma ^3}-\frac{\eta ^3}{6 \gamma ^2}+\frac{\eta ^4}{24    \gamma }};\\
 \mu_{10}&=0;\\
    \mu_{11}&=1-\gamma^2 \frac{\eta^2}{2!}+\gamma^4\frac{\eta^4}{4!}+\gamma^4\brac{\frac{e^{-\gamma  \eta }}{\gamma ^4}-\frac{1}{\gamma ^4}+\frac{\eta }{\gamma    ^3}-\frac{\eta ^2}{2 \gamma ^2}+\frac{\eta ^3}{6 \gamma }};\\
    \mu_{12}&= \gamma\eta-\gamma^3 \frac{\eta^3}{3!} -\gamma^3  \brac{-\frac{e^{-\gamma  \eta }}{\gamma ^3}+\frac{1}{\gamma ^3}-\frac{\eta }{\gamma    ^2}+\frac{\eta ^2}{2 \gamma }};\\
    \mu_{13}&=-\gamma^4\frac{\eta^4}{4!}+\gamma^2 \brac{\frac{e^{-\gamma  \eta }}{\gamma ^2}-\frac{1}{\gamma ^2}+\frac{\eta }{\gamma }}-\gamma^4\brac{\frac{e^{-\gamma  \eta }}{\gamma ^4}-\frac{1}{\gamma ^4}+\frac{\eta }{\gamma    ^3}-\frac{\eta ^2}{2 \gamma ^2}+\frac{\eta ^3}{6 \gamma }} ;\\
    \mu_{20}&=0;\\ 
   \mu_{21}&=-\gamma \eta+\gamma^3 \frac{\eta^3}{3!}-\gamma^5 \frac{\eta^5}{5!} -\gamma^5\brac{-\frac{e^{-\gamma  \eta }}{\gamma ^5}+\frac{1}{\gamma ^5}-\frac{\eta }{\gamma    ^4}+\frac{\eta ^2}{2 \gamma ^3}-\frac{\eta ^3}{6 \gamma ^2}+\frac{\eta ^4}{24    \gamma }}\\
   &+\gamma^3 \brac{-\frac{e^{-\gamma  \eta }}{\gamma ^3}+\frac{1}{\gamma ^3}-\frac{\eta }{\gamma    ^2}+\frac{\eta ^2}{2 \gamma }}-\frac{\gamma^5}{3!} \brac{-\frac{6 e^{-\gamma  \eta }}{\gamma ^5}+\frac{6}{\gamma ^5}-\frac{6 \eta }{\gamma    ^4}+\frac{3 \eta ^2}{\gamma ^3}-\frac{\eta ^3}{\gamma ^2}+\frac{\eta ^4}{4 \gamma    }}\\
      & -\gamma^5\brac{\frac{4 e^{-\gamma  \eta }}{\gamma ^5}-\frac{4}{\gamma ^5}+\frac{\eta  e^{-\gamma     \eta }}{\gamma ^4}+\frac{3 \eta }{\gamma ^4}-\frac{\eta ^2}{\gamma ^3}+\frac{\eta    ^3}{6 \gamma ^2}};\\
   \mu_{22}&=1-\gamma^2 \frac{\eta^2}{2}+\gamma^4 \frac{\eta^4}{4!}+\gamma^4  \brac{\frac{e^{-\gamma  \eta }}{\gamma ^4}-\frac{1}{\gamma ^4}+\frac{\eta }{\gamma ^3}-\frac{\eta ^2}{2 \gamma ^2}+\frac{\eta ^3}{6 \gamma }}-\gamma^2 \brac{\frac{e^{-\gamma  \eta }}{\gamma ^2}-\frac{1}{\gamma ^2}+\frac{\eta }{\gamma }}\\
      &+\frac{\gamma^4}{2!} \brac{\frac{2 e^{-\gamma  \eta }}{\gamma ^4}-\frac{2}{\gamma ^4}+\frac{2 \eta }{\gamma    ^3}-\frac{\eta ^2}{\gamma ^2}+\frac{\eta ^3}{3 \gamma }}+\gamma^4 \brac{-\frac{3 e^{-\gamma  \eta }}{\gamma ^4}+\frac{3}{\gamma ^4}-\frac{\eta  e^{-\gamma     \eta }}{\gamma ^3}-\frac{2 \eta }{\gamma ^3}+\frac{\eta ^2}{2 \gamma ^2}};\\
   \mu_{23}&=\gamma^5\frac{\eta^5}{5!}-\gamma^3 \brac{-\frac{e^{-\gamma  \eta }}{\gamma ^3}+\frac{1}{\gamma ^3}-\frac{\eta }{\gamma    ^2}+\frac{\eta ^2}{2 \gamma }}\\
      &+\gamma^5\brac{-\frac{e^{-\gamma  \eta }}{\gamma ^5}+\frac{1}{\gamma ^5}-\frac{\eta }{\gamma    ^4}+\frac{\eta ^2}{2 \gamma ^3}-\frac{\eta ^3}{6 \gamma ^2}+\frac{\eta ^4}{24    \gamma }}+(1-e^{-\gamma \eta})\\
      &+\frac{\gamma^5}{3!}\brac{-\frac{6 e^{-\gamma  \eta }}{\gamma ^5}+\frac{6}{\gamma ^5}-\frac{6 \eta }{\gamma    ^4}+\frac{3 \eta ^2}{\gamma ^3}-\frac{\eta ^3}{\gamma ^2}+\frac{\eta ^4}{4 \gamma    }}\\
      &+\gamma^5 \brac{\frac{4 e^{-\gamma  \eta }}{\gamma ^5}-\frac{4}{\gamma ^5}+\frac{\eta  e^{-\gamma     \eta }}{\gamma ^4}+\frac{3 \eta }{\gamma ^4}-\frac{\eta ^2}{\gamma ^3}+\frac{\eta    ^3}{6 \gamma ^2}};\\
          \mu_{30}&=0;\\
     \mu_{31}&=\gamma^2 \brac{\frac{e^{-\gamma  \eta }}{\gamma ^2}-\frac{1}{\gamma ^2}+\frac{\eta }{\gamma }}-\frac{\gamma^4}{3!} \brac{\frac{6 e^{-\gamma  \eta }}{\gamma ^4}-\frac{6}{\gamma ^4}+\frac{6 \eta }{\gamma    ^3}-\frac{3 \eta ^2}{\gamma ^2}+\frac{\eta ^3}{\gamma }}\\
   &+\frac{\gamma^6}{5!}\brac{\frac{120 e^{-\gamma  \eta }}{\gamma ^6}-\frac{120}{\gamma ^6}+\frac{120 \eta    }{\gamma ^5}-\frac{60 \eta ^2}{\gamma ^4}+\frac{20 \eta ^3}{\gamma ^3}-\frac{5    \eta ^4}{\gamma ^2}+\frac{\eta ^5}{\gamma }}\\
       &+\gamma^6\brac{-\frac{5 e^{-\gamma  \eta }}{\gamma ^6}+\frac{5}{\gamma ^6}-\frac{\eta  e^{-\gamma     \eta }}{\gamma ^5}-\frac{4 \eta }{\gamma ^5}+\frac{3 \eta ^2}{2 \gamma    ^4}-\frac{\eta ^3}{3 \gamma ^3}+\frac{\eta ^4}{24 \gamma ^2}}\\
           &-\gamma^4 \brac{-\frac{3 e^{-\gamma  \eta }}{\gamma ^4}+\frac{3}{\gamma ^4}-\frac{\eta  e^{-\gamma     \eta }}{\gamma ^3}-\frac{2 \eta }{\gamma ^3}+\frac{\eta ^2}{2 \gamma ^2}}\\    
           &+\frac{\gamma^6}{3!} \brac{-\frac{30 e^{-\gamma  \eta }}{\gamma ^6}+\frac{30}{\gamma ^6}-\frac{6 \eta     e^{-\gamma  \eta }}{\gamma ^5}-\frac{24 \eta }{\gamma ^5}+\frac{9 \eta ^2}{\gamma    ^4}-\frac{2 \eta ^3}{\gamma ^3}+\frac{\eta ^4}{4 \gamma ^2}}\\
              & +\gamma^6 \brac{\frac{10 e^{-\gamma  \eta }}{\gamma ^6}-\frac{10}{\gamma ^6}+\frac{4 \eta  e^{-\gamma     \eta }}{\gamma ^5}+\frac{6 \eta }{\gamma ^5}+\frac{\eta ^2 e^{-\gamma  \eta }}{2    \gamma ^4}-\frac{3 \eta ^2}{2 \gamma ^4}+\frac{\eta ^3}{6 \gamma ^3}};\\
      \mu_{32}&=-\gamma \brac{\frac{1}{\gamma }-\frac{e^{-\gamma  \eta }}{\gamma }}+\frac{\gamma^3}{2!} \brac{-\frac{2 e^{-\gamma  \eta }}{\gamma ^3}+\frac{2}{\gamma ^3}-\frac{2 \eta }{\gamma    ^2}+\frac{\eta ^2}{\gamma }}\\
      &-\frac{\gamma^5}{4!} \brac{-\frac{24 e^{-\gamma  \eta }}{\gamma ^5}+\frac{24}{\gamma ^5}-\frac{24 \eta }{\gamma    ^4}+\frac{12 \eta ^2}{\gamma ^3}-\frac{4 \eta ^3}{\gamma ^2}+\frac{\eta ^4}{\gamma    }}\\
          &-\gamma^5  \brac{\frac{4 e^{-\gamma  \eta }}{\gamma ^5}-\frac{4}{\gamma ^5}+\frac{\eta  e^{-\gamma     \eta }}{\gamma ^4}+\frac{3 \eta }{\gamma ^4}-\frac{\eta ^2}{\gamma ^3}+\frac{\eta    ^3}{6 \gamma ^2}}\\
              &+\gamma^3 \brac{\frac{2 e^{-\gamma  \eta }}{\gamma ^3}-\frac{2}{\gamma ^3}+\frac{\eta  e^{-\gamma     \eta }}{\gamma ^2}+\frac{\eta }{\gamma ^2}}\\
                  &-\frac{\gamma^5}{2!} \brac{\frac{8 e^{-\gamma  \eta }}{\gamma ^5}-\frac{8}{\gamma ^5}+\frac{2 \eta  e^{-\gamma     \eta }}{\gamma ^4}+\frac{6 \eta }{\gamma ^4}-\frac{2 \eta ^2}{\gamma    ^3}+\frac{\eta ^3}{3 \gamma ^2}}\\
                      &-\gamma^5\bigg(-\frac{6 e^{-\gamma  \eta }}{\gamma ^5}+\frac{6}{\gamma ^5}-\frac{3 \eta  e^{-\gamma 
   \eta }}{\gamma ^4}-\frac{3 \eta }{\gamma ^4}-\frac{\eta ^2 e^{-\gamma  \eta }}{2
   \gamma ^3}+\frac{\eta ^2}{2 \gamma ^3} \bigg);\\
       \mu_{33}&=e^{-\gamma \eta} -\frac{\gamma^6}{5!}\brac{\frac{120 e^{-\gamma  \eta }}{\gamma ^6}-\frac{120}{\gamma ^6}+\frac{120 \eta    }{\gamma ^5}-\frac{60 \eta ^2}{\gamma ^4}+\frac{20 \eta ^3}{\gamma ^3}-\frac{5    \eta ^4}{\gamma ^2}+\frac{\eta ^5}{\gamma }}\\
     &+\gamma^4\brac{-\frac{3 e^{-\gamma  \eta }}{\gamma ^4}+\frac{3}{\gamma ^4}-\frac{\eta  e^{-\gamma     \eta }}{\gamma ^3}-\frac{2 \eta }{\gamma ^3}+\frac{\eta ^2}{2 \gamma ^2}}\\
         &-\gamma^6\brac{-\frac{5 e^{-\gamma  \eta }}{\gamma ^6}+\frac{5}{\gamma ^6}-\frac{\eta  e^{-\gamma     \eta }}{\gamma ^5}-\frac{4 \eta }{\gamma ^5}+\frac{3 \eta ^2}{2 \gamma    ^4}-\frac{\eta ^3}{3 \gamma ^3}+\frac{\eta ^4}{24 \gamma ^2}}-\gamma^2 \brac{-\frac{e^{-\gamma  \eta }}{\gamma ^2}+\frac{1}{\gamma ^2}-\frac{\eta  e^{-\gamma     \eta }}{\gamma }}\\
             &-\frac{\gamma^6}{3!} \brac{-\frac{30 e^{-\gamma  \eta }}{\gamma ^6}+\frac{30}{\gamma ^6}-\frac{6 \eta     e^{-\gamma  \eta }}{\gamma ^5}-\frac{24 \eta }{\gamma ^5}+\frac{9 \eta ^2}{\gamma    ^4}-\frac{2 \eta ^3}{\gamma ^3}+\frac{\eta ^4}{4 \gamma ^2}}\\
                 &+\gamma^4 \brac{\frac{3 e^{-\gamma  \eta }}{\gamma ^4}-\frac{3}{\gamma ^4}+\frac{2 \eta  e^{-\gamma     \eta }}{\gamma ^3}+\frac{\eta }{\gamma ^3}+\frac{\eta ^2 e^{-\gamma  \eta }}{2    \gamma ^2}}\\
                    & -\gamma^6\brac{\frac{10 e^{-\gamma  \eta }}{\gamma ^6}-\frac{10}{\gamma ^6}+\frac{4 \eta  e^{-\gamma     \eta }}{\gamma ^5}+\frac{6 \eta }{\gamma ^5}+\frac{\eta ^2 e^{-\gamma  \eta }}{2    \gamma ^4}-\frac{3 \eta ^2}{2 \gamma ^4}+\frac{\eta ^3}{6 \gamma ^3}};\\
\end{align*}

Meanwhile, $f_i,0\leq i\leq 4$ are It\^{o} integrals defined via 
\begin{align*}
    f_0&=\gamma^2\sqrt{2\gamma}\int_{k\eta}^{(k+1)\eta} \frac{\gamma  (y-\eta  (k+1)) (-\gamma  \eta  (k+1)+\gamma  y+2)-2 e^{\gamma  (y-\eta
    (k+1))}+2}{2 \gamma ^3} dB_y;\\
    f_1&=\gamma^2\sqrt{2\gamma}\int_{k\eta}^{(k+1)\eta}\frac{\gamma  \eta +\gamma  \eta  k+e^{\gamma  (y-\eta  (k+1))}-\gamma y-1}{\gamma ^2}dB_y;\\
    f_2&=-\gamma^3\sqrt{2\gamma}\int_{k\eta}^{(k+1)\eta}\frac{\gamma  (y-\eta  (k+1)) (-\gamma  \eta  (k+1)+\gamma  y+2)-2 e^{\gamma  (y-\eta(k+1))}+2}{2 \gamma ^3}dB_y\\
    &\qquad\qquad\qquad\qquad-\gamma\sqrt{2\gamma}\int_{k\eta}^{(k+1)\eta}\frac{e^{\gamma  (y-\eta  (k+1))}-1}{\gamma }dB_y\\
    &\qquad-\gamma^3\sqrt{2\gamma}\int_{k\eta}^{(k+1)\eta}\frac{\gamma  \eta +\gamma  \eta  k+e^{\gamma  (y-\eta  (k+1))} (\gamma  \eta 
   (k+1)+\gamma  (-y)+2)+\gamma  (-y)-2}{\gamma ^3} dB_y; \\
   f_3&=\sqrt{2\gamma}\int_{k\eta}^{(k+1)\eta} e^{-\gamma((k+1)\eta- y)}dB_y+\gamma^4\sqrt{2\gamma}\int_{k\eta}^{(k+1)\eta}\frac{y-\eta  k}{2 \gamma    ^4}\bigg(2 e^{\gamma  (y-\eta  (k+1))}\\
    &\qquad\quad (-\gamma  \eta  (k+1)+\gamma     y-3)+\gamma  (y-\eta  (k+1)) (-\gamma  \eta  (k+1)+\gamma  y+4)+6\bigg) dB_y\\
        &\qquad-\gamma^2\sqrt{2\gamma}\int_{k\eta}^{(k+1)\eta} \frac{e^{\gamma  (y-\eta  (k+1))} (-\gamma  \eta  (k+1)+\gamma  y-1)+1}{\gamma ^2}dB_y\\
        &\qquad\qquad\qquad+\gamma^4\sqrt{2\gamma}\int_{k\eta}^{(k+1)\eta}\frac{1}{2 \gamma ^4}\bigg(2 \gamma  \eta  (k+1)+e^{\gamma  (y-\eta  (k+1))} (\gamma  (y-\eta  (k+1))  \\
&\hspace{15em}\times  (-\gamma  \eta  (k+1)+\gamma  y-4)+6)-2 \gamma  y-6\bigg)dB_y. 
\end{align*}
\end{lemma}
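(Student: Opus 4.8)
Here the plan is to treat the lemma as the evaluation at $t=(k+1)\eta$ of the fully explicit representations of $\bar\theta(t),\bar v_1(t),\bar v_2(t),\bar v_3(t)$ in terms of the components of $x^{(k)}$ displayed immediately above the statement. Those representations are obtained by solving the three nested linear systems (Stages 1--3) successively: in each stage, once the coupling terms coming from the previous stage are substituted, the subsystem for the non-$v_1$ variables is a linear Ornstein--Uhlenbeck-type system with known inhomogeneity, so Duhamel's formula (variation of constants) applies, and $\theta$ is recovered by integrating $v_1$. Each of the four resulting expressions is a finite sum of three types of terms: (i) a deterministic constant times one of $\theta^{(k)},v_1^{(k)},v_2^{(k)},v_3^{(k)}$, where the constant is a (possibly multiple) iterated integral over $[k\eta,t]$ of a product of an exponential $e^{-\gamma(\cdot)}$ and a monomial in the integration variable; (ii) multiple Riemann integrals of the Taylor polynomials $\tilde g$ and $\bar g$; and (iii) iterated It\^o integrals against $dB$.

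First I would set $t=(k+1)\eta$ so that each increment $t-k\eta$ equals $\eta$, and evaluate every elementary iterated integral of type (i). After the substitution $u=\,\cdot-k\eta$, each such integral reduces to a combination of $\eta^j$ and $e^{-\gamma\eta}\eta^j$ terms, which can be computed by repeated integration by parts (in practice, most efficiently with a computer algebra system such as Maple, cf.\ Remark~\ref{remark_softwarefortaylorpoly}). Collecting, for each of $\bar\theta,\bar v_1,\bar v_2,\bar v_3$, the total coefficient multiplying $\theta^{(k)}$, $v_1^{(k)}$, $v_2^{(k)}$, $v_3^{(k)}$ respectively then yields the constants $\mu_{ij}$, $0\le i,j\le 3$, exactly as listed, while the type-(ii) terms remain as the stated nested integrals of $\tilde g$ and $\bar g$.

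For the stochastic part, I would apply the stochastic Fubini theorem to each iterated It\^o integral of the form $\int_{k\eta}^{t}\varphi_1(s)\int_{k\eta}^{s}\varphi_2(r)\cdots\int_{k\eta}^{(\cdot)}\psi(y)\,dB_y\cdots\,dr\,ds$, pushing $dB_y$ to the outermost position; this collapses each such term into a single It\^o integral $\int_{k\eta}^{(k+1)\eta}K(y)\,dB_y$ whose kernel $K$ is again an elementary function (a polynomial plus $e^{\gamma(y-(k+1)\eta)}$ times a polynomial). Summing the kernels arising for each component produces $f_0,\dots,f_3$ as written. The main obstacle is not conceptual but organizational: the bookkeeping behind the $\mu_{ij}$ is extremely lengthy, so the proof essentially has to be carried out, and double-checked, symbolically; the only genuinely technical point is the justification of the stochastic Fubini interchange, which follows from the almost-sure boundedness and continuity of the integrands on the compact interval $[k\eta,(k+1)\eta]$.
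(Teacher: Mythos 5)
Your proposal is correct and follows the same route as the paper: the lemma is proved by evaluating at $t=(k+1)\eta$ the explicit stage-by-stage (Duhamel/variation-of-constants) representations of $\hat x,\tilde x,\bar x$ displayed immediately before the statement, computing the deterministic iterated integrals symbolically to obtain the $\mu_{ij}$, and using the stochastic Fubini theorem to collapse the nested It\^{o} integrals into the single integrals $f_0,\dots,f_3$. The paper states this only tersely (``via the above equations and software to evaluate the iterated integrals''), so your added remark on the justification of the It\^{o}--Fubini interchange on the compact interval is a welcome clarification rather than a deviation.
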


Next is the calculation for the conditional mean and covariance associated with the fourth-order LMC algorithm in Section~\ref{section_4thorder}. 
\begin{lemma}
    \label{lemma_meanandcovariance}
 $\E{x^{(k+1)}|x^{(k)}}$ is a multivariate normal distribution with mean $\textbf{M}(x^{(k)})=(m_i)_{0\leq i\leq 3}$ and symmetric covariance matrix ${\boldsymbol{\Sigma}}=\brac{\sigma_{ij}\cdot I_d}_{0\leq i,j\leq 3}$. 
    
    The entries $m_i, 0\leq i\leq 3$ and  $\sigma_{ij}, 0\leq i,j\leq 3$ are provided below, noting that the constants $(\mu_{ij})_{0\leq i,j\leq 3}$ are defined in Lemma~\ref{lemma_explicitformofxbar}.\\ 
\begin{align*}
 %   m_0 &=-\int_{k\eta}^{(k+1)\eta}\int_{k\eta}^s \bar{g}%(r)drds+\gamma^2\int_{k\eta}^{(k+1)\eta}\int_{k\eta}^s %\int_{k\eta}^r\int_{k\eta}^w\tilde{g}(y)dydwdrds\\
  %  &+\theta^{(k)}\mu_{00}+v_1^{(k)}\mu_{01}+v_2^{(k)}\mu_{02}+v_3^{(k)}\mu_{03}\\
  %  \text{Re-arranging,}&\\
   % m_0& = - \int_{k\eta}^{(k+1)\eta}\int_r^{(k+1)\eta}\bar{g}%(r)dsdr+\gamma^2\int_{k\eta}^{(k+1)\eta}\int_y^{(k+1)\eta}\int_y^s\int_y^r \tilde{g}%(y)dwdrdsdy\\
  %  &+\theta^{(k)}\mu_{00}+v_1^{(k)}\mu_{01}+v_2^{(k)}\mu_{02}+v_3^{(k)}\mu_{03}\\
   % \implies 
    m_0&=-\int_{k\eta}^{(k+1)\eta}  \Big((k+1)\eta-r\Big)\bar{g}(r) \, dr-\frac{\gamma ^2}{6} \int_{k\eta}^{ (k+1)\eta}  \Big(y-(k+1)\eta\Big)^3 \tilde{g}(y) \, dy\\
    &\qquad\qquad+\theta^{(k)}\mu_{00}+v_1^{(k)}\mu_{01}+v_2^{(k)}\mu_{02}+v_3^{(k)}\mu_{03};\\
 %   m_1&=-\int_{k\eta}^{(k+1)\eta} \bar{g}%(s)ds+\gamma^2\int_{k\eta}^{(k+1)\eta}\int_{k\eta}^s \int_{k\eta}^r\tilde{g}(w)dwdrds\\
 %   &+\theta^{(k)}\mu_{10}+v_1^{(k)}\mu_{11}+v_2^{(k)}\mu_{12}+v_3^{(k)}\mu_{13};\\
  %  \text{Re-arranging,}&\\
   % m_1 & = -\int_{k\eta}^{(k+1)\eta} \bar{g}(s)ds+ %\gamma^2\int_{k\eta}^{(k+1)\eta}\int_w^{(k+1)\eta}\int_w^s \tilde{g}(w)drdsdw\\
   % &+\theta^{(k)}\mu_{10}+v_1^{(k)}\mu_{11}+v_2^{(k)}\mu_{12}+v_3^{(k)}\mu_{13};\\
   % \implies 
    m_1 & =-\int_{k\eta}^{(k+1)\eta} \bar{g}(s) \, ds +\frac{\gamma ^2 }{2}\int_{\eta  k}^{\eta  (k+1)}   \Big(w-\eta  (k+1)\Big)^2 \tilde{g}(w) \, dw\\
 &\qquad\qquad+\theta^{(k)}\mu_{10}+v_1^{(k)}\mu_{11}+v_2^{(k)}\mu_{12}+v_3^{(k)}\mu_{13};\\
  %   m_2 & =\gamma\int_{k\eta}^{(k+1)\eta}\int_{k\eta}^s \bar{g}(r)drds-%\gamma^3\int_{k\eta}^{(k+1)\eta}\int_{k\eta}^s \int_{k\eta}^r\int_{k\eta}^w\tilde{g}(y)dydwdrds\\
    %        &-\gamma^3\int_{k\eta}^{(k+1)\eta}\int_{k\eta}^s\int_{k\eta}^r\int_{k\eta}^w e^{-%\gamma(s- r)}\tilde{g}(y)dydwdrds\\
 %  & +\theta^{(k)}\mu_{20}+v_1^{(k)}\mu_{21}+v_2^{(k)}\mu_{22}+v_3^{(k)}\mu_{23}; \\
  % \text{Re-arranging,}&\\
  % m_2& = \gamma \int_{k\eta}^{(k+1)\eta}\int_r^{(k+1)\eta}\bar{g}(r)dsdr - \gamma^3 %\int_{k\eta}^{(k+1)\eta}\int_y^{(k+1)\eta}\int_y^s\int_y^r\tilde{g}(y)dwdrdsdy\\
   %&-\gamma^3\int_{k\eta}^{(k+1)\eta}\int_y^{(k+1)\eta}\int_y^s\int_y^re^{-\gamma(s-r)}\tilde{g}(y)dwdrdsdy\\
   %&+\theta^{(k)}\mu_{20}+v_1^{(k)}\mu_{21}+v_2^{(k)}\mu_{22}+v_3^{(k)}\mu_{23};\\
   %\implies
   m_2& = \gamma  \int_{k\eta}^{(k+1)\eta}  \Big((k+1)\eta-r\Big)\bar{g}(r) \, dr-\frac{\gamma^3}{6}\int_{k\eta}^{(k+1)\eta}\Bigg(\Big(y-(k+1)\eta\Big)^3\\
   &\qquad-\frac{6-6e^{\gamma(y-(k+1)\eta}+3\gamma(y-(k+1)\eta)\Big(2+\gamma(y-(k+1)\eta)\Big)}{\gamma^3}\Bigg)dy\\
   &\qquad\qquad+\theta^{(k)}\mu_{20}+v_1^{(k)}\mu_{21}+v_2^{(k)}\mu_{22}+v_3^{(k)}\mu_{23};\\
 %  m_3&=-\gamma^2\int_{k\eta}^{(k+1)\eta}\int_{k\eta}^s \int_{k\eta}^r e^{-\gamma((k+1)\eta- %s)}\bar{g}(w)dwdrds\\
 %   &+\gamma^4\int_{k\eta}^{(k+1)\eta}\int_{k\eta}^s %\int_{k\eta}^r\int_{k\eta}^w\int_{k\eta}^ye^{-\gamma((k+1)\eta- s)}\tilde{g}%(z)dzdydwdrds\\
      %&+\gamma^4\int_{k\eta}^{(k+1)\eta}\int_{k\eta}^s\int_{k\eta}^r\int_{k\eta}^w\int_{k\eta}%^y e^{-\gamma((k+1)\eta- s)}e^{-\gamma(r- w)}\tilde{g}(z)dzdydwdrds\\
%&+\theta^{(k)}\mu_{30}+v_1^{(k)}\mu_{31}+v_2^{(k)}\mu_{32}+v_3^{(k)}\mu_{33}. \\
%\text{Re-arranging,}&\\
%m_3&=-\gamma^2 \int_{k\eta}^{(k+1)\eta}\int_w^{(k+1)\eta}\int_w^s e^{-\gamma((k+1)\eta-s)}\bar{g}(w)drdsdw\\
%&+\gamma^4 \int_{k\eta}^{(k+1)\eta}\int_z^{(k+1)\eta}\int_z^s\int_z^r\int_z^we^{-%\gamma((k+1)\eta-s)}\tilde{g}(z)dydwdrdsdz\\
%&+\gamma^4 \int_{k\eta}^{(k+1)\eta}\int_z^{(k+1)\eta}\int_z^s\int_z^r\int_z^we^{-%\gamma((k+1)\eta-s)}e^{-\gamma(r-w)}\tilde{g}(z)dydwdrdsdz\\
%&+\theta^{(k)}\mu_{30}+v_1^{(k)}\mu_{31}+v_2^{(k)}\mu_{32}+v_3^{(k)}\mu_{33}.\\
%\implies 
m_3&= -\frac{\gamma ^2 }{2}\int_{k\eta}^{(k+1)\eta } \Big(w-\eta  (k+1)\Big)^2 \bar{g}(w)\, dw\\
&\quad+\gamma^4 \int_{k\eta}^{(k+1)\eta}\Bigg( \frac{1}{2\gamma^4}\Big(2 e^{\gamma  (z-(k+1)\eta)} (-\gamma  (k+1)\eta+\gamma  z-3)\\
&\quad\quad+\gamma  (z-(k+1)\eta) (-\gamma(k+1)\eta+\gamma  z+4)+6\Big)\\
&\quad\quad\quad+\frac{1}{6\gamma^4}\Big(\gamma  ((k+1)\eta-z) (\gamma  ((k+1)\eta-z) (\gamma(k+1)\eta+\gamma  (-z)-3)+6)\\
&\quad\quad\quad\quad+6 e^{\gamma  (z-(k+1)\eta)}-6\Big) \Bigg)\tilde{g}(z)dz\\
&\quad\quad\quad\quad\quad+\theta^{(k)}\mu_{30}+v_1^{(k)}\mu_{31}+v_2^{(k)}\mu_{32}+v_3^{(k)}\mu_{33}.
\end{align*}

Furthermore,
\begin{align*}
        &\sigma_{00}=\frac{\gamma ^3 \eta ^5}{10}-\frac{\gamma ^2 \eta ^4}{2}-\frac{e^{-2 \gamma  \eta
   }}{\gamma ^2}+\frac{4 e^{-\gamma  \eta }}{\gamma ^2}-\frac{3}{\gamma ^2}+\frac{4
   \gamma  \eta ^3}{3}+2 \eta ^2 e^{-\gamma  \eta }+\frac{2 \eta }{\gamma }-2 \eta ^2;\\
    &\sigma_{11}=\frac{2 \gamma ^3 \eta ^3}{3}-2 \gamma ^2 \eta ^2-4 \gamma  \eta  e^{-\gamma  \eta
   }+2 \gamma  \eta -e^{-2 \gamma  \eta }+1;\\
    &\sigma_{22}=\frac{\gamma ^5 \eta ^5}{10}-2 \gamma ^3 \eta ^3 e^{-\gamma  \eta }-\frac{4 \gamma ^3
   \eta ^3}{3}-\gamma ^2 \eta ^2 e^{-2 \gamma  \eta }-10 \gamma ^2 \eta ^2 e^{-\gamma
    \eta }-5 \gamma  \eta  e^{-2 \gamma  \eta }\\
    &\qquad\qquad-12 \gamma  \eta  e^{-\gamma  \eta }+8
   \gamma  \eta -\frac{13}{2} e^{-2 \gamma  \eta }+4 e^{-\gamma  \eta }+\frac{5}{2};\\
    &\sigma_{33}=\frac{\gamma ^5 \eta ^7}{210}-\frac{\gamma ^4 \eta ^6}{15}+\frac{\gamma ^4 \eta
   ^5}{10}-\frac{1}{4} \gamma ^4 \eta ^4 e^{-2 \gamma  \eta }+\frac{7 \gamma ^3 \eta
   ^5}{15}+\gamma ^3 \eta ^4 e^{-\gamma  \eta }-\frac{4 \gamma ^3 \eta
   ^4}{3}-\frac{7}{2} \gamma ^3 \eta ^3 e^{-2 \gamma  \eta }\\
   &\quad-2 \gamma ^3 \eta ^3
   e^{-\gamma  \eta }+\frac{2 \gamma ^3 \eta ^3}{3}-2 \gamma ^2 \eta ^4-\frac{1}{2}
   \gamma ^2 \eta ^3 e^{-2 \gamma  \eta }+10 \gamma ^2 \eta ^3 e^{-\gamma  \eta
   }+\frac{22 \gamma ^2 \eta ^3}{3}-\frac{77}{4} \gamma ^2 \eta ^2 e^{-2 \gamma  \eta
   }\\
   &-10 \gamma ^2 \eta ^2 e^{-\gamma  \eta }-8 \gamma ^2 \eta ^2-\frac{21 e^{-2
   \gamma  \eta }}{2 \gamma ^2}+\frac{192 e^{-\gamma  \eta }}{\gamma ^2}-\frac{363}{2
   \gamma ^2}+4 \gamma  \eta ^3 e^{-\gamma  \eta }+6 \gamma  \eta ^3-\frac{1}{2} \eta
   ^2 e^{-2 \gamma  \eta }\\
   &\quad+32 \eta ^2 e^{-\gamma  \eta }-6 \gamma  \eta ^2 e^{-2
   \gamma  \eta }+36 \gamma  \eta ^2 e^{-\gamma  \eta }-24 \gamma  \eta
   ^2-\frac{101}{4} \eta  e^{-2 \gamma  \eta }+84 \eta  e^{-\gamma  \eta
   }-\frac{197}{4} \gamma  \eta  e^{-2 \gamma  \eta }\\
   &+8 \gamma  \eta  e^{-\gamma 
   \eta }+32 \gamma  \eta -\frac{9 \eta  e^{-2 \gamma  \eta }}{2 \gamma }+\frac{96
   \eta  e^{-\gamma  \eta }}{\gamma }+\frac{159 \eta }{2 \gamma }-\frac{397}{8} e^{-2
   \gamma  \eta }+88 e^{-\gamma  \eta }-\frac{149 e^{-2 \gamma  \eta }}{4 \gamma
   }\\
   &\qquad\qquad+\frac{204 e^{-\gamma  \eta }}{\gamma }-\frac{667}{4 \gamma }-\frac{39 \eta
   ^2}{2}+\frac{283 \eta }{4}-\frac{307}{8};\\
    &\sigma_{01}=\sigma_{10}=\frac{\gamma ^3 \eta ^4}{4}-\gamma ^2 \eta ^3-\gamma  \eta ^2 e^{-\gamma  \eta }+2
   \gamma  \eta ^2+2 \eta  e^{-\gamma  \eta }+\frac{e^{-2 \gamma  \eta }}{\gamma
   }-\frac{2 e^{-\gamma  \eta }}{\gamma }+\frac{1}{\gamma }-2 \eta;\\
    &\sigma_{02}=\sigma_{20}=-\frac{1}{10} \gamma ^4 \eta ^5+\frac{\gamma ^3 \eta ^4}{4}+\gamma ^2 \eta ^3
   e^{-\gamma  \eta }+\frac{\gamma ^2 \eta ^3}{3}+2 \gamma  \eta ^2 e^{-\gamma  \eta
   }-2 \gamma  \eta ^2-\eta  e^{-2 \gamma  \eta }+2 \eta  e^{-\gamma  \eta }\\
   &\qquad\qquad\qquad\qquad-\frac{5
   e^{-2 \gamma  \eta }}{2 \gamma }+\frac{10 e^{-\gamma  \eta }}{\gamma }-\frac{15}{2
   \gamma }+4 \eta;\\
   & \sigma_{30}=\sigma_{03}=\frac{\gamma ^4 \eta ^6}{60}-\frac{3 \gamma ^3 \eta ^5}{20}-\frac{1}{2} \gamma ^3
   \eta ^4 e^{-\gamma  \eta }+\frac{\gamma ^3 \eta ^4}{4}+\frac{2 \gamma ^2 \eta
   ^4}{3}-4 \gamma ^2 \eta ^3 e^{-\gamma  \eta }-2 \gamma ^2 \eta ^3+\frac{2 e^{-2
   \gamma  \eta }}{\gamma ^2}\\
   &\quad-\frac{32 e^{-\gamma  \eta }}{\gamma
   ^2}+\frac{30}{\gamma ^2}-\gamma  \eta ^3 e^{-\gamma  \eta }-\frac{5 \gamma  \eta
   ^3}{3}-8 \eta ^2 e^{-\gamma  \eta }+\frac{1}{2} \gamma  \eta ^2 e^{-2 \gamma  \eta
   }-12 \gamma  \eta ^2 e^{-\gamma  \eta }+5 \gamma  \eta ^2\\
   &\quad+\frac{7}{2} \eta  e^{-2
   \gamma  \eta }-18 \eta  e^{-\gamma  \eta }+\frac{\eta  e^{-2 \gamma  \eta }}{2
   \gamma }-\frac{18 \eta  e^{-\gamma  \eta }}{\gamma }-\frac{21 \eta }{2 \gamma
   }+\frac{27 e^{-2 \gamma  \eta }}{4 \gamma }-\frac{36 e^{-\gamma  \eta }}{\gamma
   }+\frac{117}{4 \gamma }+3 \eta ^2-8 \eta;\\
    &\sigma_{12}=\sigma_{21}=-\frac{1}{4} \gamma ^4 \eta ^4+\frac{\gamma ^3 \eta ^3}{3}+3 \gamma \
^2 \eta ^2
   e^{-\gamma  \eta }+2 \gamma ^2 \eta ^2+\gamma  \eta  \
e^{-2 \gamma  \eta }+8 \gamma
    \eta  e^{-\gamma  \eta }-4 \gamma  \
\eta +\frac{5}{2} e^{-2 \gamma  \eta
   }-\frac{5}{2};\\
    &\sigma_{13}=\sigma_{31}=\frac{\gamma ^4 \eta ^5}{20}-\frac{5 \gamma ^3 \eta ^4}{12}-\gamma ^3 \eta ^3
   e^{-\gamma  \eta }+\frac{2 \gamma ^3 \eta ^3}{3}+\frac{5 \gamma ^2 \eta
   ^3}{3}-\frac{1}{2} \gamma ^2 \eta ^2 e^{-2 \gamma  \eta }-8 \gamma ^2 \eta ^2
   e^{-\gamma  \eta }-5 \gamma ^2 \eta ^2\\
   &\quad-\gamma  \eta ^2 e^{-\gamma  \eta }-3 \gamma
    \eta ^2-\frac{1}{2} \eta  e^{-2 \gamma  \eta }-12 \eta  e^{-\gamma  \eta
   }-\frac{7}{2} \gamma  \eta  e^{-2 \gamma  \eta }-22 \gamma  \eta  e^{-\gamma  \eta
   }+8 \gamma  \eta -\frac{27}{4} e^{-2 \gamma  \eta }\\
   &\qquad\qquad\qquad\qquad-4 e^{-\gamma  \eta }-\frac{2
   e^{-2 \gamma  \eta }}{\gamma }-\frac{10 e^{-\gamma  \eta }}{\gamma
   }+\frac{12}{\gamma }-\frac{3 \eta }{2}+\frac{43}{4};\\
   & \sigma_{23}=\sigma_{32}=-\frac{1}{60} \gamma ^5 \eta ^6+\frac{\gamma ^4 \eta ^5}{10}+\frac{1}{2} \gamma ^4
   \eta ^4 e^{-\gamma  \eta }-\frac{\gamma ^4 \eta ^4}{4}-\frac{\gamma ^3 \eta
   ^4}{12}+\frac{1}{2} \gamma ^3 \eta ^3 e^{-2 \gamma  \eta }+5 \gamma ^3 \eta ^3
   e^{-\gamma  \eta }\\
   &\qquad+\frac{4 \gamma ^3 \eta ^3}{3}-\frac{4 \gamma ^2 \eta
   ^3}{3}+\frac{19}{4} \gamma ^2 \eta ^2 e^{-2 \gamma  \eta }+20 \gamma ^2 \eta ^2
   e^{-\gamma  \eta }+2 \gamma ^2 \eta ^2+\frac{1}{2} \gamma  \eta ^2 e^{-2 \gamma 
   \eta }+5 \gamma  \eta ^2 e^{-\gamma  \eta }\\
   &\qquad\quad+6 \gamma  \eta ^2+\frac{7}{2} \eta 
   e^{-2 \gamma  \eta }+18 \eta  e^{-\gamma  \eta }+\frac{63}{4} \gamma  \eta  e^{-2
   \gamma  \eta }+24 \gamma  \eta  e^{-\gamma  \eta }-16 \gamma  \eta +\frac{143}{8}
   e^{-2 \gamma  \eta }\\
   &\qquad\qquad-12 e^{-\gamma  \eta }+\frac{25 e^{-2 \gamma  \eta }}{4 \gamma
   }+\frac{2 e^{-\gamma  \eta }}{\gamma }-\frac{33}{4 \gamma }-7 \eta -\frac{47}{8}.
\end{align*}
\end{lemma}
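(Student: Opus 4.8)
The plan is to extract everything from the explicit representation of $\bar x((k+1)\eta)$ given in Lemma~\ref{lemma_explicitformofxbar} and to observe that, conditionally on $x^{(k)}$, this representation is affine in a four-tuple of Wiener integrals. First I would record that the polynomials $\tilde g$ and $\bar g$ from \eqref{def_polynomialsg} are deterministic once $x^{(k)}=(\theta^{(k)},v_1^{(k)},v_2^{(k)},v_3^{(k)})$ is fixed: $\hat\theta(t)=\theta^{(k)}+(t-k\eta)v_1^{(k)}$ is deterministic, hence $\tilde g(t)=P_{\alpha-1}(\hat\theta(t))$ is deterministic, hence $\tilde\theta(t)$ (which depends on randomness only through $\tilde g$) is deterministic, hence $\bar g(t)=P_{\alpha-1}(\tilde\theta(t))$ is deterministic; and the constants $\mu_{ij}$ depend only on $\gamma,\eta$. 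Consequently the four displayed identities of Lemma~\ref{lemma_explicitformofxbar} say precisely that
\begin{align*}
\big(\bar\theta,\bar v_1,\bar v_2,\bar v_3\big)((k+1)\eta)=\textbf{M}(x^{(k)})+\big(f_0,f_1,f_2,f_3\big),
\end{align*}
where $\textbf{M}(x^{(k)})$ collects the deterministic terms (the affine combinations of $\theta^{(k)},v_i^{(k)}$ via the $\mu_{ij}$, together with the iterated integrals of $\tilde g$ and $\bar g$) and $f_0,\dots,f_3$ are the Itô integrals, each of a deterministic scalar integrand against the $d$-dimensional Brownian motion $B$.

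Next I would use the elementary fact that a Wiener integral $\int_{k\eta}^{(k+1)\eta}\phi(s)\,dB_s$ with deterministic $\phi\in L^2$ is Gaussian, and that a finite family of such integrals against the same Brownian motion is jointly Gaussian. Hence $(f_0,f_1,f_2,f_3)$ is a centered Gaussian vector, and $x^{(k+1)}=\bar x((k+1)\eta)$ is an affine image of it, so the conditional law of $x^{(k+1)}$ given $x^{(k)}$ is multivariate normal with mean $\E{x^{(k+1)}\mid x^{(k)}}=\textbf{M}(x^{(k)})$ (since $\E{f_i\mid x^{(k)}}=0$) and covariance $\boldsymbol{\Sigma}=\big(\cov{f_i,f_j}\big)_{0\le i,j\le 3}$. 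Because the coordinates of $B$ are i.i.d. one-dimensional Brownian motions, $\cov{f_i,f_j}=\sigma_{ij}I_d$ with $\sigma_{ij}=\int_{k\eta}^{(k+1)\eta}\phi_i(y)\phi_j(y)\,dy$ by the Itô isometry, where $\phi_i$ is the scalar integrand of $f_i$ read off from Lemma~\ref{lemma_explicitformofxbar}; this yields the block form $\boldsymbol{\Sigma}=(\sigma_{ij}I_d)$ asserted in the statement.

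It then remains to turn these two descriptions into the closed forms displayed for $m_0,\dots,m_3$ and $\sigma_{00},\dots,\sigma_{33}$. For the mean I would apply Fubini to collapse the iterated polynomial integrals in $\textbf{M}(x^{(k)})$ into single integrals against $\bar g$ or $\tilde g$ with explicit polynomial weights (such as $\int_{k\eta}^{(k+1)\eta}((k+1)\eta-r)\bar g(r)\,dr$), and evaluate in closed form the iterated integrals that additionally carry exponential kernels $e^{-\gamma(\cdot)}$ — these produce the bracketed exponential expressions visible in $m_2$ and $m_3$ — while the $\mu_{ij}$ are already in closed form. For the covariance I would simply carry out the ten one-dimensional integrals $\int_{k\eta}^{(k+1)\eta}\phi_i(y)\phi_j(y)\,dy$ of products of polynomials and exponentials in $y$.

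The main obstacle is computational bookkeeping rather than anything conceptual: correctly isolating the long scalar integrands $\phi_i$ inside the expressions for $\bar\theta,\bar v_1,\bar v_2,\bar v_3$ — which mix up to five nested integrals, exponential kernels, polynomial weights and the $dB$ factor — and then evaluating without error the resulting integrals of polynomial-times-exponential type. As already noted (Remark~\ref{remark_softwarefortaylorpoly}), this step is best carried out with symbolic computation software; the role of the argument above is to certify that whatever closed forms the software returns are indeed the mean vector and covariance matrix of a bona fide Gaussian conditional distribution.
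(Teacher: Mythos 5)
Your proposal is correct and follows essentially the same route as the paper's proof: read off the affine-plus-Wiener-integral decomposition of $\bar x((k+1)\eta)$ from Lemma~\ref{lemma_explicitformofxbar}, conclude joint Gaussianity, and extract $\boldsymbol{\Sigma}$ via It\^{o} isometry $\E{f_i f_j^\top}=\sigma_{ij}I_d$ evaluated symbolically. The one place you go beyond the paper's very terse argument — and usefully so — is in explicitly checking that $\tilde g$, $\tilde\theta$, and $\bar g$ are deterministic functionals of $x^{(k)}$, so that the $f_i$ really are Wiener integrals of deterministic scalar integrands and conditional normality is legitimate; the paper takes this for granted.
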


\begin{proof}
The formula for $m_i$ immediately follows from Lemma~\ref{lemma_explicitformofxbar}, so that we only need to show how to compute the entries of the covariance matrix. We have 
\begin{align}\label{eqn:sigma:00}
\E{\brac{\theta^{(k+1)}-\E{\theta^{(k+1)}|x^{(k)}}}\brac{\theta^{(k+1)}-\E{\theta^{(k+1)}|x^{(k)}}}^\top\Big| x^{(k)} }=\E{f_0(f_0)^\top}=\sigma_{00}\cdot I_d,
\end{align}
where $f_0$ is defined in Lemma~\ref{lemma_explicitformofxbar}. Then $\sigma_{00}$ on the right hand side of \eqref{eqn:sigma:00} can be computed by It\^{o} isometry and software. The remaining covariance entries $\sigma_{ij}$'s are obtained in the same way via  $\E{f_i(f_j)^\top}=\sigma_{ij}\cdot I_d$. 
\end{proof}

Based on Lemma~\ref{lemma_explicitformofxbar}, it possible to decompose $\bar{x}((k+1)\eta)$ into higher and lower order terms with respect to $\eta$, as the next lemma will show. 
\begin{lemma}
\label{lemma_decomposexk+1}
Recall the unique minimizer $\theta^*$ of $U$, $\textbf{M}(x^{(k)})$ from Lemma~\ref{lemma_meanandcovariance} and the Jacobian matrix 
 \\$ J_b(\theta^*, 0,\ldots,0) = \begin{bmatrix}
    0 & I_d & 0  & 0 \\
    -\nabla^2 U(\theta^*)I_d & 0 & \gamma   & 0 \\
    0 & -\gamma I_d & 0  & \gamma \\
    0 & 0 & -\gamma I_d & -\gamma I_d
\end{bmatrix} $. Then it holds that 
\begin{align*}
    \textbf{M}(x^{(k)})=x^{(k)}+\eta J_b(\theta^*, 0,\ldots,0)\brac{x^{(k)}-\brac{\theta^*,0,\ldots,0}}+R\brac{x^{(k)}-\brac{\theta^*,0,\ldots,0}},
\end{align*}
and
\begin{align*}
    &x^{(k+1)}-\brac{\theta^*,0,\ldots,0}=\brac{x^{(k)}-\brac{\theta^*,0,\ldots,0}}\\
    &\qquad\qquad+\eta J_b(\theta^*, 0,\ldots,0)\brac{x^{(k)}-\brac{\theta^*,0,\ldots,0}}+R\brac{x^{(k)}-\brac{\theta^*,0,\ldots,0}}+F_k,
\end{align*}
where $R$ is a $4d\times 4d$ matrix with $\abs{R_{ij}}\leq C\eta^2$, $1\leq i,j\leq 4d$ and $C$ is a constant dependent only on $\gamma$. Moreover, the entries $f_i$'s of the $4d$-dimensional vector $F_k=\brac{f_0 \quad f_1\quad f_3\quad f_4}^{\top}$ are defined in Lemma~\ref{lemma_explicitformofxbar}. 
\end{lemma}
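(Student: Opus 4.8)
The plan is to prove the two displayed identities of Lemma~\ref{lemma_decomposexk+1} by taking the explicit formulas for $\textbf{M}(x^{(k)})$ and $x^{(k+1)}=\bar{x}((k+1)\eta)$ furnished by Lemma~\ref{lemma_explicitformofxbar} and Lemma~\ref{lemma_meanandcovariance}, subtracting $(\theta^*,0,\ldots,0)$, and sorting the resulting terms by their order in $\eta$. The key structural observations are: (i) the stochastic (It\^o) part of $\bar{x}((k+1)\eta)$ is exactly the vector $F_k=(f_0,f_1,f_3,f_4)^\top$, since the coefficients $\mu_{ij}$ and the deterministic integrals involving $\bar g,\tilde g$ are all non-random given $x^{(k)}$ — this immediately accounts for the extra summand $F_k$ in the second identity and its absence in the first, which is about the conditional mean $\textbf{M}(x^{(k)})$; (ii) the deterministic drift part, as a function of $x^{(k)}$, is affine in the components $\theta^{(k)},v_1^{(k)},\ldots,v_{P-1}^{(k)}$ through the constants $\mu_{ij}$, plus the two nested-integral terms in $\bar g=P_{\alpha-1}(\tilde\theta)$ and $\tilde g=P_{\alpha-1}(\hat\theta)$, which themselves depend affinely-plus-polynomially on $x^{(k)}$.

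First I would expand each $\mu_{ij}$ as a power series in $\eta$. A direct inspection of the formulas in Lemma~\ref{lemma_explicitformofxbar} (using $e^{-\gamma\eta}=1-\gamma\eta+\tfrac{\gamma^2\eta^2}{2}-\cdots$) shows that $\mu_{00}=1$, $\mu_{ii}=1+O(\eta^2)$, $\mu_{01}=\eta+O(\eta^2)$, $\mu_{12}=\gamma\eta+O(\eta^2)$, $\mu_{21}=-\gamma\eta+O(\eta^2)$, $\mu_{23}=\gamma\eta+O(\eta^2)$ (here one must track the $(1-e^{-\gamma\eta})$ contribution in $\mu_{23}$), $\mu_{32}=-\gamma\eta+O(\eta^2)$, $\mu_{31}=-\nabla^2U(\theta^*)\,\eta+O(\eta^2)$ coming from linearizing $\bar g$, and all remaining $\mu_{ij}=O(\eta^2)$; the constant $\mu_{10}=\mu_{20}=\mu_{30}=0$ exactly. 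Matching these leading coefficients against the given matrix $J_b(\theta^*,0,\ldots,0)$, whose first row is $(0,I_d,0,0)$, second row $(-\nabla^2U(\theta^*),0,\gamma,0)$, third row $(0,-\gamma,0,\gamma)$, fourth row $(0,0,-\gamma,-\gamma)$, one reads off that the $O(\eta)$ part of the affine map $x^{(k)}\mapsto\textbf{M}(x^{(k)})$ evaluated at $x^{(k)}-(\theta^*,0,\ldots,0)$ is precisely $\eta J_b(\theta^*,0,\ldots,0)\big(x^{(k)}-(\theta^*,0,\ldots,0)\big)$, while the $\eta^0$ part is the identity, i.e.\ $x^{(k)}-(\theta^*,0,\ldots,0)$ plus (from the $-\int\int\bar g$ and $\gamma^2\int\int\int\int\tilde g$ terms evaluated at $\theta=\theta^*$) contributions that are $O(\eta^2)$ because each nested integral over an interval of length $\eta$ contributes at least $\eta^2$, and $\nabla U(\theta^*)=0$ kills the constant term of the Taylor polynomial. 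Everything not captured by the $\eta^0$ and $\eta^1$ terms is collected into $R(x^{(k)}-(\theta^*,0,\ldots,0))$, and the bound $|R_{ij}|\le C\eta^2$ with $C=C(\gamma)$ follows from the explicit remainder estimates on the $\mu_{ij}$ together with boundedness of the integral kernels on $[k\eta,(k+1)\eta]$; the dependence only on $\gamma$ is because these kernels and the Taylor expansions of $e^{-\gamma\eta}$ involve only $\gamma$.

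For the second identity I would then simply add back the stochastic part: by Lemma~\ref{lemma_explicitformofxbar}, $x^{(k+1)}=\bar x((k+1)\eta)=\textbf{M}(x^{(k)})+F_k$ with $F_k=(f_0,f_1,f_3,f_4)^\top$ the vector of It\^o integrals, so subtracting $(\theta^*,0,\ldots,0)$ and invoking the first identity gives the claimed expression verbatim. The main obstacle I anticipate is purely bookkeeping: verifying, from the long closed-form expressions for the $\mu_{ij}$, that the linear-in-$\eta$ coefficients assemble exactly into $J_b(\theta^*,0,\ldots,0)$ and not into some perturbation of it — in particular correctly extracting the $-\nabla^2 U(\theta^*)$ entry in the $(\bar v_1,\theta)$ slot from differentiating the Taylor polynomial $P_{\alpha-1}$ at $\theta^*$, and checking that the $e^{-\gamma\eta}$-type terms (e.g.\ in $\mu_{23},\mu_{33}$) do not contribute at order $\eta^0$ or $\eta^1$ beyond what $J_b$ predicts. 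Once the leading-order matching is confirmed, the remainder bound is routine, so I would present the argument as: (1) $x^{(k+1)}=\textbf{M}(x^{(k)})+F_k$; (2) Taylor-expand each $\mu_{ij}$ to first order in $\eta$; (3) identify the $0$th- and $1$st-order terms with $\mathrm{Id}$ and $\eta J_b(\theta^*,0,\ldots,0)$; (4) bound the rest by $C(\gamma)\eta^2$.
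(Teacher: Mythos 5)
Your proposal takes essentially the same route as the paper's own proof: normalize $\theta^*=0$, expand each $\mu_{ij}$ in powers of $\eta$, identify the order-$\eta^0$ block with the identity, identify the order-$\eta^1$ block together with the linearization of $-\int\bar g$ with $\eta J_b(\theta^*,0,\ldots,0)$, absorb all remaining terms into $R$ with $|R_{ij}|\le C(\gamma)\eta^2$, and split off the It\^o integrals as $F_k$ to pass from the mean identity to the pathwise one.

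One small index slip worth fixing: the $-\nabla^2U(\theta^*)\,\eta$ entry is not the leading order of $\mu_{31}$. In the paper's indexing $\mu_{31}$ multiplies $v_1^{(k)}$ in $\bar v_3((k+1)\eta)$; that equation involves no $\nabla U$ at all, so $\mu_{31}=O(\eta^2)$, consistent with the corresponding zero entry of $J_b(\theta^*,0,\ldots,0)$. The $-\nabla^2U(\theta^*)$ lives in the $(v_1,\theta)$ slot, and since $\mu_{10}=0$ \emph{exactly}, that entry is not carried by any $\mu_{ij}$ but must come, as you correctly identify, from linearizing the $-\int_{k\eta}^{(k+1)\eta}\bar g(s)\,ds$ term about $\theta^*$ using $\nabla U(\theta^*)=0$. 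With that relabeling your bookkeeping matches the paper's argument.
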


\begin{proof}
Without loss of generality, let us assume that the unique minimizer $\theta^*=0$.  
 
First, we will rewrite $\mu_{ij},0\leq i,j\leq 3$ that appear in the formula of $\bar{x}((k+1)\eta)$ in Lemma~\ref{lemma_explicitformofxbar} and make explicit lower and higher order terms with respect to $\eta$. 
\begin{align*}
&\mu_{00}=1;\\
& \mu_{01}
=\eta-\sum_{k=2}^\infty \frac{\gamma^4(-\gamma \eta)^k}{k!}-\gamma^2 \frac{\eta^3}{3!}+ \gamma^4\frac{\eta^5}{5!}+\gamma^4\brac{\frac{\eta ^2}{2 \gamma ^3}-\frac{\eta ^3}{6 \gamma ^2}+\frac{\eta ^4}{24    \gamma }}=\eta-O(\eta^2);\\
&\mu_{02}=\gamma \frac{\eta^2}{2!}-\gamma^3 \frac{\eta^4}{4!}-\gamma^3  \brac{\sum_{k=2}^\infty \frac{(-\gamma \eta)^k}{\gamma^4 k!}-\frac{\eta ^2}{2 \gamma ^2}+\frac{\eta ^3}{6 \gamma }}=O(\eta^2);\\
&\mu_{03}=-\gamma^4\frac{\eta^5}{5!}+\frac{\gamma\eta^2}{2}-\gamma^4\brac{\frac{\eta ^2}{2 \gamma ^3}-\frac{\eta ^3}{6 \gamma ^2}+\frac{\eta ^4}{24    \gamma }}=O(\eta^2).
\end{align*}

Similarly, we have 
\begin{align*}
   & \mu_{10}=0, \quad\mu_{11}=1+O(\eta^2),\quad \mu_{12}=-\gamma\eta+O(\eta^2),\quad \mu_{13}=O(\eta^2),\\
   &\mu_{20}=0,\quad  \mu_{21}=-\gamma\eta+O(\eta^2),\quad \mu_{22}=1+O(\eta^2), \quad \mu_{23}=\eta\gamma+O(\eta^2),\\
   & \mu_{30}=0,\quad \mu_{31}=O(\eta^2),\quad  \mu_{32}=-\gamma\eta+O(\eta^2),\quad \mu_{33}=1-\gamma\eta+O(\eta^2). 
\end{align*}

Next, we consider the integral terms containing $\hat{g}$ and $\bar{g}$ in the formula of $\bar{x}((k+1)\eta)$ in Lemma~\ref{lemma_explicitformofxbar}. Since $\nabla U(\theta^*)=\nabla U(0)=0$, we can write
\begin{equation*}
-\int_{k\eta}^{(k+1)\eta} \bar{g}(s)ds=-\eta\nabla U(\theta^{(k)})+O(\eta^2)=\eta\nabla^2 U(0)I_d\theta^{(k)}+O(\eta^2).
\end{equation*}
Meanwhile, all the remaining integral terms containing $\hat{g}$ and $\bar{g}$ in $\bar{x}((k+1)\eta)$ are of the order $O(\eta^2)$. 

Consequently, we can deduce the equation in the statement of this lemma from the above calculations and Lemma~\ref{lemma_explicitformofxbar}. 
\end{proof}

We also need the following moment bounds.
\begin{lemma} 
\label{lemma_momentbound}
Assume 
    \begin{align}
    \label{def_eta*}
        \eta\leq \eta^*:=\frac{\rho}{2}\opnorm{M}^{-1}\opnorm{M^{-1}}^{-1}\frac{1}{1+5\gamma^2+L^2},
    \end{align}
 where the matrix $M$ is provided in Example~\ref{example_P4} and $\gamma,\rho,L$ are from Theorem~\ref{theorem_frommonmarche_mainpaper}. Then there exists a positive constant $C_1$ such that for all $k$, 
    \begin{align*}
           \E{\abs{x^{(k+1)}}^{2\alpha}}\leq (C_1)^\alpha (d+2\alpha)^\alpha, 
    \end{align*}
  where $C_1$ depends on $\gamma$, $L$ from Condition~\ref{cond_mainpaper} and $c$ from Condition~\ref{cond_derivativegrowthrate}, but not on $d$. 
    
This further implies 
  \begin{align*}
       \sup_{t\in [k\eta,(k+1)\eta]} \E{\abs{\hat{x}(t)}^2+\abs{\tilde{x}(t)}^2+\abs{\bar{x}(t)}^2+\abs{\tilde{g}(t)}^2+\abs{\bar{g}(t)}^2}\leq C_2\brac{d+1},
    \end{align*}
    for a universal constant $C_2\geq 1$ that depends on $\gamma$, $L$ from Condition~\ref{cond_mainpaper} and $c$ from Condition~\ref{cond_derivativegrowthrate}. $C_2$ does not depend on the dimension $d$. 
\end{lemma}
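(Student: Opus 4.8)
The plan is to establish the moment bound $\E{\abs{x^{(k+1)}}^{2\alpha}} \leq C_1^\alpha(d+2\alpha)^\alpha$ by induction on $k$, then derive the second claim as a routine corollary. For the base case, note $x^{(0)} = (\theta^*, 0, \ldots, 0)$ is deterministic (or has a fixed Gaussian law with moments controlled dimension-uniformly up to the $\theta^*$ contribution), so the bound holds for a suitable $C_1$. For the inductive step, I would invoke Lemma~\ref{lemma_decomposexk+1} to write
\[
x^{(k+1)} - (\theta^*,0,\ldots,0) = \bigl(I + \eta J_b(\theta^*,0,\ldots,0) + R\bigr)\bigl(x^{(k)} - (\theta^*,0,\ldots,0)\bigr) + F_k,
\]
where $\abs{R_{ij}} \leq C\eta^2$ and $F_k$ is the vector of Itô integrals $f_0,\ldots,f_3$ from Lemma~\ref{lemma_explicitformofxbar}, which is conditionally Gaussian given $x^{(k)}$ with covariance $\boldsymbol\Sigma$ from Lemma~\ref{lemma_meanandcovariance}. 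The key contractivity input is the fact that, because $\gamma \geq \gamma_0$, the matrix $M$ from Theorem~\ref{theorem_frommonmarche_mainpaper} satisfies $MJ_b + J_b^\top M \leq -2\rho M$; consequently $\abs{(I+\eta J_b)y}_M^2 \leq (1 - 2\rho\eta + O(\eta^2))\abs{y}_M^2$, and absorbing the $R$ term (which is $O(\eta^2)$) still gives a genuine contraction $\abs{(I + \eta J_b + R)y}_M \leq (1 - \rho\eta/2)\abs{y}_M$ once $\eta \leq \eta^*$; this is exactly what the stepsize restriction \eqref{def_eta*}, with its factor $(1+5\gamma^2+L^2)^{-1}$ bounding $\opnorm{J_b}$-type quantities, is designed to guarantee.

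The heart of the argument is then a one-step moment recursion in the $M$-norm. Conditioning on $x^{(k)}$ and using that $F_k$ is mean-zero Gaussian independent of the drift part, I would expand $\E{\abs{x^{(k+1)} - (\theta^*,0,\ldots,0)}_M^{2\alpha} \mid x^{(k)}}$ via the binomial theorem, bounding the cross terms by Cauchy–Schwarz and the pure-$F_k$ term using Gaussian moment formulas: $\E{\abs{F_k}^{2\alpha}} \leq (c_\gamma \alpha \eta^{?} d)^\alpha$-type estimates, where the entries of $\boldsymbol\Sigma$ are $O(\eta)$ up to the $\sigma_{33}$-type constants (all dimension-free in $\eta$, with the $I_d$ factor contributing the $d$). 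The resulting recursion has the shape
\[
\E{\abs{x^{(k+1)}-(\theta^*,0,\ldots,0)}_M^{2\alpha}} \leq (1 - \rho\eta/4)\,\E{\abs{x^{(k)}-(\theta^*,0,\ldots,0)}_M^{2\alpha}} + (C'\eta(d+2\alpha))^\alpha,
\]
after using $\eta \leq \eta^*$ to convert the additive error into something dominated by $\rho\eta/4$ times the target bound. Summing the geometric series gives a fixed point of order $(C_1(d+2\alpha))^\alpha$ uniformly in $k$, and converting back from $\abs{\cdot}_M$ to $\abs{\cdot}$ via the equivalence $\sqrt{\lambda_{\min,M}}\abs{x}\leq\abs{x}_M\leq\sqrt{\lambda_{\max,M}}\abs{x}$ (with $\lambda_{\min,M},\lambda_{\max,M}$ dimension-free by Corollary~\ref{coro_monmarcheforPthorderlangevin}) absorbs the eigenvalue ratio into $C_1$; finally one accounts for the shift $\theta^*$ by a triangle inequality, noting $\abs{\theta^*}$ is a fixed constant.

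For the second statement, I would bound $\hat x(t), \tilde x(t), \bar x(t)$ for $t \in [k\eta,(k+1)\eta]$ directly from the explicit formulas \eqref{equation_thetahat}, \eqref{equation_v1tilde} and the expressions preceding Lemma~\ref{lemma_explicitformofxbar}: each component is a fixed polynomial in $\eta$ times components of $x^{(k)}$ plus Itô integrals over an interval of length $\eta \leq \eta^*$, so $\E{\abs{\hat x(t)}^2 + \abs{\tilde x(t)}^2 + \abs{\bar x(t)}^2} \leq C\bigl(\E{\abs{x^{(k)}}^2} + d\eta\bigr) \leq C_2(d+1)$ using the $\alpha=1$ case just proved; and $\E{\abs{\tilde g(t)}^2 + \abs{\bar g(t)}^2}$ is controlled by combining Lemma~\ref{lemma_polyapprox} (or \ref{lemma_polyapprox_Pthorder}) — which bounds $\E{\abs{\nabla U(\hat\theta(t)) - \tilde g(t)}^2}$ and $\E{\abs{\nabla U(\tilde\theta(t)) - \bar g(t)}^2}$ by $(L_\alpha/\alpha!)^2 \E{\abs{\hat\theta(t)}^{2\alpha}}$ and $(L_\alpha/\alpha!)^2\E{\abs{\tilde\theta(t)}^{2\alpha}}$ respectively — with $L$-smoothness $\abs{\nabla U(x)} \leq \abs{\nabla U(\theta^*)} + L\abs{x-\theta^*} = L\abs{x-\theta^*}$ and the $2\alpha$-moment bound, where Condition~\ref{cond_derivativegrowthrate} is precisely what makes the $(L_\alpha/\alpha!)^2(d+2\alpha)^\alpha$-term of order $O(d)$ (indeed $O(d\eta^{2P-1})$), hence absorbable into $C_2(d+1)$.

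The main obstacle I expect is bookkeeping the $\alpha$-dependence carefully enough that $C_1$ comes out independent of both $d$ \emph{and} $\alpha$: the binomial expansion of the $2\alpha$-th power produces $\alpha$-many cross terms, and the Gaussian moments of $F_k$ carry combinatorial factors $(2\alpha-1)!! \sim (\alpha/e)^\alpha$; these must be shown to be swallowed by the geometric decay $(1-\rho\eta/4)$ and by the explicit $(d+2\alpha)^\alpha$ on the right-hand side rather than forcing a worse constant. This is where the precise form of the stepsize threshold $\eta^*$ and the appearance of $(d+2\alpha)$ rather than $d$ in the statement become essential, and getting the induction to close cleanly will require a somewhat delicate choice of how much of the $\rho\eta$ budget to spend on the $R$-term versus the $F_k$-term.
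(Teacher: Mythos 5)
Your proposal takes essentially the same route as the paper's proof: both use Lemma~\ref{lemma_decomposexk+1} to write the one-step map as an affine contraction plus centered Gaussian noise, invoke the $M$-norm contraction $MJ_b+J_b^\top M\leq -2\rho M$, close a one-step $2\alpha$-moment recursion via a geometric series, and convert back through the norm equivalence \eqref{equivalenceMnorm}; and the second claim is obtained in both cases by feeding the $2\alpha$-moment bound into the explicit stage formulas together with Lemma~\ref{lemma_polyapprox}, $L$-smoothness, and Condition~\ref{cond_derivativegrowthrate}. The only differences are organizational: the paper packages its one-step inequality as the H\"older/binomial chain of \eqref{momentboundmiddlestep}, which is in effect Minkowski's inequality in $L^{2\alpha}$ applied to $x^{(k+1)}=\textbf{M}(x^{(k)})+\boldsymbol\Sigma^{1/2}w_k$, whereas you set up an explicit additive recursion; and the paper silently drops the $O(\eta^2)$ remainder $R$ from Lemma~\ref{lemma_decomposexk+1} while you absorb it explicitly, which is actually a slight improvement in rigor. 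Your final paragraph flags the $\alpha$-dependence of the Gaussian-moment constants as the delicate point; this is a fair concern, as the paper's proof also stops at \eqref{momentboundmiddlestep} without fully spelling out how the Gaussian moment of $(M\boldsymbol\Sigma)^{1/2}w_k$ and the contraction factor combine to yield a $C_1$ free of $\alpha$, $d$, and $\eta$, so the gap you identify is shared by the source rather than introduced by you.
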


\begin{proof}

\textbf{Part 1 of the proof:}

Without loss of generality, let us assume $\theta^*=0$ and denote $J_b(0)=J_b(\theta^*,0,\ldots,0)$. Let $w_k\sim\mathcal{N}(0,I_{4d})$ then per Lemma~\ref{lemma_meanandcovariance} and Lemma~\ref{lemma_decomposexk+1}, 
\begin{align}
\label{momentboundmiddlestep}
\E{\abs{x^{(k+1)}}^{2\alpha}_M}&=\E{\abs{M^{1/2}\textbf{M}(x^{(k)})+\brac{M\boldsymbol{\Sigma}}^{1/2}w_k }^{2\alpha}}\nonumber\\
   &\leq \sum_{j=0}^{2\alpha}{2\alpha \choose j}\E{\abs{M^{1/2}\textbf{M}(x^{(k)})}^j}\E{\abs{\brac{M\boldsymbol{\Sigma}}^{1/2}w_k}^{2\alpha-j}}\nonumber\\
   &\leq \sum_{j=0}^{2\alpha}{2\alpha \choose j}\E{\abs{M^{1/2}\textbf{M}(x^{(k)})}^{2\alpha}}^{j/2\alpha}\E{\abs{\brac{M\boldsymbol{\Sigma}}^{1/2}w_k}^{2\alpha}}^{1-j/2\alpha}\nonumber\\
   &=\brac{\E{\abs{\textbf{M}(x^{(k)})}_M^{2\alpha}}^{1/\alpha}+\E{\abs{\brac{M\boldsymbol{\Sigma}}^{1/2}w_k}^{2\alpha}}^{1/\alpha} }^{2\alpha}.
\end{align}
Let us study the first term on the right hand side. Lemma~\ref{lemma_decomposexk+1} and \eqref{contraction}  imply that 
\begin{align*}
   & \abs{\textbf{M}(x^{(k)})}_M^2=\abs{x^{(k)}+\eta J_b(0) x^{(k)} }_M^2\\
&\qquad=\abs{x^{(k)}}^2_M+\eta\brac{x^{(k)}}^{\top}\brac{J_b(0)^{\top}M+MJ_b(0)}x^{(k)}+\eta^2 \brac{x^{(k)}}^{\top}J_b(0)^{\top}MJ_b(0)x^{(k)}\\
    &\qquad\quad\leq \abs{x^{(k)}}^2_M+\eta (x^{(k)})^{\top}(-2\rho)Mx^{(k)}+\eta^2 \opnorm{J_b(0)^{\top}MJ_b(0)}\opnorm{M^{-1}}\abs{x^{(k)}}_M^2 \\
   &\qquad\qquad\leq \brac{1-2\eta\rho}\abs{x^{(k)}}_M^2+\eta^2 \opnorm{J_b(0)}^2\opnorm{M}\opnorm{M^{-1}}\abs{x^{(k)}}_M^2. 
\end{align*}
At this point, notice that the operator norm is bounded from above by the Frobenius norm, so that combining with $L$-smoothness of $U$ and the explicit form of $J_b$ in Lemma~\ref{lemma_decomposexk+1}, we get $\opnorm{J_b(0)}^2\leq 1+5\gamma^2+L^2$. $\opnorm{M}$ and $\opnorm{M^{-1}}$ can be computed using the explicit form of $M$ in Example~\ref{example_P4}. Then assuming $\eta<\eta^*$ as defined in the statement of the lemma, we arrive at 
\begin{align*}
\E{\abs{\textbf{M}(x^{(k)})}_M^{2\alpha}}^{1/\alpha}\leq \brac{1-\frac{3}{2}\eta\rho}^{\alpha} \E{ \abs{x^{(k)}}_M^{2\alpha}}. 
\end{align*}
This can be combined with~\eqref{momentboundmiddlestep} and \eqref{equivalenceMnorm} to get the desired bound on $  \E{\abs{x^{(k+1)}}^{2\alpha}}$. 

\textbf{Part 2 of the proof:}

In this part, we will use the result from Part 1 and the explicit formula at the beginning of Appendix~\ref{appendix_4thorder} to bound $\sup_{t\in [k\eta,(k+1)\eta]} \E{\abs{\hat{x}(t)}^2+\abs{\tilde{x}(t)}^2+\abs{\bar{x}(t)}^2+\abs{\tilde{g}(t)}^2+\abs{\bar{g}(t)}^2}$. The upcoming argument is rather tedious, so we will only demonstrate how to bound $\sup_{s\in [k\eta,(k+1)\eta]}\E{\abs{\tilde{v}_1(t)}^2}$. By Equation~\eqref{equation_v1tilde}, we can write
\begin{align*}
  \sup_{t\in [k\eta,(k+1)\eta]}  \E{\abs{\tilde{v}_1(t)}^2}&\leq 5\bigg(\E{\abs{v^{(k)}_1}^2}+\gamma^2\eta^2 \E{\abs{v^{(k)}_2}^2}+\gamma^4\frac{\eta^4}{4}\brac{\E{\abs{v^{(k)}_1}^2}+\E{\abs{v^{(k)}_3}^2}}\\
  &\qquad\qquad\qquad\qquad+\eta^2\sup_{t\in [k\eta,(k+1)\eta]}\E{\abs{\tilde{g}(t)}}^2\bigg). 
\end{align*}
Lemma~\ref{lemma_polyapprox} and $L$-smoothness of $U$ implies 
\begin{align*}
\E{\abs{\tilde{g}(t)}}^2\leq \brac{\frac{L_\alpha}{\alpha!}}^2 \sup_{t\in [k\eta,(k+1)\eta]}\E{\abs{\hat{\theta}(t)}^{2\alpha}}+L_\alpha^2 \sup_{t\in [k\eta,(k+1)\eta]}\E{\abs{\hat{\theta}(t)}^{2}}, 
\end{align*}
where $\alpha$ is from Condition~\ref{cond_derivativegrowthrate}. Moreover, we also have from \eqref{equation_thetahat} and Part 1 of this proof that 
\begin{align*}
\sup_{t\in [k\eta,(k+1)\eta]}\E{\abs{\hat{\theta}(t)}^{2\alpha}}
&\leq \E{\brac{2\abs{\theta^{(k)}}\vee 2\abs{v_1^{(k)}} }^{2\alpha}}
\\
&\leq 4\E{\abs{\theta^{(k)}}^{2\alpha}}+4\E{\abs{v_1^{(k)}}^{2\alpha}}\leq 8(C_1)^\alpha (d+2\alpha)^\alpha.
\end{align*}
Condition~\ref{cond_derivativegrowthrate} then says $\brac{\frac{L_\alpha}{\alpha!}}^2\sup_{t\in [k\eta,(k+1)\eta]}\E{\abs{\hat{\theta}(t)}^{2\alpha}}\leq c d \eta^2$. Thus, we arrive at 
\begin{align*}
\E{\abs{\tilde{g}(t)}}^2\leq c d \eta^2+L^28C_1 (d+2). 
\end{align*}
Hence,
\begin{align*}
     \sup_{t\in [k\eta,(k+1)\eta]}  \E{\abs{\tilde{v}_1(t)}^2}\leq 5\brac{1+\gamma^2\eta^2+\gamma^4\frac{\eta^4}{4}}4C_1 (d+2)+5\eta^2\brac{c d \eta^2+L^28C_1 (d+2)}. 
\end{align*}
This completes the proof. 
\end{proof}

Next are the proofs of Lemma~\ref{lemma_polyapprox} and Lemma~\ref{lemma_boundupdatedifference_4thorder} in the main paper.

\begin{proof}[Proof of Lemma~\ref{lemma_polyapprox}]
By Condition~\ref{cond_mainpaper} and \cite[Theorem 5.6.2]{cartan1971differentialbook}, we have 
\begin{align*}
    \abs{\nabla U(x)-P_{\alpha-1}(x)}\leq L_{\alpha}\frac{\abs{x}^\alpha}{\alpha!}.
\end{align*}
This leads to
    \begin{align*}
      &\sup_{t\in [k\eta,(k+1)\eta]}  \E{\abs{\nabla U(\hat{\theta}(t))-\tilde{g}(t)}^2}=\sup_{t\in [k\eta,(k+1)\eta]}  \E{\abs{\nabla U(\hat{\theta}(t))-P_{\alpha-1}(\hat{\theta}(t))}^2}\\
      &\qquad\qquad\leq \brac{\frac{L_\alpha}{\alpha!}}^2 \sup_{t\in [k\eta,(k+1)\eta]} \E{\abs{\hat{\theta}(t)^{2\alpha}}}. 
    \end{align*} 
The bound on $\sup_{t\in [k\eta,(k+1)\eta]}  \E{\abs{\nabla U(\tilde{\theta}(t))-\bar{g}(t)}^2}$ is obtained in the same way. 
\end{proof}

\begin{proof}[Proof of Lemma~\ref{lemma_boundupdatedifference_4thorder}]

\textbf{First part of the proof:} we will bound the difference of the components of $\tilde{x}(t)-\hat{x}(t)$ in $L^2$-norm for $t\in (k\eta,(P=1)\eta]$. 

We start with $ \tilde{v}_1(t)-\hat{v}_1(t)=\int_{k\eta}^t \left(-\tilde{g}(s)+\gamma \hat{v}_2(s)\right)ds$, which combined with the moment bounds in Lemma~\ref{lemma_momentbound} leads to \begin{align*}
\E{\abs{\tilde{v}_1(t)-\hat{v}_1(t) }^2}\leq C_2(d+1)(\gamma+1)^2(t-k\eta)^2.
\end{align*}
Next, $\tilde{\theta}(t)-\hat{\theta}(t)=\int_{k\eta}^t \left(\tilde{v}_1(s)-\hat{v}_1(s)\right)ds$ combined with Lemma~\ref{lemma_momentbound} leads to \begin{align*}
\E{\abs{\tilde{\theta}(t)-\hat{\theta}(t)}^2}\leq C_2(d+1)(\gamma+1)^2(t-k\eta)^4.
\end{align*}

Moreover,
\begin{align*}
    \tilde{v}_2(t)-\hat{v}_2(t)=\int_{k\eta}^t \left(-\gamma\brac{\tilde{v}_1(s)-\hat{v}_1(s) }+\gamma\brac{\hat{v}_3(s)-v_3^{(k)} }\right)ds.
\end{align*}
We know $\hat{v}_3(t)-v_3^{(k)}=\int_{k\eta}^t \left(-\gamma \hat{v}_3(s)-\gamma \hat{v}_2(s)\right)ds+\sqrt{2\gamma}\brac{B_t-B_{k\eta}}$ which along with Lemma~\ref{lemma_momentbound} imply the bound 
\begin{align*}
\E{\abs{\hat{v}_3(t)-v_3^{(k)}}^2 }\leq C_2(d+1)(2\gamma+\sqrt{2\gamma})^2\brac{t-k\eta}. 
\end{align*}
Consequently,
\begin{align}
\label{bound_differencev2}
    \E{\abs{\tilde{v}_2(t)-\hat{v}_2(t)}^2}&\leq C_2(d+1)(\gamma+1)^2(t-k\eta)^4+C_2(d+1)(2\gamma+\sqrt{2\gamma})^2\brac{t-k\eta}^3\nonumber\\
    &\leq C_2(d+1)\brac{(\gamma+1)^2+ (2\gamma+\sqrt{2\gamma})^2}\brac{t-k\eta}^3. 
\end{align}

Finally, $ \tilde{v}_3(t)-\hat{v}_3(t)=\int_{k\eta}^t -\gamma e^{-\gamma s}\brac{ \tilde{v}_2(s)-\hat{v}_2(s)}ds$ and the moment bound in Lemma~\ref{lemma_momentbound} imply \begin{align*}
\E{\abs{ \tilde{v}_3(t)-\hat{v}_3(t)}^2}\leq C_2(d+1)\brac{(\gamma+1)^2+ (2\gamma+\sqrt{2\gamma})^2}\brac{t-k\eta}^5.
\end{align*}

\textbf{Second part of the proof:} We will bound the difference of the components of $\bar{x}(t)-\tilde{x}(t)$ in $L^2$ norm for $t\in (k\eta,(k+1)\eta]$. 

We start with
\begin{align}
\label{bound_differencev1}
    &\bar{v}_1(t)-\tilde{v}_1(t)=\int_{k\eta}^t -\brac{\bar{g}(s)-\tilde{g}(s)}ds+\int_{k\eta}^t \gamma\brac{\tilde{v}_2(s)-\hat{v}_2(s)} ds\nonumber\\
    &\qquad=\int_{k\eta}^t \bigg(-\brac{\bar{g}(s)-\nabla U(\tilde{\theta}(s))}- \brac{\nabla U(\tilde{\theta}(s))-\nabla U(\hat{\theta}(s))}\nonumber\\
    &\qquad\qquad\qquad\qquad-\brac{\tilde{g}(s)-\nabla U(\hat{\theta}(s))}\bigg)ds+\int_{k\eta}^t \gamma\brac{\tilde{v}_2(s)-\hat{v}_2(s)} ds.
\end{align}
The bound in \eqref{bound_differencev2} leads to
\begin{align*}
    \E{\abs{\int_{k\eta}^t \gamma\brac{\tilde{v}_2(s)-\hat{v}_2(s)} ds}^2}\leq C_2(d+1)\gamma^2\brac{(\gamma+1)^2+ (2\gamma+\sqrt{2\gamma})^2}\brac{t-k\eta}^5. 
\end{align*}
Meanwhile, Lemma~\ref{lemma_polyapprox}, Lemma~\ref{lemma_momentbound} and Condition~\ref{cond_derivativegrowthrate} say
\begin{align}
\label{bound_polyapproxtildetheta}
  \E{\abs{\int_{k\eta}^t \left(\bar{g}(s)-\nabla U(\tilde{\theta}(s))\right)ds}^2}\leq    \brac{\frac{L_\alpha}{\alpha!}}^2(C_1)^\alpha(d+2\alpha)^\alpha \eta^2\leq cd\eta^9. 
\end{align}

Similarly, we have $ \E{\abs{\int_{k\eta}^t \left(\tilde{g}(s)-\nabla U(\hat{\theta}(s))\right)ds}^2}\leq cd\eta^9$. Also based on the first part of the proof and $L$-smoothness of $U$ in Condition~\ref{cond_mainpaper},  
\begin{align*}
\E{\abs{\int_{k\eta}^t \left(\nabla U(\tilde{\theta}(s))-\nabla U(\bar{\theta}(s))\right)ds}^2}\leq C_2(d+1)L^2(\gamma+1)^2(t-k\eta)^6. 
\end{align*}
By combining the previous calculations, we get for any $t\in (k\eta,(k+1)\eta]$ that
\begin{align*}
    \E{\abs{\bar{v}_1(t)-\tilde{v}_1(t)}^2}\leq C_2(d+1)\brac{\gamma^2\brac{(\gamma+1)^2+ (2\gamma+\sqrt{2\gamma})^2}+c }\eta^5. 
\end{align*}

Next,  $\tilde{\theta}(t)-\bar{\theta}(t)=\int_{k\eta}^t \left(\tilde{v}_1(s)-\bar{v}_1(s)\right)ds$ leads to 
\begin{align*}
    \E{\abs{\tilde{\theta}(t)-\bar{\theta}(t)}^2}\leq C_2(d+1)\brac{\gamma^2\brac{(\gamma+1)^2+ (2\gamma+\sqrt{2\gamma})^2}+c }\eta^7. 
\end{align*} 

Moreover, $\tilde{v}_2(t)-\bar{v}_2(t)=\int_{k\eta}^t\left( -\gamma\brac{\tilde{v}_1(s)-\bar{v}_1(s) }+\gamma\brac{\hat{v}_3(s)-\tilde{v}_3(s)} \right)ds$ leads to 
\begin{align*}
  \E{\abs{\bar{v}_2(t)-\tilde{v}_2(t)}^2}&\leq C_2(d+1)\gamma^2\brac{\gamma^2\brac{(\gamma+1)^2+ (2\gamma+\sqrt{2\gamma})^2}+c }\eta^7\\
  &\qquad\qquad+C_2(d+1)\gamma^2\brac{(\gamma+1)^2+ (2\gamma+\sqrt{2\gamma})^2}\eta^7. 
\end{align*}

Finally, $ \tilde{v}_3(t)-\bar{v}_3(t)=\int_{k\eta}^t -\gamma e^{-\gamma s}\brac{ \tilde{v}_2(s)-\bar{v}_2(s)}ds$ implies
\begin{align*}
     \E{\abs{\bar{v}_3(t)-\tilde{v}_3(t)}^2}&\leq C_2(d+1)\gamma^4\brac{\gamma^2\brac{(\gamma+1)^2+ (2\gamma+\sqrt{2\gamma})^2}+c }\eta^9\\
  &\qquad\qquad+C_2(d+1)\gamma^4\brac{(\gamma+1)^2+ (2\gamma+\sqrt{2\gamma})^2}\eta^9. 
\end{align*}
This completes the proof.
\end{proof}
%%%%%%%%%%%%%%%%%%%%%%%%%%%%%%%%%%%%%%%%%%%%%%%%%%%%%%%%%%%%%%%%%%%%%%%%%%%%%%%%%%%%%%%%%%%%%%%%%%%%%%%%%%%%%%%%%%%%%%%%%%%%%%%%%%%%%%%%%%%%%%%%%%%%%%%%%%%%%%%%%%%%%%%%%%%%%%%%%%%%%%%%%%%%%%%%%%%%%%%%%%%%%%%%%%%%%%%%%%%%%%%%%%%%%%%%%%%%%%%%%%%%%%%%%%%%%%%%%%%%%%%%%%%%%%%%%%%%%%%%%%%%%%%%%%%%%%%%%%%%%%%%%%%%%%%%%%%%%%%%%%%%%%%%%%%%%%%%%%%%%%%%%%%%%%%%%%%
\section{Details of $P$-th Order Langevin Monte Carlo Algorithm}
\label{appendix_Pthorder}

The following is a generalized version of Lemma~\ref{lemma_explicitformofxbar}. 
\begin{lemma}
    \label{lemma_explicitform_allstages}

   Choose any positive integers $i$ and $j$ in $[1,P-1]$. Then 
   \begin{enumerate}[label=\arabic*.]
\item the auxiliary process $\{v^{\operatorname{st}_{ j}}_i(t),t\geq 0\}$ in Section~\ref{section_Pthorder} has the form
\begin{align}
\label{generalform_vi}
    &v^{\operatorname{st}_{ j}}_i(t)=\sum_{1\leq \ell\leq P-1}  v_\ell^{(k)}\mu^{\operatorname{st}_{ j}}_{i,\ell}(t)+\theta^{(k)}\mu^{\operatorname{st}_{ j}}_{i,P}(t)+\mathds{1}_{ \{i\geq P-j\}}\int_{k\eta}^t h^{\operatorname{st}_{ j}}_i(s,t) dB_s\nonumber\\
    &\hspace{20em}+\sum_{2\leq \ell\leq j}\int_{k\eta}^t \kappa^{\operatorname{st}_{ j}}_{i,\ell}(s,t)g^{\operatorname{st}_{ \ell}}(s)ds,
\end{align}
such that 
\begin{enumerate}[label=\alph*.]
    \item The kernel $h^{\operatorname{st}_{ j}}_i(s,t)$ is deterministic and has the form
    \begin{align}
    \label{generalformkernel}
     \qquad\qquad  \qquad     h^{\operatorname{st}_{ j}}_i(s,t)= \sum_{0\leq m\leq M_1}a_{1,m}e^{b_{1,m}(t-s)+c_{1,m}}(s-k\eta)^{d_{1,m}},
    \end{align}
    where $M_1$ is a positive integer; $d_{1,m}$'s are non-negative integers; $a_{1,m},b_{1,m},c_{1,m}$'s are rational functions in variables $k,\eta,\gamma,t$, i.e. they are ratios of multivariate polynomials in $k,\eta,\gamma,t$. \footnote{The coefficients $a_{1,m},b_{1,m},c_{1,m}$ and $d_{1,m}$ on the right hand side of \eqref{generalformkernel} depend on $i,j$; however we hide this dependence to lighten the notations. We do the same thing in Equation~\eqref{generalformmu} and Equation~\eqref{generalforkappa}.}
    \item $\mu^{\operatorname{st}_{ j}}_{i,\ell}(t)$ has the form 
    \begin{align}
    \label{generalformmu}
     \qquad\qquad  \qquad    \mu^{\operatorname{st}_{ j}}_{i,\ell}(t)= \sum_{0\leq m\leq M_2} a_{2,m}e^{b_{2,m}(t-k\eta)+c_{2,m}}(t-k\eta)^{d_{2,m}},
    \end{align}
    where $M_2$ is a positive integer; $a_{2,m},b_{2,m},c_{2,m}$'s are rational functions in variables $k,\eta,\gamma$; and $d_{2,m}$'s are non-negative integers. 
    \item  $\kappa^{\operatorname{st}_{ j}}_{i,\ell}(s,t)$ is deterministic and has the form
    \begin{align}
    \label{generalforkappa}
     \qquad\qquad  \qquad     \kappa^{\operatorname{st}_{ j}}_{i,\ell}(s,t)= \sum_{0\leq m\leq M_3}a_{3,m}e^{b_{3,m}(t-s)+c_{3,m}}d_{3,m}(s),
    \end{align}
    where $M_3$ is a positive integer; $d_{3,m}$'s are polynomial in $s$; $a_{3,m},b_{3,m},c_{3,m}$'s are rational functions in variables $k,\eta,\gamma,t$, i.e. they are ratios of multivariate polynomials in $k,\eta,\gamma,t$. 
\end{enumerate}
\item the auxiliary process $\{\theta^{\operatorname{st}_{ j}}(t),t\geq 0\}$ in Section~\ref{section_Pthorder} has the form
\begin{align}
\label{generalform_theta}
   & \theta^{\operatorname{st}_{ j}}(t)= \sum_{1\leq \ell\leq P-1}  v_\ell^{(k)}\mu^{\operatorname{st}_{ j}}_{P,\ell}(t)+\theta^{(k)}\mu^{\operatorname{st}_{ j}}_{P,P}(t)+\mathds{1}_{ \{j=P-1\}}\int_{k\eta}^t h^{\operatorname{st}_{ j}}_P(s,t) dB_s\nonumber\\
     &\hspace{20em}+\sum_{2\leq \ell\leq j}\int_{k\eta}^t \kappa^{\operatorname{st}_{ j}}_{P,\ell}(s,t)g^{\operatorname{st}_{ \ell}}(s)ds,
\end{align}
such that $\mu_{P,\ell}(t)$ has a similar form to~\eqref{generalformmu}, while  $h^{\operatorname{st}_{ j}}_P(s,t)$ and $\kappa^{\operatorname{st}_{ j}}_{P,\ell}(s,t)$ have similar forms to respectively~\eqref{generalformkernel} and~\eqref{generalforkappa}. 
\end{enumerate}
\end{lemma}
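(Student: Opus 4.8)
The statement is a structural claim: every auxiliary process $v^{\operatorname{st}_j}_i(t)$ and $\theta^{\operatorname{st}_j}(t)$ admits the closed form \eqref{generalform_vi}–\eqref{generalform_theta}, where the various kernels are built from a restricted class of functions — rational coefficients in $(k,\eta,\gamma,t)$ times $e^{b(t-s)+c}(s-k\eta)^d$ (or the analogous $(t-k\eta)$ version), with $d$ a nonnegative integer. The natural route is a \emph{double induction}: an outer induction on the stage index $j$, $1\le j\le P-1$, and for each fixed $j$ an inner induction on the component index $i$, running $i=P-1, P-2,\dots,1$ and then finally $\theta^{\operatorname{st}_j}$. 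The reason for this ordering is the triangular coupling in the scheme: within Stage $j$, the equation for $v^{\operatorname{st}_j}_{P-1}$ is a scalar Ornstein–Uhlenbeck equation driven by $v^{\operatorname{st}_j}_{P-2}$ (previously computed stage variable) and $dB_t$; the equation for $v^{\operatorname{st}_j}_n$ ($2\le n\le P-2$) is an integral of $v^{\operatorname{st}_j}_{n-1}$ and $v^{\operatorname{st}_{j-1}}_{n+1}$ (the latter known by the outer induction); the equation for $v^{\operatorname{st}_j}_1$ involves $g^{\operatorname{st}_j}$ and $v^{\operatorname{st}_{j-1}}_2$; and $\theta^{\operatorname{st}_j}$ is the time-integral of $v^{\operatorname{st}_j}_1$. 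Actually, reading the scheme more carefully, $v^{\operatorname{st}_j}_1$ is defined first and the chain runs $v^{\operatorname{st}_j}_1 \to \theta^{\operatorname{st}_j} \to v^{\operatorname{st}_j}_2 \to \cdots$, with $v^{\operatorname{st}_j}_{P-1}$ solved last via the OU integrating factor; I would follow exactly that causal order in the inner induction so that at each step the right-hand side is already known to be in the claimed class.

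The base of the outer induction is Stage $1$, where the explicit formulas \eqref{equation_thetahat} (for $P=4$) and their $P$-th order analogues in Appendix~\ref{appendix_Pthorder} already exhibit the required form: polynomials in $(t-k\eta)$, a single $e^{-\gamma(t-s)}$ kernel against $dB_s$ appearing only for $i=P-1$, and no $g^{\operatorname{st}_\ell}$ terms (consistent with the sum over $2\le\ell\le j$ being empty when $j=1$). The inductive step then amounts to checking that the class of kernels in \eqref{generalformkernel}–\eqref{generalforkappa} is \emph{closed} under the two operations that occur: (i) multiplication by the OU integrating factor $e^{-\gamma(t-s)}$ and (ii) iterated integration $\int_{k\eta}^t(\cdot)\,ds$, possibly after multiplying by $e^{\gamma s}$. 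For (i) closure is immediate since $e^{-\gamma(t-s)}e^{b(t-s)+c} = e^{(b-\gamma)(t-s)+c}$. For (ii), one uses the elementary antiderivative identity $\int e^{\beta s} s^d\,ds = e^{\beta s}\sum_{r=0}^d (-1)^{d-r}\frac{d!}{r!}\beta^{-(d-r+1)} s^r$ when $\beta\ne 0$, and $\int s^d\,ds = s^{d+1}/(d+1)$ when $\beta=0$; both outputs stay in the class, with the coefficients picking up extra rational dependence on $\gamma$ (hence on $(k,\eta,\gamma,t)$), the exponent $d$ changing by at most $+1$, and no new exponential rates appearing beyond integer multiples of $\gamma$. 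One also has to track the driving term $g^{\operatorname{st}_\ell}(s)$: when it enters the right-hand side of Stage $j$ (either directly in $v^{\operatorname{st}_j}_1$ as $-g^{\operatorname{st}_j}$, or through nested integrals inherited from the previous-stage variables), integrating it against an exponential kernel produces precisely a term of the form $\int_{k\eta}^t \kappa^{\operatorname{st}_j}_{i,\ell}(s,t) g^{\operatorname{st}_\ell}(s)\,ds$ with $\kappa$ of the shape \eqref{generalforkappa}; the range $2\le\ell\le j$ is respected because $g^{\operatorname{st}_\ell}$ with $\ell\le j$ is all that has been introduced by Stage $j$, and $g^{\operatorname{st}_1}$ is never defined. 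Finally, the indicator $\mathds{1}_{\{i\ge P-j\}}$ (resp.\ $\mathds{1}_{\{j=P-1\}}$ for $\theta$) is bookkeeping: the Brownian term $\sqrt{2\gamma}\,dB_t$ enters only the $v_{P-1}$ equation at every stage, and each successive integration propagates it one index downward, so after $j$ stages it has reached components $i\ge P-j$; this is exactly the pattern one reads off in Lemma~\ref{lemma_boundupdatedifference_4thorder} and is re-derived by the same closure argument applied to the stochastic-integral kernel $h$.

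The main obstacle is not any single computation but \emph{organizing the bookkeeping} so that the induction hypothesis is strong enough to close: one must carry along simultaneously the forms of $h$, $\mu$, and $\kappa$ for all components of the current stage, because the equation for $v^{\operatorname{st}_j}_n$ couples to both $v^{\operatorname{st}_j}_{n-1}$ (same stage) and $v^{\operatorname{st}_{j-1}}_{n+1}$ (previous stage), and a sloppy hypothesis about only one of these would not propagate. A secondary subtlety is verifying that the coefficients genuinely remain \emph{rational} in $(k,\eta,\gamma,t)$ and never acquire, say, a $\log$ or a non-integer power: this follows from the antiderivative identities above, but one must note that the potentially dangerous case $\beta=0$ (no exponential, pure power) is handled by the $s^{d+1}/(d+1)$ branch and still yields a rational (in fact polynomial) coefficient, and that repeated integration cannot create denominators vanishing for all $\gamma>0$. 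I would state the closure facts as a short self-contained sublemma (the two antiderivative identities plus the observation that products of exponentials add their rates), prove the Stage-$1$ base case by pointing to the displayed formulas at the start of Appendix~\ref{appendix_Pthorder}, and then run the two nested inductions, at each component invoking the sublemma once per nested integral in the variation-of-constants formula for that component. The verification that $d_{i,m}$ stays a nonnegative integer and that the number of terms $M_1,M_2,M_3$ stays finite is automatic from the same identities, since each integration multiplies the term count by at most $d+1$.
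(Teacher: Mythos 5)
Your proposal is correct and takes essentially the same route as the paper: induction on the stage index $j$ (Stage~$1$ as base case, then propagating), with the within-stage variables computed in the same causal order $v_1^{\operatorname{st}_j}\to\theta^{\operatorname{st}_j}\to v_2^{\operatorname{st}_j}\to\cdots\to v_{P-1}^{\operatorname{st}_j}$ and closure of the kernel class under exponential multiplication and iterated integration doing the work at each step. The only cosmetic difference is that the paper carries out Stage~$2$ explicitly before invoking the general induction (because that is where the $g^{\operatorname{st}_\ell}$ terms and a stochastic integral at index $P-2$ first appear, so it serves as the archetype for the inductive step), whereas you fold Stage~$2$ into the general step after isolating the antiderivative identities into a sublemma; your framing makes the closure facts more self-contained, but the underlying argument is the same.
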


\begin{proof}

We will employ an induction argument.

\textbf{Step 1:} Stage $j=1$. 

We will verify that $ v^{\operatorname{st}_{ 1}}_{n}(t),1\leq n\leq P-1$ and $ \theta^{\operatorname{st}_{ 1}}(t)$ have respectively the general forms~\eqref{generalform_vi} and~\eqref{generalform_theta}.

We have $v^{\operatorname{st}_{ 1}}_1(t) =v_1^{(k)}$ so that
\begin{align*}
    \theta^{\operatorname{st}_{ 1}}_1(t) =v^{(k)}_1(t-k\eta),
\end{align*}
and 
\begin{align}
\label{v2stage1}
   v^{\operatorname{st}_{ 1}}_2(t) &=v_2^{(k)}-\gamma\int_{k\eta}^t v^{\operatorname{st}_{ 1}}_1(s)ds+\gamma v^{(k)}_3(t-k\eta)=v_2^{(k)}-\gamma (t-k\eta)v_1^{(k)}+\gamma v^{(k)}_3(t-k\eta). 
\end{align}
Proceed similarly for increasing $n$ to get for $3\leq n\leq P-2$, 
\begin{align}
\label{v_nformula_stage1}
    v^{\operatorname{st}_{ 1}}_n(t) &=v_n^{(k)}-\gamma\int_{k\eta}^t v^{\operatorname{st}_{ 1}}_{n-1}(s)ds+\gamma v^{(k)}_{n-1}(t-k\eta)=\sum_{\ell=1}^{P-1} a_\ell (t-k\eta)^{d_\ell} v_\ell^{(k)},
\end{align}
where $a_\ell$'s are polynomials in $\gamma$ and $d_\ell$'s are non-negative integers.  If we set $\mu^{\operatorname{st}_{ 1}}_{n,\ell}(t):=a_\ell (t-k\eta)^{d_\ell}$ then this coefficient is of the form described in the statement of the lemma. 

Moreover, the formula \eqref{v_nformula_stage1} in the case $n=P-2$ implies
\begin{align*}
  &  v^{\operatorname{st}_{ 1}}_{P-1}(t) =e^{-\gamma(t-k\eta)}v^{(k)}_{P-1}-\gamma\int_{k\eta}^t e^{-\gamma(t-s)} v^{\operatorname{st}_{ 1}}_{P-2}(s)ds+\sqrt{2\gamma}\int_{k\eta}^t e^{-\gamma(t-s)}dB_s\\
 &=e^{-\gamma(t-k\eta)}v^{(k)}_{P-1} +\sqrt{2\gamma}\int_{k\eta}^t e^{-\gamma(t-s)}dB_s- \sum_{\ell=1}^{P-1} v_\ell^{(k)}\gamma a_{\ell}\int_{k\eta}^t e^{-\gamma(t-s)}  (s-k\eta)^{d_{\ell}}ds . 
\end{align*}
Via integration by parts, it is easy to see 
\begin{align}
\label{int_productexpandpoly}
    \int_{k\eta}^t e^{-\gamma(t-s)}  (s-k\eta)^{d_{\ell}}ds=\sum_j e^{-\gamma(t-k\eta)}a_{\ell,j}(t-k\eta)^{d_{\ell,j}},
\end{align}
where $d_{\ell,j}$'s are non-negative integers and $a_{\ell,j}$'s are polynomials in $\gamma$. Setting 
\begin{align*}
\mu^{\operatorname{st}_{ 1}}_{P-1,\ell}(t):=\gamma a_{\ell}\sum_j e^{-\gamma(t-k\eta)}a_{\ell,j}(t-k\eta)^{d_{\ell,j}},
\end{align*}
for $1\leq \ell\leq P-2$ and 
\begin{align*}
\mu^{\operatorname{st}_{ 1}}_{P-1,P-1}(t):=\gamma a_{P-1}\sum_j e^{-\gamma(t-k\eta)}a_{P-1,j}(t-k\eta)^{d_{P-1,j}}+e^{-\gamma(t-k\eta)}, 
\end{align*}
we arrive at 
\begin{align}
\label{vP-1stage1}
    v^{\operatorname{st}_{ 1}}_{P-1}(t) =\sqrt{2\gamma}\int_{k\eta}^t e^{-\gamma(t-s)}dB_s+\sum_{\ell=1}^{P-1} v_\ell^{(k)}\mu^{\operatorname{st}_{ 1}}_{P-1,\ell}(t). 
\end{align}

Finally, notice that among $\theta^{\operatorname{st}_{ 1}}_1(t)$ and $ v^{\operatorname{st}_{ 1}}_n(t), 1\leq n\leq P-1$, the It\^{o} integral only appears in $v^{\operatorname{st}_{ 1}}_{P-1}(t)$, which explains the indicator functions in \eqref{generalform_vi} and \eqref{generalform_theta} when $j=1$.

\textbf{Step 2:} Stage $j=2$. 

We will verify that $ v^{\operatorname{st}_{ 2}}_{n}(t),1\leq n\leq P-1$ and $ \theta^{\operatorname{st}_{ 2}}(t)$ have respectively the general forms~\eqref{generalform_vi} and~\eqref{generalform_theta}. 

We have based on~\eqref{v2stage1} that 
\begin{align}
\label{v1st2}
      v^{\operatorname{st}_{ 2}}_{1}(t) &=v^{(k)}_1-\int_{k\eta}^t g^{\operatorname{st}_{ 2}}(s) ds+\gamma\int_{k\eta}^t v^{\operatorname{st}_{ 1}}_{2}(s)ds\nonumber\\
      &=v^{(k)}_1-\int_{k\eta}^t g^{\operatorname{st}_{ 2}}(s) ds-\gamma^2 v^{(k)}_1\frac{(t-k\eta)^2}{2}+\gamma^2 v^{(k)}_3\frac{(t-k\eta)^2}{2}.
\end{align}
This implies 
\begin{align*}
     \theta^{\operatorname{st}_{ 2}}(t)&=\theta^{(k)}-\int_{k\eta}^tv_1^{\operatorname{st}_{ 2}}(s)ds
    \\&=\theta^{(k)}-\int_{k\eta}^t\int_{k\eta}^{s_2}g^{\operatorname{st}_{ 2}}(s_1)ds_1ds_2-\gamma^3v^{(k)}_1\frac{(t-k\eta)^3}{6}+\gamma^2v^{(k)}_3\frac{(t-k\eta)^3}{6},
\end{align*}
noting that $\int_{k\eta}^t\int_{k\eta}^{s_2}g^{\operatorname{st}_{ 2}}(s_1)ds_1ds_2=\int_{k\eta}^t (t-s_1)g^{\operatorname{st}_{ 2}}(s_1)ds_1$. 
Per the previous calculations \eqref{v1st2} and \eqref{v_nformula_stage1} in the case $n=3$, we can further write
\begin{align*}
   & v^{\operatorname{st}_{ 2}}_{2}(t)=v_2^{(k)}-\gamma\int_{k\eta}^t v^{\operatorname{st}_{ 2}}_{1}(s)ds+\gamma\int_{k\eta}^t v^{\operatorname{st}_{ 1}}_{3}(s)ds\\
    &=v^{(k)}_2+\bigg(-\gamma\theta^{(k)}+\gamma\int_{k\eta}^t (t-s_1)g^{\operatorname{st}_{ 2}}(s_1)ds_1+\gamma^4v_1^{(k)}\frac{(t-k\eta)^4}{4!}-\gamma^3v_3^{(k)}\frac{(t-k\eta)^4}{4!} \bigg)\\
    &\qquad\qquad\qquad+\bigg(\gamma \sum_{i=1}^{P-1}\frac{(t-k\eta)^{a_{3,i}+1}}{a_{3,i}+1}v_i^{(k)}b_{3,i}(\gamma) \bigg),
\end{align*}
which is of the form described in the statement of the lemma. Proceed similarly for increasing $n,3\leq n\leq P-3$ to get
\begin{align}
\label{middlenstage2}
&v^{\operatorname{st}_{ 2}}_{n}(t)=v_n^{(k)}-\gamma\int_{k\eta}^t v_{n-1}^{\operatorname{st}_{ 2}}(s)ds+\gamma\int_{k\eta}^t v^{\operatorname{st}_{ 1}}_{n+1}(s)ds\nonumber\\
&\quad= \sum_{1\leq \ell\leq P-1}  v_\ell^{(k)}\mu^{\operatorname{st}_{ 2}}_{i,\ell}(t)+\theta^{(k)}\mu^{\operatorname{st}_{ 2}}_{i,P}(t)+e_n\int_{k\eta}^t\int_{k\eta}^{s_n}\cdots\int_{k\eta}^{s_2}g^{\operatorname{st}_{ 2}}(s_1) ds_1\ldots ds_{n-1}ds_n,
\end{align}
where $e_n$ are rational functions in variables $k,\eta,\gamma$. Moreover, the last term can be simplified as  
\begin{align*}
&e_n\int_{k\eta}^t\int_{k\eta}^{s_n}\cdots\int_{k\eta}^{s_2}g^{\operatorname{st}_{ 2}}(s_1) ds_1\ldots ds_{n-1}ds_n\\
    &=e_n\int_{k\eta}^t \int_{s_1}^t\int_{s_1}^{s_n}\ldots\int_{s_1}^{s_3}g^{\operatorname{st}_{ 2}}(s_1) ds_2\ldots ds_nds_1 = \int_{k\eta}^t p(s_1)g^{\operatorname{st}_{ 2}}(s_1)ds_1,
\end{align*}
where $p$ is a polynomial in $s_1$. 

Next, we have
\begin{align*}
   v^{\operatorname{st}_{ 2}}_{P-2}(t)=v^{(k)}_{P-2}-\gamma\int_{k\eta}^t  v^{\operatorname{st}_{ 2}}_{P-3}(s)ds+\gamma\int_{k\eta}^t v^{\operatorname{st}_{ 1}}_{P-1}(s)ds .  
\end{align*}
We will only expand the term $\int_{k\eta}^t  v^{\operatorname{st}_{ 2}}_{P-3}(s)ds$ using \eqref{middlenstage2} when $n=P-3$. The term $\int_{k\eta}^t v^{\operatorname{st}_{ 1}}_{P-1}(s)ds$ can be handled in similar fashion using \eqref{vP-1stage1}: 
\begin{align*}
   & \int_{k\eta}^t  v^{\operatorname{st}_{ 2}}_{P-3}(s)ds=v_{P-3}^{(k)}(t-k\eta) + \sum_{1\leq \ell\leq P-1}  v_\ell^{(k)}\int_{k\eta}^t\mu^{\operatorname{st}_{ 2}}_{i,\ell}(s)ds+\theta^{(k)}\int_{k\eta}^t\mu^{\operatorname{st}_{ 2}}_{i,P}(s)ds\\
   &\hspace{20em}+e_n\int_{s_2=k\eta}^t \int_{s_1=k\eta}^{s_2} p(s_1)g^{\operatorname{st}_{ 2}}(s_1)ds_1ds_2, 
\end{align*}
where we can further compute that 
\begin{align*}
\int_{s_2=k\eta}^t \int_{s_1=k\eta}^{s_2} p(s_1)g^{\operatorname{st}_{ 2}}(s_1)ds_1ds_2
&=e_n\int{k\eta}^t\int_{s_1}^tp(s_1)g^{\operatorname{st}_{ 2}}(s_1)ds_2ds_1
\\
&=e_n\int_{k\eta}^t (t-s_1)p(s_1)g^{\operatorname{st}_{ 2}}(s_1)ds_1. 
\end{align*}
Hence, we arrive at 
\begin{align*}
  &  v^{\operatorname{st}_{ 2}}_{P-2}(t)\\
  &\quad=\sum_{1\leq \ell\leq P-1}  v_\ell^{(k)}\mu^{\operatorname{st}_{ 2}}_{P-2,\ell}(t)+\theta^{(k)}\mu^{\operatorname{st}_{ 2}}_{P-2,P}(t)+\int_{k\eta}^t h^{\operatorname{st}_{ 2}}_{P-2}(s,t) dB_s+\int_{k\eta}^t \kappa^{\operatorname{st}_{ 2}}_{P-2}(s,t)g^{\operatorname{st}_{ 2}}(s)ds
\end{align*}
as described in \eqref{generalform_vi}.

Finally, we have 
\begin{align*}
    v^{\operatorname{st}_{ 2}}_{P-1}(t) =e^{-\gamma(t-k\eta)}v^{(k)}_{P-1}-\gamma\int_{k\eta}^t e^{-\gamma(t-s)} v^{\operatorname{st}_{ 2}}_{P-2}(s)ds+\sqrt{2\gamma}\int_{k\eta}^t e^{-\gamma(t-s)}dB_s. 
\end{align*}
The second term on the right hand side can be expanded by plugging in the formula for $v^{\operatorname{st}_{ 2}}_{P-2}(s)$, then applying \eqref{int_productexpandpoly} and the fact that $\int_{k\eta}^t \int_{k\eta}^{s_3}\int_{k\eta}^{s_2}e^{-\gamma(t-s_3)}e^{-\gamma(s_2-s_1)}dB_{s_1}ds_2ds_3=\int_{k\eta}^t\int_{s_1}^t\int_{s_1}^{s_3}e^{-\gamma(t-s_3)}e^{-\gamma(s_2-s_1)}ds_2ds_3dB_{s_1}=\int_{k\eta}^t\brac{\frac{1}{\gamma^2}-\frac{e^{-\gamma(t-s_1)}}{\gamma^2}+\frac{s_1e^{-\gamma(t-s_1)}}{\gamma}-\frac{te^{-\gamma(t-s_1)}}{\gamma}}dB_{s_1}$, 
and also that  $\int_{k\eta}^{t} \int_{k\eta}^{s_2}e^{-\gamma(t-s_2)}p(s_1)g^{\operatorname{st}_{ 2}}(s_1)ds_1ds_2=\int_{k\eta}^t \int_{s_1}^t e^{-\gamma(t-s_2)}p(s_1)g^{\operatorname{st}_{ 2}}(s_1)ds_2ds_1=\int_{k\eta}^t (1/\gamma) e^{-\gamma(t-s_1)}p(s_1)g^{\operatorname{st}_{ 2}}(s_1) ds_1$. Consequently, $v^{\operatorname{st}_{ 2}}_{P-1}(t)$ can be written as \eqref{generalform_vi}.

Finally, we note that among $\theta^{\operatorname{st}_{ 2}}_1(t)$ and $ v^{\operatorname{st}_{ 2}}_n(t), 1\leq n\leq P-1$, It\^{o} integrals only appear in the formulas of  $ v^{\operatorname{st}_{ 2}}_{P-1}(t)$ and $ v^{\operatorname{st}_{ 2}}_{P-2}(t)$, which explains  the indicator functions in \eqref{generalform_vi} and \eqref{generalform_theta} when $j=2$.  This completes the proof for Stage $j=2$.

\textbf{Step 3:} Induction argument. 

As the induction hypothesis, we assume the statement of the lemma holds for Stage $j$ and verify Stage $j+1$. The proof is similar to \textbf{Step 2} above (proceeding from Stage $1$ to Stage $2$) and is therefore omitted. 
\end{proof}

\begin{lemma}
    \label{lemma_meanandcovariance_Pthorder}
 $\E{x^{(k+1)}|x^{(k)}}$ follows a multivariate normal distribution in $\R^{Pd}$ whose mean vector and covariance matrix can be determined from Lemma~\ref{lemma_explicitform_allstages}. 
\end{lemma}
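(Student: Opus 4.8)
The plan is to mirror the proof of Lemma~\ref{lemma_meanandcovariance} (the fourth-order case), upgraded to general $P$ by invoking the structural description of the iterates supplied by Lemma~\ref{lemma_explicitform_allstages}. The key observation is that, by construction, the polynomials $g^{\operatorname{st}_\ell}(s)$ are deterministic \emph{given} $x^{(k)}$: indeed $g^{\operatorname{st}_2}(s)=P_{\alpha-1}(\theta^{\operatorname{st}_1}(s))$ and $\theta^{\operatorname{st}_1}(s)=\theta^{(k)}+(s-k\eta)v_1^{(k)}$ is an affine (hence deterministic-given-$x^{(k)}$) function of $s$, and inductively each $\theta^{\operatorname{st}_{\ell-1}}(s)$ entering $g^{\operatorname{st}_\ell}$ is, by the part of Lemma~\ref{lemma_explicitform_allstages} that concerns stages $<P-1$, a deterministic function of $s$ and $x^{(k)}$ (the It\^o-integral terms only appear at stages $j=P-1$ and, for the $v$-components, $j\geq P-j$; crucially the $\theta^{\operatorname{st}_j}$ for $j\le P-2$ carry no stochastic integral, so $g^{\operatorname{st}_{j+1}}$ is deterministic for $j+1\le P-1$). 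Therefore, conditionally on $x^{(k)}$, every term on the right-hand side of \eqref{generalform_vi} and \eqref{generalform_theta} with $j=P-1$ is either a deterministic constant or a Wiener integral $\int_{k\eta}^{(k+1)\eta} h^{\operatorname{st}_{P-1}}_i(s,(k+1)\eta)\,dB_s$ of a deterministic kernel.

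The steps, in order, are as follows. First I would specialize Lemma~\ref{lemma_explicitform_allstages} to $j=P-1$ and collect, for each component $i\in\{1,\dots,P-1\}$ and for $\theta$, the decomposition $x^{(k+1)}_i = (\text{deterministic function of }x^{(k)}) + \int_{k\eta}^{(k+1)\eta} \widetilde h_i(s)\,dB_s$, where $\widetilde h_i(s):=h^{\operatorname{st}_{P-1}}_i(s,(k+1)\eta)$ is the deterministic kernel from \eqref{generalformkernel} (and the analogous $\widetilde h_P$ for $\theta$ from \eqref{generalform_theta}), absorbing the deterministic $\sum_{2\le\ell\le P-1}\int \kappa^{\operatorname{st}_{P-1}}_{i,\ell}(s,(k+1)\eta)g^{\operatorname{st}_\ell}(s)\,ds$ into the deterministic part. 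Second, since $(B_t)$ is a Brownian motion, the vector of Wiener integrals $\bigl(\int_{k\eta}^{(k+1)\eta}\widetilde h_i(s)\,dB_s\bigr)_{i}$ is a centered Gaussian vector in $\mathbb{R}^{Pd}$ (each scalar component shares the same kernel across the $d$ coordinates because $dB_s$ is a $(P-1)d$-dimensional — here effectively the last-block $d$-dimensional — Brownian increment, exactly as in the fourth-order case). Hence $\E{x^{(k+1)}\mid x^{(k)}}$ is an affine image of a Gaussian, i.e.\ multivariate normal. Third, I would read off the mean vector $\mathbf{M}(x^{(k)})$ as the deterministic part (a linear combination of $x^{(k)}$ with the $\mu^{\operatorname{st}_{P-1}}_{i,\ell}((k+1)\eta)$ coefficients plus the integral-of-$g$ terms), and the covariance by It\^o isometry: $\boldsymbol{\Sigma}=(\sigma_{ij}\cdot I_d)_{0\le i,j\le P-1}$ with $\sigma_{ij}=\int_{k\eta}^{(k+1)\eta}\widetilde h_i(s)\,\widetilde h_j(s)\,ds$ (and $\widetilde h_i\equiv 0$ whenever the indicator $\mathds{1}_{\{i\ge P-j\}}$ in \eqref{generalform_vi} forces it, $j=P-1$, so all components are covered). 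This matches the computation $\E{f_i(f_j)^\top}=\sigma_{ij}\cdot I_d$ in the proof of Lemma~\ref{lemma_meanandcovariance} and yields closed forms after evaluating the elementary integrals of products of exponentials and monomials via \eqref{generalformkernel}.

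The main obstacle is purely bookkeeping rather than conceptual: one must make precise, via the induction already carried out in Lemma~\ref{lemma_explicitform_allstages}, that the stochastic integrals appearing in $x^{(k+1)}=x^{\operatorname{st}_{P-1}}((k+1)\eta)$ are all against \emph{deterministic} kernels, which hinges on the fact that $g^{\operatorname{st}_\ell}(s)$ for $\ell\le P-1$ depends only on $\theta^{\operatorname{st}_{\ell-1}}(s)$ with $\ell-1\le P-2$, and those $\theta^{\operatorname{st}_{\ell-1}}$ have no It\^o term (the indicator $\mathds{1}_{\{j=P-1\}}$ in \eqref{generalform_theta}). Once that is spelled out, Gaussianity and the isometry formulas follow immediately, so the proof is short; I would keep it to a few lines, pointing to Lemma~\ref{lemma_explicitform_allstages} for the structural input and to the proof of Lemma~\ref{lemma_meanandcovariance} for the identical covariance computation. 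I expect to write essentially: ``By Lemma~\ref{lemma_explicitform_allstages} with $j=P-1$, each component of $x^{(k+1)}$ is a deterministic function of $x^{(k)}$ plus a Wiener integral of a deterministic kernel; the conclusion then follows as in the proof of Lemma~\ref{lemma_meanandcovariance}.''
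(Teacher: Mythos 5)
Your proposal is correct and is essentially the same argument the paper gives, just fleshed out: the paper's proof is a one-sentence appeal to Lemma~\ref{lemma_explicitform_allstages} (for the structural decomposition of $x^{(k+1)}=x^{\operatorname{st}_{P-1}}((k+1)\eta)$ into a deterministic-in-$x^{(k)}$ part plus Wiener integrals against deterministic kernels) combined with the covariance computation $\E{f_i f_j^{\top}}=\sigma_{ij}I_d$ from the proof of Lemma~\ref{lemma_meanandcovariance}. You have correctly identified the crux that makes this work, namely that each $g^{\operatorname{st}_\ell}$ for $\ell\le P-1$ is built from $\theta^{\operatorname{st}_{\ell-1}}$ with $\ell-1\le P-2$, whose It\^o term is killed by the indicator $\mathds{1}_{\{j=P-1\}}$ in \eqref{generalform_theta}, so every stochastic integral in the final stage has a deterministic kernel and the conditional law is Gaussian.
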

\begin{proof}
   Lemma~\ref{lemma_explicitform_allstages} provides us with the formulas for the components $x^{(k+1)}=x^{\operatorname{st}_{ P-1}}((k+1)\eta)$. Based on those formulas, we can see that $\E{x^{(k+1)}|x^{(k)}}$ follows a multivariate normal distribution in $\R^{Pd}$. From there, calculating the mean and covariance is straightforward and is the same as the proof of Lemma~\ref{lemma_meanandcovariance}.  
\end{proof}

%%%%%%%%%%%%%%%%%%%%%%%%%%%%%%%%%%%%%%%%%%%%%%%%%%%%%%%%%%%%

The following is a general version of Lemma~\ref{lemma_decomposexk+1}.

\begin{lemma}
\label{lemma_decomposexk+1_Pthorder}
Recall the unique minimizer $\theta^*$ of $U$ and the $Pd\times Pd$ Jacobian matrix 
 \begin{align*}
    J_b(\theta^*, 0,\ldots,0)=\begin{pmatrix}
        0_d & I_d & 0_d &\cdots &\cdots &\cdots &\cdots &\cdots&0_d\\
        -\nabla U^2(\theta^*)I_d &0_d &\gamma I_d & 0_d &\cdots &\cdots&\cdots&\cdots& 0_d\\
        0_d & -\gamma I_d &0_d &\gamma I_d &0_d &\cdots &\cdots&\cdots &0_d\\
         0_d & 0_d& -\gamma I_d &0_d &\gamma I_d &0_d &\cdots &\cdots& 0_d\\
          0_d & 0_d& 0_d& -\gamma I_d &0_d &\gamma I_d &0_d &\cdots & 0_d\\
         \vdots &\ddots &\ddots&\ddots&\ddots&\ddots&\ddots&\ddots&\vdots\\
         0_d&\cdots &\cdots&\cdots&\cdots&\cdots&-\gamma I_d& 0_d &\gamma I_d\\
          0_d&\cdots &\cdots&\cdots&\cdots&\cdots&0_d&-\gamma I_d &-\gamma I_d
    \end{pmatrix}.
    \end{align*}
    Then it holds for Stage $j,1\leq j\leq P-1$ that 
\begin{align}
\label{decompose_allstages}
    &x^{\operatorname{st}_{ j}}(t) -\brac{\theta^*,0,\ldots,0}=\brac{x^{(k)}-\brac{\theta^*,0,\ldots,0}}+(t-k\eta) J_b(\theta^*, 0,\ldots,0)\nonumber\\
    &\qquad\qquad\cdot\brac{x^{(k)}-\brac{\theta^*,0,\ldots,0}}+R(t)\brac{x^{(k)}-\brac{\theta^*,0,\ldots,0}}+F_k(t),
\end{align}
where $R(t)$ is a $Pd\times Pd$ matrix with  $\abs{R_{ij}}(t)\leq C(t-k\eta)^2$, $1\leq i,j\leq 4d$ and $C$ is a constant that depends only on $\gamma,P$. Moreover, $F_k(t)$ is the $Pd$-dimensional vector $\brac{f_P(t) \quad f_1(t)\quad f_2(t)\quad f_3(t)\quad \cdots \quad f_{P-1}(t)}^{\top}$ where for each $i$, $f_i(t):=\int_{k\eta}^t h^{\operatorname{st}_{j}}_i(s,t) dB_s$ is the $d$-dimensional It\^{o} integral defined in Lemma~\ref{lemma_explicitform_allstages}.

    Consequently, we have 
\begin{align}
\label{decomposexk+1_Pthorder}
    &x^{(k+1)}-\brac{\theta^*,0,\ldots,0}=\brac{x^{(k)}-\brac{\theta^*,0,\ldots,0}}+\eta J_b(\theta^*, 0,\ldots,0)\brac{x^{(k)}-\brac{\theta^*,0,\ldots,0}}\nonumber\\
    &\qquad\qquad\qquad\qquad\qquad\qquad+R((k+1)\eta)\brac{x^{(k)}-\brac{\theta^*,0,\ldots,0}}+F_k((k+1)\eta). 
\end{align}
\end{lemma}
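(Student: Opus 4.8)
The plan is to mirror the structure of the proof of Lemma~\ref{lemma_decomposexk+1} but carried out uniformly in $P$, leveraging the structural description of the splitting scheme provided by Lemma~\ref{lemma_explicitform_allstages}. Without loss of generality we assume $\theta^*=0$ and write $J_b(0):=J_b(\theta^*,0,\ldots,0)$. The key observation is that Lemma~\ref{lemma_explicitform_allstages} expresses every component $v_i^{\operatorname{st}_j}(t)$ and $\theta^{\operatorname{st}_j}(t)$ as a sum of three types of terms: (i) linear combinations $\sum_\ell v_\ell^{(k)}\mu_{i,\ell}^{\operatorname{st}_j}(t)+\theta^{(k)}\mu_{i,P}^{\operatorname{st}_j}(t)$ with the $\mu$'s of the form \eqref{generalformmu}, (ii) the It\^o integral terms $\mathds{1}_{\{i\geq P-j\}}\int_{k\eta}^t h_i^{\operatorname{st}_j}(s,t)\,dB_s$, and (iii) the integrals $\sum_{2\le \ell\le j}\int_{k\eta}^t\kappa_{i,\ell}^{\operatorname{st}_j}(s,t)g^{\operatorname{st}_\ell}(s)\,ds$ involving the Taylor-polynomial approximations. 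The proof amounts to Taylor-expanding each of these three pieces in powers of $(t-k\eta)$, keeping the zeroth- and first-order terms explicitly and showing everything else is $O((t-k\eta)^2)$ with a constant depending only on $\gamma,P$.

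First I would handle the $\mu$-coefficients. Since each $\mu_{i,\ell}^{\operatorname{st}_j}(t)$ has the form \eqref{generalformmu}, i.e.\ a finite sum of $a\,e^{b(t-k\eta)+c}(t-k\eta)^d$ with $a,b,c$ rational in $k,\eta,\gamma$ and $d$ a non-negative integer, one expands $e^{b(t-k\eta)}=1+b(t-k\eta)+O((t-k\eta)^2)$ and collects terms. A direct but routine induction on the stage index $j$ (following the recursions in Step~1 and Step~2 of the proof of Lemma~\ref{lemma_explicitform_allstages}) shows that the resulting zeroth-order coefficient matrix is the identity and the first-order coefficient matrix is exactly $J_b(0)$; intuitively this is because at first order in the stepsize the splitting scheme agrees with the Euler discretization of \eqref{equation_Pthorderlangevin} linearized at $\theta^*$, and the nonlinear term contributes only through $\nabla U(\theta^{(k)})=\nabla^2U(\theta^*)(\theta^{(k)}-\theta^*)+O(|\theta^{(k)}-\theta^*|^2)$. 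The precise computation is the same one already carried out for $P=4$ in Lemma~\ref{lemma_decomposexk+1}; for general $P$ one observes that the tridiagonal structure of $B$ (hence of $J_b(0)$) propagates through the stage recursions so that the first-order term is $J_b(0)$ row by row. All higher-order contributions are lumped into the matrix $R(t)$, and since the $a,b,c$ are fixed rational functions of $\gamma$ (the $k,\eta$ dependence being benign, entering only through $(t-k\eta)$) we get $|R_{ij}(t)|\le C(t-k\eta)^2$ with $C=C(\gamma,P)$.

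Next I would dispose of the $g^{\operatorname{st}_\ell}$-integral terms. By Lemma~\ref{lemma_explicitform_allstages}, each such term $\int_{k\eta}^t\kappa_{i,\ell}^{\operatorname{st}_j}(s,t)g^{\operatorname{st}_\ell}(s)\,ds$ is a single time-integral of a bounded kernel against $g^{\operatorname{st}_\ell}$, which is itself bounded by Lemma~\ref{lemma_momentbound}-type moment bounds (or simply by continuity on the compact interval); hence each of these terms is $O((t-k\eta)^2)$ \emph{except} for the very first one, $\ell=2$, $i=1$, namely $-\int_{k\eta}^t g^{\operatorname{st}_2}(s)\,ds = -\int_{k\eta}^t P_{\alpha-1}(\theta^{\operatorname{st}_1}(s))\,ds$. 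For this one we use $\theta^{\operatorname{st}_1}(s)=\theta^{(k)}+(s-k\eta)v_1^{(k)}$ together with $\nabla U(\theta^*)=\nabla U(0)=0$ and the fact that $P_{\alpha-1}$ agrees with $\nabla U$ to high order: $-\int_{k\eta}^t P_{\alpha-1}(\theta^{\operatorname{st}_1}(s))\,ds = -(t-k\eta)\nabla U(\theta^{(k)})+O((t-k\eta)^2) = (t-k\eta)\nabla^2 U(\theta^*)(\theta^{(k)}-\theta^*)+O((t-k\eta)^2)$, which is precisely the contribution that turns the $(2,1)$-block of the first-order matrix into $-\nabla^2U(\theta^*)$, i.e.\ matches $J_b(0)$. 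The remaining $g$-integral terms and the quadratic remainders are absorbed into $R(t)$, consistently with the bound $|R_{ij}(t)|\le C(t-k\eta)^2$. Finally the It\^o integrals $\int_{k\eta}^t h_i^{\operatorname{st}_j}(s,t)\,dB_s$ are exactly the entries of $F_k(t)$ by definition, with the indicator $\mathds{1}_{\{i\ge P-j\}}$ in \eqref{generalform_vi}, \eqref{generalform_theta} explaining which of the $f_i$ are nonzero at a given stage (and all of them being nonzero at the final stage $j=P-1$, which is why $F_k((k+1)\eta)=(f_P,f_1,\ldots,f_{P-1})^\top$ has no missing entries). This proves \eqref{decompose_allstages}; specializing to $j=P-1$ and $t=(k+1)\eta$ and recalling $x^{(k+1)}=x^{\operatorname{st}_{P-1}}((k+1)\eta)$ gives \eqref{decomposexk+1_Pthorder}.

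The main obstacle, and the only genuinely non-routine part, is the bookkeeping in the induction of the previous paragraphs: one must verify that when passing from Stage $j$ to Stage $j+1$ the first-order-in-$(t-k\eta)$ coefficients assemble exactly into the tridiagonal matrix $J_b(0)$ rather than into some perturbation of it, and that no first-order contribution is lost among the $O((t-k\eta)^2)$ terms. This is conceptually the same computation as in Lemma~\ref{lemma_decomposexk+1} for $P=4$, so I would present it by explaining the inductive step in words — the new variable $v_1^{\operatorname{st}_{j+1}}$ picks up $-\int g^{\operatorname{st}_{j+1}}+\gamma\int v_2^{\operatorname{st}_j}$, giving at first order $-\nabla^2U(\theta^*)\theta^{(k)}+\gamma v_2^{(k)}$; each $v_n^{\operatorname{st}_{j+1}}$ picks up $-\gamma\int v_{n-1}^{\operatorname{st}_{j+1}}+\gamma\int v_{n+1}^{\operatorname{st}_j}$, giving at first order $-\gamma v_{n-1}^{(k)}+\gamma v_{n+1}^{(k)}$; and $v_{P-1}^{\operatorname{st}_{j+1}}$ contributes $-\gamma v_{P-2}^{(k)}-\gamma v_{P-1}^{(k)}$ — and then invoke Lemma~\ref{lemma_explicitform_allstages} to guarantee that the remainders have the claimed polynomial-in-$\gamma$, quadratic-in-$(t-k\eta)$ structure. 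Everything else (It\^o isometry is not even needed here, only the definition of $F_k$) is immediate.
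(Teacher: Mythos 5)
Your proposal follows the same route as the paper's own proof: an induction on the stage index $j$, using the explicit splitting recursions (as catalogued by Lemma~\ref{lemma_explicitform_allstages}) to match the zeroth- and first-order coefficients in $(t-k\eta)$ against the identity and the linearized Jacobian $J_b(\theta^*,0,\ldots,0)$ — with the $(2,1)$-block supplied by the single $g$-integral on $v_1$ via $\nabla U(\theta^*)=0$ — and absorbing all quadratic-and-higher contributions into $R(t)$ while the It\^{o} integrals are, by definition, the entries of $F_k(t)$. One small slip worth fixing: in the chain $-(t-k\eta)\nabla U(\theta^{(k)})=(t-k\eta)\nabla^2 U(\theta^*)(\theta^{(k)}-\theta^*)+O((t-k\eta)^2)$ the right-hand side should carry a minus sign, $-(t-k\eta)\nabla^2 U(\theta^*)(\theta^{(k)}-\theta^*)$, although the conclusion you draw (that the $(2,1)$-block of the first-order matrix is $-\nabla^2 U(\theta^*)$, matching $J_b$) is the correct one.
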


\begin{proof}
Since $x^{(k+1)}=x^{\operatorname{st}_{ P-1}}((k+1)\eta) $, it is sufficient to prove \eqref{decompose_allstages}. Without loss of generality, we assume the unique minimizer of $U$ is $\theta^*=0$. The proof follows an induction argument.

\textbf{Step 1: The Base Case $j=2$.}

Per the proof of Lemma~\ref{lemma_explicitform_allstages}, we can deduce that 
\begin{align*}
   & v_1^{\operatorname{st}_{ 2}}(t)=v_1^{(k)}-\int_{k\eta}^t g^{\operatorname{st}_{ 2}}(s)ds+\gamma\int_{k\eta}^t v^{\operatorname{st}_{ 1}}_2(s)ds\\
    &=v_1^{(k)}-(t-k\eta)\nabla^2U(0)\theta^{(k)}+(t-k\eta)v_2^{(k)}+O\brac{(t-k\eta)^2}\sum_{i\neq 1,2}v_i^{(k)}+\int_{k\eta}^t h^{\operatorname{st}_{2}}_1(s,t) dB_s.
\end{align*}
Next, we have 
\begin{align*}
    &\theta^{\operatorname{st}_{ 2}}(t)=\theta^{(k)}+\int_{k\eta}^t v^{\operatorname{st}_{ 1}}_1(s)ds\\
    &=\theta^{(k)}+(t-k\eta)v_1^{(k)}+O\brac{(t-k\eta)^2}\sum_{i\neq 1}v_i^{(k)}+\int_{k\eta}^t h^{\operatorname{st}_{2}}_P(s,t) dB_s.
\end{align*}
Similarly, 
\begin{align*}
     &v_n^{\operatorname{st}_{ 2}}(t)=v_n^{(k)}-\gamma \int_{k\eta}^t v_{n-1}^{\operatorname{st}_{ 2}}(s)ds+\gamma \int_{k\eta}^t v_{n+1}^{\operatorname{st}_{ 1}}(s)ds\\
     &\qquad=v_n^{(k)}-(t-k\eta)\gamma v_{n-1}^{(k)}+(t-k\eta)\gamma v_{n+1}^{(k)}\\
     &\qquad\qquad+O\brac{(t-k\eta)^2}\sum_{i\notin \{n-1,n,n+1\}}v_i^{(k)}+\int_{k\eta}^t h^{\operatorname{st}_{2}}_n(s,t) dB_s, \quad 2\leq n \leq P-2;\\
    &v_{P-1}^{\operatorname{st}_{ 2}}(t)=v_{P-1}^{(k)}-(t-k\eta)\gamma v_{P-2}^{(k)}-(t-k\eta)\gamma v_{P-1}^{(k)}\\
    &\qquad\qquad+O\brac{(t-k\eta)^2}\sum_{i\notin \{P-2,P-1\}}v_i^{(k)}+\int_{k\eta}^t h^{\operatorname{st}_{2}}_{P-1}(s,t) dB_s.
\end{align*}
At this point, we can conclude \eqref{decompose_allstages} holds for Stage $j=2$.

\textbf{Step 2: The Induction Argument.}

We assume \eqref{decompose_allstages} is true up to Stage $j$ and verify Stage $j+1$. The argument is similar to the one in \textbf{Step 1} and is therefore omitted. 
The proof is complete.
\end{proof}

The upcoming moment bounds are similar to the ones in Lemma~\ref{lemma_momentbound}. 
\begin{lemma}
    \label{lemma_momentbound_Pthorder}
Assume
    \begin{align}
    \label{def_eta**}
        \eta\leq \eta^{**}:=\frac{\rho}{2}\opnorm{M}^{-1}\opnorm{M^{-1}}^{-1}\frac{1}{1+(1+(P-2)2)\gamma^2+L^2},
    \end{align}
 where the matrix $M$ and $\gamma,\rho,L$ are from Theorem~\ref{theorem_frommonmarche_mainpaper}. Then there exists a positive constant $\widetilde{C}_1$ such that for all $k$, 
    \begin{align*}
           \E{\abs{x^{(k+1)}}^{2\alpha}}\leq \brac{\widetilde{C}_1}^\alpha (d+2\alpha)^\alpha, 
    \end{align*}
    where $\widetilde{C}_1$ depends on $P$ and $\gamma$, $L$ from Condition~\ref{cond_mainpaper} and $c$ from Condition~\ref{cond_derivativegrowthrate}, and not on $d$. 
This further implies
    \begin{align*}
    \sup_{1\leq j,n\leq P-1}   \sup_{s\in [k\eta,(k+1)\eta]} \E{\abs{v^{\operatorname{st}_j}_n(s)}^2+\abs{\theta^{\operatorname{st}_j}_n(s)}^2}+ \sup_{2\leq j\leq P-1}   &\sup_{s\in [k\eta,(k+1)\eta]} \E{\abs{g^{\operatorname{st}_j}(s)}^2}\\
    &\leq \widetilde{C}_2\brac{d+1},
    \end{align*}
    for a universal constant $\widetilde{C}_2>1$ that depends only on $P,\gamma$ and $L$. 
\end{lemma}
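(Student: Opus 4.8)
The plan is to follow the two-part scheme of the proof of Lemma~\ref{lemma_momentbound}, substituting the $P$-th order ingredients: Lemma~\ref{lemma_meanandcovariance_Pthorder} and Lemma~\ref{lemma_decomposexk+1_Pthorder} in place of Lemma~\ref{lemma_meanandcovariance} and Lemma~\ref{lemma_decomposexk+1}, and Lemma~\ref{lemma_explicitform_allstages} in place of Lemma~\ref{lemma_explicitformofxbar}. As there, I would assume without loss of generality that the minimizer $\theta^{*}=0$ and abbreviate $J_{b}(0):=J_{b}(\theta^{*},0,\ldots,0)$.

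\textbf{Part 1} (the bound on $\E{\abs{x^{(k+1)}}^{2\alpha}}$). By Lemma~\ref{lemma_meanandcovariance_Pthorder} one may write, conditionally on $x^{(k)}$, $M^{1/2}x^{(k+1)}=M^{1/2}\textbf{M}(x^{(k)})+G_{k}$ with $G_{k}$ a centered Gaussian of covariance $M^{1/2}\boldsymbol{\Sigma}M^{1/2}$. Expanding $\abs{M^{1/2}x^{(k+1)}}^{2\alpha}$ by the binomial theorem and applying Hölder's inequality exactly as in the fourth-order proof gives
\begin{align*}
\E{\abs{x^{(k+1)}}_{M}^{2\alpha}}\leq\brac{\E{\abs{\textbf{M}(x^{(k)})}_{M}^{2\alpha}}^{1/\alpha}+\E{\abs{G_{k}}^{2\alpha}}^{1/\alpha}}^{2\alpha}.
\end{align*}
For the first term, Lemma~\ref{lemma_decomposexk+1_Pthorder} yields $\textbf{M}(x^{(k)})=x^{(k)}+\eta J_{b}(0)x^{(k)}+R((k+1)\eta)x^{(k)}$; expanding $\abs{\cdot}_{M}^{2}$, using the contraction \eqref{contraction} on the cross term, bounding the block-operator norm of $J_{b}(0)$ by $\brac{\sum_{i,j}\opnorm{(J_{b}(0))_{ij}}^{2}}^{1/2}$ together with $L$-smoothness to get $\opnorm{J_{b}(0)}^{2}\leq 1+(1+(P-2)2)\gamma^{2}+L^{2}$, and controlling $\opnorm{M},\opnorm{M^{-1}}$ from the explicit $M$, the choice $\eta\leq\eta^{**}$ in \eqref{def_eta**} is precisely what forces the $\eta^{2}$-corrections (including those collected in $R$) to be dominated by a fraction of the $\eta\rho$-term, so that $\abs{\textbf{M}(x^{(k)})}_{M}^{2}\leq(1-\tfrac{3}{2}\eta\rho)\abs{x^{(k)}}_{M}^{2}$. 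For the Gaussian term, $\E{\abs{G_{k}}^{2\alpha}}\leq\opnorm{M}^{\alpha}\opnorm{\boldsymbol{\Sigma}}^{\alpha}P^{\alpha}(d+2\alpha)^{\alpha}$, and $\opnorm{M},\opnorm{\boldsymbol{\Sigma}}$ are free of $d$ because both $M$ and $\boldsymbol{\Sigma}$ are scalar matrices tensored with $I_{d}$. Writing $B_{k}:=\E{\abs{x^{(k)}}_{M}^{2\alpha}}^{1/\alpha}$, these combine into the affine recursion $B_{k+1}\leq(1-\tfrac{3}{2}\eta\rho)B_{k}+P\opnorm{M}\opnorm{\boldsymbol{\Sigma}}(d+2\alpha)$ with contraction factor $<1$; since $x^{(0)}=0$ in the shifted coordinates, iterating bounds $B_{k}$ uniformly in $k$ by a dimension-free multiple of $(d+2\alpha)$, and raising to the power $\alpha$ and passing to the Euclidean norm via \eqref{equivalenceMnorm} gives $\E{\abs{x^{(k)}}^{2\alpha}}\leq(\widetilde{C}_{1})^{\alpha}(d+2\alpha)^{\alpha}$ with $\widetilde{C}_{1}=\widetilde{C}_{1}(P,\gamma,L,c)$. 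Running the same argument at moment-exponent $2q$ for any integer $q\geq 1$, with a constant that can be taken uniform in $q$, yields the all-moments version $\sup_{k}\E{\abs{x^{(k)}}^{2q}}\leq(\widetilde{C}_{1})^{q}(d+2q)^{q}$, which is what Part~2 requires.

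\textbf{Part 2} (the auxiliary processes). I would argue by induction on the stage index $j\in\{1,\dots,P-1\}$, with inductive hypothesis that for every integer $q\geq 1$ and every $1\leq n\leq P-1$,
\begin{align*}
\sup_{s\in[k\eta,(k+1)\eta]}\brac{\E{\abs{v^{\operatorname{st}_{j}}_{n}(s)}^{2q}}+\E{\abs{\theta^{\operatorname{st}_{j}}(s)}^{2q}}}\leq K_{q,j}\,(d+2q)^{q}
\end{align*}
for dimension-free constants $K_{q,j}$. In the base case $j=1$ the formulas \eqref{generalform_vi}--\eqref{generalform_theta} carry no $g^{\operatorname{st}_{\ell}}$ terms, so $v^{\operatorname{st}_{1}}_{n}(s)$ and $\theta^{\operatorname{st}_{1}}(s)$ are dimension-free polynomial-in-$(\gamma,\eta,s-k\eta)$ combinations of the components of $x^{(k)}$, plus — only in the last component — an Itô integral $\sqrt{2\gamma}\int_{k\eta}^{s}e^{-\gamma(s-r)}dB_{r}$ whose $2q$-th moment is $\lesssim(d+2q)^{q}$ since the kernel is bounded; Part~1 closes the base case. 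For the inductive step, \eqref{generalform_vi}--\eqref{generalform_theta} present the Stage-$j$ processes as the same kind of combinations, plus an Itô integral against a bounded deterministic kernel of the form \eqref{generalformkernel}, plus terms $\int_{k\eta}^{s}\kappa^{\operatorname{st}_{j}}_{n,\ell}(r,s)\,g^{\operatorname{st}_{\ell}}(r)\,dr$ for $2\leq\ell\leq j$. Only the last type is nontrivial: since $g^{\operatorname{st}_{\ell}}(r)=P_{\alpha-1}(\theta^{\operatorname{st}_{\ell-1}}(r))$ with $\ell-1\leq j-1$, the Taylor-remainder bound (as in Lemma~\ref{lemma_polyapprox}) together with $\nabla U(0)=0$ and $L$-smoothness gives $\abs{g^{\operatorname{st}_{\ell}}(r)}\leq L\abs{\theta^{\operatorname{st}_{\ell-1}}(r)}+\tfrac{L_{\alpha}}{\alpha!}\abs{\theta^{\operatorname{st}_{\ell-1}}(r)}^{\alpha}$; taking $2q$-th moments and inserting the inductive hypothesis at exponents $2q$ and $2q\alpha$, Condition~\ref{cond_derivativegrowthrate} is exactly what makes $(\tfrac{L_{\alpha}}{\alpha!})^{2}$ small enough that the factor $(d+2\alpha)^{\alpha}$ it contains cancels the $(d+2q\alpha)^{q\alpha}$ growth of the high moment, leaving $\sup_{r}\E{\abs{g^{\operatorname{st}_{\ell}}(r)}^{2q}}\leq K'_{q,\ell}(d+2q)^{q}$ dimension-free (the extra $\eta^{2}$ from the $dr$-integral being harmless). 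This closes the induction, and the special case $q=1$ of the completed estimate is the claimed bound $\widetilde{C}_{2}(d+1)$.

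\textbf{The main obstacle} is the Part~2 induction: because $g^{\operatorname{st}_{\ell}}$ ties Stage $\ell$ to the $\alpha$-th power of the Stage-$(\ell-1)$ position variable, a $2q$-th moment at one stage needs a $2q\alpha$-th moment at the previous one, so unwinding over the $P-1$ stages forces moments of $x^{(k)}$ at a fixed but large order $\sim q\alpha^{P-2}$ (hence the need to run Part~1 at all exponents) and a fresh use of Condition~\ref{cond_derivativegrowthrate} at every stage to keep every constant free of $d$. Once this bookkeeping is organized the verification is, in the authors' phrasing, "rather tedious" but entirely mechanical; the $d$-independence of $\rho,\gamma_{0},\lambda_{\min,M},\lambda_{\max,M}$ from Corollary~\ref{coro_monmarcheforPthorderlangevin} then delivers the stated dependence of $\widetilde{C}_{1},\widetilde{C}_{2}$ on $P,\gamma,L,c$ only.
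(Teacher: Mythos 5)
Your proposal reconstructs faithfully the argument the paper intends but entirely omits: the paper's own "proof" of Lemma~\ref{lemma_momentbound_Pthorder} is a one-sentence pointer to the fourth-order Lemma~\ref{lemma_momentbound}, stating that applying Lemma~\ref{lemma_decomposexk+1_Pthorder} in the same way gives the result. Your Part~1 tracks that template closely — the conditional-Gaussian decomposition from Lemma~\ref{lemma_meanandcovariance_Pthorder}, the binomial/H\"older step, the contraction via \eqref{contraction}, the block-Frobenius estimate $\opnorm{J_b(0)}^2\leq 1+(1+2(P-2))\gamma^2+L^2$ matching \eqref{def_eta**}, the norm-equivalence from \eqref{equivalenceMnorm}, and dimension-independence of all the spectral quantities from Corollary~\ref{coro_monmarcheforPthorderlangevin}. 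So the approach and the ingredients are the same as what the paper invokes.

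Two remarks on the bookkeeping you added, since it goes beyond what the paper writes down. First, you are right that Part~2 really does need moments of $\theta^{\operatorname{st}_{\ell-1}}$ at exponent $2\alpha$ (and higher, as the stage index climbs), and that the lemma as stated only records the second-moment bound on the auxiliary processes; the fourth-order proof of Lemma~\ref{lemma_momentbound} likewise displays only the $\tilde{v}_1$ computation and dismisses the rest as "rather tedious," so your observation that the moment exponent climbs like $q\alpha^{P-2}$ through the stages is a genuine clarification of a point the paper elides. Second, a small precision: Condition~\ref{cond_derivativegrowthrate} as written bounds only $(L_\alpha/\alpha!)^2(\widetilde C_1)^\alpha(d+2\alpha)^\alpha$, so it does not directly address the factor $(d+2q\alpha)^{q\alpha}$ at a higher exponent $q>1$. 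The gap is harmless — raising the condition to the $q$-th power and using $(d+2q\alpha)^{q\alpha}\leq q^{q\alpha}(d+2\alpha)^{q\alpha}$ trades it for a $q$-dependent but dimension-free constant — but you should say that rather than claim the cancellation is "exactly" what the condition gives. With that proviso, your proof is a correct and more explicit rendering of the paper's omitted argument.
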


\begin{proof}
Similarly as how we apply Lemma~\ref{lemma_decomposexk+1} to get the moment bounds in Lemma~\ref{lemma_momentbound} in Lemma~\ref{lemma_momentbound} for fourth-order LMC algorithm, we can apply Lemma~\ref{lemma_decomposexk+1_Pthorder} to derive the moment bounds for $P$-th order LMC algorithm. The proof is very similar to the proof of Lemma~\ref{lemma_momentbound} and is therefore omitted. 
\end{proof}

%%%%%%%%%%%%%%%%%%%%%%%%%%%%%%%%%%%%
\begin{proof}[Proof of Lemma~\ref{lemma_basecases}]

\textbf{Step 1: $j=1$}

We start with $v^{\operatorname{st}_{ 1}}_n(t)-v^{(k)}_1=0, t\in (k\eta,(k+1)\eta]$. Next,  
\begin{align*}
    v^{\operatorname{st}_{ 1}}_2(t)-v^{(k)}_2=\int_{k\eta}^t\left( -\gamma v^{\operatorname{st}_{ 1}}_1(s)+\gamma v^{(k)}_3\right)ds,
\end{align*}
which combined with the second moment bounds in Lemma~\ref{lemma_momentbound_Pthorder} lead to 
\begin{align*}
 \sup_{t\in(k\eta,(k+1)\eta]}   \E{\abs{v^{\operatorname{st}_{ 1}}_2(t)-v^{(k)}_2}^2}\leq \eta^2\gamma^2\brac{\E{\abs{v^{(k)}_1}^2}+\E{\abs{v^{(k)}_3}^2} } \leq C^{\operatorname{st}_{ 1}}_2d\eta^2. 
\end{align*}
Proceed similarly for increasing $n$ with $3\leq n\leq P-2$ to obtain 
\begin{align*}
    \sup_{t\in(k\eta,(k+1)\eta]}   \E{\abs{v^{\operatorname{st}_{ 1}}_n(t)-v^{(k)}_n}^2}&\leq \eta^2\gamma^2 \brac{ \sup_{t\in(k\eta,(k+1)\eta]}\E{\abs{v^{\operatorname{st}_{ 1}}_{n-1}(t)}^2 }+\E{\abs{v^{(k)}_{n+1}}^2} }\\&\leq C^{\operatorname{st}_{ 1}}_nd\eta^2. 
\end{align*}
Next, we have $v^{\operatorname{st}_{ 1}}_{P-1}(t)=v^{(k)}_{P-1}+\int_{k\eta}^t\left( -\gamma v^{\operatorname{st}_{ 1}}_{P-2}(s)+\gamma v^{(k)}_{P-1}\right)ds+\sqrt{2\gamma}\brac{B_t-B_{k\eta}} $ which along with Lemma~\ref{lemma_momentbound_Pthorder} imply
\begin{align*}
   & \sup_{t\in (k\eta,(k+1)\eta]} \E{\abs{v^{\operatorname{st}_{ 1}}_{P-1}(t)-v^{(k)}_{P-1} }^2 }\leq C^{\operatorname{st}_{ 1}}_{P-1}d\eta. 
\end{align*}

Finally, Lemma~\ref{lemma_momentbound_Pthorder} implies
\begin{align*}
      \sup_{t\in(k\eta,(k+1)\eta]}   \E{\abs{\theta^{\operatorname{st}_{ 1}}(t)-\theta^{(k)}}^2}&\leq \eta^2\gamma^2 \sup_{t\in(k\eta,(k+1)\eta]}\E{\abs{v^{(k)}_{1}}^2 }\leq C^{\operatorname{st}_{ 1}}_Pd\eta^2. 
\end{align*}

\textbf{Step 2: $j=2$ and $P=3$}

We have $v^{\operatorname{st}_{ 2}}_1(t)-v^{\operatorname{st}_{1 }}_1(t)=v^{\operatorname{st}_{ 2}}_1(t)-v^{(k)}_1=\int_{k\eta}^t\left( -g^{\operatorname{st}_{2 }}(s)+\gamma v^{\operatorname{st}_{ 1}}_2(s) \right)  ds$
so that by Lemma~\ref{lemma_momentbound_Pthorder}, 
\begin{align}
\label{bound_v1stage2}
&\sup_{t\in(k\eta,(k+1)\eta]}\E{\abs{v^{\operatorname{st}_{ 2}}_1(t)-v^{\operatorname{st}_{1 }}_1(t) }^2}\nonumber\\
&\leq \eta^2\sup_{t\in(k\eta,(k+1)\eta]}\E{\abs{g^{\operatorname{st}_{2 }}(t)}^2}+\gamma^2\eta^2\sup_{t\in(k\eta,(k+1)\eta]}\E{\abs{v^{\operatorname{st}_{ 1}}_2(t)}^2}\leq  C^{\operatorname{st}_{ 2}}_1 d\eta^2. 
\end{align}

Moreover, $v^{\operatorname{st}_{ 2}}_2(t)-v^{\operatorname{st}_{1 }}_2(t)=\int_{k\eta}^t \left(-\gamma\brac{v^{\operatorname{st}_{ 2}}_1(s)-v^{\operatorname{st}_{ 1}}_1(s) }+\gamma\brac{v^{\operatorname{st}_{ j}}_3(s)-v^{(k)}_3(s) }\right)ds$ so that using~\eqref{bound_v1stage2} and the calculation in \textbf{Step 1}, we get 
\begin{align*}
 \sup_{t\in(k\eta,(k+1)\eta]}   \E{\abs{v^{\operatorname{st}_{ 2}}_2(t)-v^{\operatorname{st}_{1 }}_2(t)}^2}\leq C^{\operatorname{st}_{ 2}}_2 d\eta^4. 
\end{align*}

Finally, by~\eqref{bound_v1stage2}, we get 
\begin{align*}
     \sup_{t\in(k\eta,(k+1)\eta]}   \E{\abs{\theta^{\operatorname{st}_{ 2}}(t)-\theta^{\operatorname{st}_{1 }}(t)}^2}\leq \eta^2  \sup_{t\in (k\eta,(k+1)\eta]} \E{\abs{v^{\operatorname{st}_{ 2}}_1(t)-v^{\operatorname{st}_{1}}_1(t)}^2}\leq C^{\operatorname{st}_{ 2}}_3d\eta^{4}. 
\end{align*}

 %%%%%%%%%%%%%%%%%%%%%%%%%%%%%%%%%%%%%%%%%%%%%%%%%%%%%%%%%%%%%%%%%%%%

\textbf{Step 3: $j=2$ and $P\geq 4$}

In the same way as \eqref{bound_v1stage2}, we get 
\begin{align}
\label{bound_v1stage2_Pmorethan4}
    \sup_{t\in(k\eta,(k+1)\eta]}\E{\abs{v^{\operatorname{st}_{ 2}}_1(t)-v^{\operatorname{st}_{1 }}_1(t) }^2}\leq C^{\operatorname{st}_{ 2}}_1 d\eta^2. 
\end{align}
Next, \eqref{bound_v1stage2_Pmorethan4} and the calculation in \textbf{Step 1} imply that 
\begin{align*}
    &\sup_{t\in(k\eta,(k+1)\eta]}   \E{\abs{v^{\operatorname{st}_{ 2}}_2(t)-v^{\operatorname{st}_{1 }}_2(t)}^2}\\&\leq \gamma^2\eta^2\sup_{t\in(k\eta,(k+1)\eta]}  \brac{ \E{\abs{v^{\operatorname{st}_{ 2}}_1(t)-v^{\operatorname{st}_{1 }}_1(t)}^2}+   \E{\abs{v^{\operatorname{st}_{ 2}}_3(t)-v^{(k)}_3}^2} }\leq C^{\operatorname{st}_{ 2}}_2 d\eta^4. 
\end{align*}
Proceed similarly for increasing $n,3\leq n\leq P-3$ to obtain 
\begin{align}
\label{bound_middlestepPmorethan4}
    &\sup_{t\in(k\eta,(k+1)\eta]}   \E{\abs{v^{\operatorname{st}_{ 2}}_n(t)-v^{\operatorname{st}_{ 1}}_n(t)}^2}\nonumber\\
    &\leq \eta^2\gamma^2 \sup_{t\in(k\eta,(k+1)\eta]}  \brac{ \E{\abs{v^{\operatorname{st}_{ 2}}_{n-1}(t)-v^{\operatorname{st}_{1 }}_{n-1}(t)}^2}+   \E{\abs{v^{\operatorname{st}_{ 1}}_{n+1}(t)-v^{(k)}_{n+1}}^2} }\leq C^{\operatorname{st}_{ 2}}_nd\eta^4. 
\end{align}
Furthermore, \eqref{bound_middlestepPmorethan4} and the calculation in \textbf{Step 1} lead to
\begin{align}
\label{bound_P-2_Pmorethan4}
    &\sup_{t\in(k\eta,(k+1)\eta]}   \E{\abs{v^{\operatorname{st}_{ 2}}_{P-2}(t)-v^{\operatorname{st}_{ 1}}_{P-2}(t)}^2}\nonumber\\
    &\leq \eta^2\gamma^2 \sup_{t\in(k\eta,(k+1)\eta]}  \brac{ \E{\abs{v^{\operatorname{st}_{ 2}}_{P-3}(t)-v^{\operatorname{st}_{1 }}_{P-3}(t)}^2}+   \E{\abs{v^{\operatorname{st}_{ 1}}_{P-1}(t)-v^{(k)}_{P-1}}^2} }\nonumber\\
    &\leq \eta^2\gamma^2 \sup_{t\in(k\eta,(k+1)\eta]}  \brac{C^{\operatorname{st}_{ 2}}_{P-3}d\eta^2+C^{\operatorname{st}_{ 2}}_{P-1}d\eta } \leq C^{\operatorname{st}_{ 2}}_{P-2}d\eta^3. 
\end{align}
We also have from \eqref{bound_P-2_Pmorethan4} that 
\begin{align}
\label{bound_vP-1_Pmorethan4}
   & \sup_{t\in (k\eta,(k+1)\eta]} \E{\abs{v^{\operatorname{st}_{ 2}}_{P-1}(t)-v^{\operatorname{st}_{ 1}}_{P-1}(t) }^2 }\nonumber\\
    &\leq  \sup_{t\in (k\eta,(k+1)\eta]} \E{\abs{-\gamma\int_{k\eta}^t e^{-\gamma s}\brac{ v^{\operatorname{st}_{ 2}}_{P-2}(s)-v^{\operatorname{st}_{ 1}}_{P-2}(s)}ds }^2 }\leq C^{\operatorname{st}_{ 2}}_{P-1}d\eta^{5}. 
\end{align}
Finally, \eqref{bound_v1stage2_Pmorethan4} implies 
\begin{align*}
%\label{bound_theta}
  \sup_{t\in (k\eta,(k+1)\eta]}   \E{\abs{{\theta^{\operatorname{st}_{2 }}(t)}-{\theta^{\operatorname{st}_{1 }}(t)} }^2}&\leq \eta^2  \sup_{t\in (k\eta,(k+1)\eta]} \E{\abs{v^{\operatorname{st}_{ 2}}_1(t)-v^{\operatorname{st}_{1 }}_1(t)}^2}\leq C^{\operatorname{st}_{ j}}_Pd\eta^4. 
\end{align*}
This completes the proof. 
\end{proof}

\begin{proof}[Proof of Lemma~\ref{lemma_stagedifference_Pthorder}]

We will prove the formulas in Parts $a),b),c)$ and $d)$ for Stage $j\geq 3$ via induction. We will assume $P\geq 4$, since there is no Stage $3$ when $P=3$. 

\textbf{First half of the proof: checking the base case $j=3$}

The first half of the proof will consist of four steps.

\textbf{Step 1:} Verifying Part $a)$ for Stage $j=3$. 

We have
\begin{align*}
    &v^{\operatorname{st}_{ 3}}_1(t)-v^{\operatorname{st}_{2 }}_1(t)\\
  &=\int_{k\eta}^t \left(-\brac{g^{\operatorname{st}_{3 }}(s)-g^{\operatorname{st}_{2 }}(s) }+\gamma\brac{v^{\operatorname{st}_{ j}}_2(s)-v^{\operatorname{st}_{1 }}_2(s) } \right) ds\\
  &=\int_{k\eta}^t \bigg(-\brac{g^{\operatorname{st}_{3 }}(s)-\nabla U\brac{\theta^{\operatorname{st}_{2 }}(s)}}-\brac{\nabla U\brac{\theta^{\operatorname{st}_{2 }}(s)}-\nabla U\brac{\theta^{\operatorname{st}_{1 }}(s)} }\\
  &\qquad\qquad-\brac{\nabla U\brac{\theta^{\operatorname{st}_{1 }}(s)}-g^{\operatorname{st}_{2 }}(s) }\bigg)ds+\gamma \int_{k\eta}^t \left(v^{\operatorname{st}_{2 }}_2(s)-v^{\operatorname{st}_{1 }}_2(s) \right) ds.
\end{align*}
so that by $L$-smoothness of $U$, 
\begin{align}
    &\sup_{t\in(k\eta,(k+1)\eta]}\E{\abs{v^{\operatorname{st}_{ 3}}_1(t)-v^{\operatorname{st}_{2}}(t)}^2}\nonumber\\
    &\leq \eta^2\sup_{t\in (k\eta,(k+1)\eta]}\Bigg( \E{\abs{g^{\operatorname{st}_{3 }}(t)-\nabla U\brac{\theta^{\operatorname{st}_{2 }}(t)} }^2}+\E{\abs{\nabla U\brac{\theta^{\operatorname{st}_{1 }}(t)}-g^{\operatorname{st}_{2 }}(t) }^2 } \nonumber\\
    &\qquad\qquad+L^2\E{\abs{\theta^{\operatorname{st}_{2 }}(s)-\theta^{\operatorname{st}_{1 }}(t)}^2 }+\gamma^2\E{\abs{ v^{\operatorname{st}_{2 }}_2(t)-v^{\operatorname{st}_{1 }}_2(t)}^2 } \Bigg). \label{four:terms:2}
\end{align}
Note the third and last terms on the right hand side in \eqref{four:terms:2} are bounded in Lemma~\ref{lemma_momentbound_Pthorder} as 
\begin{align*}
\E{\abs{\theta^{\operatorname{st}_{2 }}(s)-\theta^{\operatorname{st}_{1 }}(t)}^2 }\leq C^{\operatorname{st}_{ 2}}_P d\eta^{4}\qquad\text{and}\qquad\E{\abs{ v^{\operatorname{st}_{2 }}_2(t)-v^{\operatorname{st}_{1 }}_2(t)}^2 }\leq  C^{\operatorname{st}_{ 2}}_2 d\eta^{4}.
\end{align*}
Regarding the first two terms on the right hand side in \eqref{four:terms:2}, similar to the argument at~\eqref{bound_polyapproxtildetheta}, Condition~\ref{cond_derivativegrowthrate} indicates there exists a positive constant $c$ such that 
\begin{align*}
 &\sup_{t\in (k\eta,(k+1)\eta]}   \brac{\E{\abs{g^{\operatorname{st}_{3 }}(t)-\nabla U\brac{\theta^{\operatorname{st}_{2 }}(t)} }^2}+\E{\abs{\nabla U\brac{\theta^{\operatorname{st}_{1 }}(t)}-g^{\operatorname{st}_{2 }}(t) }^2 }}\leq c  d\eta^{2P-1}. 
\end{align*}
Thus, we get
\begin{align}
    \label{bound_v1_stage3}
    \sup_{t\in(k\eta,(k+1)\eta]}\E{\abs{v^{\operatorname{st}_{ 3}}_1(t)-v^{\operatorname{st}_{2}}(t)}^2}\leq C^{\operatorname{st}_{ j+1}}_1d\eta^{6}.  
\end{align}

Next, we have $v^{\operatorname{st}_{ 3}}_2(t)-v^{\operatorname{st}_{2}}_2(t)=\int_{k\eta}^t \left(-\gamma\brac{v^{\operatorname{st}_{ 3}}_1(s)-v^{\operatorname{st}_{ 2}}_1(s) }+\gamma\brac{v^{\operatorname{st}_{ 2}}_3(s)-v^{\operatorname{st}_{ 1}}_3(s) }\right)ds$, 
so that 
\begin{align}
   &\sup_{t\in (k\eta,(k+1)\eta]} \E{\abs{v^{\operatorname{st}_{ 3}}_2(t)-v^{\operatorname{st}_{2 }}_2(t)}^2}\nonumber\\
   &\leq \gamma^2\eta^2 \sup_{t\in (k\eta,(k+1)\eta]}\brac{\E{\abs{v^{\operatorname{st}_{ 3}}_1(t)-v^{\operatorname{st}_{2 }}_1(t)}^2}+\E{\abs{v^{\operatorname{st}_{ 2}}_3(t)-v^{\operatorname{st}_{ 1}}_3(t) }^2} }.\label{two:terms}
\end{align}
The first term on the right hand side in \eqref{two:terms} is bounded at~\eqref{bound_v1_stage3}, while the second term in \eqref{two:terms} is bounded in Lemma~\ref{lemma_basecases} as $\E{\abs{v^{\operatorname{st}_{ 2}}_3(t)-v^{\operatorname{st}_{ 1}}_3(t) }^2}\leq C^{\operatorname{st}_{ 2}}_3d\eta^{4} $. Then 
\begin{align}
    \label{bound_v2:0}
    \sup_{t\in (k\eta,(k+1)\eta]} \E{\abs{v^{\operatorname{st}_{ 3}}_2(t)-v^{\operatorname{st}_{2 }}_2(t)}^2}\leq C^{\operatorname{st}_{ 3}}_2d\eta^{6}.  
\end{align}
Now proceed similarly for increasing $n$ with $3\leq n\leq P-4$ and we get
\begin{align}
\label{bound_v_middle_stage3}
     \sup_{t\in (k\eta,(k+1)\eta]} \E{\abs{v^{\operatorname{st}_{ 3}}_n(t)-v^{\operatorname{st}_{2 }}_2(t)}^2}\leq C^{\operatorname{st}_{ 3}}_nd\eta^{6}, \quad 3\leq n\leq P-4.  
\end{align}

\textbf{Step 2:} Verifying Part $b)$ for Stage $j=3$. 

We use \eqref{bound_v_middle_stage3} in the case $n=P-4$ and Lemma~\ref{lemma_basecases} to get
\begin{align}
\label{bound_vP-j-1_stage3}
   &\sup_{t\in (k\eta,(k+1)\eta]} \E{\abs{v^{\operatorname{st}_{ 3}}_{P-3}(t)-v^{\operatorname{st}_{2 }}_{P-3}(t)}^2}\nonumber\\
   &\leq \gamma^2\eta^2 \sup_{t\in (k\eta,(k+1)\eta]}\brac{\E{\abs{v^{\operatorname{st}_{ 3}}_{P-4}(t)-v^{\operatorname{st}_{2 }}_{P-4}(t)}^2}+\E{\abs{v^{\operatorname{st}_{ 2}}_{P-2}(t)-v^{\operatorname{st}_{ 1}}_{P-2}(t) }^2} }\nonumber\\
   &\leq  \gamma^2\eta^2 \brac{C^{\operatorname{st}_{ 3}}_{P-4}d\eta^{6}+C^{\operatorname{st}_{ 2}}_{P-2}\eta^{3} }\leq C^{\operatorname{st}_{ 3}}_{P-3}d\eta^{5}. 
\end{align}

\textbf{Step 3:} Verifying Part $c)$ for Stage $j=3$. 

Using~\eqref{bound_vP-j-1_stage3} and Lemma~\ref{lemma_basecases}, we can write
\begin{align*}
      &\sup_{t\in (k\eta,(k+1)\eta]} \E{\abs{v^{\operatorname{st}_{ 3}}_{P-2}(t)-v^{\operatorname{st}_{2 }}_{P-2}(t)}^2}\nonumber\\
   &\leq \gamma^2\eta^2 \sup_{t\in (k\eta,(k+1)\eta]}\brac{\E{\abs{v^{\operatorname{st}_{ 3}}_{P-3}(t)-v^{\operatorname{st}_{2 }}_{P-3}(t)}^2}+\E{\abs{v^{\operatorname{st}_{ j}}_{P-1}(t)-v^{\operatorname{st}_{ 1}}_{P-1}(t) }^2} }\nonumber\\
   &\leq  \gamma^2\eta^2 \brac{C^{\operatorname{st}_{ 3}}_{P-4}d\eta^{5}+C^{\operatorname{st}_{ 2}}_{P-2}\eta^{8+2(P-1)-2P-1} }\leq C^{\operatorname{st}_{ 3}}_{P-2}d\eta^{7}. 
\end{align*}
Now proceed similarly for increasing $n,P-j\leq n\leq P-2$ and we get
\begin{align}
\label{bound_v_middlesecond_stage3}
    \sup_{t\in (k\eta,(k+1)\eta]} \E{\abs{v^{\operatorname{st}_{ 3}}_{P-2}(t)-v^{\operatorname{st}_{2 }}_{P-2}(t)}^2}\leq C^{\operatorname{st}_{ 3}}_{n}d\eta^{4+3+2\brac{n-(P-2)}}= C^{\operatorname{st}_{ 3}}_{n}d\eta^{11+2n-2P}. 
\end{align}
Finally, the bound in \eqref{bound_v_middlesecond_stage3} in the case $n=P-2$ implies
\begin{align}
\label{bound_vP-1_stage3}
   & \sup_{t\in (k\eta,(k+1)\eta]} \E{\abs{v^{\operatorname{st}_{ 3}}_{P-1}(t)-v^{\operatorname{st}_{ 2}}_{P-1}(t) }^2 }\nonumber\\
    &\leq  \sup_{t\in (k\eta,(k+1)\eta]} \E{\abs{-\gamma\int_{k\eta}^t e^{-\gamma s}\brac{ v^{\operatorname{st}_{ 3}}_{P-2}(s)-v^{\operatorname{st}_{ 2}}_{P-2}(s)} ds}^2 }\leq C^{\operatorname{st}_{ 3}}_{P-1}d\eta^{9}. 
\end{align}

\textbf{Step 4:} Verifying Part $d)$ for Stage $j=3$. 

By \eqref{bound_v1_stage3}, we have 
\begin{align*}
%\label{bound_theta}
  \sup_{t\in (k\eta,(k+1)\eta]}   \E{\abs{{\theta^{\operatorname{st}_{3 }}(t)}-{\theta^{\operatorname{st}_{2 }}(t)} }^2}&\leq \eta^2  \sup_{t\in (k\eta,(k+1)\eta]} \E{\abs{v^{\operatorname{st}_{ 3}}_1(t)-v^{\operatorname{st}_{2 }}_1(t)}^2}\leq C^{\operatorname{st}_{ j}}_Pd\eta^{8}. 
\end{align*}

%%%%%%%%%%%%%%%%%%%%%%%%%%%%%%%%%%%%%%%%%%%%%%%%%%%%%%%%%%%%%%%%%%%%%%%%%%%%

\textbf{Second half of the proof: the induction argument}

The second half will also consist of four steps. As the induction hypothesis, we assume Part~$a),b),c)$ and $d)$ of the current Proposition are true up to Stage~$j$.   

\textbf{Step 1:} Verifying Part $a)$ for Stage $j+1$. 

We have
\begin{align*}
    &v^{\operatorname{st}_{ j+1}}_1(t)-v^{\operatorname{st}_{j }}_1(t)\\
  &=\int_{k\eta}^t \left(-\brac{g^{\operatorname{st}_{j+1 }}(s)-g^{\operatorname{st}_{j }}(s) }+\gamma\brac{v^{\operatorname{st}_{ j}}_2(s)-v^{\operatorname{st}_{j-1 }}_2(s) } \right) ds\\
  &=\int_{k\eta}^t \bigg(-\brac{g^{\operatorname{st}_{j+1 }}(s)-\nabla U\brac{\theta^{\operatorname{st}_{j }}(s)}}-\brac{\nabla U\brac{\theta^{\operatorname{st}_{j }}(t)}-\nabla U\brac{\theta^{\operatorname{st}_{j-1 }}(s)} }\\
  &\qquad\qquad-\brac{\nabla U\brac{\theta^{\operatorname{st}_{j-1 }}(s)}-g^{\operatorname{st}_{j }}(s) }ds+\gamma \int_{k\eta}^t v^{\operatorname{st}_{j }}_2(s)-v^{\operatorname{st}_{j-1 }}_2(s) \bigg) ds,
\end{align*}
so that by $L$-smoothness of $U$, we obtain:
\begin{align}
    &\sup_{t\in(k\eta,(k+1)\eta]}\E{\abs{v^{\operatorname{st}_{ j+1}}_1(t)-v^{\operatorname{st}_{j}}(t)}^2}\nonumber\\
    &\leq \eta^2\sup_{t\in (k\eta,(k+1)\eta]}\Bigg( \E{\abs{g^{\operatorname{st}_{j+1 }}(t)-\nabla U\brac{\theta^{\operatorname{st}_{j }}(t)} }^2}+\E{\abs{\nabla U\brac{\theta^{\operatorname{st}_{j }}(t)}-g^{\operatorname{st}_{j-1 }}(t) }^2 } \nonumber\\
    &\qquad\qquad+L^2\E{\abs{\theta^{\operatorname{st}_{j }}(s)-\theta^{\operatorname{st}_{j-1 }}(t)}^2 }+\gamma^2\E{\abs{ v^{\operatorname{st}_{j }}_2(t)-v^{\operatorname{st}_{j-1 }}_2(t)}^2 } \Bigg). \label{four:terms}
\end{align}
The third and last terms on the right hand side in \eqref{four:terms} are bounded respectively by Part $d)$ and Part $a)$ of the induction hypothesis: 
\begin{align*}
\E{\abs{\theta^{\operatorname{st}_{j }}(s)-\theta^{\operatorname{st}_{j-1 }}(t)}^2 }\leq C^{\operatorname{st}_{ j}}_P d\eta^{2j+2}
\qquad\text{and}\qquad\E{\abs{ v^{\operatorname{st}_{j }}_2(t)-v^{\operatorname{st}_{j-1 }}_2(t)}^2 }\leq  C^{\operatorname{st}_{ j}}_2 d\eta^{2j}. 
\end{align*}
Regarding the first two terms on the right hand side in \eqref{four:terms}, similar to the argument at~\eqref{bound_polyapproxtildetheta}, Condition~\ref{cond_derivativegrowthrate} indicates there is a positive constant $c$ such that 
\begin{align*}
 &\sup_{t\in (k\eta,(k+1)\eta]}   \brac{\E{\abs{g^{\operatorname{st}_{j+1 }}(t)-\nabla U\brac{\theta^{\operatorname{st}_{j }}(t)} }^2}+\E{\abs{\nabla U\brac{\theta^{\operatorname{st}_{j }}(t)}-g^{\operatorname{st}_{j-1 }}(t) }^2 }}\leq c  d\eta^{2P-1}. 
\end{align*}

Since for $1\leq j\leq P-1$, we have $2j\leq 2P-1$, the above calculations lead to
\begin{align}
    \label{bound_v1}
    \sup_{t\in(k\eta,(k+1)\eta]}\E{\abs{v^{\operatorname{st}_{ j+1}}_1(t)-v^{\operatorname{st}_{j}}(t)}^2}\leq C^{\operatorname{st}_{ j+1}}_1d\eta^{2j+2}.  
\end{align}
Next, we have $v^{\operatorname{st}_{ j+1}}_2(t)-v^{\operatorname{st}_{j }}_2(t)=\int_{k\eta}^t\left( -\gamma\brac{v^{\operatorname{st}_{ j+1}}_1(s)-v^{\operatorname{st}_{ j}}_1(s) }+\gamma\brac{v^{\operatorname{st}_{ j}}_3(s)-v^{\operatorname{st}_{ j-1}}_3(s) }\right)ds$, 
so that 
\begin{align*}
   &\sup_{t\in (k\eta,(k+1)\eta]} \E{\abs{v^{\operatorname{st}_{ j+1}}_2(t)-v^{\operatorname{st}_{j }}_2(t)}^2}\\
   &\leq \gamma^2\eta^2 \sup_{t\in (k\eta,(k+1)\eta]}\brac{\E{\abs{v^{\operatorname{st}_{ j+1}}_1(t)-v^{\operatorname{st}_{j }}_1(t)}^2}+\E{\abs{v^{\operatorname{st}_{ j}}_3(t)-v^{\operatorname{st}_{ j-1}}_3(t) }^2} }.
\end{align*}
The first term on the right hand side is bounded at~\eqref{bound_v1}, while the second term is bounded per Part $a)$ of the induction hypothesis as $\E{\abs{v^{\operatorname{st}_{ j}}_3(t)-v^{\operatorname{st}_{ j-1}}_3(t) }^2}\leq C^{\operatorname{st}_{ j}}_3d\eta^{2j} $. Then 
\begin{align}
    \label{bound_v2}
    \sup_{t\in (k\eta,(k+1)\eta]} \E{\abs{v^{\operatorname{st}_{ j+1}}_2(t)-v^{\operatorname{st}_{j }}_2(t)}^2}\leq C^{\operatorname{st}_{ j+1}}_2d\eta^{2j+2}.  
\end{align}
Now proceed similarly for increasing $n$ with $3\leq n\leq P-j-2$ and we get
\begin{align}
\label{bound_v_middle}
     \sup_{t\in (k\eta,(k+1)\eta]} \E{\abs{v^{\operatorname{st}_{ j+1}}_n(t)-v^{\operatorname{st}_{j }}_2(t)}^2}\leq C^{\operatorname{st}_{ j+1}}_nd\eta^{2j+2}, \quad 3\leq n\leq P-j-2.  
\end{align}
Via \eqref{bound_v1}, \eqref{bound_v2} and  \eqref{bound_v_middle}, we confirm via induction that Part $a)$ of this Proposition is true. 

\textbf{Step 2:} Verifying Part $b)$ for Stage $j+1$. 

We use \eqref{bound_v_middle} in the case $n=P-j-2$ and Part $b)$ of the induction hypothesis to get
\begin{align}
\label{bound_vP-j-1}
   &\sup_{t\in (k\eta,(k+1)\eta]} \E{\abs{v^{\operatorname{st}_{ j+1}}_{P-j-1}(t)-v^{\operatorname{st}_{j }}_{P-j-1}(t)}^2}\nonumber\\
   &\leq \gamma^2\eta^2 \sup_{t\in (k\eta,(k+1)\eta]}\brac{\E{\abs{v^{\operatorname{st}_{ j+1}}_{P-j-2}(t)-v^{\operatorname{st}_{j }}_{P-j-2}(t)}^2}+\E{\abs{v^{\operatorname{st}_{ j}}_{P-j}(t)-v^{\operatorname{st}_{ j-1}}_{P-j}(t) }^2} }\nonumber\\
   &\leq  \gamma^2\eta^2 \brac{C^{\operatorname{st}_{ j+1}}_{P-j-2}d\eta^{2j+2}+C^{\operatorname{st}_{ j}}_{P-j}\eta^{2j-1} }\leq C^{\operatorname{st}_{ j+1}}_{P-j-1}d\eta^{2j+1}. 
\end{align}
Thus, Part $b)$ of this Proposition is true. 

\textbf{Step 3:} Verifying Part $c)$ for Stage $j+1$. 

Using~\eqref{bound_vP-j-1} and Part $c)$ of the induction hypothesis, we can write
\begin{align*}
      &\sup_{t\in (k\eta,(k+1)\eta]} \E{\abs{v^{\operatorname{st}_{ j+1}}_{P-j}(t)-v^{\operatorname{st}_{j }}_{P-j}(t)}^2}\nonumber\\
   &\leq \gamma^2\eta^2 \sup_{t\in (k\eta,(k+1)\eta]}\brac{\E{\abs{v^{\operatorname{st}_{ j+1}}_{P-j-1}(t)-v^{\operatorname{st}_{j }}_{P-j-1}(t)}^2}+\E{\abs{v^{\operatorname{st}_{ j}}_{P-j+1}(t)-v^{\operatorname{st}_{ j-1}}_{P-j+1}(t) }^2} }\nonumber\\
   &\leq  \gamma^2\eta^2 \brac{C^{\operatorname{st}_{ j+1}}_{P-j-2}d\eta^{2j+1}+C^{\operatorname{st}_{ j}}_{P-j}\eta^{4j+2(P-j+1)-2P-1} }\leq C^{\operatorname{st}_{ j+1}}_{P-j}d\eta^{2j+3}. 
\end{align*}
Now proceed similarly for increasing $n,P-j\leq n\leq P-2$ and we get
\begin{align}
\label{bound_v_middlesecond}
    \sup_{t\in (k\eta,(k+1)\eta]} \E{\abs{v^{\operatorname{st}_{ j+1}}_{P-j}(t)-v^{\operatorname{st}_{j }}_{P-j}(t)}^2}\leq C^{\operatorname{st}_{ j+1}}_{n}d\eta^{2j+3+2\brac{n-(P-j)}}= C^{\operatorname{st}_{ j+1}}_{n}d\eta^{4(j+1)+2n-2P-1}. 
\end{align}
Finally, the bound in \eqref{bound_v_middlesecond} in the case $n=P-2$ implies
\begin{align}
\label{bound_vP-1}
   & \sup_{t\in (k\eta,(k+1)\eta]} \E{\abs{v^{\operatorname{st}_{ j+1}}_{P-1}(t)-v^{\operatorname{st}_{ j}}_{P-1}(t) }^2 }\nonumber\\
    &\leq  \sup_{t\in (k\eta,(k+1)\eta]} \E{\abs{-\gamma\int_{k\eta}^t e^{-\gamma s}\brac{ v^{\operatorname{st}_{ j+1}}_{P-2}(s)-v^{\operatorname{st}_{ j}}_{P-2}(s)} }^2 }\leq C^{\operatorname{st}_{ j+1}}_{P-1}d\eta^{4j+1}. 
\end{align}
By \eqref{bound_v_middlesecond} and \eqref{bound_vP-1}, we conclude via induction that Part $c)$ of this Proposition is true. 

\textbf{Step 4:} Verifying Part $d)$ for Stage $j+1$. 

By \eqref{bound_v1}, we have 
\begin{align*}
%\label{bound_theta}
  \sup_{t\in (k\eta,(k+1)\eta]}   \E{\abs{{\theta^{\operatorname{st}_{j+1 }}(t)}-{\theta^{\operatorname{st}_{j }}(t)} }^2}&\leq \eta^2  \sup_{t\in (k\eta,(k+1)\eta]} \E{\abs{v^{\operatorname{st}_{ j+1}}_1(t)-v^{\operatorname{st}_{j }}_1(t)}^2}\leq C^{\operatorname{st}_{ j}}_Pd\eta^{2j+4},
\end{align*}
so that Part $d)$ of this Proposition is true. This also completes our induction argument.

The estimate for Stage $P-1$ of the Proposition is straightforward given the previous results and the results in Lemma~\ref{lemma_basecases}. This completes the proof. 
\end{proof}

\begin{proof}[Proof of Theorem~\ref{theorem_mixingtime_Pthorder}]

\end{proof}

%%%%%%%%%%%%%%%%%%%%%%%%%%%%%%%%%%%%%%%%%%%%%%%%%%%%%%%%%%%%%%%%%%%%%%%%%%%%%%%%%%%%%%%%%%%%%%%%%%%%%%%%%%%%%%%%%%%%%%%%%%%%%%%%%%%%%%%%%%%%%%%%%%%%%%%%%%%%%%%%%%%%%%%%%%%%%%%%%%%%%%%%%%%%%%%%%%%%%%%%%%%%%%%%%%%%%%%%%%%%%%%%%%%%%%%%%%%%%%%%%%%%%%%%%%%%%%%%%%%%%%%%%%%%%%%%%%%%%%%%%%%%%

\section{Choice of Polynomial Approximation}
\label{section_polyappro}

In this appendix, we expand on Remark~\ref{remark_polyapprox} regarding the difficulty in applying Lagrange polynomial interpolation to our MCMC algorithm based on fourth-order Langevin dynamics. 

    Recall from \cite[Section 3.3]{mou2021high} and also from \cite{stoer1980book}, the Chebyshev nodes on the interval $[k\eta,(k+1)\eta]$ are $  s_i=k\eta+\frac{\eta}{2}\brac{1+\cos\brac{\frac{2i-1}{2\alpha}\pi} }, i=1,2,\ldots,\alpha$. Then the $(\alpha-1)$-degree Lagrange polynomial associated with a $\R^d$-valued path $t\in [k\eta,(k+1)\eta]\mapsto z(t)$ is $\phi_z(t):=\sum_{i=1}^\alpha z(s_i)\prod_{j\neq i}\frac{t-s_i}{s_j-s_i}$. The error estimate when $z$ has up to $\alpha$-th order derivatives is (\cite[Section 3.1]{stoer1980book})
\begin{align}
    \label{error_lagrangeinterpolation}
 \sup_{t\in [k\eta,(k+1)\eta]} \abs{z(t)-\phi_z(t)}\leq \frac{\eta^\alpha}{2^{\alpha-1}\alpha!} \sup_{t\in[k\eta,(k+1)\eta] }\abs{\frac{d^\alpha}{dt^\alpha}z(t)}. 
\end{align}

Coming back to our MCMC algorithm based on fourth-order Langevin dynamics, we need to approximate the path
\begin{align}
    \label{easypath}
  p_1(t):  t\mapsto \nabla U\brac{\theta^{(k)}+(t-k\eta)v_1^{(k)}}, 
\end{align}
and also the path
\begin{align}
    \label{hardpath}
   p_2(t): s\mapsto \nabla U\brac{\tilde{\theta}(t)},
\end{align}
where
\begin{align*}
    \tilde{\theta}(t)&=\theta^{(k)}+v_1^{(k)}(t-k\eta)-\int_{k\eta}^{t}\int_{k\eta}^s  \nabla U\brac{\theta^{(k)}+(r-k\eta)v_1^{(k)}}drds\\
    &\qquad\qquad+\gamma v_2^{(k)}\frac{(t-k\eta)^2}{2!}+\gamma^2\brac{v_3^{(k)}-v_1^{(k)} }\frac{(t-k\eta)^3}{3!}. 
\end{align*}
Lagrange polynomial interpolation of the path $p_1$ in \eqref{easypath} has been done in \cite{mou2021high} by defining $g_1(t):=\sum_{i=1}^\alpha \nabla U\brac{\theta^{(k)}+(s_i-k\eta)v_1^{(k)}}\prod_{j\neq i}\frac{t-s_i}{s_j-s_i}$. Note that $g_1(t)$ is a polynomial of degree $\alpha-1$ in $t$, and the error $\sup_{t\in[k\eta,(k+1)\eta]} \abs{p_1(t)-g_1(t) }$ is bounded in \cite[Section 4.3.2]{mou2021high} using \eqref{error_lagrangeinterpolation} as 
\begin{align*}
   & \sup_{t\in[k\eta,(k+1)\eta]} \abs{p_1(t)-g_1(t) }\leq \frac{\eta^\alpha}{2^{\alpha-1}\alpha!} \sup_{t\in[k\eta,(k+1)\eta] }\abs{\frac{d^\alpha}{dt^\alpha}\nabla U\brac{\theta^{(k)}+(t-k\eta)v_1^{(k)}}}\\
    &\qquad \leq  \frac{\eta^\alpha}{2^{\alpha-1}\alpha!} \sup_{t\in[k\eta,(k+1)\eta] }\abs{\nabla^\alpha U\brac{\theta^{(k)}+(t-k\eta)v_1^{(k)}}v_1^{(k)} }. 
\end{align*}
We observe that this bound is simple since $t\mapsto\theta^{(k)}+(t-k\eta)v_1^{(k)}$ is a linear function in $t$, so that second and higher derivatives of $t\mapsto\theta^{(k)}+(t-k\eta)v_1^{(k)}$ immediately vanish. 

Meanwhile, we can approximate the path $p_2$ by defining 
\begin{equation*}
g_2(t):=\sum_{i=1}^\alpha \nabla U(T(s_i))\prod_{j\neq i}\frac{t-s_i}{s_j-s_i},
\end{equation*}
where 
\begin{align*}
&T(t)=\theta^{(k)}+v_1^{(k)}(t-k\eta)-\int_{k\eta}^{t}\int_{k\eta}^s g_1(r)drds\\
    &\qquad\qquad+\gamma v_2^{(k)}\frac{(t-k\eta)^2}{2!}+\gamma^2\brac{v_3^{(k)}-v_1^{(k)} }\frac{(t-k\eta)^3}{3!}. 
\end{align*}
From \eqref{error_lagrangeinterpolation}, we get 
\begin{align*}
    &\sup_{t\in [k\eta,(k+1)\eta]} \abs{g_2(t)-\nabla U(T(t)) }\leq \frac{\eta^{\alpha-1}}{2^{\alpha-2}(\alpha-1)!}\sup_{t\in [k\eta,(k+1)\eta]} \abs{\frac{d^{\alpha-1}}{dt^{\alpha-1}} \nabla U(T(t))}. 
\end{align*}
In particular, the fact that  $g_1(t)$ is a polynomial of degree $\alpha-1$ suggests $T(t)$ is a polynomial of degree $\alpha+1$. We use Fa\`{a} Di Bruno's formula to get
\begin{align}
\label{faadibruno}
    \frac{d^{\alpha-1}}{dt^{\alpha-1}} \nabla U(T(t))= \sum_{M_{\alpha-1}}\frac{(\alpha-1)!}{\prod_{i=1}^{\alpha-1}m_i i^{m_i} }\nabla^{1+\sum_{i=1}^{\alpha-1}m_i} U(T(t))\prod_{i=1}^{\alpha-1}\brac{\frac{d^i}{dt^i}T(t)}^{m_i},
\end{align}
where $M_{\alpha-1}:=\{(m_1,\ldots,m_{\alpha-1}): m_i\geq 0\text{ and } \sum_{i=1}^{\alpha-1}im_i=\alpha-1\}$. Since $T(t)$ is not a linear function and is a polynomial of potentially high degree, most terms in \eqref{faadibruno} does not vanish, which makes the error bound quite challenging. 

\section{Extra Calculations for the Numerical Experiments}
\label{app:extra}
\subsection{Quadratic loss function}
In this section, we consider the case where the loss function $U(\theta)$ is quadratic. Consider the mean-squared error in a regression problem with Ridge regularization.If we have a training dataset $Z=\{z_1,z_2,\cdots,z_n\}$, with $z_i=(X_i,y_i), i=1,2,\cdots,n$. Here, $X_i\in\mathbb{R}^{d}$ is a $d$-dimensional input and $y_i\in \mathbb{R}$ is the one-dimensional output. Then the potential (loss) function is defined as
\begin{equation}
    U(\theta) = \frac{1}{2n} \sum_{i=1}^{n} \left(y_i-\theta^{\top}X_i\right)^2+\frac{\lambda}{2}|\theta|^2 = \frac{1}{2n}|y-X\theta|^2+\frac{\lambda}{2}|\theta|^2.
    \label{eq:n1}
\end{equation}
The gradient of the loss is then given by 
\begin{equation}
    \nabla U(\theta) = -\frac{1}{n} X^{\top}(y-X\theta) +\lambda \theta = \frac{1}{n}X^{\top}X\theta -\frac{1}{n}X^{\top}y+\lambda\theta=\left(\frac{1}{n}X^{\top}X+\lambda I\right)\theta-\frac{1}{n}X^{\top}y.
\end{equation}
Define $A=\left(\frac{1}{n}X^{\top}X+\lambda I\right)$ and $b=\frac{1}{n}X^{\top}y$. Then $\nabla U(\theta)=A\theta-b$ is linear in $\theta$.

\textbf{Third-order computation:}
Now we are ready to get an explicit form of the vector $\Delta U(\theta, v_1)$ used in equation~(\ref{eq:o3}) for the third-order dynamics
\begin{align*}
    \Delta U(\theta,v_1): & = \int_0^{\eta} \nabla U(\theta + t v_1)dt=\int_0^{\eta}\left[A(\theta+tv_1)-b\right]dt
    =A\left(\eta\theta+\frac{\eta^2}{2}v_1\right)-b\eta.
\end{align*}

\textbf{Fourth-order computation:}
We use the following version of the formulas to compute the mean vector components $m_i$s' which are the expanded forms of the formulas presented in Lemma~\ref{lemma_meanandcovariance}. Note that, except for $\theta, v_1, v_2,$ and $v_3$, all other variables used in the computation processes are dummy variables.
\begin{align*}
m_0 & = - \int_{k\eta}^{(k+1)\eta} \int_{k\eta}^{s} \nabla U \Bigg(\theta + (r-k\eta)v_1 -\int_{k\eta}^{r}\int_{k\eta}^{w}\nabla U (\theta + (y-k\eta)v_1)dydw \\
&\qquad +\gamma v_2\frac{(r-k\eta)^2}{2!}+\gamma^2(-v_1+v_3)\frac{(r-k\eta)^3}{3!}\Bigg)drds\\
&\qquad\qquad+\gamma^2\int_{k\eta}^{(k+1)\eta}\int_{k\eta}^{s}\int_{k\eta}^{r}\int_{k\eta}^{w}\nabla U(\theta +(y-k\eta)v_1)dydwdrds\\
&\qquad\qquad\qquad+ \theta \mu_{00}+v_1\mu_{01}+v_2\mu_{02}+v_3\mu_{03}.
\end{align*}
Let us split the integral into small parts, we have
\begin{align*}
m_0 = - \int_{k\eta}^{(k+1)\eta} \int_{k\eta}^{s} \nabla U (\theta + (r-k\eta)v_1 - T_1 + T_2) drds + T_3 + T_4,
\end{align*}
where
\begin{align*}
T_1 & = \int_{k\eta}^{r}\int_{k\eta}^{w}\nabla U (\theta + (y-k\eta)v_1)dydw,\\
T_2 & = \gamma v_2\frac{(r-k\eta)^2}{2!}+\gamma^2(-v_1+v_3)\frac{(r-k\eta)^3}{3!},\\
T_3 & = \gamma^2\int_{k\eta}^{(k+1)\eta}\int_{k\eta}^{s}\int_{k\eta}^{r}\int_{k\eta}^{w}\nabla U(\theta +(y-k\eta)v_1)dydwdrds,\\
T_4 & = \theta \mu_{00}+v_1\mu_{01}+v_2\mu_{02}+v_3\mu_{03}.
\end{align*}
This implies 
\begin{align*}
    m_0 &=\theta + \mu_{01}v_1+\mu_{02}v_2+\mu_{03}v_3\\
    &\qquad + \left(\frac{\eta ^4\gamma ^2 }{24}-\frac{\eta^2}{2}\right) (A  \theta-b)+\frac{\eta^4}{24}(A(A\theta-b))\\
    &\qquad\qquad  + \left(\frac{ \eta ^5\gamma ^2}{60}-\frac{\eta ^3}{6}\right)A v_1+\frac{\eta ^5}{120} A (A v_1)-\frac{\eta ^4\gamma}{24} A v_2 -\frac{\eta ^5\gamma ^2 }{120} A v_3,
\end{align*}  
where we used $\mu_{00}=1$.
Next,
\begin{align*}
m_1 & = - \int_{k\eta}^{(k+1)\eta}\nabla U\Bigg(\theta+v_1(s-k\eta)-\int_{k\eta}^{s}\int_{k\eta}^r\nabla U\left(\theta+(w-k\eta)v_1\right)dwdr+ \gamma v_2\frac{(s-k\eta)^2}{2!}\\&+\gamma^2\left(-v_1+v_3\right)\frac{(s-k\eta)^3}{3!}\Bigg)ds + \gamma^2 \int_{k\eta}^{(k+1)\eta}\int_{k\eta}^{s}\int_{k\eta}^{r}\nabla U\left(\theta+(w-k\eta)v_1\right)dwdrds\\
&+ \theta\mu_{10}+v_1\mu_{11}+v_2\mu_{12}+v_3\mu_{13}.
\end{align*}  
Split the integral into smaller parts, we have
\begin{align*}
m_1 = -\int_{k\eta}^{(k+1)\eta} \nabla U\left(\theta + (s-k\eta)v_1-T_1 +T_2\right)ds + T_3+T_4,
\end{align*}
where
\begin{align*}
T_1 & = \int_{k\eta}^{s}\int_{k\eta}^r\nabla U\left(\theta+(w-k\eta)v_1\right)dwdr,\\
T_2 & = \gamma v_2\frac{(s-k\eta)^2}{2!}+\gamma^2\left(-v_1+v_3\right)\frac{(s-k\eta)^3}{3!},\\
T_3 & = \gamma^2 \int_{k\eta}^{(k+1)\eta}\int_{k\eta}^{s}\int_{k\eta}^{r}\nabla U\left(\theta+(w-k\eta)v_1\right)dwdrds,\\
T_4 & =\theta\mu_{10}+v_1\mu_{11}+v_2\mu_{12}+v_3\mu_{13},
\end{align*}  
which implies 
\begin{align*}
m_1 & = \mu _{11} v_1+\mu _{12} v_2+\mu _{13} v_3+\left(\frac{\eta ^3\gamma ^2 }{6}-\eta\right) (A \theta- b)+\frac{\eta ^3}{6} A(A  \theta-b)\\
  &\qquad +\left(\frac{\eta ^4\gamma ^2 }{12}-\frac{\eta ^2}{2}\right)A v_1 +\frac{\eta ^4}{24} A(A  v_1)-\frac{\eta ^3\gamma  }{6} A  v_2-\frac{\eta ^4\gamma ^2 }{24} A  v_3, 
\end{align*} 
where we used $\mu_{10}=0$.
Next, we compute $m_2$ as follows:
\begin{align*}
    m_2& = \gamma \int_{k\eta}^{(k+1)\eta}\int_{k\eta}^s \nabla U\Bigg(\theta+v_1(r-k\eta)-\int_{k\eta}^r\int_{k\eta}^w\nabla U\Big(\theta+(y-k\eta)v_1\Big)dydw \\
    &+\gamma v_2\frac{(r-k\eta)^2}{2!}+\gamma^2\Big(-v_1+v_3\Big)\frac{(r-k\eta)^3}{3!}\Bigg)drds\\
    & - \gamma^3 \int_{k\eta}^{(k+1)\eta}\int_{k\eta}^s\int_{k\eta}^r\int_{k\eta}^w\nabla U\Big(\theta+(y-k\eta)v_1\Big)dydwdrds\\
    &- \gamma^3 \int_{k\eta}^{(k+1)\eta}\int_{k\eta}^s\int_{k\eta}^r\int_{k\eta}^w e^{-\gamma(s-r)}\nabla U\Big(\theta+(y-k\eta)v_1\Big)dydwdrds\\
    &+\theta\mu_{20}+v_1\mu_{21}+v_2\mu_{22}+v_3\mu_{23}.
\end{align*}
Split the integral into smaller parts. Define 
\begin{align*}
T_1 & = \int_{k\eta}^r\int_{k\eta}^w\nabla U\Big(\theta+(y-k\eta)v_1\Big)dydw\\
T_2 & = \gamma v_2\frac{(r-k\eta)^2}{2!}+\gamma^2\Big(-v_1+v_3\Big)\frac{(r-k\eta)^3}{3!}\\
T_3 & = - \gamma^3 \int_{k\eta}^{(k+1)\eta}\int_{k\eta}^s\int_{k\eta}^r\int_{k\eta}^w\nabla U\Big(\theta+(y-k\eta)v_1\Big)dydwdrds\\
T_4 & = - \gamma^3 \int_{k\eta}^{(k+1)\eta}\int_{k\eta}^s\int_{k\eta}^r\int_{k\eta}^w e^{-\gamma(s-r)}\nabla U\Big(\theta+(y-k\eta)v_1\Big)dydwdrds\\
T_5 & = \theta\mu_{20}+v_1\mu_{21}+v_2\mu_{22}+v_3\mu_{23}.
\end{align*}  
Then the integral becomes  
\begin{align*}
m_2& = \gamma \int_{k\eta}^{(k+1)\eta}\int_{k\eta}^s \nabla U(\theta + (r-k\eta)v_1-T_1+T_2)drds +T_3 +T_4+T_5\\
& =\mu_{21}v_1 + \mu_{22}v_2 + \mu_{23}v_3+ \Big(\frac{1-e^{-\eta\gamma}}{\gamma}-\frac{\eta^4\gamma^3}{24}-\frac{\eta^3\gamma^2}{6}+\eta^2\gamma-\eta\Big)(A\theta-b)\\
    &-\frac{\eta^4\gamma}{24}(A(A\theta-b))+ \Big(-\frac{\eta^5\gamma^3}{60}-\frac{\eta^4\gamma^2}{24}+\frac{\eta^3\gamma}{3}-\frac{\eta^2}{2}+\frac{\eta}{\gamma}-\frac{1-e^{-\eta\gamma}}{\gamma^2}\Big)(Av_1)\\
    &-\frac{\eta^5\gamma}{120}(A(Av_1)) + \frac{\eta^4\gamma^3}{24}(Av_2) +\frac{\eta^5\gamma^3}{120}(Av_3),
\end{align*}
where we used $\mu_{20}=0$.
Finally,
\begin{align*}
    m_3&=-\gamma^2\int_{k\eta}^{(k+1)\eta}\int_{k\eta}^s \int_{k\eta}^r e^{-\gamma((k+1)\eta- s)}\\
    &\cdot\nabla U \Bigg(\theta+(w-k\eta)v_1 -\int_{k\eta}^w\int_{k\eta}^y \nabla U\Big(\theta+(z-k\eta)v_1\Big)dzdy+\gamma v_2\frac{(w-k\eta)^2}{2!}\\
    &+\gamma^2\Big(-v_1+v_3 \Big)\frac{(w-k\eta)^3}{3!} \Bigg)dwdrds\\
    &+\gamma^4\int_{k\eta}^{(k+1)\eta}\int_{k\eta}^s \int_{k\eta}^r\int_{k\eta}^w\int_{k\eta}^ye^{-\gamma((k+1)\eta- s)}\nabla U\Big(\theta+(z-k\eta)v_1\Big)dzdydwdrds\\
      &+\gamma^4\int_{k\eta}^{(k+1)\eta}\int_{k\eta}^s\int_{k\eta}^r\int_{k\eta}^w\int_{k\eta}^y e^{-\gamma((k+1)\eta- s)}e^{-\gamma(r- w)}\nabla U\Big(\theta+(z-k\eta)v_1\Big)dzdydwdrds\\
& +\theta\mu_{30}+v_1\mu_{31}+v_2\mu_{32}+v_3\mu_{33}. 
\end{align*}
Denote  
\begin{align*}
T_1 & = \int_{k\eta}^w\int_{k\eta}^y \nabla U\Big(\theta+(z-k\eta)v_1\Big)dzdy,\\
T_2 & = \gamma v_2\frac{(w-k\eta)^2}{2!}+\gamma^2\Big(-v_1+v_3 \Big)\frac{(w-k\eta)^3}{3!},\\
T_3 & = \gamma^4\int_{k\eta}^{(k+1)\eta}\int_{k\eta}^s \int_{k\eta}^r\int_{k\eta}^w\int_{k\eta}^ye^{-\gamma((k+1)\eta- s)}\nabla U\Big(\theta+(z-k\eta)v_1\Big)dzdydwdrds,\\
T_4 & = \gamma^4\int_{k\eta}^{(k+1)\eta}\int_{k\eta}^s\int_{k\eta}^r\int_{k\eta}^w\int_{k\eta}^y e^{-\gamma((k+1)\eta- s)}e^{-\gamma(r- w)}\nabla U\Big(\theta+(z-k\eta)v_1\Big)dzdydwdrds,\\
T_5 & = \theta\mu_{30}+v_1\mu_{31}+v_2\mu_{32}+v_3\mu_{33}. 
\end{align*}
Then the integral becomes 
\begin{align*}
m_3 & = -\gamma^2 \int_{k\eta}^{(k+1)\eta}\int_{k\eta}^{s}\int_{k\eta}^{r} e^{-\gamma((k+1)\eta-s)}\nabla U(\theta+(w-k\eta)v_1-T_1+T_2)dwdrds
\\
&\qquad\qquad\qquad+T_3 +T_4+T_5.
\end{align*}
Re-arranging, we have 
\begin{align*}
    m_3 & = \mu_{31}v_1+\mu_{32}v_2+\mu_{33}v_3\\
    &\quad+ \left(\frac{\eta^4\gamma^3}{24}-\eta^2\gamma+\eta\left(3+e^{-\eta\gamma}\right)-\frac{4\left(1-e^{-\eta\gamma}\right)}{\gamma}\right)(A\theta-b)\\
    & \quad\quad+ \left(\frac{\eta^4\gamma}{24}-\frac{\eta^3}{6}+\frac{\eta^2}{2\gamma}-\frac{\eta}{\gamma^2}+\frac{1-e^{-\eta\gamma}}{\gamma^3}\right)A (A\theta-b)\\
    & \quad\quad\quad+\left(\frac{\eta^5 \gamma^3}{60}-\frac{\eta^4 \gamma^2}{24}-\frac{\eta^3 \gamma}{6}+\eta^2-\frac{4 e^{-\eta \gamma}}{\gamma^2}-\frac{\eta e^{-\eta \gamma}}{\gamma}-\frac{3 \eta}{\gamma}+\frac{4}{\gamma^2}\right)(A v_1 )\\
    & \quad\quad\quad\quad+\left(\frac{\eta^5 \gamma}{120}-\frac{\eta^4}{24}+\frac{\eta^3}{6 \gamma}-\frac{\eta^2}{2 \gamma^2}+\frac{e^{-\eta \gamma}}{\gamma^4}+\frac{\eta}{\gamma^3}-\frac{1}{\gamma^4}\right)(A(Av_1)) \\
    & \quad\quad\quad\quad\quad+ \left(-\frac{1}{24} \eta^4 \gamma^2+\frac{\eta^3 \gamma}{6}-\frac{\eta^2}{2}+\frac{e^{-\eta \gamma}}{\gamma^2}+\frac{\eta}{\gamma}-\frac{1}{\gamma^2}\right)(A v_2)\\
    & \quad\quad\quad\quad\quad\quad+\Big(-\frac{1}{120} \eta^5 \gamma^3+\frac{\eta^4 \gamma^2}{24}-\frac{\eta^3 \gamma}{6}+\frac{\eta^2}{2}-\frac{e^{-\eta \gamma}}{\gamma^2}-\frac{\eta}{\gamma}+\frac{1}{\gamma^2}\Big)(A v_3 ),
\end{align*}
where we used $\mu_{30}=0$.
    
%%%%%%%%%%%%%%%%%%%%%%%%%%%%%%%%%%%%
\subsection{Logistic loss function}
\label{dis:logloss}
Similar to the quadratic case, let us assume that we have an input data set $X\in \mathbb{R}^{n\times d}$, an output dataset $\mb{y}\in \{0,1\}^n$, and $\theta \in \mathbb{R}^d$ being the model parameters or weights. Then the predicted probability for the $i$-th sample $y_i=1$ is
\begin{equation}
   \sigma(z_i) =\mathbb{P}\left(y_i=1| X_i;\theta\right)= \frac{1}{1+e^{-z_i}}=\hat{y}_i;
   \label{eq:sigmoid}
\end{equation}
where $z_i = X_i^{\top}\theta\in \mathbb{R}$ and $\sigma(z)$ is a real-valued function. However, if $z=X\theta \in \mathbb{R}^n$ then we define the vector-valued sigmoid function as
\begin{equation}
    \Vec{\sigma}(z):=\frac{1}{1+e^{-z}}:=\left(\frac{1}{1+e^{-z_{1}}},\ldots,\frac{1}{1+e^{-z_{n}}}\right).
    \label{eq:sigmoidvec}
\end{equation}

Therefore, for a two-class classification problem, we define 
\begin{equation}
    \begin{split}
        \mathbb{P}(y_i=1|X_i;\theta)& =\hat{y}_i=\sigma(x_i^{\top}\theta), \\
        \mathbb{P}(y_i=0|X_i;\theta)&=1-\hat{y}_i=1-\sigma(X_i^{\top}\theta). \label{eq:prob}
    \end{split}
\end{equation}
Then for a given $y_i$, the probability of taking one of the classes, we combine the above equations (\ref{eq:prob}) into a single equation
\begin{equation}
    \mathbb{P}(y_i|X_i;\theta) = \hat{y_i}^{y_i}(1-\hat{y}_i)^{1-y_i}.
\end{equation}
For the independent and identically distributed (\textit{i.i.d.}) data we define the loss
\begin{align*}
    \mathcal{L}(\theta) = \prod_{i=1}^n \mathbb{P}(y_i|X_i;\theta)= \prod_{i=1}^n \hat{y_i}^{y_i}(1-\hat{y}_i)^{1-y_i},
\end{align*}
which implies 
\begin{align*}
\log \mathcal{L}(\theta) =\sum_{i=0}^n\left[y_i\log (\hat{y}_i)+ (1-y_i)\log (1-\hat{y}_i)\right].
\end{align*}
We take the negative of $\log$ likelihood as the probabilities are often smaller numbers. Then we define the potential function with a penalty term (i.e., $L_2$ or Ridge regularization),
\begin{align*}
    U (\theta) &= -\sum_{i=0}^n\left[y_i\log (\hat{y}_i)+ (1-y_i)\log (1-\hat{y}_i)\right]  + \frac{\lambda}{2}|\theta|^2\\
    & = -\sum_{i=0}^n\left[y_i\log (\sigma(z_i))+ (1-y_i)\log (1-\sigma(z_i))\right]  + \frac{\lambda}{2}|\theta|^2.
\end{align*}
Therefore, the gradient of the regularized loss function in vector form would be 
\begin{align}\label{nabla:U:formula}
   \nabla U(\theta) = X^{\top}(\hat{y}-y)+\lambda\theta= X^{\top}(\Vec{\sigma}(X\theta)-y) +\lambda \theta,
\end{align}
where $\Vec{\sigma}$ is defined in \eqref{eq:sigmoidvec}.
The detailed derivation of $\nabla U(\theta)$ in \eqref{nabla:U:formula} will be given in Lemma~\ref{dis:grad-log}.

%\begin{remark}
%  Refer to Lemma~\ref{lemma_logisticfunction_and_conditionH2} for the proof that the logistic loss function satisfies Condition~\ref{cond_derivativegrowthrate}.
%\end{remark}

The fourth-order sampling process requires the computation of the integral of the gradient
$\int_0^t \nabla U(\theta + tv_1)$
for any $t\in [0,\eta]$; however, for a non-polynomial or black-box potential, it is quite hard or sometimes impossible to compute the exact integrals. Thus, in this case, we approximate the integrals using \textbf{Taylor Series} expansion.

For the Taylor expansion, let us define,
\begin{align*}
      z(t) := X(\theta + t v_1) = X\theta + t X v_1 \in \mathbb{R}^n,\qquad
      s(t) := \Vec{\sigma}(z(t)) \in \mathbb{R}^n,
\end{align*}
where $\Vec{\sigma}$ is defined in \eqref{eq:sigmoidvec}.
Furthermore, we define
\begin{align*}
    \omega(t) = \nabla U(\theta+tv_1)
    = \lambda (\theta + t v_1) + X^\top \left( \Vec{\sigma}(X(\theta + t v_1)) - y \right)=  \lambda(\theta + t v_1) + X^\top (s(t) - y).
\end{align*}
% Now we expand $\omega(t)$ in the Taylor series for $t=0$ up to a 3rd-degree polynomial.\\
% For a scaler $t\in \mathbb{R}$, define
% \begin{align*}
%     \omega(t)& =  \lambda(\theta + t v_1) + X^\top (s(t) - y).
% \end{align*}
Now we expand $\omega(t)$ in the Taylor series for $t=0$ up to a 3rd-degree polynomial to approximate the integrals in the sampling process.
\begin{align}\label{Taylor:eqn}
    \omega(t)= \omega(0)+\omega'(0)t+\omega''(0)\frac{t^2}{2}+\omega'''(0)\frac{t^3}{6} + \mathcal{O}(t^4).
\end{align}
The next steps are the computation of the derivatives. First, the constant term in \eqref{Taylor:eqn} is given by
\begin{align}\label{omega:0}
    \omega(0) =\lambda \theta + X^\top (\Vec{\sigma}(X\theta) - y).
\end{align}
We can compute that the first derivative is given by
\begin{align*}
    \omega'(t)& = \lambda v_1 + X^\top \left( \frac{ds(t)}{dt}  \right).
\end{align*}
Moreover, 
\begin{equation}
    \frac{ds(t)}{dt}  = \Vec{\sigma}(z(t)) \odot (1 - \Vec{\sigma}(z(t))) \odot (Xv_1)=s(t) \odot (1- s(t))\odot (Xv_1),
    \label{eq:sigmoid1der}
\end{equation}
which implies 
\begin{align}
\omega'(t) = \lambda v_1 + X^{\top}\Big[s(t) \odot (1- s(t))\odot (Xv_1)\Big],\nonumber
\end{align}
and in particular,
\begin{align}
\omega'(0)=\lambda v_1 + X^{\top}\Big[s \odot (1- s)\odot (Xv_1)\Big].\label{omega:prime:0}
\end{align}
We can compute that the second derivative is given by
\begin{align}\label{omega:2}
    \omega''(t)= X^\top \frac{d}{dt}\left[ \Vec{\sigma}(z(t))(1 - \Vec{\sigma}(z(t))) \odot (X v_1) \right].
\end{align}
Since $s(t)=\Vec{\sigma}(z(t))$, we can compute that $s'(t)=s(t)(1-s(t))(Xv_1)$. Then we have
\begin{align}
    \frac{d}{dt}[s(t)(1-s(t))] = s'(t)(1-s(t))-s(t)s'(t) = s(t)(1-s(t))(1-2s(t))(Xv_1).\label{plug:1}
\end{align}
Plugging \eqref{plug:1} into \eqref{omega:2}, we obtain,
\begin{align*}
    \omega''(t)= X^{\top}\Big[s(t)\odot(1-s(t))\odot(1-2s(t))\odot (Xv_1)\odot (Xv_1)\Big],
\end{align*}
which implies
\begin{align}\label{omega:double:prime:0}
\omega''(0) = X^{\top}\Big[s\odot(1-s)\odot(1-2s)\odot (Xv_1)\odot (Xv_1)\Big].
\end{align}
We can compute the third derivative is given by
\begin{align*}
    \omega'''(t)& = X^\top \frac{d}{dt}\left[ s(t)(1 - s(t))(1 - 2s(t)) \odot (Xv_1)\odot (Xv_1) \right].
\end{align*}
Using the results in equation (\ref{eq:sigmoid1der}) and equation (\ref{plug:1}), we obtain:
\begin{align*}
    &\frac{d}{dt}\left[ s(t)(1 - s(t))(1 - 2s(t))\right]
    \\
    & = \left(s'(t)(1-s(t))(1-2s(t))+s(t)\frac{d}{dt}\big[(1 - s(t))(1 - 2s(t))\big]\right)(Xv_1)\\
    & = \left(s(t)(1-s(t))\left(1-6s(t)+6[s(t)]^2\right)\right)(Xv_1),
\end{align*}
which implies
\begin{align}
\label{omega:triple:prime:0}
    \omega'''(0) = X^{\top}\Big[s\odot(1-s)\odot(1-6s+6s^2)\odot (Xv_1)\odot (Xv_1)\odot (Xv_1)\Big].
\end{align}
Substituting \eqref{omega:0}, \eqref{omega:prime:0}, \eqref{omega:double:prime:0} and \eqref{omega:triple:prime:0} into \eqref{Taylor:eqn}, we get the Taylor expansion of the gradient function,
\begin{align*}
    \omega(t)=& \lambda \theta + X^\top (s - y) \\
    &+ \left[ \lambda v_1 + X^\top \left( s \odot (1 - s) \odot (Xv_1) \right) \right]t \\
    &+\Bigg[X^\top \Big( s \odot (1 - s) \odot (1 - 2s) \odot (Xv_1)\odot (Xv_1) \Big)\Bigg]\frac{t^2}{2}  \\
    &+\Bigg[X^\top \Big( s \odot (1 - s) \odot (1 - 6s + 6s^2) \odot (Xv_1)\odot (Xv_1)\odot (Xv_1) \Big)\Bigg]\frac{t^3}{6}+O(t^4),
\end{align*}
where $s=\sigma(X\theta)\in \mathbb{R}^n$ and $\odot$ is the elementwise (Hadamard) product. We can rewrite $\omega (t)$ as 
\begin{align*}
    \omega(t) = \nabla U(\theta+tv_1)
    &=\lambda\theta+M_0+(M_1\odot (Xv_1)+\lambda v_1)t +\frac{1}{2}(M_2\odot Xv_1\odot Xv_1)t^2\\
    &\qquad\qquad+\frac{1}{6}\Big(M_3\odot Xv_1\odot Xv_1\odot Xv_1\Big)t^3,
\end{align*}
where 
\begin{align*}
    &M_0 = X^{\top} (s-y),\qquad
    M_1 =X^\top \left( s \odot (1 - s) \right),\\
    &M_2  = X^\top \Big( s \odot (1 - s) \odot (1 - 2s) \Big),\qquad
    M_3  = X^\top \Big( s \odot (1 - s) \odot (1 - 6s + 6s^2)  \Big),
\end{align*}
and all $\{M_i\}_{i=0}^3 \in \mathbb{R}^d$ and $s=\Vec{\sigma}(X\theta)\in \mathbb{R}^n$. \\
% The symbol $\odot$ refers to the elementwise vector multiplication or Hadamard product.
% Note that the computation of each $M_i$ requires the computation of the sigmoid function that takes $X\theta$ as input, which we obtain from the previous iteration.

%{\color{red}\textbf{Third-order computation:} Now we integrate $\omega(t)$ to approximate $\Delta U(\theta, v_1)$ used in equation~(\ref{eq:o3}).
%\begin{align*}
%    \Delta U (\theta, v_1)& = \int_0^{\eta} \nabla U(\theta+tv_1)dt = \int_0^{\eta}\omega(t)dt\\
%    & = \eta  \left(\lambda \theta +M_0\right)+\frac{\eta ^2}{2}  \left(M_1\odot Xv_1+\text{$\lambda $v}_1\right)+\frac{\eta ^3}{6} ( M_2\odot Xv_1\odot Xv_1)\\
%    &\qquad\qquad+\frac{\eta ^4}{24}( M_3\odot Xv_1\odot Xv_1\odot Xv_1).
%\end{align*}
%Maybe we need to delete this part? - Rafiq
%}

%%%%%%%%%%%%%%%%%%%%%%%%%%%%
\textbf{Fourth-order computations:} Once we have the Taylor expanded form of $\nabla U(\theta+tv_1)$ (i.e., $\omega(t)$), the calculation processes are the same as the quadratic function. We use standard mathematical software \textit{Mathematica 12.0} to compute those nested integrals and obtain the following results.

\begin{align*}
    m_0 & =  \left(\frac{\gamma ^2 \eta ^4 \lambda}{24}  +\frac{\eta ^4 \lambda ^2}{24}-\frac{\eta ^2 \lambda }{2}+\mu_{00}\right)\theta+\left(\frac{\gamma ^2 \eta ^5 \lambda}{60}  +\frac{\eta ^5 \lambda ^2}{120}-\frac{\eta ^3 \lambda }{6}+\mu _{01}\right)v_1\\
    &+ \left(\mu _{02}-\frac{\gamma  \eta ^4 \lambda}{24}  \right)v_2+ \left(\mu _{03}-\frac{\gamma ^2 \eta ^5 \lambda }{120} \right)v_3+\left(\frac{\gamma ^2 \eta ^4}{24}+\frac{\eta ^4 \lambda }{24}-\frac{\eta ^2}{2}\right)M_0 \\
    &+\left(\frac{\gamma ^2 \eta ^5}{120}+\frac{\eta ^5 \lambda }{120}-\frac{\eta ^3}{6}\right)M_1 \odot X v_1+\left(\frac{\gamma ^2 \eta ^6}{720}+\frac{\eta ^6 \lambda }{720}-\frac{\eta ^4}{24}\right)M_2\odot Xv_1\odot Xv_1\\
    &+\left(\frac{\gamma ^2 \eta ^7}{5040}+\frac{\eta ^7 \lambda }{5040}-\frac{\eta ^5}{120}\right)M_3 \odot Xv_1\odot Xv_1\odot Xv_1,
\end{align*}
\begin{align*}
    m_1 &=  \left(\frac{\gamma ^2 \eta ^3 \lambda}{6}  +\frac{\eta ^3 \lambda ^2}{6}-\eta  \lambda \right)\theta+\left(\frac{\gamma ^2 \eta ^4 \lambda}{12}  +\frac{\eta ^4 \lambda ^2}{24}-\frac{\eta ^2 \lambda }{2}+\mu _{11}\right)v_1 \\
    &+ \left(\mu _{12}-\frac{\gamma  \eta ^3 \lambda}{6}  \right)v_2+ \left(\mu _{13}-\frac{\gamma ^2 \eta ^4 \lambda}{24}  \right)v_3+\left(\frac{\gamma ^2 \eta ^3}{6}+\frac{\eta ^3 \lambda }{6}-\eta \right)M_0\\
    & + \left(\frac{\gamma ^2 \eta ^4}{24}+\frac{\eta ^4 \lambda }{24}-\frac{\eta ^2}{2}\right)M_1 \odot Xv_1+ \left(\frac{\gamma ^2 \eta ^5}{120}+\frac{\eta ^5 \lambda }{120}-\frac{\eta ^3}{6}\right)M_2\odot X v_1\odot Xv_1\\
    &+ \left(\frac{\gamma ^2 \eta ^6}{720}+\frac{\eta ^6 \lambda }{720}-\frac{\eta ^4}{24}\right)M_3\odot Xv_1\odot Xv_1\odot Xv_1,
\end{align*}
\begin{align*}
    m_2&=\left(-\frac{\gamma  \eta ^2 \lambda }{24}  \left(\gamma ^2 \eta ^2+\eta ^2 \lambda -12\right)+\frac{\lambda  \left(-\gamma ^3 \eta ^3+3 \gamma ^2 \eta ^2-6 \gamma  \eta -6 e^{-\gamma  \eta }+6\right)}{6 \gamma } \right)\theta\\
    &+\left(\mu _{21}-\frac{\gamma  \eta ^3 \lambda\left(2 \gamma ^2 \eta ^2+\eta ^2 \lambda -20\right)}{120}   +\frac{\lambda  \left(-\gamma ^4 \eta ^4+4 \gamma ^3 \eta ^3-12 \gamma ^2 \eta ^2+24 \gamma  \eta +24 e^{-\gamma  \eta }-24\right)}{24 \gamma ^2}\right)v_1\\
    &+\left(\frac{\gamma ^2 \eta ^4 \lambda}{24}  +\mu _{22}\right)v_2+\left(\frac{\gamma ^3 \eta ^5 \lambda}{120}  +\mu _{23}\right)v_3 \\
    &+ \left(-\frac{\gamma  \eta ^2 \left(\gamma ^2 \eta ^2+\eta ^2 \lambda -12\right)}{24} -\frac{\gamma ^3 \eta ^3-3 \gamma ^2 \eta ^2+6 \gamma  \eta +6 e^{-\gamma  \eta }-6}{6 \gamma }\right)M_0\\
    & + \left(\frac{\gamma ^4 \eta ^4-4 \gamma ^3 \eta ^3+12 \gamma ^2 \eta ^2-24 \gamma  \eta -24 e^{-\gamma  \eta }+24}{24 \gamma ^2}-\frac{\gamma  \eta ^3 \left(\gamma ^2 \eta ^2+\eta ^2 \lambda -20\right)}{120} \right)M_1\odot Xv_1\\
    &+\left(-\frac{\gamma  \eta ^4 \left(\gamma ^2 \eta ^2+\eta ^2 \lambda -30\right) }{720} +\frac{1-e^{-\gamma  \eta }}{\gamma ^3}-\frac{1}{120} \gamma ^2 \eta ^5-\frac{\eta }{\gamma ^2}+\frac{\gamma  \eta ^4}{24}+\frac{\eta ^2}{2 \gamma }-\frac{\eta ^3}{6}\right)M_2\odot Xv_1 \odot Xv_1\\
    &+\Bigg(-\frac{\gamma  \eta ^5 \left(\gamma ^2 \eta ^2+\eta ^2 \lambda -42\right)}{5040}\\
    &-\frac{\gamma ^6 \eta ^6-6 \gamma ^5 \eta ^5+30 \gamma ^4 \eta ^4-120 \gamma ^3 \eta ^3+360 \gamma ^2 \eta ^2-720 \gamma  \eta -720 e^{-\gamma  \eta }+720}{720 \gamma ^4}\Bigg)
    \\
    &\quad\qquad\qquad\qquad M_3\odot Xv_1\odot Xv_1\odot Xv_1,
\end{align*}
\begin{align*}
    m_3 & = \Bigg(\frac{\lambda  e^{-\gamma  \eta }}{24 \gamma ^3} \Big(\gamma ^6 \eta ^4 e^{\gamma  \eta }+\gamma ^4 \eta ^2 e^{\gamma  \eta } \left(\eta ^2 \lambda-24\right)-4 \gamma ^3 \eta  \left(e^{\gamma  \eta } \left(\eta ^2 \lambda -18\right)-6\right)\\
    &+12 \gamma ^2 \left(e^{\gamma  \eta } \left(\eta ^2 \lambda -8\right)+8\right)-24 \gamma  \eta  \lambda  e^{\gamma  \eta }+24 \lambda  \left(e^{\gamma  \eta }-1\right)\Big)\Bigg)\theta\\
    & + \Bigg(\mu_{31}+\frac{\lambda  e^{-\gamma  \eta }}{120 \gamma ^4}\Big(2 \gamma ^7 \eta ^5 e^{\gamma  \eta }-5 \gamma ^6 \eta ^4 e^{\gamma  \eta }+\gamma ^5 \eta ^3 e^{\gamma  \eta } \left(\eta ^2 \lambda -20\right)-5 \gamma ^4 \eta ^2 e^{\gamma  \eta } \left(\eta ^2 \lambda -24\right)\\
    &+20 \gamma ^3 \eta  \left(e^{\gamma  \eta } \left(\eta ^2 \lambda -18\right)-6\right)-60 \gamma ^2 \left(e^{\gamma  \eta } \left(\eta ^2 \lambda -8\right)+8\right)+120 \gamma  \eta  \lambda  e^{\gamma  \eta }\\
    &-120 \lambda  \left(e^{\gamma  \eta }-1\right)\Big)\Bigg)v_1\\
    & + \Bigg(\mu_{32}+\frac{\lambda  \left(-\gamma ^4 \eta ^4+4 \gamma ^3 \eta ^3-12 \gamma ^2 \eta ^2+24 \gamma  \eta +24 e^{-\gamma  \eta }-24\right)}{24 \gamma ^2}\Bigg)v_2\\
    &+ \Bigg(\mu_{33}+\frac{\lambda  \left(-\gamma ^5 \eta ^5+5 \gamma ^4 \eta ^4-20 \gamma ^3 \eta ^3+60 \gamma ^2 \eta ^2-120 \gamma  \eta -120 e^{-\gamma  \eta }+120\right)}{120 \gamma ^2}\Bigg)v_3\\
    & + \Bigg(\frac{\gamma ^3 \eta ^4}{24}+\frac{\lambda -\lambda  e^{-\gamma  \eta }}{\gamma ^3}-\frac{\eta  \lambda }{\gamma ^2}+\frac{4 e^{-\gamma  \eta }+\frac{\eta ^2 \lambda }{2}-4}{\gamma }+\gamma  \left(\frac{\eta ^4 \lambda }{24}-\eta ^2\right)+\eta  \left(e^{-\gamma  \eta }+3\right)-\frac{\eta ^3 \lambda }{6}\Bigg)M_0\\
    &+ \Bigg(\frac{\lambda  \left(e^{-\gamma  \eta }-1\right)}{\gamma ^4}+\frac{\gamma ^3 \eta ^5}{120}+\frac{\eta  \lambda }{\gamma ^3}+\frac{-5 e^{-\gamma  \eta }-\frac{\eta ^2 \lambda }{2}+5}{\gamma ^2}+\frac{\eta  \left(-e^{-\gamma  \eta }-4\right)+\frac{\eta ^3 \lambda }{6}}{\gamma }\\
    &+\frac{1}{120} \gamma  \eta ^3 \left(\eta ^2 \lambda -40\right)-\frac{1}{24} \eta ^2 \left(\eta ^2 \lambda -36\right)\Bigg)M_1\odot Xv_1\\
    & + \Bigg(\frac{\lambda -\lambda  e^{-\gamma  \eta }}{\gamma ^5}-\frac{\eta  \lambda }{\gamma ^4}+\frac{\gamma ^3 \eta ^6}{720}+\frac{6 e^{-\gamma  \eta }+\frac{\eta ^2 \lambda }{2}-6}{\gamma ^3}+\frac{\eta  \left(e^{-\gamma  \eta }+5\right)-\frac{\eta ^3 \lambda }{6}}{\gamma ^2}\\
    &+\frac{1}{720} \gamma  \eta ^4 \left(\eta ^2 \lambda -60\right)+\frac{\frac{\eta ^4 \lambda }{24}-2 \eta ^2}{\gamma }-\frac{1}{120} \eta ^3 \left(\eta ^2 \lambda -60\right)\Bigg)M_2 \odot Xv_1\odot Xv_1\\
    &+ \Bigg(\frac{\lambda  \left(e^{-\gamma  \eta }-1\right)}{\gamma ^6}+\frac{\eta  \lambda }{\gamma ^5}+\frac{-7 e^{-\gamma  \eta }-\frac{\eta ^2 \lambda }{2}+7}{\gamma ^4}+\frac{\gamma ^3 \eta ^7}{5040}+\frac{\eta  \left(-e^{-\gamma  \eta }-6\right)+\frac{\eta ^3 \lambda }{6}}{\gamma ^3}
    .+\frac{60 \eta ^2-\eta ^4 \lambda }{24 \gamma ^2}\\
    &+\frac{\gamma  \eta ^5 \left(\eta ^2 \lambda -84\right)}{5040}+\frac{\eta ^3 \left(\eta ^2 \lambda -80\right)}{120 \gamma }-\frac{1}{720} \eta ^4 \left(\eta ^2 \lambda -90\right)\Bigg)M_3\odot Xv_1\odot Xv_1\odot Xv_1.
\end{align*}
\begin{lemma}
    \label{dis:grad-log}
    Define the potential (loss) function with a penalty term (i.e., $L_2$ or Ridge regularization):
\begin{align*}
    U (\theta) &= -\sum_{i=0}^n\left[y_i\log (\hat{y}_i)+ (1-y_i)\log (1-\hat{y}_i)\right]  + \frac{\lambda}{2}|\theta|^2\\
    & = -\sum_{i=0}^n\left[y_i\log (\sigma(z_i))+ (1-y_i)\log (1-\sigma(z_i))\right]  + \frac{\lambda}{2}|\theta|^2.
\end{align*}
Then the gradient of the regularized loss function is given as
\begin{align*}
   \nabla U(\theta) = X^\top(\hat{y}-y)+\lambda\theta= X^\top(\Vec{\sigma}(X\theta)-y) +\lambda \theta.
\end{align*}
\begin{proof}
    To make the calculation easier, we consider the non-regularized elementwise gradient of the loss. Additionally, we use the recursive property of the \textit{sigmoid} function $\frac{d\sigma(z_i)}{d\theta}=\sigma(z_i)(1-\sigma(z_i))X_i \in \mathbb{R}^d$. Thus,
\begin{align*}
    \nabla U(\theta) 
    &= -\sum_{i=1}^{n} \left[ y_i  \nabla_\theta \log(\sigma(z_i)) + (1 - y_i) \nabla_\theta \log(1 - \sigma(z_i)) \right]\\
    &= -\sum_{i=1}^{n} \left[ y_i \frac{1}{\sigma(z_i)}  \frac{d}{d\theta} \sigma(z_i) - (1 - y_i) \cdot \frac{1}{\sigma(z_i)} \frac{d}{d\theta} \sigma(z_i) \right]\\
    &=-\sum_{i=1}^{n} \left[ y_i  \frac{1}{\sigma(z_i)} \sigma(z_i)(1 - \sigma(z_i)) X_i - (1 - y_i) \frac{1}{\sigma(z_i)}  \sigma(z_i)(1 - \sigma(z_i)) X_i \right]\\
    &=-\sum_{i=1}^{n} \left[ y_i  (1 - \sigma(z_i)) X_i - (1 - y_i) \cdot (1 - \sigma(z_i)) X_i \right]\\
    &=\sum_{i=1}^{n} \left[ -y_i  (1 - \sigma(z_i)) X_i + (1 - y_i) \cdot (1 - \sigma(z_i)) X_i \right]\\
    &=\sum_{i=1}^{n} \left[ (\sigma(z_i) - y_i) X_i \right]\\
    & = (\sigma(z_1)-y_1)X_1+(\sigma(z_2)-y_2)X_2+\cdots +(\sigma(z_n)-y_n)X_n.
\end{align*}
Therefore, the gradient of the regularized loss function in vector form would be 
\begin{align*}
   \nabla U(\theta) & = X^\top(\hat{y}-y)+\lambda\theta= X^\top(\Vec{\sigma}(X\theta)-y) +\lambda \theta.
\end{align*}
\end{proof}
\end{lemma}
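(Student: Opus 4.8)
The plan is to obtain $\nabla U(\theta)$ by direct differentiation, exploiting the defining identity $\sigma'(z)=\sigma(z)\bigl(1-\sigma(z)\bigr)$ of the sigmoid function. First I would separate $U(\theta)$ into the cross-entropy part $U_0(\theta):=-\sum_{i=1}^{n}\bigl[y_i\log\sigma(z_i)+(1-y_i)\log(1-\sigma(z_i))\bigr]$ with $z_i=X_i^{\top}\theta$, and the ridge penalty $\tfrac{\lambda}{2}\abs{\theta}^2$. The gradient of the penalty is immediately $\lambda\theta$, so the entire computation reduces to evaluating $\nabla U_0$.

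For $\nabla U_0$ I would compute the contribution of a single sample by the chain rule, using $\nabla_\theta z_i=X_i$. This gives $\nabla_\theta\log\sigma(z_i)=\dfrac{\sigma'(z_i)}{\sigma(z_i)}X_i=\bigl(1-\sigma(z_i)\bigr)X_i$ and, likewise, $\nabla_\theta\log\bigl(1-\sigma(z_i)\bigr)=\dfrac{-\sigma'(z_i)}{1-\sigma(z_i)}X_i=-\sigma(z_i)X_i$. Substituting these into $U_0$, the $i$-th term of $\nabla U_0$ equals $-\bigl[y_i\bigl(1-\sigma(z_i)\bigr)-(1-y_i)\sigma(z_i)\bigr]X_i$; expanding the bracket collapses it to $\bigl(\sigma(z_i)-y_i\bigr)X_i$. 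Summing over $i$ and recognizing $\sum_{i=1}^{n}\bigl(\sigma(z_i)-y_i\bigr)X_i$ as the matrix--vector product $X^{\top}\bigl(\Vec{\sigma}(X\theta)-y\bigr)=X^{\top}(\hat{y}-y)$ — which follows because the $j$-th coordinate of $X^{\top}v$ is $\sum_i X_{ij}v_i$ for any $v\in\R^n$ — yields $\nabla U_0(\theta)=X^{\top}(\hat{y}-y)$. Adding back the penalty gradient $\lambda\theta$ gives the claimed formula.

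There is no genuine obstacle here: the argument is a routine chain-rule calculation, and the only places needing care are the sign appearing in the derivative of $\log(1-\sigma(z_i))$ and the final repackaging of the coordinatewise sum into compact matrix form. Accordingly I would carry out the differentiation in the elementwise form first, exactly as in the setup preceding the statement, and only vectorize at the very last step.
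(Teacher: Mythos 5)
Your proof is correct and follows essentially the same chain-rule computation as the paper, differing only in that you correctly simplify $\nabla_\theta\log(1-\sigma(z_i))=-\sigma(z_i)X_i$, whereas the paper's intermediate display carries a typographical slip (it writes $\tfrac{1}{\sigma(z_i)}$ instead of $\tfrac{1}{1-\sigma(z_i)}$ in the second bracket, so the displayed line $-\sum_i\bigl[y_i(1-\sigma(z_i))X_i-(1-y_i)(1-\sigma(z_i))X_i\bigr]$ does not actually equal $\sum_i(\sigma(z_i)-y_i)X_i$) before nonetheless stating the correct final formula. Your derivation closes that gap cleanly; no change of method is involved.
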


\end{document}